\documentclass{article}

\usepackage[numbers]{natbib}
\usepackage[english]{babel}
\usepackage[a4paper,top=2cm,bottom=2cm,left=1.5cm,right=1.5cm,marginparwidth=1.75cm]{geometry}

\usepackage{amsmath,amsthm,amssymb,amsfonts,mathtools,bm}
\usepackage{comment}
\usepackage{tcolorbox}
\usepackage{mathrsfs}
\usepackage{enumitem}
\usepackage{graphicx}
\usepackage{subcaption}
\usepackage{float}
\usepackage{xcolor}
\usepackage{booktabs,multirow}
\usepackage{url}

\definecolor{targetcolor}{HTML}{697386}
\definecolor{sourcecolor}{HTML}{B28A46}
\definecolor{ridgeblue}{HTML}{377EB8}
\definecolor{ridgegreen}{HTML}{31945B}
\definecolor{ridgeoptimal}{HTML}{D43D35}

\renewcommand{\P}{\mathbb{P}}
\newcommand{\E}{\mathbb{E}}

\newcommand{\cN}{\mathcal{N}}

\newcommand{\R}{\mathbb{R}}

\newcommand{\eps}{\varepsilon}

\newcommand{\<}{\langle}
\renewcommand{\>}{\rangle}

\def\sT{{\mathsf T}}

\DeclareMathOperator*{\argmin}{arg\,min}
\DeclareMathOperator*{\argmax}{arg\,max}

\newtheorem{theorem}{Theorem}
\newtheorem*{theorem*}{Theorem}
\newtheorem{lemma}{Lemma}

\newtheorem{assumption}{Assumption}

\newtheorem{corollary}{Corollary}

\theoremstyle{definition}

\newtheoremstyle{myremark} 
    {\topsep}                    
    {\topsep}                    
    {\it}                        
    {}                           
    {\bf}                        
    {.}                          
    {.5em}                       
    {}  

\theoremstyle{myremark}
\newtheorem{remark}{Remark}[section]

\DeclareSymbolFont{rsfs}{U}{rsfs}{m}{n}
\DeclareSymbolFontAlphabet{\mathscrsfs}{rsfs}

\def\bA{{\boldsymbol A}}
\def\bB{{\boldsymbol B}}
\def\bC{{\boldsymbol C}}
\def\bD{{\boldsymbol D}}
\def\bE{{\boldsymbol E}}

\def\bG{{\boldsymbol G}}

\def\bH{{\boldsymbol H}}

\def\bI{{\boldsymbol I}}
\def\bJ{{\boldsymbol J}}
\def\bK{{\boldsymbol K}}
\def\bL{{\boldsymbol L}}
\def\bM{{\boldsymbol M}}
\def\bN{{\boldsymbol N}}

\def\bP{{\boldsymbol P}}
\def\bQ{{\boldsymbol Q}}
\def\bR{{\boldsymbol R}}
\def\bS{{\boldsymbol S}}
\def\bT{{\boldsymbol T}}

\def\bV{{\boldsymbol V}}

\def\bX{{\boldsymbol X}}
\def\bY{{\boldsymbol Y}}
\def\bZ{{\boldsymbol Z}}

\def\be{{\boldsymbol e}}

\def\bg{{\boldsymbol g}}
\def\bh{{\boldsymbol h}}

\def\bu{{\boldsymbol u}}
\def\bv{{\boldsymbol v}}
\def\bw{{\boldsymbol w}}
\def\bx{{\boldsymbol x}}
\def\by{{\boldsymbol y}}
\def\bz{{\boldsymbol z}}

\def\bmu{{\boldsymbol \mu}}

\def\beps{{\boldsymbol \eps}}

\def\btheta{{\boldsymbol \theta}}

\def\bDelta{{\boldsymbol \Delta}}

\def\bTheta{{\boldsymbol \Theta}}

\def\Tr{{\rm Tr}}
\def\op{{\rm op}}

\def\cL{{\mathcal L}}

\def\sV{{\sf V}}

\def\bDelta{{\boldsymbol \Delta}}

\def\bA{{\boldsymbol A}}
\def\btheta{{\boldsymbol \theta}}
\def\bTheta{{\boldsymbol \Theta}}

\def\bP{{\boldsymbol P}}

\def\bS{{\boldsymbol S}}

\def\bD{{\boldsymbol D}}

\def\bL{{\boldsymbol L}}

\def\bsigma{{\boldsymbol \sigma}}

\def\bR{{\boldsymbol R}}

\def\bC{{\boldsymbol C}}

\def\sR{\mathsf R}
\def\sV{\mathsf V}
\def\sB{\mathsf B}

\def\br{{\boldsymbol r}}

\def\sR{\mathsf R}
\def\sV{\mathsf V}
\def\sB{\mathsf B}

\def\op{{\rm op}}
\newcommand{\dd}{\mathop{}\!\mathrm{d}}

\usepackage{hyperref}
\hypersetup{
    colorlinks=true,
    citecolor=blue,
    linkcolor=magenta,
}
\newcommand{\affmark}[1]{%
  \begingroup
  \renewcommand{\thefootnote}{\arabic{footnote}}%
  \footnotemark[#1]%
  \endgroup
}
\newcommand{\afftext}[2]{%
  \begingroup
  \renewcommand{\thefootnote}{\arabic{footnote}}%
  \footnotetext[#1]{#2}%
  \endgroup
}

\title{{\fontsize{22}{27}\selectfont When do data mixtures improve scaling laws?\\ Insights from high-dimensional regression}}

\author{
Diyuan Wu\thanks{Equal contribution.} \affmark{1}
\\
\and
Lehan Chen\footnotemark[1]
\affmark{2}\\
\and 
Theodor Misiakiewicz\thanks{Equal advising.} \affmark{3}
\\
\and Marco Mondelli\footnotemark[2] \affmark{1}
}
\date{}

\begin{document}
\maketitle
\afftext{1}{Institute of Science and Technology Austria (ISTA). Emails: \texttt{\{diyuan.wu, marco.mondelli\}@ist.ac.at}}
\afftext{2}{Department of Applied and Computational Mathematics, Yale University. Emails: \texttt{lehan.chen@yale.edu}}
\afftext{3}{Department of Statistics and Data Science, Yale University. Emails: \texttt{theodor.misiakiewicz@yale.edu}}

\begin{abstract}
Modern machine learning systems are trained on mixtures of data from different domains, and choosing the right mixture can substantially improve downstream performance. Despite an extensive literature on data mixing and reweighting, existing work is largely empirical and it remains unclear when auxiliary data genuinely improves scaling laws rather than merely providing more samples. To gain insight into this question, we study a high-dimensional mixed-data regression model with a shared regression function, heterogeneous covariances and noise levels, and dataset sizes that may grow at different rates. We establish the minimax risk under an ellipsoidal parameter constraint for the general covariance structure and  derive deterministic equivalents for the test error of ridge regression under commutative covariances. We then specialize to a target domain and an auxiliary domain with aligned power-law covariance spectra, where the theory yields explicit scaling laws in terms of spectral decay, target regularity, and the relative growth of the two datasets. These laws identify regimes in which combining data mixtures provably yields a faster scaling rate than using either dataset alone. In particular, improving the scaling law requires a specific interplay between spectra and relative sample sizes of the domains. Our numerical experiments on language models exhibit the same qualitative phenomenon: appropriate data mixtures yield a faster decrease in target-domain test loss than training on either domain alone.
\end{abstract}

\section{Introduction}

Many successful machine learning systems learn from
data spanning multiple domains \citep{grattafiori2024llama,zitkovich2023rt2},
making the composition of their training sets a central design choice. Several approaches to optimize data mixtures have been proposed, including transferring mixture weights from smaller
to larger models \citep{xie2023doremi,fan2023doge,liu2025regmix},
predicting how the optimal mixture changes with the total
training budget \citep{kang2025autoscalescaleawaredatamixing},
and adjusting the weights during a training run according to the model's progress on different domains
\citep{chen2023skillit,chen2025aioli,jiang2025ado}. The dependence of the performance on model size, data volume, and mixture composition has been described by empirical scaling laws
\citep{aghajanyan2023scaling,shukor2026scaling,sedova2026scaling},
with recent work suggesting that the mixture composition can affect the scaling
exponent \citep{hamidieh2026domain}. At the same time, \cite{wang2026smalltrainingrunsreliably} have observed that the choice of the mixture may be sensitive to small changes in training hyperparameters. Together, these findings motivate a principled understanding of how and when data mixtures improve performance and, in particular, the scaling laws.

When data is drawn from a single distribution, a line of theoretical work has derived scaling laws for linear and kernel regression \citep{maloney2022solvable,bahri2024explaining,lin2024scaling,lin2025improved,atanasov2026scaling}, as well as for simplified models of neural networks \citep{paquette20244+,bordelon2024dynamical,ferbach2025dimension,ren2025emergence,defilippis2024dimension}. However,  less is known about how these laws change when training on multiple data sources. In the model studied by \cite{hashimoto2021model}, mixture composition affects only the prefactor and leaves the scaling exponent unchanged. Later, \cite{jain2024scaling} derived scaling laws for mixtures of real and surrogate data, and \cite{wu2026improved} showed that, in weak-to-strong generalization, a student trained on teacher-generated labels can achieve a better scaling exponent than its teacher. Furthermore, in a stylized memorization model, Medvedev et al.~\citep{medvedev2026shift} showed that optimizing training proportions can improve the scaling exponent relative to sampling from the test distribution. Most recently, Dai and Zheng~\citep{dai2026explaining} developed a theoretical model of how mixture proportions affect learning curves, predicting effective mixtures across model and dataset sizes. These results leave open the characterization of when combining datasets yields faster scaling rates than either dataset alone, and how these gains depend on the properties of the data sources and their relative sizes.

In this paper, we study a tractable setting of high-dimensional regression and identify regimes in which data mixtures improve scaling laws.  We consider multiple training domains with a common regression function, different feature covariances and noise levels, and sample sizes growing at different rates. Our contributions can be summarized as follows:

\begin{itemize}[itemsep=2pt,leftmargin=*]
\item \textbf{Minimax risk and ridge performance.} We characterize both minimax statistical limits and the performance of a ridge estimator: for general covariances, Theorem \ref{thm:minimax-ellipsoid} gives a variational characterization of the minimax prediction risk under an ellipsoidal parameter constraint, with matching bounds up to constants; for covariances that commute, Theorem \ref{thm:deteq-risk} derives deterministic equivalents for the bias and variance of mixed ridge regression, which depend only on population-level quantities (covariances, sample sizes, noise levels, regularization).

\item \textbf{Scaling laws for data mixtures.} We next specialize both minimax and ridge characterizations to two mixtures with power-law spectra, under source and capacity conditions (Theorems \ref{thm:minimax}-\ref{thm:scaling-optimal-ridge}). This allows to identify regimes where \emph{(i)} ridge regression achieves the minimax optimal exponent in the scaling law, and \emph{(ii)} combining different mixtures strictly improves the scaling exponent as compared to using either dataset alone.

\item \textbf{Numerical evidence from language models.} Finally, we demonstrate via numerical simulations that decoder-only transformers exhibit a qualitatively similar advantage in data mixing: in regimes consistent with our theory, combining target and auxiliary data produces a faster decrease in target-domain test loss than training on either dataset alone (Figure \ref{fig:lang-scalinglaw}).
\end{itemize}

\section{Related work}

From an \textbf{empirical viewpoint}, existing work on data mixtures has studied how the composition of the training set affects performance and how to choose effective mixtures. DoReMi, DoGE, and Chameleon
select domain weights using proxy models, gradient alignment, and
feature statistics, respectively
\citep{xie2023doremi,fan2023doge,xie2025chameleon}.
Skill-It, Aioli, and ADO adapt data selection during training
\citep{chen2023skillit,chen2025aioli,jiang2025ado}, while RegMix
and MixMin choose mixtures through performance prediction and
surrogate optimization \citep{liu2025regmix,thudi2025mixmin}. Empirical mixing laws model loss as a function of domain proportions and training scale \citep{ge2024bimix,ye2025data,shukor2026scaling}, and AutoScale extrapolates optimized mixtures to larger training budgets \citep{kang2025autoscalescaleawaredatamixing}. Additional work on scaling laws examines competition and positive transfer between modalities \citep{aghajanyan2023scaling}, and data mixing with repetition when target data is scarce \citep{sedova2026scaling}. Our work provides a theoretical perspective on data mixtures: we identify conditions under which combining datasets improves the scaling exponent over either dataset alone, and then validate them on language model experiments.


From a \textbf{theoretical viewpoint}, work on data mixtures includes distribution-weighted combinations of predictors \citep{Mansour08Domain,hoffman2018algorithmstheorymultiplesourceadaptation},
selecting weights that balance
distribution mismatch and estimation error
\citep{konstantinov2019robustlearninguntrustedsources,Deng2023Mixture}, and
adaptive sampling across distributions
\citep{haghtalab2022demand}. 
In nonparametric regression, gains from combining mixture data
have been characterized over Lipschitz classes
\citep{schmidthieber2024local}, and recent work on H\"older classes establishes
minimax rates faster than those attainable from either dataset alone
\citep{zhou2025synergistic,zamolodtchikov2026minimax}. Whether faster rates are possible for high-dimensional regression remains, however, open. In particular, existing analyses of data mixtures in this setting
do not establish that mixing improves the scaling exponent
\citep{dai2026explaining,jain2024scaling}, and for ridge regression with data labeled according to different objectives, Jagadeesan et al.~\citep{jagadeesan2025safety} have identified regimes in which scaling 
worsens at large dataset sizes. 
A related topic is covariate shift, studied through minimax analyses of kernel ridge regression \citep{ma2023optimally}, asymptotic characterizations of the risk in kernel and
random-feature regression \citep{canatar2021out,tripuraneni2021overparameterization}, and minimum-norm interpolation \citep{mallinar2024minimum}. 
Under both covariate and model shift, \cite{yang2025precise,song2024generalization} have established precise risk characterizations  for pooled least squares and minimum-norm interpolation,
 in the proportional regime where number of samples and feature dimension grow at the same rate. 
Our analysis accommodates decaying covariance spectra and datasets
whose sizes may grow at different rates, and it yields scaling laws under
source-capacity conditions
\citep{caponnetto2007optimal,rudi2017generalization}.
Such conditions also underpin scaling laws for kernel and random feature regression
\citep{cui2021generalization,defilippis2024dimension,atanasov2026scaling}. 
We use this framework to establish when data mixing improves the scaling law exponent, both in a minimax sense and for ridge regression.

\section{Problem setup}
\label{sec:setup}

We consider $K$ independent training datasets. For each $i \in [K]$, we observe $n_i$ independent samples $(\bx_i^{(j)},y^{(j)}_i)_{j \in [n_i]}$ from the $i$-th dataset, where $\bx_i^{(j)}$ are drawn from a distribution $P_i$ on $\R^d$ with 
\begin{equation*}
\E_{P_i}[\bx] = 0, \qquad \E_{P_i}[\bx\bx^\top]=\bC_i,\qquad i=1,\dots,K.
\end{equation*}
We set $N:=\sum_{i=1}^K n_i$. The dimension $d$ may be infinite, in which case $\bC_i$ is a trace-class operator on $\ell_2$.  Denote $\bX_i = [\bx^{(1)}_i, \ldots, \bx^{(n_i)}_i]^\sT \in \R^{n_i \times d}$ the design matrix. All domains share the same target parameter $\btheta_*$ with $\| \bC_i^{1/2} \btheta_* \|_2^2 <\infty$, $i\in [K]$, while their noise levels $\sigma_{\eps_i}^2$ may differ. The labels $\by_i = (y_i^{(j)})_{j \in [n_i]}$ are given by
\begin{equation*}
\by_i=\bX_i\btheta_*+\mathbf{\eps}_i,\qquad \mathbf{\eps}_i\sim\cN(0,\sigma_{\eps_i}^2\bI_{n_i}),\qquad i=1,\dots,K.
\end{equation*}
The test distribution is a mixture $P=\sum_k\pi_k^*P_k$ of the same domains, with $\pi_k^*\geq0$ and $\sum_k\pi_k^*=1$; a single target domain corresponds to $\pi^*=\be_1$. For an estimator $\widehat\btheta$, we consider the excess test risk 
\begin{equation*}
\sR_\beps (\widehat\btheta) := \E_{\bx \sim P} \left[ \big( \bx^\sT \widehat \btheta - \bx^\sT \btheta_* \big)^2  \right] = \sum_{k=1}^K\pi_k^*\big\|\bC_k^{1/2}(\widehat\btheta-\btheta_*)\big\|_2^2.
\end{equation*}
In this paper, we characterize the minimax risk in this setting (i.e., the optimal risk over all estimators)
, and the risk of the following mixed ridge estimator 
\begin{equation}
\widehat\btheta=\argmin_{\btheta}\Big\{\sum_{i=1}^K\|\bX_i\btheta-\by_i\|_2^2+\lambda\|\btheta\|_2^2\Big\}=\bG\sum_{i=1}^K\bX_i^\top\by_i,
\label{eq:mixed-ridge}
\end{equation}
with regularization parameter $\lambda >0$, where $\bG=(\sum_{i=1}^K\bX_i^\top\bX_i+\lambda\bI)^{-1}$. Conditionally on designs, the risk of \eqref{eq:mixed-ridge} averaged over label noise admits the bias-variance decomposition $\sR(\widehat\btheta) := \E_{\beps} \sR_\beps(\widehat\btheta) =\sum_k\pi_k^*(\sB_k+\sV_k)$ with
\begin{equation*}
\sB_k=\lambda^2\big\langle\btheta_*,\bG\bC_k\bG\btheta_*\big\rangle,\qquad \sV_k=\sum_{i=1}^K\sigma_{\eps_i}^2\Tr\big(\bC_k\bG\bX_i^\top\bX_i\bG\big).
\end{equation*}
Importance-weighted objectives $\sum_ia_i\|\bX_i\btheta-\by_i\|_2^2$ are covered by the substitution $(\bC_i,\sigma_{\eps_i}^2, \pi_k^*)\to(a_i\bC_i,a_i\sigma_{\eps_i}^2,\pi_k^*/a_k)$ , so all results below also describe domain reweighting. 
We assume the following concentration property on the designs.

\begin{assumption}[Concentration of the designs]
\label{asm:distr}
There exist constants $C,c,\eta>0$ such that, for every $i\in[K]$, every PSD operator $\bA \in \R^{d \times d}$, and every $t>0$, a sample $\bx\sim P_i$ satisfies 
\begin{equation}
    \P(|\bx^\top\bA\bx-\Tr(\bC_i\bA)|> t\|\bC_i^{1/2}\bA\bC_i^{1/2}\|_F)\leq Ce^{-ct^{1/\eta}}.
\end{equation}
\end{assumption}

This condition covers several popular assumptions in the high-dimension regression literature, including independent sub-Gaussian coordinates and convex Lipschitz concentration \citep{cheng2024dimension,misiakiewicz2024non}. 


\paragraph{Connection to kernel methods.}
Let $\psi=(\psi_j)_{j\geq1}$ be the feature map of a reproducing kernel, and let the inputs of domain $i$ be $\bz\sim\rho_i$. Setting $\bx=\psi(\bz)$ gives $\bC_i=\E_{\rho_i}[\psi(\bz)\psi(\bz)^\top]$, and the shared-parameter model states that all domains are labeled by the same function $f_*=\<\psi,\btheta_*\>$ in the RKHS. The estimator \eqref{eq:mixed-ridge} is then kernel ridge regression under data mixture.

\section{General results}
\label{sec:deterministic-equivalence}


\paragraph{Minimax error under ellipsoid constraint.}

Given a nonnegative self-adjoint operator $\bS$ and a radius
$R>0$, consider
$\bTheta=\{\btheta\in\operatorname{Dom}(\bS):
\|\bS^{1/2}\btheta\|_2\leq R\}$, and define the minimax
 risk
\begin{equation*}
\mathrm R_*(n_1,\dots,n_K)=\inf_{\widehat\btheta}\sup_{\btheta_*\in\bTheta}\sum_{k=1}^K\pi_k^*\,\E_{\btheta_*}\big[\|\bC_k^{1/2}(\widehat\btheta-\btheta_*)\|_2^2\big],
\end{equation*}
where the infimum is over all measurable estimators of the $K$ datasets and the expectation is over designs and noise. We assume $\ker\bS\subseteq\bigcap_i\ker\bC_i$ and take inverse powers of $\bS$ on $(\ker\bS)^\perp$. The relevant quantities are the test and training covariances rescaled by the ellipsoid constraint,
\begin{equation*}
\bH_i=\bS^{-1/2}\bC_i\bS^{-1/2},\qquad \bQ=\sum_{k=1}^K\pi_k^*\bH_k,\qquad \bM=\sum_{i=1}^K\frac{n_i}{\sigma_{\eps_i}^2}\bH_i,
\end{equation*}
together with the largest signal-to-noise ratio permitted by the class, $E_0=R^2\max_i\|\bH_i\|_{\op}/\sigma_{\eps_i}^2$, and the variational functional
\begin{equation}
\cL(\bM)=\sup_{\substack{\bA\succeq0\\\Tr(\bA)\leq R^2}}\Tr\Big[\bQ\bA^{1/2}\big(\bI+\bA^{1/2}\bM\bA^{1/2}\big)^{-1}\bA^{1/2}\Big],
\label{eq:ellipsoid-variational}
\end{equation}
where in infinite dimension the supremum is over positive trace-class operators.

\begin{theorem}[Minimax risk under an ellipsoid constraint]
\label{thm:minimax-ellipsoid}

Under Assumption~\ref{asm:distr}, there is a constant $C > 0$ depending only on the constants in that assumption such that, for every $n_1,\dots,n_K$, 
\begin{equation*}
\frac1{\pi^2}\,\cL(\bM)\leq\mathrm R_*(n_1,\dots,n_K)\leq(1+CE_0)\,\cL(\bM).
\end{equation*}
In particular, if $E_0$ is bounded by a constant, then $\mathrm R_*(n_1,\dots,n_K)\asymp\cL(\bM)$ with constants independent of the dimension and of the sample sizes.

\end{theorem}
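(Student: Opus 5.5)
First I would reduce to the rescaled problem: with $\bbeta=\bS^{1/2}\btheta$ and features $\bS^{-1/2}\bx$, the class becomes the Euclidean ball $\|\bbeta\|_2\le R$, the training covariances become $\bH_i$, and the loss becomes $\|\bQ^{1/2}(\widehat\bbeta-\bbeta)\|_2^2$. The functional $\cL(\bM)$ is exactly the Bayes risk of the Gaussian linear model with prior $\bbeta\sim\cN(0,\bA)$ and "population" Fisher information $\bM$. Indeed, if one observes $\bZ=\bM^{1/2}\bbeta+\bg$ with $\bg\sim\cN(0,\bI)$, the posterior covariance is $(\bA^{-1}+\bM)^{-1}=\bA^{1/2}(\bI+\bA^{1/2}\bM\bA^{1/2})^{-1}\bA^{1/2}$. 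Both bounds then compare the true problem, with random designs, to this idealized Gaussian sequence experiment.

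\textbf{Lower bound.} Since $\mathrm R_*$ is at least any Bayes risk with prior supported on $\bTheta$, I would take a near-optimal $\bA$ in \eqref{eq:ellipsoid-variational} and need a prior supported in the ball. The constant $1/\pi^2$ suggests the following construction. Diagonalize $\bA$, and in each eigendirection use a one-dimensional prior on $[-a_j^{1/2},a_j^{1/2}]$ with density $\cos^2(\pi t/(2a_j^{1/2}))$. This is the prior attaining the minimal Fisher information, $\pi^2/a_j$. The product prior is supported where $\sum_j\beta_j^2\le\Tr\bA\le R^2$.

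The van Trees inequality then gives a Bayes risk at least $\Tr[\bQ(\E\,\mathcal I(\bbeta)+\mathcal J)^{-1}]$, where $\mathcal J$ is the prior information. In the matrix version, the data information $\E\sum_i\bX_i^\top\bX_i/\sigma_{\eps_i}^2$ has expectation exactly $\bM$ after rescaling, and $\mathcal J=\pi^2\bA^{-1}$ is diagonal. We therefore get at least $\Tr[\bQ(\bM+\pi^2\bA^{-1})^{-1}]$. Since $(\bM+\pi^2\bA^{-1})^{-1}\succeq\pi^{-2}(\bM+\bA^{-1})^{-1}$, this yields $\pi^{-2}\cL(\bM)$. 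Averaging over designs commutes with the Fisher information because the information is linear in the design and van Trees uses its expectation, so no concentration is needed here. Infinite dimensions are handled by truncating to finitely many directions and taking limits.

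\textbf{Upper bound.} I would first identify the saddle point. By a minimax theorem (concavity in $\bA$ of the Bayes risk, convexity over linear estimators), $\cL(\bM)$ equals the minimax risk over linear estimators in the Gaussian experiment. That optimal linear rule is a generalized ridge: $\widehat\bbeta=(\bM+\bA_*^{-1})^{-1}\bM^{1/2}\bZ$ for the least favorable $\bA_*$, with worst-case risk over the ball equal to $\cL(\bM)$. In the real problem I would use the analogous estimator
$$\widehat\bbeta=\Big(\widehat\bM+\bA_*^{-1}\Big)^{-1}\sum_i\sigma_{\eps_i}^{-2}\widetilde\bX_i^\top\by_i,\qquad \widehat\bM=\sum_i\sigma_{\eps_i}^{-2}\widetilde\bX_i^\top\widetilde\bX_i,$$
where $\widetilde\bX_i$ denotes the rescaled designs. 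The next step is to show that its worst-case risk is at most $(1+CE_0)\cL(\bM)$. Its bias and variance are explicit matrix functions of $\widehat\bM$.

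\textbf{Main obstacle.} The hard step is controlling the error from replacing $\widehat\bM$ by $\bM$ uniformly over dimension, and getting only a multiplicative factor $1+CE_0$. I would try to avoid a deterministic equivalent and instead use a convexity or Jensen argument. Fixing the least favorable structure, the risk of the whitened estimator is roughly $\Tr[\bQ(\widehat\bM+\bA_*^{-1})^{-1}]$ plus a correction. The map $\bM\mapsto\Tr[\bQ(\bM+\bA^{-1})^{-1}]$ is convex, which goes the wrong way for expectation bounds, so a concentration correction is required.

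The natural approach is a leave-one-out / Sherman–Morrison expansion. Each sample contributes a rank-one term whose normalized size is $\bx^\top\bA_*^{1/2}(\cdots)\bA_*^{1/2}\bx/\sigma^2\lesssim R^2\|\bH_i\|_{\op}/\sigma_{\eps_i}^2$. Assumption~\ref{asm:distr} controls its fluctuations around traces, so each sample's deviation from its population contribution is at most a factor $1+O(E_0)$. Summing these deviations via a martingale or telescoping over samples should give the $(1+CE_0)$ factor.

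If this estimator-specific route fails, the fallback is an alternative upper bound via the Bayes estimator for a Gaussian prior $\cN(0,\bA_*)$. Its risk $\E\Tr[\bQ(\widehat\bM+\bA_*^{-1})^{-1}]$ would be bounded using the same per-sample Sherman–Morrison control, together with a truncation argument to pass from the Gaussian prior to the ball.
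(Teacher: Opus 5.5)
Your lower bound matches the paper's argument: a product $\cos^2$ prior along the eigendirections of a feasible $\bA$, the van Trees inequality with prior information $\pi^2/a_\ell$ per direction and averaged data information $\bM$ (only second moments of the designs enter), and then $(\bM+\pi^2\bA^{-1})^{-1}\succeq\pi^{-2}(\bM+\bA^{-1})^{-1}$ on the range of $\bA$.

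The upper bound has a genuine gap. The decisive estimate is that the random-design generalized ridge $(\widehat\bM+\bA_*^{-1})^{-1}\sum_i\sigma_{\eps_i}^{-2}\widetilde\bX_i^\top\by_i$ has worst-case risk over $\|\bu\|_2\le R$ at most $(1+CE_0)\cL(\bM)$, uniformly in dimension and sample sizes. You never supply it.

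\begin{itemize}
\item Bounding each Sherman--Morrison increment by a factor $1+O(E_0)$ does not aggregate over $N$ samples into a total factor $1+O(E_0)$. That would require exhibiting a cancellation: first-order fluctuations vanishing in expectation, with only a second-order term of relative size $O(E_0)$ surviving.
\item Even with such a cancellation, you would have it for a trace, not for the supremum over the ball of the expected bias operator. In the Gaussian model the worst case of the linear rule equals $\cL(\bM)$ because the least favorable $\bA_*$ lies in the top eigenspace of the bias operator. That saddle-point property is not inherited once the random $\widehat\bM$ sits inside the inverse.
\item Your fallback is invalid as stated. $\E\Tr[\bQ(\widehat\bM+\bA_*^{-1})^{-1}]$ is a Bayes (average) risk under a Gaussian prior, and an average risk cannot upper bound the minimax risk over the ball. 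Truncating the prior does not change this: what you need is $\sup_{\|\bu\|_2\le R}$ of a frequentist risk.
\end{itemize}

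The missing idea is to avoid inverting the empirical Gram matrix altogether. Use the statistic $\bz=\bS^{-1/2}\sum_i\sigma_{\eps_i}^{-2}\bX_i^\top\by_i$.

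\begin{itemize}
\item It satisfies $\E_{\bu}\bz=\bM\bu$ exactly.
\item Assumption~\ref{asm:distr} implies the fourth-moment bound $\E\langle\bx,\bS^{-1/2}\bw\rangle^4\le\kappa(\bw^\top\bH_i\bw)^2$. Combined with Cauchy--Schwarz, this gives $\operatorname{Cov}_{\bu}(\bz)\preceq(1+\kappa E_0)\bM$ uniformly over the ball. The design randomness adds at most $\kappa\,\bu^\top\bH_i\bu\le\kappa R^2\|\bH_i\|_{\op}$ to each sample's noise level $\sigma_{\eps_i}^2$.
\item Set $v=1+\kappa E_0$ and $\bN=\bM/v$. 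The deterministic linear estimator $v^{-1}\bP(\bA_*,\bN)\bz$ has exactly the bias, and at most (in PSD order) the variance, of the Gaussian-sequence minimax linear rule $\bP(\bA_*,\bN)\bN^{1/2}\by_{\bN,\mathrm{seq}}$.
\item Hence its worst-case risk is at most $\cL(\bM/v)\le v\,\cL(\bM)$. The last step uses $\bP(\bA,\bM/v)\preceq v\,\bP(\bA,\bM)$.
\end{itemize}

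This is where the factor $1+CE_0$ comes from, and it needs no concentration, leave-one-out argument, or dimension dependence. Using the population covariances in the estimator is legitimate for a minimax upper bound.
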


The proof compares the regression model with the Gaussian sequence model $\by_{\rm seq}=\bM^{1/2}\bu+\bg$, $\bg\sim\cN(0,\bI)$, for the normalized parameter $\bu=\bS^{1/2}\btheta$. For the latter, we show that the minimax risk among \emph{linear} estimators is exactly $\cL(\bM)$, and the unrestricted minimax rate is  within a factor of $\pi^{-2}$ by a van Trees-type inequality \citep{gassiat2024vantrees}. For the original regression model, the  van Trees argument gives the same lower bound. We obtain the upper bound by constructing an explicit estimator and controlling the additional variance from the random designs, which introduces the factor $1+CE_0$. The detailed proof is given in Appendix~\ref{apx:minimax-ellipsoid}. 

Theorem~\ref{thm:minimax-ellipsoid} shows that, at the level of minimax rates, a collection of heterogeneous datasets is equivalent to a single Gaussian sequence model with information operator $\bM=\sum_i n_i\sigma_{\eps_i}^{-2}\bH_i$. Domain $i$ contributes information in proportion to $n_i/\sigma_{\eps_i}^2$, with greater contributions along directions in which its covariance is large relative to the constraint. When $\bS$ and the $\bC_i$ commute, the variational problem reduces to a scalar program solved by truncation:  in their common eigenbasis, ordering the coordinates by decreasing $[\bQ]_{jj}$, one obtains  (see Appendix~\ref{apx:pf-thm:minimax} for a similar calculation in a power-law model)
\begin{equation}
\mathrm R_*(n_1,\dots,n_K)\asymp\inf_{m\geq0}\Big\{R^2\max_{j>m}[\bQ]_{jj}+\sum_{j\leq m}\frac{[\bQ]_{jj}}{[\bM]_{jj}}\Big\}.
\label{eq:truncation-intuition}
\end{equation}
Each estimated coordinate costs its test weight divided by the total information that all domains carry about it, and the remaining coordinates cost the squared radius times their largest test weight.

\begin{remark}[Tightness of the minimax bounds] 
    The factor $E_0$ is $O(1)$ whenever $R=O(1)$, $\sigma_{\eps_i}^2$ are bounded below, and the constraint dominates the covariances, $\max_i\|\bH_i\|_{\op}=O(1)$. This covers Euclidean balls with bounded covariance spectra as well as the source-condition ellipsoids of Section~\ref{sec:synthetic-scaling}. The lower bound uses Gaussian noise but only second moments of the designs. On ellipsoids, linear estimators are minimax within the Ibragimov--Hasminskii constant $1.25$ \citep{donoho1990minimax}; our factor $\pi^2$ is not optimized, but the argument handles the weighted loss induced by $\bQ$ and non-Gaussian designs.
\end{remark}




\paragraph{Deterministic equivalent for the mixed ridge estimator.}

Let $(\mu_1, \dots, \mu_K)$ be the unique positive solution of the fixed-point system
\begin{equation*}
     \mu_i = \frac{n_i}{1 + \Tr(\bC_i \overline \bG) }, \qquad \overline \bG = (\sum_{i=1}^K \mu_i \bC_i + \lambda)^{-1}, \quad i \in [K].
 \end{equation*} 
 The matrix $\overline\bG$ is a deterministic proxy for the resolvent $\bG$: each dataset is replaced by its population covariance scaled by an effective sample size $\mu_i\leq n_i$, deflated by the effective dimension $\Tr(\bC_i\overline\bG)$ that the dataset has to fit. For a deterministic $\bA$, define $\tau_{\bA}\in\R^K$ and the $K\times K$ matrix $\bL$ by
\begin{equation*}
\tau_{\bA}[i]=\Tr(\bA\overline\bG \bC_i\overline\bG),\qquad \bL_{ij}=\frac{n_i}{\mu_i^2}\mathbf 1[i=j]-\Tr(\bC_i\overline\bG \bC_j\overline\bG).
\end{equation*}

\begin{assumption}\label{asm:nu}
    There exists a constant $K_0 > 0$ such that for all $i \in [K]$, $\lambda\nu_\lambda^i(n_i) \geq n_i^{-K_0}$, where $\nu_\lambda^i(n_i)$ is defined as in \eqref{eq:defnu}.
\end{assumption}

This condition imposes a polynomial lower bound on the sum of the ridge regularization and a covariance-dependent spectral term, and it is common in related work \citep{misiakiewicz2024non, defilippis2024dimension,wu2026improved}. We note that the assumption is satisfied under the power-law spectra considered in Section \ref{sec:synthetic-scaling}.

\begin{theorem}[Deterministic equivalent for mixed ridge]
\label{thm:deteq-risk}
Under Assumption \ref{asm:distr}, assume $\bC_k$ commute with each other.
Let $K$ be fixed, $\lambda > 0$ satisfy Assumption \ref{asm:nu},
and 
$\pi^*$ be the deterministic test mixture weights. Define $
    \bC_{\pi^*} = \sum_{k=1}^K \pi_k^*\bC_k,  \bA_* = \btheta_*\btheta_*^\top.
$
Then $\bL$ is invertible and, for any $D > 0$, with probability at least $1-\sum_{i=1}^K n_i^{-D}$ over the training designs,
\begin{equation}\label{eq:deteqapprx}    
    |\sR(\widehat \theta) - \overline\sR| \leq C_{D, c_0}\left(\lambda^2 \eps_2 V_{\bA_*} + V_{\bC_{\pi^*}}\sum_{i=1}^K \sigma_{\eps_i}^2\eps_{3,i}\right),
\end{equation}
\begin{align*}
\overline\sR =\sum_{k=1}^K\pi_k^*\left(\overline\sB_k+\overline\sV_k\right), \,\, \overline\sB_k&=\lambda^2\left[\left\langle\btheta_*,\overline\bG\bC_k\overline\bG\btheta_*\right\rangle+\tau_{\btheta_* \btheta_*^\top}^\top\bL^{-1}\tau_{\bC_k}\right],\,\,
\overline\sV_k=\sum_{i=1}^K\sigma_{\eps_i}^2\,\tau_{\bC_k}^\top\bL^{-1}\be_i,
\end{align*}
with $\be_i$ the $i$-th unit vector in $\mathbb{R}^K$, $\nu$ defined as in \eqref{eq:defnu}, 
$e_K = \left(\sum_{i=1}^K n_i^{-1}\right)^{1/2}$ and
\begin{equation*}
  V_\bA \hspace{-.1em}=\hspace{-.1em} \sum_{i=1}^{K} n_i\tau_\bA[i],\,\,  \eps_2 \hspace{-.1em}= e_K^3(\nu^{14}+\nu^8\log^{4\eta+3/2}(N)), \,\, 
        \eps_{3,i} = e_K(e_K^2\nu^{14}+\nu^8+\nu^5\log^{3\eta+3/2}(N)).
\end{equation*}
\end{theorem}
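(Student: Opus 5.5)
We extend the single-dataset deterministic-equivalence machinery of \citep{misiakiewicz2024non} to the block resolvent $\bG=(\sum_i\bX_i^\top\bX_i+\lambda)^{-1}$. The proof has three stages. First we establish first-order equivalents $\Tr(\bA\bG)\approx\Tr(\bA\overline\bG)$. We then use them to get second-order equivalents for the bilinear forms $\Tr(\bA\bG\bB\bG)$ and $\Tr(\bA\bG\bX_i^\top\bX_i\bG)$ that appear in $\sB_k$ and $\sV_k$. Finally we assemble the bias and variance into $\overline\sR$.

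\textbf{First-order equivalents.} We write $\bX_i^\top\bX_i=\sum_j\bx_i^{(j)}\bx_i^{(j)\top}$ and apply Sherman--Morrison to remove one sample at a time. This gives
\[
\bG\bx_i^{(j)}=\frac{\bG_{-ij}\bx_i^{(j)}}{1+\bx_i^{(j)\top}\bG_{-ij}\bx_i^{(j)}}.
\]
Assumption~\ref{asm:distr} concentrates the quadratic form $\bx^\top\bG_{-ij}\bx$ around $\Tr(\bC_i\bG_{-ij})$, and leave-one-out rank-one perturbation bounds give $\Tr(\bC_i\bG_{-ij})\approx\Tr(\bC_i\bG)$. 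Together with the identity $\bI=\bG(\sum_i\bX_i^\top\bX_i+\lambda)$, this yields an approximate fixed point. In it, $\tilde\mu_i=n_i/(1+\Tr(\bC_i\bG))$ satisfies the same system as $\mu_i$, up to errors controlled by $\nu$ and $e_K$; here commutativity lets us compare $\bG$ with $(\sum_i\tilde\mu_i\bC_i+\lambda)^{-1}$ in the common eigenbasis. Stability of the fixed-point map then transfers these errors to $|\Tr(\bA\bG)-\Tr(\bA\overline\bG)|$. We prove stability by showing the Jacobian is a contraction, using monotonicity of $\mu\mapsto\Tr(\bC_i\overline\bG)$. Assumption~\ref{asm:nu} keeps $\lambda\nu$ polynomially bounded below, so union bounds over $n_i$ samples with $\log N$ losses hold with probability $1-\sum_i n_i^{-D}$.

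\textbf{Second-order equivalents and the matrix $\bL$.} For the bilinear forms we repeat the leave-one-out expansion inside $\Tr(\bA\bG\bB\bG)$. Each sample of dataset $i$ produces a correction proportional to $\Tr(\bC_i\bG\bB\bG)\cdot\Tr(\bA\bG\bC_i\bG)\cdot\tilde\mu_i^2/n_i$. Collecting these corrections gives a linear system in the unknowns $x_i=\Tr(\bC_i\bG\bB\bG)$, namely
\[
\sum_j\bigl(\tfrac{n_i}{\mu_i^2}\mathbf 1[i=j]-\Tr(\bC_i\overline\bG\bC_j\overline\bG)\bigr)\tfrac{\mu_j^2}{n_j}x_j\approx\tau_\bB[i].
\]
The coefficient matrix is exactly $\bL$. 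Solving the system gives $\lambda^2\Tr(\bA\bG\bB\bG)\approx\lambda^2[\Tr(\bA\overline\bG\bB\overline\bG)+\tau_\bA^\top\bL^{-1}\tau_\bB]$. For the variance, the identity $\bG\bX_i^\top\bX_i\bG=\sum_j\bG\bx\bx^\top\bG$, combined with the same Sherman--Morrison step, gives $\Tr(\bC_k\bG\bX_i^\top\bX_i\bG)\approx\tau_{\bC_k}^\top\bL^{-1}\be_i$. Specializing to $\bA=\bA_*$, $\bB=\bC_k$ and summing with weights $\pi_k^*$ produces $\overline\sR$. The errors are relative to $V_{\bA}$, which accounts for the stated form of the bound.

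\textbf{Main obstacle: invertibility of $\bL$ and quantitative control of $\bL^{-1}$.} This is the step that determines the exponents $\nu^{14}$, $\nu^8$ and the powers of $e_K$. The first thing to try is to show that $\mathrm{diag}(\mu_i/\sqrt{n_i})\,\bL\,\mathrm{diag}(\mu_i/\sqrt{n_i})=\bI-\bT$, where $T_{ij}=\tfrac{\mu_i\mu_j}{\sqrt{n_in_j}}\Tr(\bC_i\overline\bG\bC_j\overline\bG)$. The claim to prove is that $\bT$ has spectral radius below $1$. Using $\mu_i\le n_i$ and $\overline\bG\sum_j\mu_j\bC_j\overline\bG\preceq\overline\bG$, we bound $\bT\bv$ against $\mu_i\Tr(\bC_i\overline\bG)/n_i=1-\mu_i/n_i<1$ by a Perron–Frobenius argument with the positive vector $\sqrt{\mu_i\Tr(\bC_i\overline\bG)}$. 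This gives $\|(\bI-\bT)^{-1}\|\lesssim\max_i n_i/\mu_i$, and that quantity is bounded by powers of $\nu$. Propagating the first-order errors through $\bL^{-1}$ is then mechanical but tedious.
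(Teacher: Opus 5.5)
Several parts of your proposal are correct and match the paper. Your linear system is $\bL\bD^{-1}x\approx\tau_{\bB}$ with $x=(x_i)_i$, i.e.\ exactly the paper's $\bh=\bK\be_k+\bK\bD^{-1}\bh+\br$. The variance step via $\bG\bx=\bG_{-}\bx/(1+z)$ and $n_k/(1+\kappa_k)^2\approx\mu_k^2/n_k$ is the same as the paper's. Your Collatz--Wielandt bound $\rho(\bT)\le 1-\min_i\mu_i/n_i$ is a valid alternative to the paper's proof that $\bL\succ0$; it amounts to the Jacobian $\bD^{-1}\bK$ of the fixed-point map being a contraction, which the paper exploits instead through the normalization $\varepsilon=\max_i|\kappa_i-t_i|/(1+t_i)$.

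The genuine gap is probabilistic. Your only concentration device is per-sample quadratic-form concentration plus a union bound over the $N$ samples. That suffices for trace-type functionals such as $\Tr(\bC_i\bG)$, where each $\bx^\top\bG_{-}\bx$ has small relative fluctuation. The theorem, however, needs the bias $\lambda^2\Tr(\bA_*\bG\bC_k\bG)$ with the rank-one $\bA_*=\btheta_*\btheta_*^\top$; Theorem~\ref{thm:deteq-func-main} is in fact reduced to $\bA=\btheta\btheta^\top$. For rank-one $\bA$, the per-sample terms of your leave-one-out expansion, e.g.\ $(\bx^\top\widetilde\bG\btheta)(\btheta^\top\bG_{-}\bx)/(1+z)$ with $\widetilde\bG=(\sum_i\tilde\mu_i\bC_i+\lambda\bI)^{-1}$, fluctuate on the same order as their conditional means. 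Summing $n_i$ of them with worst-case bounds therefore gives an error as large as the quantity itself, not a relative error of order $e_K$. Already for $\bC_i=\bI$, $\|\btheta\|_2=1$, $d\asymp n$: each term is $O(n^{-2})$ with $O(n^{-2})$ fluctuations, and you need $\sqrt n$ cancellation to get $n^{-3/2}$ against $\btheta^\top\bG\btheta\asymp n^{-1}$. So your step ``stability of the fixed-point map transfers these errors to $|\Tr(\bA\bG)-\Tr(\bA\overline\bG)|$'' holds only for $\bA=\bC_i$. For general $\bA$, the comparison $\Tr(\bA\bG)\approx\Tr(\bA\widetilde\bG)$ is exactly where averaging is needed. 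Separately, commutativity of the $\bC_i$ does not make $\bG$ diagonal in their eigenbasis; the paper uses it only to compare deterministic resolvents such as $\overline\bG_-$ and $\overline\bG$ in PSD order.

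The missing idea is the paper's split into an expectation part and a martingale part. In $\E[\Tr(\bA\bG)]$, the centered term $\bH_i(\bC_i-\bx\bx^\top)\overline\bG_-/(1+\kappa_i)$ vanishes exactly because $\bx$ is independent of $\bH_i=\bG_{-(i,1)}$; the same independence is used in the second-order expansion. The random part $\sum_\ell(\E_\ell-\E_{\ell-1})\bigl(\Tr(\bA\bG)-\Tr(\bA\bG_{-\ell})\bigr)$ has increments of relative size $\nu^2/n_i$. The martingale moment inequality and truncated Azuma--Hoeffding then give relative error $\nu^2e_K\log^{\eta+1/2}N$. This square-function structure, $\bigl(\sum_i n_i\cdot n_i^{-2}\bigr)^{1/2}=e_K$, is what produces the powers of $e_K$ in $\eps_2$ and $\eps_{3,i}$, not the control of $\bL^{-1}$ as you suggest.

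Your control of $\bL^{-1}$ is also too weak as stated. The bound $\|(\bI-\bT)^{-1}\|_{\op}\lesssim\nu$ only yields $|(\bL^{-1})_{ij}|\lesssim\nu\mu_i\mu_j/\sqrt{n_in_j}$, which loses a factor $\sqrt{n_j/n_i}$ when the sample sizes grow at different rates. To bound $\sum_{i,j}\tau_{\bA}[i](\bL^{-1})_{ij}r_j$ by $V_{\bA}$, you need, as in the paper, entrywise nonnegativity of $\bL^{-1}$. This does follow from your $\rho(\bT)<1$ via the Neumann series. Combine it with $\bL\bmu=\mathbf 1+\lambda\tau_{\bI}$, hence $\bL^{-1}\mathbf 1\le\bmu$.
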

The proof (Appendix \ref{sec:pfthm2}) relies on computing the deterministic equivalents of functionals including $\Tr(\bA\bG)$, $\Tr(\bA\bG\bC_k\bG)$, and $\Tr(\bA\bG\bX_k^\top\bX_k\bG)$, where $\bA$ is a fixed PSD matrix and $\bG$ is the resolvent. The argument follow a similar  decomposition and leave-one-out strategy as \cite{misiakiewicz2024non,defilippis2024dimension,wu2026improved}. Here, we extend their analyses to heterogeneous data mixtures, by controlling a coupled family of covariance-weighted resolvent traces.

The error bound in the RHS of \eqref{eq:deteqapprx} is $O(e_K),$ which is additive as in \citep{wu2026improved}, due to asymmetric terms. Under power-law spectra and for optimal ridge regularization, we further show in Lemma \ref{lem:vanishing-approx} that the error bound is $o(\overline{R})$. More broadly, we conjecture that a multiplicative guarantee holds for a range of parameters, leaving this technical problem as a future direction.  We also note that Theorem \ref{thm:deteq-risk} requires all the covariances to commute, and conjecture this to be a technical requirement as well: we expect the deterministic equivalent in Theorem \ref{thm:deteq-risk} to still hold for non-commutative covariances, possibly at the cost of a worse additive error. We refer to Figure \ref{fig:cifar-imagenet} (discussed in Section \ref{sec:num}) for an empirical demonstration of the validity of the deterministic equivalent predictions, well beyond our technical assumptions.






\section{Scaling laws for data mixtures}
\label{sec:synthetic-scaling}

We now specialize to $K=2$ domains and to a test distribution equal to the first domain, $\pi^*=\be_1$. Domain~1 is the \emph{target} and domain~2 the \emph{auxiliary} domain. Assuming $\bC_1$, $\bC_2$, and $\bS$ commute, and under a standard power-law decay condition on their eigenvalues, we characterize both the minimax rates and the rates achieved by ridge regression.


\paragraph{Power-law model.} Without loss of generality, we consider $\bC_1$ and $\bC_2$ diagonal and take $d=\infty$. We further assume aligned power-law spectra:
\begin{equation}
[\bC_1]_{kk}=k^{-\alpha_1},\qquad [\bC_2]_{kk}=k^{-\alpha_2},\qquad n_1=n,\qquad n_2=\lfloor n^{\gamma_2}\rfloor,\qquad \sigma_{\eps_1}^2,\sigma_{\eps_2}^2=\Theta(1),
\label{eq:ridge-power-model}
\end{equation}
with $\alpha_1,\alpha_2>1$ and $\gamma_2>0$, and set $\delta = \alpha_1 - \alpha_2$. A positive $\delta$ means that the auxiliary spectrum has heavier tails: relative to the target, the auxiliary domain puts more mass on high-frequency directions. For the minimax rates, we consider the parameter class
\begin{equation*}
\bTheta=\big\{\btheta:\|\bC_1^{1/2-r_1}\btheta\|_2\leq R,\ \|\bC_2^{1/2-r_2}\btheta\|_2\leq R\big\},\qquad s=\max\Big\{\alpha_1r_1,\ \alpha_2r_2+\frac\delta2\Big\},
\end{equation*}
which imposes a source condition with exponent $r_i>0$ in each domain \citep{caponnetto2007optimal}. Up to a change of radius, $\bTheta$ is the single ellipsoid $\{\btheta:\sum_k k^{2s-\alpha_1}\btheta [k]^2\leq R^2\}$ (Appendix~\ref{apx:pf-thm:minimax}), so $s$ is the regularity of the target in the target geometry, and $2s>\delta$. For mixed ridge regression, we consider the fixed signal $\btheta_*[k]=k^{-\beta}$, $\beta>1/2$, which belongs to the ellipsoid above for every $s'<s=(\alpha_1+2\beta-1)/2$ but not for $s'=s$. In both cases, $[\bC_1^{1/2}\btheta_*]_k\asymp k^{-(1+2s)/2}$: the target-only problem is the standard source-capacity setting with capacity exponent $\alpha_1$.



\paragraph{Minimax rates.}
Let $\sR^*(n_1,n_2)=\inf_{\widehat\btheta}\sup_{\btheta_*\in\bTheta}\E_{\btheta_*}\|\bC_1^{1/2}(\widehat\btheta-\btheta_*)\|_2^2$ be the minimax target risk, with $n_i=0$ meaning that dataset $i$ is unavailable. Define
\begin{equation*}
\Gamma_{\rm tar}=\frac{2s}{1+2s},\qquad
\Gamma_{\rm aux} =\begin{cases}\dfrac{2s\gamma_2}{1+2s-\delta},&\delta<1,\\[6pt]\gamma_2,&\delta\geq1,\end{cases}\qquad
\gamma_{\rm c}=1-\frac{\delta}{1+2s}.
\end{equation*}



\begin{theorem}[Minimax scaling law]
\label{thm:minimax}
Under Assumption~\ref{asm:distr} and the model \eqref{eq:ridge-power-model}, the risks of the individual datasets satisfy $\sR^*(n,0)\asymp n^{-\Gamma_{\rm tar}}$, $\sR^*(0,n_2)\asymp n^{-\Gamma_{\rm aux}}$ if $\delta \neq 1$, and $\sR^*(0,n_2)\asymp n^{-\Gamma_{\rm aux}} \log n$ if $\delta = 1$. Furthermore, the risk for the data mixture satisfies
\begin{equation*}
 \sR^*(n,n_2)\asymp\begin{cases}
\sR^*(n,0),&\gamma_2\leq\gamma_{\rm c},\\
\sR^*(0,n_2) ,&\gamma_2>\gamma_{\rm c},\ \delta \leq 1,\;\text{ or }\; \gamma_2\geq1,\ \delta>1,\\
n^{-(\delta-1+\gamma_2)/\delta},&\gamma_{\rm c}<\gamma_2<1,\ \delta>1.
\end{cases}
\label{eq:minimax-rates}
\end{equation*}
\end{theorem}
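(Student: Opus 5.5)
The plan is to apply Theorem~\ref{thm:minimax-ellipsoid} with a diagonal $\bS$, reduce $\cL(\bM)$ to the scalar truncation program \eqref{eq:truncation-intuition}, and then evaluate that program explicitly in each regime.

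\textbf{Step 1: single ellipsoid and bounded $E_0$.} I first replace $\bTheta$ by one ellipsoid. In the diagonal basis the two constraints have weights $k^{2r_1\alpha_1-\alpha_1}$ and $k^{2r_2\alpha_2-\alpha_2}$. Their sum is comparable to the larger weight, which is $k^{2s-\alpha_1}$. Hence $\bTheta$ is sandwiched between $\{\sum_k k^{2s-\alpha_1}\btheta[k]^2\le R^2\}$ and the same set with radius $R/\sqrt2$, and minimax rates are unaffected. With $\bS=\mathrm{diag}(k^{2s-\alpha_1})$ this gives
\[
[\bH_1]_{kk}=k^{-2s},\qquad [\bH_2]_{kk}=k^{-(2s-\delta)},\qquad \bQ=\bH_1,\qquad [\bM]_{kk}\asymp n k^{-2s}+n^{\gamma_2}k^{-(2s-\delta)}.
\]
Since $2s>\delta$ and $2s>0$, both $\|\bH_i\|_{\op}\le1$. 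With $R$ and $\sigma_{\eps_i}^2$ of order one, $E_0=O(1)$, so Theorem~\ref{thm:minimax-ellipsoid} yields $\sR^*\asymp\cL(\bM)$. The single-dataset cases are the same computation with $n_2=0$ or $n_1=0$.

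\textbf{Step 2: the scalar program and its truncation.} For commuting operators I will show that the supremum in \eqref{eq:ellipsoid-variational} may be restricted to diagonal $\bA$, up to constants. One route is to pinch $\bA$ to its diagonal, using concavity of $\bA\mapsto \Tr[\bQ\bA^{1/2}(\bI+\bA^{1/2}\bM\bA^{1/2})^{-1}\bA^{1/2}]$ (a harmonic-mean-type expression). This leaves
\[
\sup\Big\{\textstyle\sum_j \frac{q_ja_j}{1+a_jm_j} : a_j\ge0,\ \sum_j a_j\le R^2\Big\},\qquad q_j=[\bQ]_{jj},\ m_j=[\bM]_{jj}.
\]

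For the upper bound, each term satisfies $\frac{q_ja_j}{1+a_jm_j}\le\min(q_ja_j,\,q_j/m_j)$. Because $q_j$ is decreasing, splitting at any index $m$ bounds the sum by $R^2q_{m+1}+\sum_{j\le m}q_j/m_j$.

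For the lower bound, I spread the mass uniformly, $a_j=R^2/m$ for $j\le m$, at the optimizing $m$. At the balance point the product $a_jm_j$ is $O(1)$, or the relevant terms are already saturated, so this choice recovers both pieces of \eqref{eq:truncation-intuition}.

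Checking this equivalence carefully, especially in the regime where $m_j$ is non-monotone in how it splits between the two datasets, is the step I expect to need the most care. It is also where the $\log n$ at $\delta=1$ has to be tracked.

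\textbf{Step 3: evaluating the program.} Since $q_j/m_j\asymp 1/(n+n^{\gamma_2}j^{\delta})$, the risk is
\[
\sR^*\asymp\inf_m\Big\{m^{-2s}+\sum_{j\le m}\frac{1}{n+n^{\gamma_2}j^\delta}\Big\}.
\]
The denominator is dominated by the target term $n$ for $j\le j^*:=n^{(1-\gamma_2)/\delta}$, and by the auxiliary term $n^{\gamma_2}j^{\delta}$ beyond $j^*$.

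\emph{Single datasets.}
\begin{itemize}
\item Target alone: minimizing $m^{-2s}+m/n$ gives $m\asymp n^{1/(1+2s)}$ and the rate $n^{-\Gamma_{\rm tar}}$.
\item Auxiliary alone with $\delta<1$: the sum is $\asymp m^{1-\delta}n^{-\gamma_2}$. Balancing against $m^{-2s}$ gives $m\asymp n^{\gamma_2/(1+2s-\delta)}$ and the rate $n^{-\Gamma_{\rm aux}}$.
\item Auxiliary alone with $\delta>1$: the sum converges, so letting $m\to\infty$ gives $n^{-\gamma_2}$.
\item Auxiliary alone with $\delta=1$: the sum is $\asymp\log m/n^{\gamma_2}$, producing the extra $\log n$.
\end{itemize}

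\emph{Mixture.} If $n^{1/(1+2s)}\le j^*$, which is equivalent to $\gamma_2\le\gamma_{\rm c}$, then the target optimizer lies in the target-dominated region. The auxiliary data cannot help, since $m_j\ge n k^{-2s}$ always, and the target rate holds.

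Otherwise the two cases split on $\delta$:
\begin{itemize}
\item For $\delta\le1$, I will check that the auxiliary-only optimizer lies beyond $j^*$ and that the head contribution $j^*/n$ is dominated by the auxiliary rate. This gives $\sR^*(0,n_2)$.
\item For $\delta>1$ and $\gamma_2<1$, I take $m=\infty$. The head contributes $j^*/n$ and the tail contributes $\sum_{j>j^*}n^{-\gamma_2}j^{-\delta}\asymp (j^*)^{1-\delta}n^{-\gamma_2}=j^*/n$. The total is $n^{(1-\gamma_2)/\delta-1}=n^{-(\delta-1+\gamma_2)/\delta}$. A matching lower bound follows because any finite $m$ pays $m^{-2s}$, which exceeds this value whenever $m<j^*$ in the regime $\gamma_2>\gamma_{\rm c}$.
\item For $\delta>1$ and $\gamma_2\ge1$, we have $j^*\le1$ and the rate is $n^{-\gamma_2}=\sR^*(0,n_2)$.
\end{itemize}
The remaining work in this step is a routine comparison of exponents at the regime boundaries.
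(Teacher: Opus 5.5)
Your architecture matches the paper's: a single ellipsoid with $\bS=\mathrm{diag}(k^{2s-\alpha_1})$, Theorem~\ref{thm:minimax-ellipsoid} with $E_0=O(1)$, reduction to diagonal $\bA$, then the truncation formula. However, the mixture part of Step~3 has two concrete gaps. Write $I_j=[\bM]_{jj}/[\bQ]_{jj}\asymp n+n_2j^{\delta}$, $V_m=\sum_{j\le m}I_j^{-1}$ and $T_m=R^2(m+1)^{-2s}$.

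\emph{First gap: the case $\delta\le0$.} Your argument tacitly assumes $\delta>0$, while the statement covers every real $\delta$. The branch ``$\gamma_2>\gamma_{\rm c}$, $\delta\le1$'' includes $\delta\le0$, where $\gamma_{\rm c}\ge1$. For $\delta<0$ the auxiliary term dominates $I_j$ for $j<j^*=n^{(\gamma_2-1)/|\delta|}$, and the target term dominates beyond. Consequences:
\begin{itemize}
\item Your equivalence ``$n^{1/(1+2s)}\le j^*\iff\gamma_2\le\gamma_{\rm c}$'' is reversed.
\item Your plan for $\gamma_2>\gamma_{\rm c}$ (auxiliary optimizer beyond $j^*$, head $j^*/n$ negligible) does not apply, because the auxiliary optimizer $n_2^{1/(1+2s-\delta)}$ then lies below $j^*$.
\item At $\delta=0$, $j^*$ is undefined, and one simply has $I_j\asymp n+n_2$.
\end{itemize}
The paper avoids the sign issue by examining only the block $m/2\le j\le m$ at the candidate cutoff, where $j^\delta\asymp m^\delta$ whatever the sign of $\delta$.

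\emph{Second gap: the lower bound for $\gamma_2\le\gamma_{\rm c}$.} You justify ``auxiliary data cannot help'' by $[\bM]_{jj}\ge nj^{-2s}$, but that inequality points the wrong way: it only yields $\sR^*(n,n_2)\lesssim\sR^*(n,0)$. For $\delta>0$, the lower bound needs $I_j\lesssim n$ for $j\le j^*$ plus control of every cutoff:
\begin{itemize}
\item a cutoff $m\le j^*$ costs at least $m^{-2s}+m/n\gtrsim n^{-\Gamma_{\rm tar}}$;
\item a cutoff $m>j^*$ costs at least $V_{j^*}\asymp j^*/n\ge n^{-\Gamma_{\rm tar}}$.
\end{itemize}
A device implicit in the paper's case-by-case evaluation handles all such lower bounds at once. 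Since $V_m$ increases and $T_m$ decreases, exhibiting one $m$ with $V_m\asymp T_m$ already gives $\inf_{m'}(V_{m'}+T_{m'})\asymp V_m+T_m$.

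\emph{Where you depart from the paper, your arguments are sound.}

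Your pinching argument for diagonal $\bA$ is exact, not just up to constants. The objective in \eqref{eq:ellipsoid-variational} equals $\min_{\bB}\Tr[\bQ((\bI-\bB\bM^{1/2})\bA(\bI-\bM^{1/2}\bB^\top)+\bB\bB^\top)]$. This is a minimum of affine functions of $\bA$, hence concave, and it is invariant under conjugating $\bA$ by diagonal sign matrices. The paper instead bounds each diagonal entry using $\bM\succeq[\bM]_{jj}\be_j\be_j^\top$ and Sherman--Morrison.

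For the lower bound, the paper takes $a_j=\min\{1,T_m/V_m\}/[\bM]_{jj}$. This puts every term exactly at the saturation threshold and gives $\tfrac12\min\{V_m,T_m\}$ with no case analysis. Your uniform choice $a_j=R^2/m$ also works, but not because $a_j[\bM]_{jj}=O(1)$. The correct reason:
\begin{itemize}
\item each term is at least $\tfrac12\min\{j^{-2s}R^2/m,\,I_j^{-1}\}$;
\item $[\bM]_{jj}$ decreases in $j$ (because $2s>\max\{0,\delta\}$), so the saturated indices form an initial segment;
\item hence the sum is $\gtrsim\min\{V_{\lfloor m/2\rfloor},T_m\}$, which suffices because $V_{\lfloor m/2\rfloor}\asymp V_m$ for these power-law sums.
\end{itemize}
Taking $m=\infty$ in the $\delta>1$ regimes is also fine.
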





The proof (Appendix~\ref{apx:pf-thm:minimax}) applies Theorem~\ref{thm:minimax-ellipsoid} with $\bS=\bC_1^{1-2s/\alpha_1}$, for which $E_0=O(1)$, and evaluates the truncation formula \eqref{eq:truncation-intuition}, which here reads
\begin{equation}
\sR^*(n,n_2)\asymp\inf_{m\geq0}\Big\{m^{-2s}+\sum_{j\leq m}\frac1{I_j}\Big\},\qquad I_j\asymp n+n_2j^{\delta}.
\label{eq:minimax-truncation}
\end{equation}
The linear estimator in Appendix~\ref{apx:pf-thm:minimax-ellipsoid} attains this rate but requires knowledge of the population covariances. We will next identify when ridge regression matches this rate without such knowledge.


Let us further comment on Theorem \ref{thm:minimax}. In  \eqref{eq:minimax-truncation}, $I_j^{-1}$ can be regarded as the error of fitting coordinate $j$ of the signal. When $\delta > 0,$ $I_j$ is dominated by $n_2 j^{\delta} \gg n$ above the crossover index $j_{\rm c}:=(n/n_2)^{1/\delta}=n^{(1-\gamma_2)/\delta}$ where the error mainly comes from the auxiliary data. In contrast, below the crossover index $j_{\rm c}$, $I_j\asymp n$. We discuss three cases: 
\begin{itemize}
[itemsep=2pt,leftmargin=*]
    \item[(1)] \emph{Too few auxiliary samples ($\gamma_2\leq\gamma_{\rm c}$).} The target-only estimator learns $m_{\rm tar}=n^{1/(1+2s)}$ coordinates, and $\gamma_2\leq\gamma_{\rm c}$ is exactly $j_{\rm c}\geq m_{\rm tar}$. Thus, auxiliary data only helps with coordinates already well estimated in target data, and the mixed minimax rate does not improve on the target-only rate. 

     \item[(2)] \emph{Light auxiliary tails ($\delta\leq1$).} Beyond $j_{\rm c}$, the estimation cost $\sum_{j>j_{\rm c}}(n_2j^{\delta})^{-1}$ is not summable and the variance is dominated by the highest coordinates being learnt, as if only auxiliary data was available. Thus, data mixing does not improve the minimax rate.

     \item[(3)] \emph{Heavy auxiliary tails ($\delta>1$).} Here $\sum_{j>j_{\rm c}}(n_2j^{\delta})^{-1}\asymp j_{\rm c}/n$ is summable: once auxiliary data takes over, all remaining coordinates are learned at a total cost comparable to that of the first $j_{\rm c}$ ones. Thus, the mixed minimax rate can improve upon both target-only and auxiliary-only rates.

\end{itemize}

In summary, the mixed minimax rate improves on both single-dataset minimax rates if and only if $\delta>1$ and $\gamma_2\in(\gamma_{\rm c},1)$. For $\delta>0$, one can disregard the auxiliary data when $\gamma_2\leq\gamma_{\rm c}$, and disregard the target data when $\gamma_2\geq1$, without worsening the rate.


\begin{remark}[Positive distribution shift]
    At equal sample sizes ($\gamma_2=1$) and $\delta>0$, the auxiliary-only exponent $\Gamma_{\rm aux}(1)=2s/(1+2s-\delta)$ for $\delta<1$, or $1$ for $\delta\geq1$, is strictly larger than $\Gamma_{\rm tar}$: $n$ samples from the shifted distribution are more informative about the target than $n$ target samples, because they put larger weight on high-frequency directions. This is an example of `positive distribution shift', the observation that training over a different data distribution can help improve the test performance \citep{medvedev2026positive,medvedev2026shift}. For $\delta<0$, by contrast, $\gamma_{\rm c}>1$: polynomially more auxiliary than target samples are needed before the rate changes.
\end{remark}

\paragraph{Scaling law of mixed ridge regression.}

We now turn to the estimator in \eqref{eq:mixed-ridge} with the fixed signal $\btheta_*[k]=k^{-\beta}$, $s=(\alpha_1+2\beta-1)/2$, and $\delta = \alpha_1 - \alpha_2$. Let $\overline\sR_1(\lambda)$ denote the deterministic equivalent of the target risk given by Theorem~\ref{thm:deteq-risk}, and $\overline\sR_1^*=\inf_{\lambda>0}\overline\sR_1(\lambda)$ the risk under optimal regularization. Define $(x)_+:=\max\{x, 0\}$, and consider the truncated coefficients
\begin{equation*}
a=\min\{s,\alpha_1\},\quad b=\min\{s,\alpha_2\},\quad \kappa=
    \begin{cases}
        (a-b)/\delta,&\delta\ne0,\\
        0,&\delta=0,
    \end{cases}
    \quad
    \gamma_c^{\rm ridge}
    =1-\frac{\delta}{1+2\max\{a, b\}}
\end{equation*}
and define $$\Gamma_{\rm tar}^{\rm ridge}=\frac{2a}{1+2a},
\quad
\Gamma_{\rm aux}^{\rm ridge}=\frac{2b\gamma_2}{2b+(1-\delta)_+}.$$


\begin{theorem}[Ridge scaling law]
\label{thm:scaling-optimal-ridge}
Under Assumption~\ref{asm:distr} and the model \eqref{eq:ridge-power-model}, the optimally regularized target risks of individual datasets satisfy $\overline\sR_1^*(n,0)=\widetilde\Theta(n^{-\Gamma_{\rm tar}^{\rm ridge}})$ and $\overline\sR_1^*(0,n_2)=\widetilde\Theta(n^{-\Gamma_{\rm aux}^{\rm ridge}})$, with $\widetilde\Theta(\cdot)$ omitting logarithmic factors. Furthermore, the risk for the data mixture satisfies $\overline\sR_1^*(n,n_2)=\widetilde\Theta(n^{-\Gamma}),$
where
\begin{equation}
\Gamma=
\begin{cases}
\displaystyle
\Gamma_{\rm tar}^{\rm ridge}
+\frac{2\kappa(\gamma_2-1)_+}{1+2a},
&\gamma_2\leq\gamma_c^{\rm ridge},\\[10pt]
\displaystyle
\Gamma_{\rm aux}^{\rm ridge}
+\frac{2(1-\delta)_+\kappa(1-\gamma_2)_+}{2b+(1-\delta)_+},
&\begin{array}{l}
\gamma_2>\gamma_c^{\rm ridge},\ \delta\leq1,\ \text{or}\\
\gamma_2\geq1,\ \delta>1,
\end{array}\\[10pt]
\displaystyle
\frac{\delta-1+\gamma_2}{\delta},
&\gamma_c^{\rm ridge}<\gamma_2<1,\ \delta>1.
\end{cases}
\label{eq:optimal-ridge-polynomial}
\end{equation}
\end{theorem}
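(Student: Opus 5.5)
We will evaluate the deterministic equivalent $\overline\sR_1(\lambda)$ of Theorem~\ref{thm:deteq-risk} in closed form up to constants, for $K=2$, $\pi^*=\be_1$ and the power-law model \eqref{eq:ridge-power-model}, and then optimize over $\lambda$. The first step is to solve the fixed-point system asymptotically. We will show that for the relevant range of $\lambda$ (polynomially small in $n$) one has $\mu_i\asymp n_i$ whenever dataset $i$ is present, since $\Tr(\bC_i\overline\bG)$ stays bounded in the regime where the variance is not exploding; where it is not bounded we will track $\mu_i$ explicitly. Then $\overline\bG$ is diagonal with entries $g_k=(\mu_1k^{-\alpha_1}+\mu_2k^{-\alpha_2}+\lambda)^{-1}$. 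For $\delta>0$ the denominator has three regimes in $k$: it is dominated by $\mu_1k^{-\alpha_1}$ for $k\lesssim j_c=(\mu_1/\mu_2)^{1/\delta}$, by $\mu_2k^{-\alpha_2}$ for $j_c\lesssim k\lesssim k_2:=(\mu_2/\lambda)^{1/\alpha_2}$, and by $\lambda$ beyond $k_2$. If $k_2<j_c$ the middle regime disappears and the cutoff becomes $k_1=(\mu_1/\lambda)^{1/\alpha_1}$. The case $\delta<0$ is symmetric with the roles of the datasets swapped, and $\delta=0$ reduces to a single dataset with $\mu_1+\mu_2$ samples.

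Next we will reduce the $\bL^{-1}$ corrections to scalar factors. Since everything is diagonal, $\Tr(\bC_i\overline\bG\bC_j\overline\bG)=\sum_k c_{ik}c_{jk}g_k^2$, and each of these is dominated by the indices near the cutoff. We will show that the off-diagonal part of $\bL$ is at most a constant fraction of the diagonal $n_i/\mu_i^2$. The key inequality is $\sum_k c_{ik}^2g_k^2\le\sum_k c_{ik}g_k\,\mu_i^{-1}=\Tr(\bC_i\overline\bG)/\mu_i$, which, combined with the fixed point, gives $\mu_i^2\Tr(\bC_i\overline\bG\bC_i\overline\bG)\le n_i-\mu_i$, together with a Cauchy–Schwarz bound for the cross terms. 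From this we get $\bL^{-1}\asymp\mathrm{diag}(\mu_i^2/n_i)$ entrywise up to constants (positivity of $\bL^{-1}$ via a Neumann series). The bias then becomes
\[
\overline\sB_1\asymp\lambda^2\sum_k k^{-2\beta-\alpha_1}g_k^2,
\]
and the variance becomes $\overline\sV_1\asymp\sum_i\sigma_{\eps_i}^2 n_i\sum_k k^{-\alpha_1}c_{ik}g_k^2$.

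The third step is to compute these sums piecewise. In the bias, for indices below the cutoff, $\lambda^2k^{-2\beta-\alpha_1}/(\mu_ik^{-\alpha_i})^2$ yields the saturation. The sum is dominated either by the cutoff, giving the polynomial rate, or by $k=1$ once $s>\alpha_i$, which gives exactly the truncations $a=\min\{s,\alpha_1\}$ and $b=\min\{s,\alpha_2\}$. The variance splits into the target-dominated block, contributing about $j_c/n$ or $k_1/n$, and the auxiliary block, contributing $n_2^{-1}\sum_{j_c<k<k_2}k^{\delta}$. That second sum is dominated by $k_2$ when $\delta<1$ and by $j_c$ when $\delta>1$, which produces the $(1-\delta)_+$ and the threshold $\delta=1$.

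Finally, we will minimize the resulting piecewise power of $n$ over $\lambda=n^{-\ell}$, exactly as in the minimax truncation \eqref{eq:minimax-truncation}. The optimal cutoff is compared with $j_c=n^{(1-\gamma_2)/\delta}$, and this yields the three cases via the threshold $\gamma_c^{\rm ridge}$. The $\kappa$ correction terms arise when the bias saturates differently in the two blocks ($a\ne b$): the bias at low indices is governed by the target's saturation level and at high indices by the auxiliary's, and interpolating across $j_c$ gives the factor $(a-b)/\delta$. Logarithmic factors at boundary exponents are absorbed in $\widetilde\Theta$. The single-dataset rates follow by setting $n_2=0$ or $n_1=0$.

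The main obstacle will be the second step: controlling $\bL^{-1}$ uniformly, including in regimes where $\Tr(\bC_i\overline\bG)$ is not small and $\mu_i\ll n_i$ (e.g. large $n_2$ with small $\lambda$), and verifying that the cross terms do not change the order. If the diagonal-dominance argument fails there, we will instead invert the $2\times2$ matrix explicitly and bound its determinant from below using the fixed-point identities. The careful case bookkeeping in the last step, particularly around saturation, is the second most delicate part.
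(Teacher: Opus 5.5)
Your spectral-sum bookkeeping matches what the paper does: the index regimes separated by $j_c$ and the cutoff, saturation at $k=1$ producing $a$ and $b$, the $(a-b)/\delta$ interpolation across $j_c$, and the $\delta=1$ threshold in the variance. One typo: the auxiliary block of the variance is $n_2^{-1}\sum_{j_c<k\le k_2}k^{-\delta}$, not $k^{\delta}$.

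The genuine gap is the \emph{lower} bound on $\inf_{\lambda>0}\overline\sR_1(\lambda)$. Your reduction ($\mu_i\asymp n_i$, $\bL^{-1}$ replaced by $\mathrm{diag}(\mu_i^2/n_i)$, hence $\overline\sV_1\asymp\sum_i\sigma_{\eps_i}^2 n_i\tau_{\bC_1}[i]$) holds only for $\lambda$ with $\max_i\Tr(\bC_i\overline\bG)$ small. The infimum, however, runs over all $\lambda>0$. In this infinite-dimensional model both $\mu_i(\lambda)\to0$ as $\lambda\to0$, and the two fixed-point equations are genuinely coupled, so ``track $\mu_i$ explicitly'' is not an argument.

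Your $\bL^{-1}$ control also fails exactly there. The inequality $\mu_i^2\bK_{ii}\le n_i-\mu_i$ only gives $\bK_{ii}\le(1-\mu_i/n_i)\bD_{ii}$. Combined with $\bK_{12}\le\sqrt{\bK_{11}\bK_{22}}$, it yields only $\|\bD^{-1/2}\bK\bD^{-1/2}\|_{\op}\le(1-\mu_1/n_1)+(1-\mu_2/n_2)$, which need not be below $1$. Moreover, $\bL^{-1}$ has positive off-diagonal entries, so ``$\bL^{-1}\asymp\mathrm{diag}(\mu_i^2/n_i)$ entrywise'' is the wrong statement. What holds for every $\lambda$ is $\sum_i(\mu_i^2/n_i)\tau_{\bC_1}[i]\lesssim\overline\sV_1\lesssim\sum_i\mu_i\tau_{\bC_1}[i]$, and the two sides match only when $\mu_i\asymp n_i$.

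The missing idea is to avoid solving the small-$\lambda$ fixed point altogether.
\begin{itemize}
\item Use the weighted row-sum identity $\bL\bmu=\mathbf 1+\lambda\tau_{\bI}$, which follows from $\overline\bG^{-1}=\sum_j\mu_j\bC_j+\lambda\bI$. For every $\lambda$ it gives $\bD^{-1}\bK\bmu<\bmu$, hence $\bL^{-1}\ge\bD^{-1}\ge0$ entrywise and $\bL^{-1}\mathbf 1\le\bmu$.
\item Differentiating the fixed point gives $\bL\bmu'=\tau_{\bI}$, so $\frac{\dd}{\dd\lambda}(\bmu/\lambda)=-\lambda^{-2}\bL^{-1}\mathbf 1<0$. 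Thus $q_j(\lambda)=\lambda^{-1}\sum_i\mu_i(\lambda)c_{ij}$ is decreasing in $\lambda$.
\item Consequently the main bias $\sum_j c_{1j}\btheta_*[j]^2(1+q_j)^{-2}$ is increasing in $\lambda$.
\item By Cauchy--Schwarz with $h_j=nc_{1j}+n_2c_{2j}$, the variance lower bound $\overline\sV_1\gtrsim\sum_i(\mu_i^2/n_i)\tau_{\bC_1}[i]\ge\sum_j(c_{1j}/h_j)\bigl(q_j/(1+q_j)\bigr)^2$ is decreasing in $\lambda$.
\item Compare with a single balanced $\lambda_n$ at which $\epsilon_n=\max_i\sum_j c_{ij}/(h_j+\lambda_n)=o(1)$, so that $1-\epsilon_n\le\mu_i/n_i\le1$. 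This gives $\min\{B_0(\lambda_n),V_0(\lambda_n)\}\lesssim\inf_\lambda\overline\sR_1\lesssim B_0(\lambda_n)+V_0(\lambda_n)$.
\end{itemize}
After this, your case analysis applies, but you must still verify $\epsilon_n\to0$ for each balanced cutoff.

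A smaller point: the correction $\lambda^2\tau_{\btheta_*\btheta_*^\top}^\top\bL^{-1}\tau_{\bC_1}$ involves $\tau_{\btheta_*\btheta_*^\top}[2]$, which carries weight $k^{-\alpha_2}$. It should be absorbed into the variance rather than the target bias. This follows from $\lambda^2\tau_{\btheta_*\btheta_*^\top}[i]\le\sum_j c_{ij}\btheta_*[j]^2=O(1)$ and $\bL^{-1}\ge0$.
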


The above rate is evaluated on the deterministic equivalent $\overline\sR$ of Theorem \ref{thm:deteq-risk}. Lemma \ref{lem:vanishing-approx} below (proved in Appendix \ref{apx:pf-lem:vanishing-approx}) shows that, for the optimal ridge, the error bound in \eqref{eq:deteqapprx} is vanishing (i.e., $o(\overline\sR)$) and, thus, the same rate is achieved by the excess test risk $\sR(\widehat \theta)$ of the mixed ridge estimator. 

\begin{lemma}[Vanishing approximation error for optimal ridge.] 
\label{lem:vanishing-approx}
In the setting of Theorem~\ref{thm:scaling-optimal-ridge}, there exists $\lambda_* $ such that $\overline\sR_1(\lambda_*) = \widetilde \Theta(\overline{\sR}_1^*(n,n_2))$, and $|\sR_1(\widehat \btheta_{\lambda_*}) -  \overline{\sR}_1(\lambda_*)| = o(\overline{\sR}_1^*(n,n_2)),$ with probability at least $1-\sum_{i=1}^2 n_i^{-D}.$
\end{lemma}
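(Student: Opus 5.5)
The plan is to take $\lambda_*$ from the optimization carried out in the proof of Theorem~\ref{thm:scaling-optimal-ridge}, regime by regime. Then check that it satisfies Assumption~\ref{asm:nu}, invoke Theorem~\ref{thm:deteq-risk}, and show that both error terms in \eqref{eq:deteqapprx} are a polynomial factor $n^{-c}$, $c>0$, smaller than $\overline\sR_1^*(n,n_2)$.

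\textbf{Step 1 (choice of $\lambda_*$ and fixed point).} In each of the three regimes of \eqref{eq:optimal-ridge-polynomial}, take $\lambda_*=n^{-\ell}$, where $\ell$ is the exponent that balances bias and variance in the proof of Theorem~\ref{thm:scaling-optimal-ridge}. If that optimum is degenerate ($\lambda\to0$), take instead a polynomially small $\lambda$ at which $\overline\sR_1$ is already within log factors of $\overline\sR_1^*$; this gives the first claim $\overline\sR_1(\lambda_*)=\widetilde\Theta(\overline\sR_1^*)$. Along this choice, use the power-law computations behind Theorem~\ref{thm:scaling-optimal-ridge} to get:
\begin{itemize}
\item $\mu_i\asymp n_i$, or the explicit order of $\mu_i$ when it is deflated;
\item $\overline\bG_{kk}\asymp(\mu_1k^{-\alpha_1}+\mu_2k^{-\alpha_2}+\lambda_*)^{-1}$, with an effective cutoff index $m_*$ where $\mu_1k^{-\alpha_1}+\mu_2k^{-\alpha_2}\asymp\lambda_*$.
\end{itemize}
With these, $\nu_\lambda^i(n_i)$ from \eqref{eq:defnu} is evaluated as a ratio of power sums. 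For power laws it is bounded by $\mathrm{polylog}(n)$, as in \cite{misiakiewicz2024non,defilippis2024dimension}. Since $\lambda_*$ is polynomial in $n$, Assumption~\ref{asm:nu} follows.

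\textbf{Step 2 (comparing $V_{\bA}$ with $\overline\sR_1$).} Write $\Sigma=\sum_j\mu_j\bC_j$, so that $\Sigma\overline\bG=\bI-\lambda\overline\bG$. Since $\sum_i\mu_i\tau_{\bA}[i]=\Tr(\bA\overline\bG\Sigma\overline\bG)$ and $n_i/\mu_i=1+\Tr(\bC_i\overline\bG)$,
\begin{equation*}
V_{\bA}\le\max_i\big(1+\Tr(\bC_i\overline\bG)\big)\,\Tr(\bA\overline\bG\Sigma\overline\bG)\le\max_i\big(1+\Tr(\bC_i\overline\bG)\big)\,\Tr(\bA\overline\bG).
\end{equation*}
Apply this twice.
\begin{itemize}
\item For $\bA=\bC_1$: $\Tr(\bC_1\overline\bG)\asymp\sum_k k^{-\alpha_1}/(\mu_1k^{-\alpha_1}+\mu_2k^{-\alpha_2}+\lambda_*)$. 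This is of the order of the effective degrees of freedom $\mathrm{df}_1$ divided by the sample size, which matches the variance part $\overline\sV_1$ up to polylogs. Hence $V_{\bC_1}\lesssim\widetilde O(\overline\sR_1\cdot n_{\rm eff})$, with $n_{\rm eff}$ the dominant sample size.
\item For $\bA=\bA_*$: compute $\lambda_*^2V_{\bA_*}$ directly as $\lambda_*^2\sum_k k^{-2\beta}\,\overline\bG_{kk}^2(n_1k^{-\alpha_1}+n_2k^{-\alpha_2})$. Compare it coordinatewise with the bias $\lambda_*^2\sum_k k^{-2\beta-\alpha_1}\overline\bG_{kk}^2$. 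The ratio per coordinate is $(n_1+n_2k^{\delta})$.
\end{itemize}
So both terms exceed $\overline\sR_1$ by at most an explicit power $n^{p}$ coming from the effective sample size near the cutoff $m_*$.

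\textbf{Step 3 (exponent bookkeeping).} It remains to check that $e_K^3\,\mathrm{polylog}\cdot n^{p}\ll1$ for the bias term, and $e_K\,\mathrm{polylog}\cdot n^{p'}\ll1$ for the variance term. Here $e_K\asymp n^{-\min\{1,\gamma_2\}/2}$. The bound in Step 2 is crude when multiplied by $n_{\rm eff}$, so instead I would bound $V_{\bC_1}\lesssim\sum_i n_i\tau_{\bC_1}[i]$ against $\overline\sV_1=\sum_i\sigma_{\eps_i}^2\tau_{\bC_1}^\top\bL^{-1}\be_i$. I would use that $\bL^{-1}$ has diagonal of order $\mu_i^2/n_i$, which gives $V_{\bC_1}\asymp\overline\sV_1\cdot n_{\rm eff}$. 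It then suffices that the normalized error $n_{\rm eff}\,e_K\ll1$ be read correctly. I expect the true statement to be that $\sigma^2\eps_{3,i}$ already carries a $1/n_i$ normalization relative to the variance; I would verify this against the definitions in Appendix~\ref{sec:pfthm2} and, if needed, use the sharper form of the bound there.

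\textbf{Main obstacle.} The hardest case is the heterogeneous regime $\gamma_2<1$ with $\delta>1$ (the mixed rate $(\delta-1+\gamma_2)/\delta$). There $e_K\asymp n^{-\gamma_2/2}$ is controlled by the smaller auxiliary set, while the cutoff $m_*$ and $\lambda_*$ are set by the crossover $j_{\rm c}$. I would first try to show that the factor $n^{p}$ from Step 2 is at most $n^{\gamma_2}$ up to polylogs, so that $e_K^3$ wins by $n^{-\gamma_2/2}$. If the additive bound is too weak for tiny $\gamma_2$, note that in that range $\gamma_2\le\gamma_c^{\rm ridge}$ and the rate is the target-only one. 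I would then handle it by choosing $\lambda_*$ with the auxiliary block negligible, and bounding its contribution to the error separately with $e_K$ restricted to the dominant domain.
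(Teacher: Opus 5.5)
Your plan depends on Step~3, and Step~3 cannot be closed. The additive error in \eqref{eq:deteqapprx} is not $o(\overline\sR_1^*)$ over large parts of the parameter range, whatever $\lambda_*$ you pick. The problem is the bias term $\lambda^2\eps_2V_{\bA_*}$: $\eps_2\ge e_K^3$ is set by the \emph{smaller} sample size, whereas $V_{\bA_*}$ is weighted by the sample sizes themselves. For every $\lambda$,
\[
V_{\bA_*}=n\,\tau_{\bA_*}[1]+n_2\,\tau_{\bA_*}[2]\ \ge\ n\,\tau_{\bA_*}[1],
\qquad
e_K^3\ge n_2^{-3/2},
\]
so the bias error term is at least $n^{1-3\gamma_2/2}\,\lambda^2\tau_{\bA_*}[1]$, i.e.\ $n^{1-3\gamma_2/2}$ times the leading part of $\overline\sB_1$.

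\begin{itemize}
\item \textbf{Target-dominated regime.} For $\gamma_2\le\gamma_c^{\rm ridge}$ the bias--variance tradeoff is strict: along $\lambda=h_m$, $B_0\asymp m^{-2a}$ up to logs and $V_0\asymp m/n$. Hence any $\lambda$ with $\overline\sR_1(\lambda)=\widetilde\Theta(\overline\sR_1^*)$ has bias $\widetilde\Theta(\overline\sR_1^*)$. So for $\gamma_2<2/3$ (e.g.\ Figure~\ref{fig:powerlaw}(c), $\gamma_2=0.35$) the bound from Theorem~\ref{thm:deteq-risk} exceeds the risk by a polynomial factor. Your fallback of ``$e_K$ restricted to the dominant domain'' is not something Theorem~\ref{thm:deteq-risk} provides; it would be a new deterministic-equivalent estimate that the proposal does not supply.
\item \textbf{Your ``hardest case'', at the balancing $\lambda$.} Take $\alpha_1=3.2$, $\alpha_2=1.2$, $\beta=0.6$, $\gamma_2=0.78$. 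Then $m\asymp n^{0.325}$, $\lambda\asymp n^{0.39}$, $\lambda^2V_{\bA_*}\asymp n^{0.325}$ and $e_K^3\asymp n^{-1.17}$. The bound is thus of order $n^{-0.845}$, against $\overline\sR_1^*\asymp n^{-0.89}$.
\item \textbf{The regime $\gamma_2>1$.} The same mismatch recurs with the roles swapped: $e_K\asymp n^{-1/2}$ while $V_{\bA_*}\ge n_2\tau_{\bA_*}[2]$. Your plan does not address this regime.
\end{itemize}

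Your variance bookkeeping is also off, though harmlessly. Since $\bL^{-1}\approx\bD^{-1}=\operatorname{diag}(\mu_i^2/n_i)\approx\operatorname{diag}(n_i)$, one has $V_{\bC_1}\asymp\overline\sV_1$ with no extra $n_{\rm eff}$ factor. The variance part of the error is therefore fine; the obstruction is purely in the bias part.

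The missing idea is to bypass Theorem~\ref{thm:deteq-risk} altogether, which is what the paper does. At the cutoff $\lambda_n=h_m$ from the proof of Theorem~\ref{thm:scaling-optimal-ridge}, compare both $\sR_1(\widehat\btheta)$ and $\overline\sR_1(\lambda_n)$ with the population ridge risk $B_0+V_\sigma$ built from $\bG_0=(n\bC_1+n_2\bC_2+\lambda_n\bI)^{-1}$. Everything is controlled by $\epsilon=\max_i\Tr(\bC_i\bG_0)\le n^{-c}$. This quantity involves the pooled information $h_j=nj^{-\alpha_1}+n_2j^{-\alpha_2}$ rather than $\min(n,n_2)$.

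\begin{itemize}
\item The fixed-point equations give $1-\epsilon\le\mu_i/n_i\le1$, hence $|\overline\sR_1-(B_0+V_\sigma)|\lesssim(\epsilon+\lambda_n\epsilon)(B_0+V_\sigma)$.
\item An intrinsic-dimension matrix Bernstein bound gives $\|\bG_0^{1/2}\sum_i(\bX_i^\top\bX_i-n_i\bC_i)\bG_0^{1/2}\|_{\op}\lesssim\sqrt{\epsilon}\,\mathrm{polylog}(N)$. Hence $|\sR_1-(B_0+V_\sigma)|\lesssim(\sqrt\epsilon+\sqrt{\lambda_n\epsilon})\,\mathrm{polylog}(N)\,(B_0+V_\sigma)$.
\end{itemize}

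Since $\lambda_n\epsilon\asymp m^{1-\alpha_{\min}}\to0$, this is a \emph{multiplicative} $o(1)$ error, uniformly over all regimes. That is exactly what the additive, $e_K$-driven bound of Theorem~\ref{thm:deteq-risk} cannot deliver, and this route also removes any need to control $\nu$.
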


Comparing the rates for ridge of Theorem~\ref{thm:scaling-optimal-ridge} with the minimax ones of Theorem~\ref{thm:minimax} identifies the regimes in which the deterministic equivalent is minimax optimal. 

\begin{corollary}[Minimax optimality of ridge]
\label{cor:ridge-unrestricted-comparison}
In the setting of Theorem~\ref{thm:scaling-optimal-ridge}, we have that $\overline{\sR}_1^*(n,n_2)
=\widetilde{\Theta}\bigl(\sR^*(n,n_2)\bigr)$
if and only if one of the following holds: (i) $\gamma_2\leq\gamma_{\rm c}$ and $s\leq\alpha_1$; (ii) $\gamma_2\geq\gamma_{\rm c}$ and either
$s\leq\alpha_2$ or $\delta\geq1$.
In all other cases, $\overline{\sR}_1^*(n,n_2)$ is polynomially
larger than $\sR^*(n,n_2)$.
\end{corollary}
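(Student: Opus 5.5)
The plan is to reduce the corollary to a comparison of exponents. Both $\overline\sR_1^*(n,n_2)$ (Theorem~\ref{thm:scaling-optimal-ridge}) and $\sR^*(n,n_2)$ (Theorem~\ref{thm:minimax}) are explicit powers of $n$ up to logarithms. The statement is therefore equivalent to two claims: $\Gamma=\Gamma^{\rm minimax}$ exactly under (i) or (ii), and $\Gamma<\Gamma^{\rm minimax}$ strictly otherwise. Here $\Gamma^{\rm minimax}$ denotes the exponent from Theorem~\ref{thm:minimax}, evaluated at the same $s=(\alpha_1+2\beta-1)/2$.

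As a sanity check, I would first record that $\Gamma\le\Gamma^{\rm minimax}$ always holds. Lemma~\ref{lem:vanishing-approx} shows the ridge risk tracks $\overline\sR_1^*$. The signal $\btheta_*$ lies in the ellipsoid of every regularity $s'<s$, so the minimax lower bound at $s'$ applies, and letting $s'\uparrow s$ gives the inequality by continuity of the exponents in $s$. This guarantees that any mismatch found below is a strict loss for ridge, never a gain.

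The core step is a case-by-case algebraic comparison organised by the saturation pattern of $a=\min\{s,\alpha_1\}$ and $b=\min\{s,\alpha_2\}$.

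\emph{Unsaturated case $a=b=s$.} Here $\kappa=0$ and $\gamma_c^{\rm ridge}=\gamma_{\rm c}$. The three formulas in \eqref{eq:optimal-ridge-polynomial} then reduce term by term to those of Theorem~\ref{thm:minimax}. In particular $\Gamma_{\rm aux}^{\rm ridge}$ equals $2s\gamma_2/(1+2s-\delta)$ for $\delta<1$ and equals $\gamma_2$ for $\delta\ge1$, since $(1-\delta)_+=0$. This gives equality in both (i) and (ii) whenever $s\le\min\{\alpha_1,\alpha_2\}$.

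\emph{Saturated cases.} I would split by the sign of $\delta$.
\begin{itemize}
\item For $\delta>0$ we have $\alpha_2<\alpha_1$, hence $b\le a$ and $\gamma_{\rm c}<1$.
  \begin{itemize}
  \item In the regime $\gamma_2\le\gamma_{\rm c}$ the correction term carries $(\gamma_2-1)_+=0$. If $s\le\alpha_1$, then $\max\{a,b\}=s$, so the thresholds agree and ridge matches; this is (i). If $s>\alpha_1$, then $2\alpha_1/(1+2\alpha_1)<2s/(1+2s)$, a strict gap.
  \item In the regime $\gamma_2>\gamma_{\rm c}$ with $\delta\ge1$, the exponents $\gamma_2$ and $(\delta-1+\gamma_2)/\delta$ do not depend on $a,b$. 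So ridge matches, up to checking that the thresholds $\gamma_c^{\rm ridge}$ and $\gamma_{\rm c}$ yield the same exponent in the window between them. This is (ii) with $\delta\ge1$.
  \item In the regime $\gamma_2>\gamma_{\rm c}$ with $\delta<1$ and $s>\alpha_2$, I would show that $\Gamma_{\rm aux}^{\rm ridge}$ plus its $\kappa$-correction is strictly below $2s\gamma_2/(1+2s-\delta)$.
  \end{itemize}
\item For $\delta<0$, the condition $s\le\alpha_1$ forces $s<\alpha_2$, so $a=b=s$ and (i) is covered by the unsaturated case. The remaining configurations I would check directly in the same way.
\end{itemize}

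The main obstacle is this last family of saturated cases. There the $\kappa$-corrections are active, and the two threshold definitions $\gamma_{\rm c}$ and $\gamma_c^{\rm ridge}$ differ because $\max\{a,b\}<s$. As a result, ridge and minimax may sit in different branches for $\gamma_2$ between the two thresholds. I would handle this by:
\begin{itemize}
\item proving that each branch expression is strictly increasing in the regularity parameter, treating $a$ and $b$ as functions of $s$;
\item using the fact that each exponent is continuous in $\gamma_2$ across the thresholds, so its maximum over branches is controlled.
\end{itemize}
Strictness then follows from $b<s$ (or $a<s$) together with $\gamma_2>0$. It must also be verified that the positive $\kappa$-correction never closes the gap, which I would check by direct algebra after clearing denominators.
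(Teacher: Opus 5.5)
Your overall strategy, reducing everything to a comparison of the exponents in Theorems~\ref{thm:minimax} and~\ref{thm:scaling-optimal-ridge}, is what the paper does; it gives no argument beyond ``comparing the rates.'' Two parts of your plan are wrong, however.

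First, the sanity check is invalid. A minimax lower bound over the ellipsoid of regularity $s'$ bounds the \emph{supremum} over that class of an estimator's risk. It says nothing about the risk at the single signal $\btheta_*[k]=k^{-\beta}$; the constant estimator $\widehat\btheta\equiv\btheta_*$ has zero risk there. So $\Gamma\le\Gamma^{\rm minimax}$ cannot be obtained this way. It does hold, but only as an output of the case analysis, so you cannot use it to orient mismatches. Every ``polynomially larger'' claim has to be verified directly.

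Second, and more seriously, your strictness heuristics fail exactly at $\gamma_2=\gamma_{\rm c}$, which is why (ii) is stated with $\gamma_2\ge\gamma_{\rm c}$.

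(a) For $\delta>0$ and $s>\alpha_1$ you have $\gamma_c^{\rm ridge}=1-\delta/(1+2\alpha_1)<\gamma_{\rm c}$. The window between the two thresholds therefore lies inside your regime $\gamma_2\le\gamma_{\rm c}$. It does not lie in the regime $\gamma_2>\gamma_{\rm c}$, where ridge and minimax already sit in corresponding branches. On $(\gamma_c^{\rm ridge},\gamma_{\rm c}]$ the ridge exponent is no longer $2\alpha_1/(1+2\alpha_1)$.
\begin{itemize}
\item For $\delta\ge1$ it is $(\delta-1+\gamma_2)/\delta$, which equals $2s/(1+2s)$ at $\gamma_2=\gamma_{\rm c}$. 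So your bullet asserts a strict gap at a point where the corollary correctly asserts equality.
\item For $0<\delta<1$ the gap in the window is strict. Proving it needs the branch-2 exponent to be increasing in $\gamma_2$, together with $\frac{2s}{1+2s}-\Gamma|_{\gamma_2=\gamma_{\rm c}}=\frac{2(1-\delta)(s-\alpha_1)}{(1+2s)(1+2\alpha_2-\delta)}>0$.
\end{itemize}

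(b) For $\delta<0$ and $\alpha_1<s\le\alpha_2$ you have $a=\alpha_1<s=b$, $\kappa=(s-\alpha_1)/|\delta|$ and $\gamma_c^{\rm ridge}=\gamma_{\rm c}$. At $\gamma_2=\gamma_{\rm c}$ the first branch gives $\frac{2\alpha_1+2\kappa(\gamma_{\rm c}-1)}{1+2\alpha_1}=\frac{2s}{1+2s}$, so the $\kappa$-correction \emph{does} close the gap. For $\gamma_2>\gamma_{\rm c}$ the active exponent $2s\gamma_2/(1+2s-\delta)$ does not involve $a$ at all, so $a<s$ costs nothing.

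Hence your principles ``strictness follows from $a<s$ or $b<s$'' and ``the $\kappa$-correction never closes the gap'' are false, and carrying them out would produce the wrong equality set. The fix is to treat both exponents, in each saturation pattern, as continuous piecewise-linear functions of $\gamma_2$. This includes $\delta=0$, which you omit. Then compare their values at the kinks $\gamma_c^{\rm ridge}$, $\gamma_{\rm c}$, $1$ and their slopes in between.
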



In particular, \textbf{whenever the mixed minimax rate is strictly better than the minimax rate of either dataset alone} (i.e., $\delta>1$ and $\gamma_{\rm c}<\gamma_2<1$), \textbf{ridge regression achieves the mixed minimax rate}. 
This conclusion holds for every regularity $s$, including $s>\alpha_1$,
where target-only ridge is suboptimal because of saturation.
Thus, a sufficiently heavier auxiliary spectral tail allows ridge
to attain the improved minimax exponent using only
$n_2=o(n)$ auxiliary samples. We note that \emph{(i)} $\gamma_2>\gamma_{\rm c}$ specifies how many auxiliary
samples are needed to improve on the target-only risk, and \emph{(ii)}
$\gamma_2<1$ ensures that the target samples still improve on the
auxiliary-only risk. This establishes a regime in terms of sample sizes where data mixtures are particularly effective. 

\begin{remark}[Saturation induced by mixing]
Ridge regression cannot attain the minimax rate when the source exponent $s$ exceeds the capacity exponent of the spectrum it fits. This is the classical saturation of Tikhonov regularization \citep{bauer2007regularization,li2023saturation}, and it appears in the mixed regression setting of Theorem \ref{thm:scaling-optimal-ridge} through the truncations $a=\min\{s,\alpha_1\}$ and $b=\min\{s,\alpha_2\}$. 
For $\alpha_2<s\leq\alpha_1$, $0<\delta<1$, ridge is rate optimal on the target data alone, and when  $\gamma_2>\gamma_{\rm c}$, auxiliary data still improves its exponent, but it no longer attains the mixed minimax rate. When $\delta>1$, saturation is harmless as soon as $\gamma_2\geq\gamma_{\rm c}$. For $s>\alpha_1$, ridge is already suboptimal on the target data alone, and when $\delta>0$ the threshold $\gamma_{\rm c}^{\rm ridge}$ lies below $\gamma_{\rm c}$: auxiliary data partially compensates for saturation before the minimax rate itself changes. We expect iterated Tikhonov or early stopping to remove these truncations \citep{bauer2007regularization}.


\end{remark}






\section{Numerical experiments}\label{sec:num}

\paragraph{Linear models.} We consider a linear model following the setup of Section \ref{sec:setup}.
Figure \ref{fig:powerlaw} indicates that the empirical risk of the mixed ridge estimator matches the deterministic equivalent of Theorem \ref{thm:deteq-risk} and the scaling law predicted by  Theorem \ref{thm:scaling-optimal-ridge}. Furthermore, in panel (a), ridge regression achieves the mixed minimax rate and strictly improves upon the minimax rates of either dataset alone; in contrast, in panels (b) and (c), no improvement can be obtained and the rate of mixed ridge respectively coincides with the auxiliary-only and target-only rate. Next, we evaluate our deterministic equivalent on real image features from CIFAR-10 and ImageNet-100.
The results in Figure \ref{fig:cifar-imagenet} indicate that the validity of the deterministic equivalent of Theorem \ref{thm:deteq-risk} goes well beyond the technical assumptions made therein. 

\begin{figure}[t]
\centering
\captionsetup[subfigure]{skip=2pt}

\begin{subfigure}[t]{0.328\linewidth}
    \centering
    \includegraphics[
        width=\linewidth,
        trim=5 3 5 3,
        clip
    ]{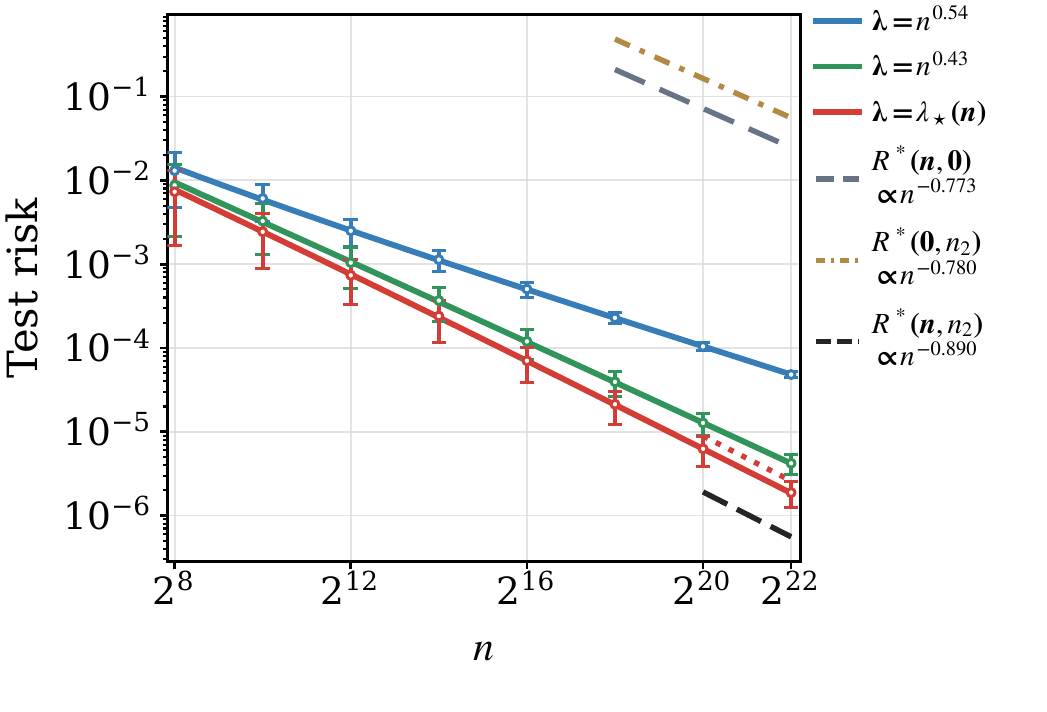}
    \caption{$\gamma_2=0.78$}
    \label{fig:powerlaw-improve-both}
\end{subfigure}\hfill%
\begin{subfigure}[t]{0.328\linewidth}
    \centering
    \includegraphics[
        width=\linewidth,
        trim=5 3 5 3,
        clip
    ]{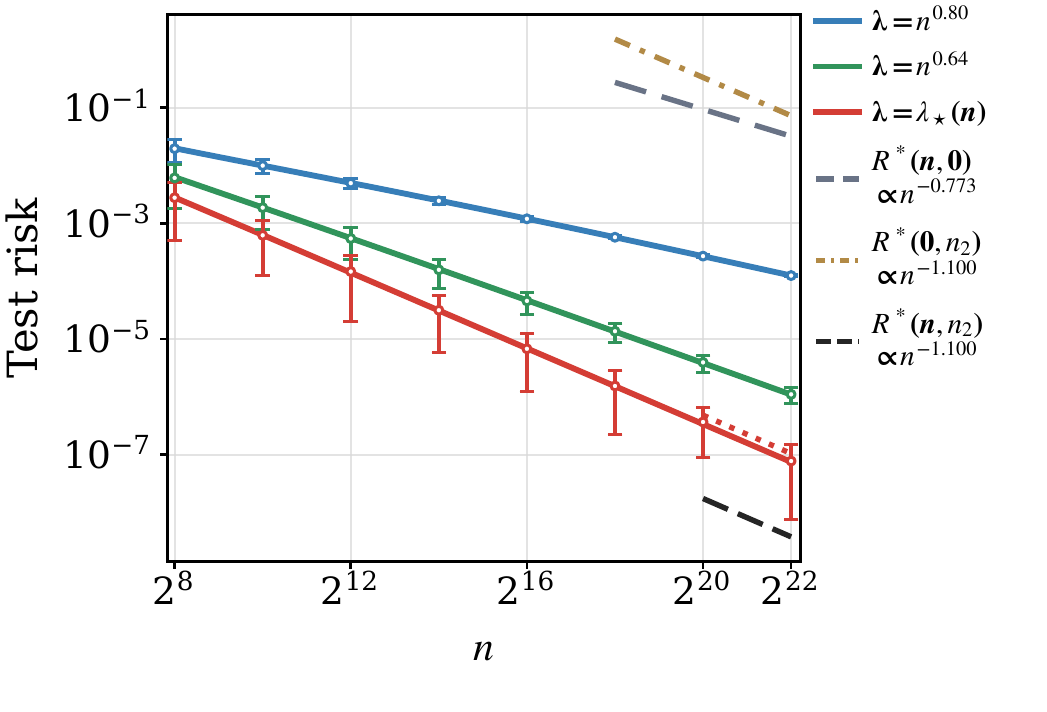}
    \caption{$\gamma_2=1.1$}
    \label{fig:powerlaw-aux}
\end{subfigure}\hfill%
\begin{subfigure}[t]{0.328\linewidth}
    \centering
    \includegraphics[
        width=\linewidth,
        trim=5 3 5 3,
        clip
    ]{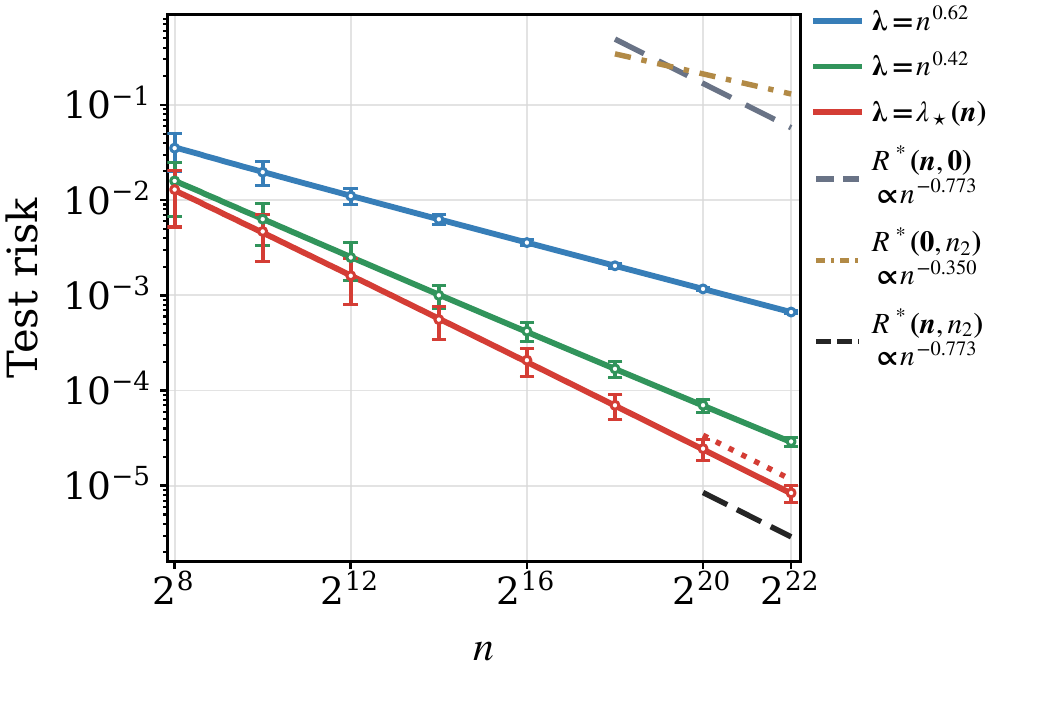}
    \caption{$\gamma_2=0.35$}
    \label{fig:powerlaw-tar}
\end{subfigure}

\caption{Test error (dots) of mixed ridge regression under power-law covariance, together with its corresponding deterministic equivalent from Theorem \ref{thm:deteq-risk} (solid lines). The \textcolor{targetcolor}{grey}, \textcolor{sourcecolor}{golden} and black dash lines are the minimax rates of target-only, auxiliary-only and mixture datasets from Theorem
\ref{thm:minimax}. The \textcolor{ridgeblue}{blue}, \textcolor{ridgegreen}{green} and \textcolor{ridgeoptimal}{red} lines correspond to different regularizations with the optimal one in \textcolor{ridgeoptimal}{red}. Here $\bx_{i} \sim \cN(0, \bC_i),$ with $\bC_i$ as in \eqref{eq:ridge-power-model}. We set $\alpha_1 = 3.2$, $\alpha_2 = 1.2$, $\beta=0.6$, and $d=512$. }
\label{fig:powerlaw}
\end{figure}

\begin{figure}[t]
\centering
\captionsetup[subfigure]{skip=2pt}

\centering
\begin{subfigure}[t]{0.38\linewidth}
    \centering
    \includegraphics[width=\linewidth]{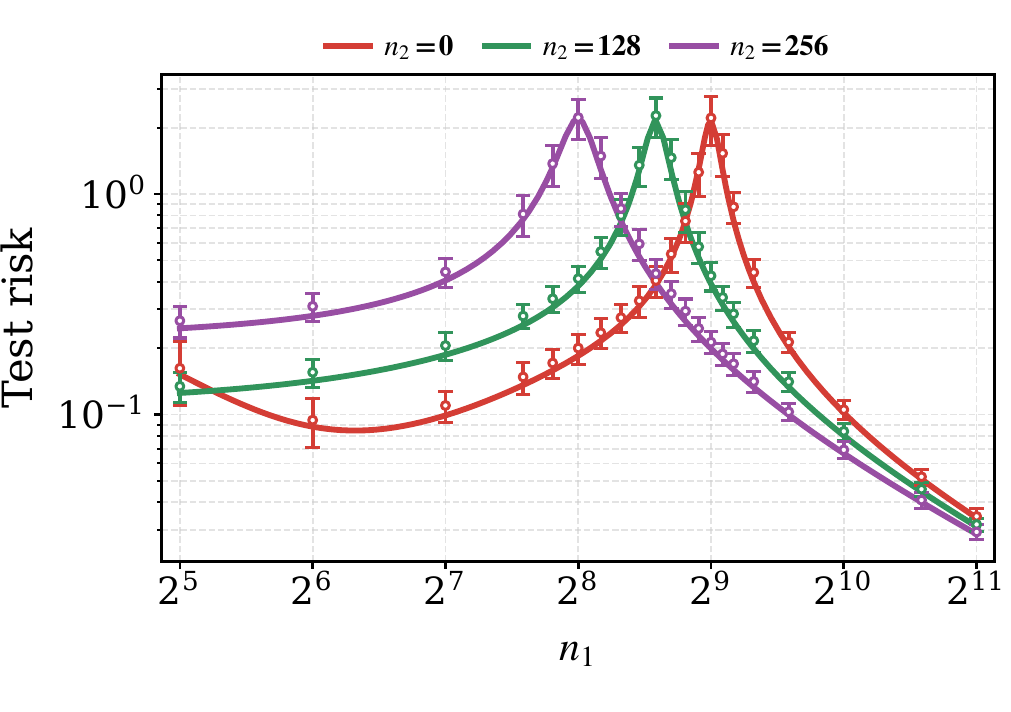}
    \caption{ImageNet-100}
    \label{fig:deteq-imagenet}
\end{subfigure}\hspace{4em}
\begin{subfigure}[t]{0.38\linewidth}
    \centering
    \includegraphics[width=\linewidth]{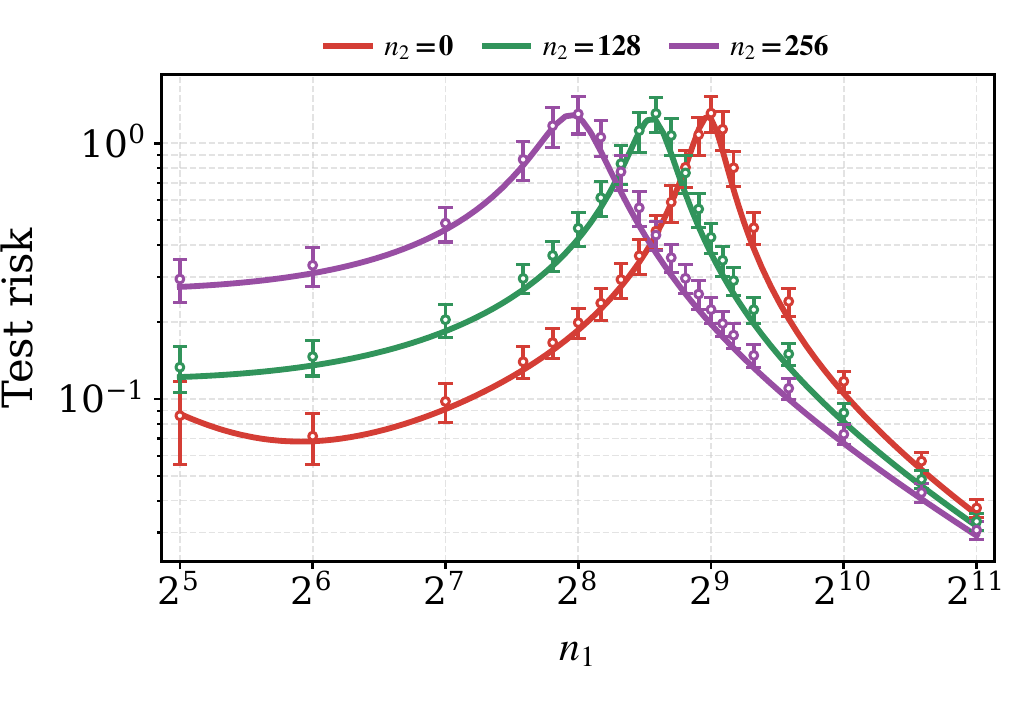}
    \caption{CIFAR-10}
    \label{fig:deteq-cifar}
\end{subfigure}
\caption{Test error (dots) of mixed ridge regression, together with its corresponding deterministic equivalent (solid lines). For CIFAR-10, features are extracted via a pretrained CLIP ViT-B/32 encoder, classes $0$--$4$ form the target domain and classes
$5$--$9$ form the auxiliary domain.
For ImageNet-100, features are extracted via a pretrained ResNet-18 encoder, and the corresponding
class splits are $0$--$49$ and $50$--$99$. In both settings, we take $d=512$ and generate labels according to
$y=\bx^\top\btheta_*+\xi$, with $\xi\sim\mathcal{N}(0,0.1)$.
The shared $\btheta_*$ is supported on the top-$10$ 
eigenvectors of 
$( \bC_1+\bC_2)/2$, with random signs and coefficient
magnitudes proportional to $j^{-1/2}$ for $j=1,\ldots,10$.
  } 
\label{fig:cifar-imagenet}
\end{figure}

\paragraph{Language modeling.} Finally, we consider autoregressive language
modeling. We pretrain an 81.5M-parameter GPT-2-style model
\citep{radford2019language} via nanoGPT
\citep{karpathy2022nanogpt} on the SlimPajama dataset
\citep{cerebras2023slimpajama}. We use $n$ tokens from the \texttt{StackExchange}
domain (target domain) and $n_2=
    4
    \left(
        \frac{n}{4\,\mathrm{Mi}}
    \right)^{\gamma_2}$ tokens from a fixed mixture of the remaining six
domains (auxiliary domain). Full experimental details are in
Appendix~\ref{apx:detail-experiments}. 
Figure~\ref{fig:lang-scalinglaw} (panels (a)-(c)) shows that the improvement in the scaling law crucially relies on the relative growth rate of
the token budget of the auxiliary domain: for $\gamma_2=1.25$, the scaling law of the data mixture improves upon that of either dataset alone; however, this improvement is not noticeable anymore already when $\gamma_2=0.75$ or $\gamma_2=1.75$. This qualitatively supports our theoretical
finding that data mixtures improve scaling only within an appropriate
relative-growth regime. Finally, the token-frequency distribution in panel (d) shows that the auxiliary domain
places relatively more probability mass on tokens that are infrequent
in the target domain. This provides a token-level analogue of the heavier-tailed
auxiliary covariance in our linear theory: the auxiliary data offer
greater coverage of directions/tokens that are weakly represented
in the target distribution. We also find that the scaling law of the data mixture does not improve when the token frequencies of the two datasets are too close with each other---an observation that agrees with our theory (see Figure \ref{fig:lang-scalinglaw-close} in Appendix \ref{apx:non-improve-close}).

\begin{figure}[t]
\centering
\captionsetup[subfigure]{skip=2pt}
\centering

\begin{subfigure}[t]{0.4\linewidth}
    \centering
    \includegraphics[width=\linewidth,trim=6 4 6 4,clip]{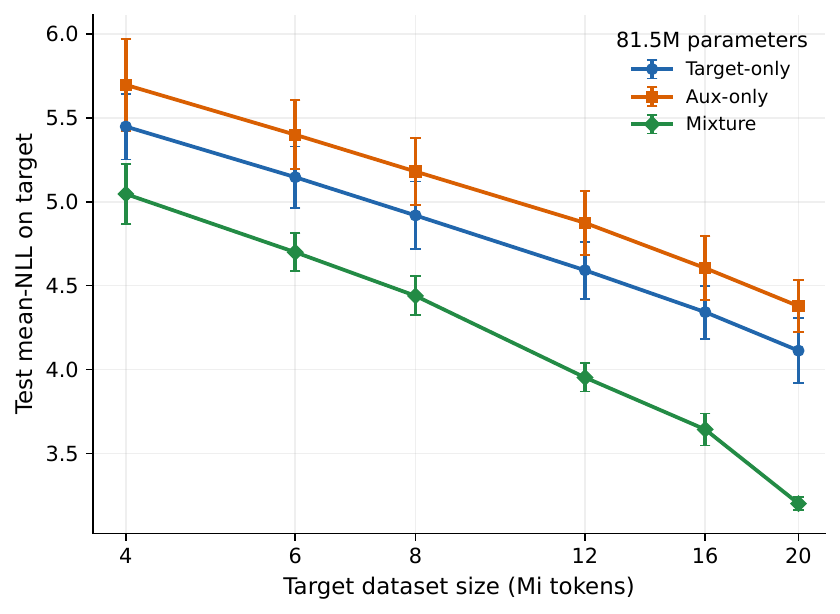}
    \caption{$\gamma_2=1.25$}
    \label{fig:result2-b125}
\end{subfigure}\hspace{2em}
\begin{subfigure}[t]{0.4\linewidth}
    \centering
    \includegraphics[width=\linewidth,trim=6 4 6 4,clip]{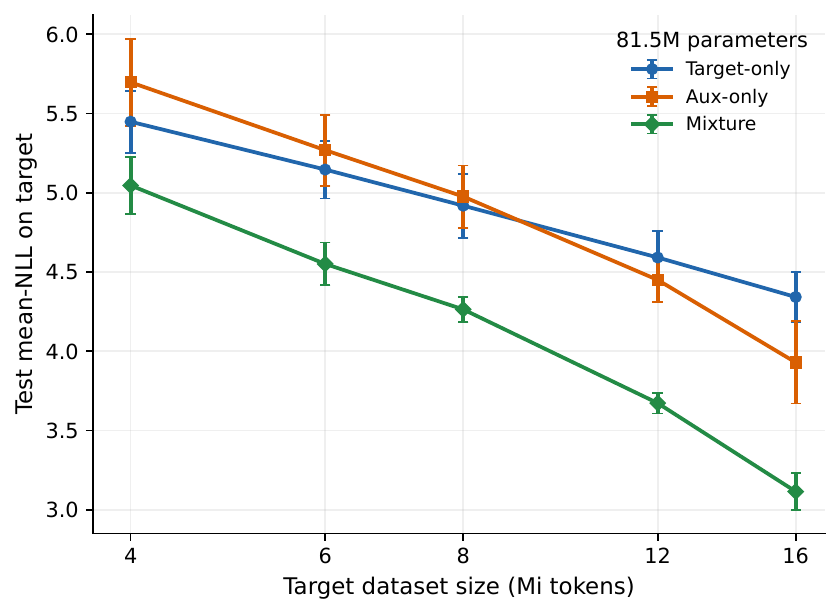}
    \caption{$\gamma_2=1.75$}
    \label{fig:result3-b175}
\end{subfigure}

\begin{subfigure}[t]{0.4\linewidth}
    \centering
    \includegraphics[width=\linewidth,trim=6 4 6 4,clip]{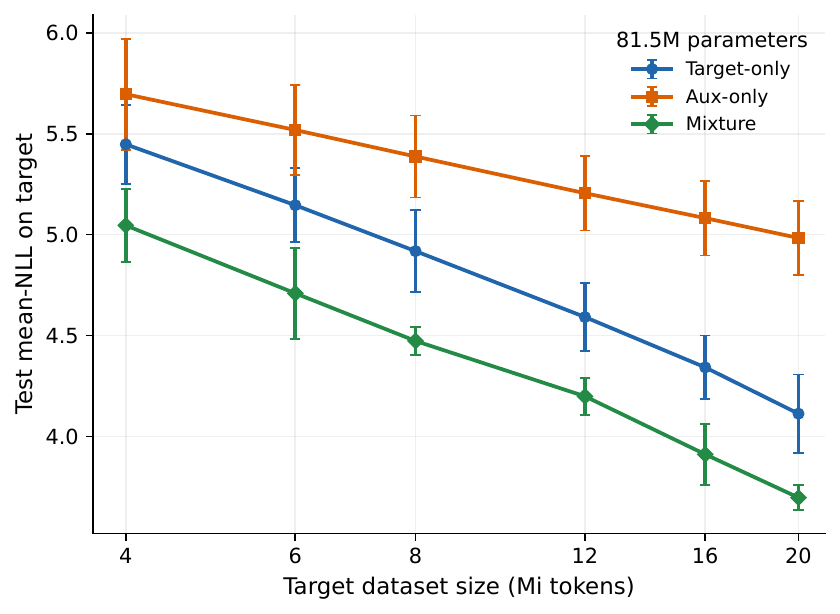}
    \caption{$\gamma_2=0.75$}
    \label{fig:result3-b075}
\end{subfigure}\hspace{2em}
\begin{subfigure}[t]{0.4\linewidth}
    \centering
    \includegraphics[width=\linewidth,trim=6 4 6 4,clip]{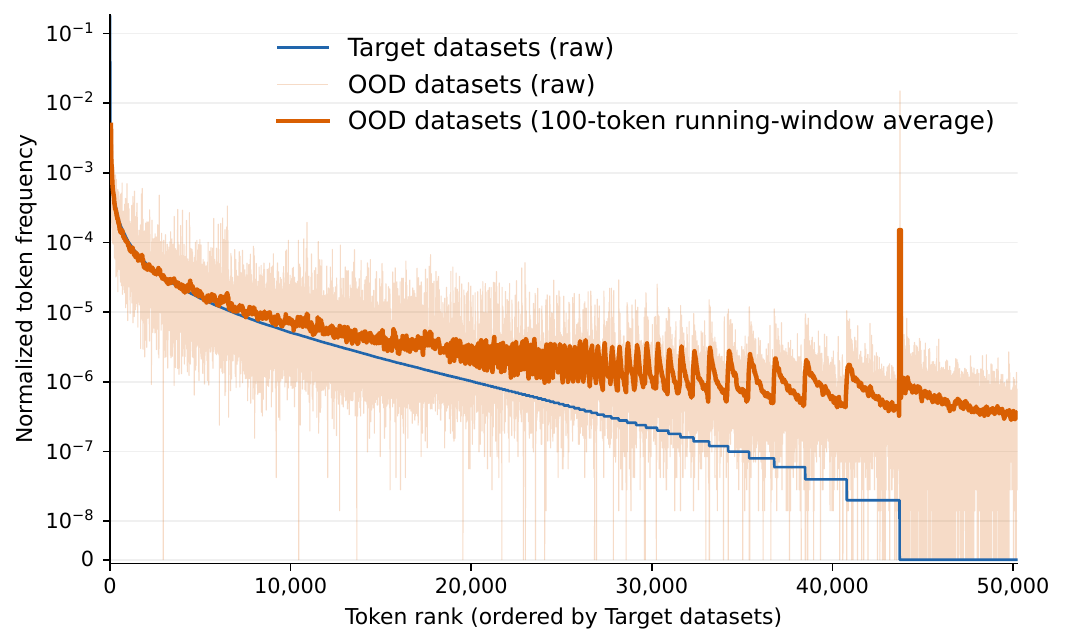}
    \caption{Token-frequency distributions}
    \label{fig:token-freq}
\end{subfigure}

\caption{
Target-domain test negative log-likelihood (NLL) scaling of an 81.5M-parameter GPT-2-style language model
trained from scratch on SlimPajama. Panels
(a)--(c) report mean NLL on a fixed target-domain test set as a function
of the reference target budget $n$. We use
$n\in\{4,6,8,12,16,20\}\,\mathrm{Mi}$, except for the
$\gamma_2=1.75$ experiment that uses $n\leq16\,\mathrm{Mi}$. Panel (d) compares the normalized
target and auxiliary token frequencies. Token types are
ordered by decreasing target-domain frequency. The light auxiliary
curve shows the raw frequencies, and the darker, thicker curve shows
their 100-token running-window average.
}
\label{fig:lang-scalinglaw}
\end{figure}

\section{Conclusion}


We study high-dimensional regression on data mixtures that share a regression function but differ in their covariances and noise levels. For general covariances, we characterize the minimax risk, and for commutative covariances, we derive deterministic equivalents for the test error of a mixed ridge estimator. Specializing these results to target and auxiliary domains with aligned power-law spectra, we derive scaling laws, thus establishing conditions under which mixing data is especially helpful. In particular, we show that, whenever the mixed minimax rate improves on the rate of either dataset alone, ridge regression is minimax optimal. This is the case when \emph{(i)} the auxiliary spectrum is heavy tailed ($\delta>1$), i.e., the two datasets cover complementary parts of the spectrum, and \emph{(ii)} the auxiliary sample size has an appropriate growth rate ($\gamma_{\rm c}<\gamma_2<1$). Our experiments on transformers trained on a mixture of text corpora display a phenomenology that qualitatively agrees with the theory.

Our work opens the door to several interesting future directions. 
A natural next step is to determine whether gradient descent methods attain the mixed minimax rates when ridge saturates, as recently shown for a single dataset by \cite{wu2025risk}.
Considering random-feature models \citep{defilippis2024dimension,bordelon2024dynamical}, or features trained via a step of GD \citep{ba2022highdimensionalasymptoticsfeaturelearning,pmlr-v235-cui24d}, would then identify how model size must scale with data budget to preserve the improvement in the scaling law.
Finally, it would be valuable to jointly optimize domain proportions under data and compute constraints, as well as to characterize mixture transfer across model scales, motivated
by recent theory on hyperparameter transfer \citep{ghosh2026understanding}.



\section*{Acknowledgments and AI use}

DW and MM were funded in part by the Austrian Science Fund (FWF) 10.55776/COE12. For the purpose of open access, the authors have applied a CC BY public copyright license to any Author Accepted Manuscript version arising from this submission. MM was partially funded by the European Union (ERC, INF2, project number 101161364). Views and opinions expressed are however those of the author(s) only and do not necessarily reflect those of the European Union or the European Research Council Executive Agency. Neither the European Union nor the granting authority can be held responsible for them.

We used generative AI tools to polish the writing, implement straightforward parts of the code, fill in proof details after we had identified the key methods and arguments, check proofs for correctness, and improve the coverage of related work. We did not use generative AI tools to develop the paper's core ideas and define the problem setting. We take full responsibility for the final content of this work, including all text, claims, code, and other artifacts produced with the assistance of generative AI. 


\bibliography{references}

\appendix





\section{Proof of Theorem \ref{thm:minimax-ellipsoid}}
\label{apx:minimax-ellipsoid}

First, define the normalized weight $\bu = \bS^{1/2} \btheta, \widehat \bu =\bS^{1/2} \widehat \btheta,$ and given any PSD matrix $\bN,$ we construct an auxiliary Gaussian sequence model \citep{johnstone2002function}, \begin{equation}
\label{def:seq-ellipsoid}
\by_{\bN, \rm{seq}}=\bN^{1/2}\bu+\bg, \quad \bg\sim\mathcal N(0,\bI),
\end{equation} where $\bN \in \R^{d \times d}$ and $\bu, \bg,\by_{\bN, \rm{seq}} \in \R^d. $

Define the unrestricted and linear minimax errors of the Gaussian sequence model as 
\begin{align*}
\mathrm R_{*, \rm{seq}}(\bN)&=\inf_{\widehat\bu}\sup_{\|\bu\|_2\leq R}\E_{\bu}\|\bQ^{1/2}(\widehat\bu(\by_{\bN, \rm{seq}})-\bu)\|_2^2,\\
\mathrm R_{*,\mathrm{seq}}^{\mathrm{lin}}(\bN)&=\inf_{\bB}\sup_{\|\bu\|_2\leq R}\E_{\bu}\|\bQ^{1/2}(\bB\by_{\bN, \rm{seq}}-\bu)\|_2^2,
\end{align*} where we recall $\bQ = \sum_{k=1}^K \pi_k^* \bH_k.$

We first have the following variational characterization of the minimax error of the Gaussian sequence model \eqref{def:seq-ellipsoid}. 
\begin{theorem}[Exact Gaussian linear minimax error]
\label{thm:ellipsoid-gaussian-seq}
For the Gaussian sequence model \eqref{def:seq-ellipsoid} defined by any PSD matrix $\bN,$
\begin{equation}
 \pi^{-2} \mathcal L(\bN)\leq \mathrm R_{*,\mathrm{seq}}(\bN) \leq  \mathrm R_{*,\mathrm{seq}}^{\mathrm{lin}}(\bN) =  \mathcal L(\bN),
\label{eq:ellipsoid-linear-identity}
\end{equation}
where $L(\bN)=\sup_{\substack{\bA\succeq0\\\Tr(\bA)\leq R^2}}\Tr\!\left[\bQ\bA^{1/2}(\bI+\bA^{1/2}\bN\bA^{1/2})^{-1}\bA^{1/2}\right].$
\end{theorem}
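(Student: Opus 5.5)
The middle inequality $\mathrm R_{*,\mathrm{seq}}(\bN)\le \mathrm R^{\mathrm{lin}}_{*,\mathrm{seq}}(\bN)$ is trivial, since linear estimators are a subclass of all estimators. It remains to prove two things: the identity $\mathrm R^{\mathrm{lin}}_{*,\mathrm{seq}}(\bN)=\mathcal L(\bN)$, and the lower bound $\mathrm R_{*,\mathrm{seq}}(\bN)\ge\pi^{-2}\mathcal L(\bN)$.

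\emph{Linear identity.} For a linear estimator $\bB\by_{\bN,\rm seq}$, the risk is
\[
\|\bQ^{1/2}(\bB\bN^{1/2}-\bI)\bu\|_2^2+\Tr(\bQ^{1/2}\bB\bB^\top\bQ^{1/2}).
\]
The supremum over $\|\bu\|_2\le R$ of the bias term is a largest-eigenvalue quantity. Writing $\bu\bu^\top$ and convexifying to $\bA\succeq0$ with $\Tr(\bA)\le R^2$, the worst case becomes a supremum of a function that is linear in $\bA$, so
\[
\mathrm R^{\rm lin}=\inf_{\bB}\sup_{\bA}F(\bB,\bA),\qquad F(\bB,\bA)=\Tr\big[\bQ(\bB\bN^{1/2}-\bI)\bA(\bB\bN^{1/2}-\bI)^\top\big]+\Tr(\bQ\bB\bB^\top).
\]
The function $F$ is convex quadratic in $\bB$ and linear in $\bA$, and the set of feasible $\bA$ is convex and compact (weak-* compact in the trace-class case). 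A minimax theorem (Sion) therefore lets us swap the order: $\mathrm R^{\rm lin}=\sup_{\bA}\inf_{\bB}F$.

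For fixed $\bA$, the inner problem is the Bayes-linear (Wiener filter) problem with prior covariance $\bA$. Its minimizer is
\[
\bB=\bA\bN^{1/2}(\bN^{1/2}\bA\bN^{1/2}+\bI)^{-1},
\]
and this $\bB$ does not depend on $\bQ$, since $\bQ$ weights the loss uniformly across the posterior error. The minimum value is $\Tr[\bQ\,\Sigma_{\rm post}]$, where
\[
\Sigma_{\rm post}=\bA-\bA\bN^{1/2}(\bI+\bN^{1/2}\bA\bN^{1/2})^{-1}\bN^{1/2}\bA=\bA^{1/2}(\bI+\bA^{1/2}\bN\bA^{1/2})^{-1}\bA^{1/2},
\]
the last equality by the Woodbury/push-through identity. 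This gives exactly $\mathcal L(\bN)$.

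In infinite dimension, I would first prove the identity for finite-rank truncations $\bP_m$. I would then pass to the limit using monotone convergence of $\mathcal L$ under truncation, together with the fact that the supremum over trace-class $\bA$ is approximated by finite-rank $\bA$.

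\emph{Lower bound.} I would fix a near-optimal $\bA$ for $\mathcal L(\bN)$ and diagonalize it as $\bA=\sum_j a_j\bv_j\bv_j^\top$ with $\sum_j a_j\le R^2$. I would then use a product prior supported on the ball,
\[
\bu=\sum_j\sqrt{a_j}\,t_j\bv_j,
\]
where the $t_j$ are coupled so that $\|\bu\|_2\le R$. The simplest choice is a single random vector uniform on the ellipsoidal boundary. The alternative I would try first is independent $t_j\sim$ density $\cos^2(\pi t/2)$ on $[-1,1]$, which has Fisher information $\pi^2$ and variance $1-8/\pi^2$. The independent choice needs a rescaling to keep $\|\bu\|\le R$, and I would check that this rescaling can be absorbed into the constant.

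Next I would apply the multivariate van Trees inequality \citep{gassiat2024vantrees} for the weighted loss $\bQ$. It gives
\[
\text{Bayes risk}\ \ge\ \Tr\big[\bQ\,(\bJ_{\rm prior}+\bJ_{\rm data})^{-1}\big],
\]
where $\bJ_{\rm data}=\bN$ and $\bJ_{\rm prior}=\pi^2\bA^{-1}$ on the range of $\bA$. To get a bound valid for general, non-diagonal $\bQ$, I would use the matrix form of van Trees for linear functionals $\langle\bw,\bu\rangle$ and sum over an eigenbasis of $\bQ$. Since $\pi^2\ge1$,
\[
(\pi^2\bA^{-1}+\bN)^{-1}=\pi^{-2}\big(\bA^{-1}+\pi^{-2}\bN\big)^{-1}\succeq\pi^{-2}(\bA^{-1}+\bN)^{-1},
\]
and $(\bA^{-1}+\bN)^{-1}=\bA^{1/2}(\bI+\bA^{1/2}\bN\bA^{1/2})^{-1}\bA^{1/2}$. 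Taking the trace against $\bQ$ and then the supremum over $\bA$ yields $\pi^{-2}\mathcal L(\bN)$.

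\emph{Main obstacle.} The hard step is the prior construction: it must simultaneously have support inside the ball $\|\bu\|_2\le R$ and have Fisher information $\preceq\pi^2\bA^{-1}$. With independent coordinates, the supports give $\|\bu\|_2^2\le\sum_j a_j$, so the ball constraint actually holds automatically if $t_j\in[-1,1]$. That resolves the obstacle with the $\cos^2$ prior and exact constant $\pi^2$, but I would double-check the behavior when $\bA$ is singular; there I would restrict the prior to the range of $\bA$.
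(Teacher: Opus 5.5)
Your proposal is correct. The lower bound is the paper's own argument: the product $\cos^2(\pi t/2)$ prior on $\bu=\bT\boldsymbol\omega$ (its support lies in the ball automatically, as you eventually note), a van Trees bound with information $\bT^\top\bN\bT+\pi^2\bI_r$, and then the comparison $\bP(\bA,\bN/\pi^2)\succeq\bP(\bA,\bN)$. The paper only differs in derivation: it proves the $\bQ$-weighted van Trees bound inline by expanding $\E\|\be-\bQ^{1/2}\bT\bJ^{-1}\zeta\|_2^2\geq0$, instead of summing scalar bounds over an eigenbasis of $\bQ$.

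The genuine difference is in the linear identity. You exchange $\inf_{\bB}$ and $\sup_{\bA}$ using Sion's theorem. The paper, for $d<\infty$, uses only weak duality plus an explicit saddle point:
\begin{itemize}
\item it completes the square to get $\bB_{\bA}=\bP(\bA,\bN)\bN^{1/2}$, which is your Wiener filter;
\item it takes a maximizer $\bA_*$ of $f(\bA)=\Tr[\bQ\bP(\bA,\bN)]$ over the compact feasible set;
\item it observes that $F(\bB_*,\cdot)$ is affine, with $F(\bB_*,\bA)-F(\bB_*,\bA_*)=Df_{\bA_*}[\bA-\bA_*]\leq0$ by first-order optimality of $\bA_*$.
\end{itemize}
Sion is quicker. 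The saddle-point route avoids any minimax theorem and directly certifies that the particular estimator $\bB_*\by_{\bN,\mathrm{seq}}$ attains $\mathcal L(\bN)$. The paper reuses this fact in the regression upper bound of Theorem~\ref{thm:minimax-ellipsoid}. With Sion you recover it in a short extra step:
\begin{itemize}
\item a minimax $\bB$ exists by coercivity in $\bQ^{1/2}\bB$;
\item it must minimize $F(\cdot,\bA_*)$;
\item all such minimizers share $\bQ^{1/2}\bB=\bQ^{1/2}\bB_*$, because $\bI+\bN^{1/2}\bA_*\bN^{1/2}\succ0$ and the risk depends on $\bB$ only through $\bQ^{1/2}\bB$.
\end{itemize}

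Two side remarks. First, the variance of the $\cos^2$ density is $1/3-2/\pi^2$, not $1-8/\pi^2$; this is never used, so it is harmless. Second, the statement is finite-dimensional, but if you pass to trace-class $\bA$, weak-$*$ compactness on the $\bA$ side does not license Sion. The map $\bA\mapsto\Tr(\bA\bK)$ is only weak-$*$ lower semicontinuous when $\bK$ is not compact (take $\bK=\bQ$ at $\bB=0$). The paper's infinite-dimensional extension instead puts the compactness on the Hilbert--Schmidt side, $W=\bQ^{1/2}\bB$.
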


We prove Theorem \ref{thm:ellipsoid-gaussian-seq} in Appendix \ref{apx:pf-thm:ellipsoid-gaussian-seq}, and next in Appendix \ref{apx:pf-thm:minimax-ellipsoid}, we explain the way to extend the proof in Theorem \ref{thm:ellipsoid-gaussian-seq} to the original regression problem in Section \ref{sec:setup}, which proves Theorem \ref{thm:minimax-ellipsoid}. We first prove these result for $d<\infty$. The extension to the infinite-dimensional case is treated in
Appendix~\ref{apx:ellipsoid-infinite}.

\subsection{Proof of Theorem \ref{thm:ellipsoid-gaussian-seq}}

\label{apx:pf-thm:ellipsoid-gaussian-seq}

\paragraph{Proof of the upper bound.}
First of all, it is clear by definition that $\mathrm R_{*,\mathrm{seq}}(\bN) \leq  \mathrm R_{*,\mathrm{seq}}^{\mathrm{lin}}(\bN),$ and we characterize $\mathrm R_{*,\mathrm{seq}}^{\mathrm{lin}}(\bN)$ in the rest of the proof. 

For any $\bB$, computing the test error  gives
\begin{equation*}
\begin{split}
    \mathrm R_{*,\mathrm{seq}}^{\rm{lin}}(\bN) = &\inf_{\bB } \sup_{\|\bu\|_2 \leq R}\E_{\bu}\|\bQ^{1/2}(\bB\by_{\bN, \rm{seq}} -\bu)\|_2^2 \\
    =& \inf_{\bB } \sup_{\|\bu\|_2 \leq R}\bu^\top(\bI-\bN^{1/2}\bB^\top)\bQ(\bI-\bB\bN^{1/2})\bu+\Tr(\bQ\bB\bB^\top).
\end{split}
\end{equation*}
Note that for every any PSD matrix $\bD$,
\begin{equation*}
\sup_{\|\bu\|_2\leq R}\bu^\top\bD\bu=R^2\|\bD\|_{\mathrm{op}}=\sup_{\substack{\bA\succeq0\\\Tr(\bA)\leq R^2}}\Tr(\bD\bA).
\end{equation*}
Consequently, we could rewrite the test error as
\begin{equation*}
F(\bB,\bA)=\Tr\!\left[\bQ\bigl((\bI-\bB\bN^{1/2})\bA(\bI-\bN^{1/2}\bB^\top)+\bB\bB^\top\bigr)\right],
\end{equation*}
which implies
\begin{equation*}
    \mathrm R_{*,\mathrm{seq}}^{\rm{lin}}(\bN) = \inf_{\bB}\sup_{\substack{\bA\succeq0\\\Tr(\bA)\leq R^2}}F(\bB,\bA).
\end{equation*}

To characterize the exact solution of the above minimax problem, we first lower bound the minimax solution by
\begin{equation*}
    \begin{split} \inf_{\bB}\sup_{\substack{\bA\succeq0\\\Tr(\bA)\leq R^2}}F(\bB,\bA) \geq \sup_{\substack{\bA\succeq0\\\Tr(\bA)\leq R^2}}\inf_{\bB}F(\bB,\bA) = \sup_{\substack{\bA\succeq0\\\Tr(\bA)\leq R^2}}F(\bB_\bA, \bA) = \cL(\bN),
    \end{split}
\end{equation*} where for any $\bA,$ we define $\bB_\bA = \argmin_{\bB}F(\bB,\bA).$ The observation is that for any $\bA$ in the admissible regime, $\bB_\bA$ is achievable and the exact form can be characterized. To see this, note that $F(\bB, \bA)$ is quadratic in $\bB$ and collecting all the terms depends on $\bB$ into a quadratic form gives \begin{equation}
F(\bB,\bA)=\Tr[\bQ\bP(\bA,\bN)]+\Tr[\bQ(\bB-\bB_{\bA})\bV_{\bA}(\bB-\bB_{\bA})^\top],
\label{eq:ellipsoid-complete-square}
\end{equation}  where we define
\begin{equation*}
\begin{split}
&\bP(\bA,\bN)=\bA^{1/2}(\bI+\bA^{1/2}\bN\bA^{1/2})^{-1}\bA^{1/2}, \\
& \bV_{\bA}=\bI+\bN^{1/2}\bA\bN^{1/2}, \quad  \bB_{\bA}=\bP(\bA,\bN)\bN^{1/2},
\end{split} 
\end{equation*} and use the fact that
\begin{align}
\bP(\bA,\bN)&=\bA-\bA\bN^{1/2}(\bI+\bN^{1/2}\bA\bN^{1/2})^{-1}\bN^{1/2}\bA,\nonumber\\
\bP(\bA,\bN)\bN^{1/2}&=\bA\bN^{1/2}(\bI+\bN^{1/2}\bA\bN^{1/2})^{-1}.
\label{eq:ellipsoid-resolvent-identities}
\end{align}

Next, note that for any given $\bB,$  the feasible regime of $\bA$ is compact and the optimum is achievable. Thus, define \begin{equation*}
    \bA_* =  \argmax_{\substack{\bA\succeq0\\\Tr(\bA)\leq R^2}}F(\bB_\bA, \bA) = \argmax_{\substack{\bA\succeq0\\\Tr(\bA)\leq R^2}}\Tr[\bQ \bP(\bA, \bN)].
\end{equation*}  It is clear by definition that $\bB_* := \bB_{\bA_*} = \argmin_{\bB} F(\bB, \bA_*).$ We then show that \begin{equation*}
    \begin{split}
        \bA_* \in \argmax_{\substack{\bA\succeq0\\\Tr(\bA)\leq R^2}}F(\bB_*, \bA),
    \end{split}
\end{equation*} which implies \begin{equation*}
    \begin{split}  \inf_{\bB}\sup_{\substack{\bA\succeq0\\\Tr(\bA)\leq R^2}}F(\bB,\bA) = \sup_{\substack{\bA\succeq0\\\Tr(\bA)\leq R^2}}\inf_{\bB}F(\bB,\bA) = \cL(\bN).
    \end{split}
\end{equation*}

To see this, it is sufficient to show that for any feasible $\bA,$ $F(\bB_*, \bA) - F(\bB_*, \bA_*) \leq 0.$ Recall that $F(\bB,\bA)=\Tr\!\left[\bQ\bigl((\bI-\bB\bN^{1/2})\bA(\bI-\bN^{1/2}\bB^\top)+\bB\bB^\top\bigr)\right]$. Then, a direct computation gives \begin{equation*}
\begin{split}
     F(\bB_*, \bA) - F(\bB_*, \bA_*) =  \Tr[\bQ(\bI-\bB_*\bN^{1/2})(\bA - \bA_*)(\bI-\bN^{1/2}\bB_*^{\top}) ] .
\end{split}
\end{equation*}

Let us define $f(\bA) = \Tr[\bQ \bP(\bA,\bN)] $. Then, differentiating \eqref{eq:ellipsoid-resolvent-identities} in the direction $\bDelta$ gives
\begin{equation*}
\begin{split}
    D f_{\bA_*}[\bDelta] := \frac{\dd}{\dd t} f(\bA_* + t \Delta) \big |_{t=0} =&\Tr[\bQ(\bI-\bP(\bA_*,\bN)\bN)\bDelta(\bI-\bN\bP(\bA_*,\bN))] \\
     =& \Tr[\bQ(\bI-\bB_*\bN^{1/2})\bDelta(\bI-\bN^{1/2}\bB_*^{\top}) ],
\end{split}
\end{equation*} which implies $D f_{\bA_*}[\bA - \bA_*]\le 0.$

Since by definition $\bA_* =\argmax_{\substack{\bA\succeq0\\\Tr(\bA)\leq R^2}}\Tr[\bQ \bP(\bA, \bN)],$  for any feasible direction $\bA - \bA_*,$ we have $F(\bB_*, \bA) - F(\bB_*, \bA_*) = D f_{\bA_*}[\bA - \bA_*] \leq 0,$ which concludes the proof.

\paragraph{Proof of the lower bound.}

We prove the lower bound via probabilistic methods. In particular, we construct a prior distribution $p(\bu)$ supported on $\{\bu: \|\bu\|_2 \leq R\},$ and use the fact that \begin{equation*}
    \begin{split}
        \rm{R}_{*, \rm{seq}}(\bN) \geq \inf_{\widehat \bu} \E_{\bu \sim p_{\bu}} \E_{\bu}\|\bQ^{1/2}(\widehat\bu(\by_{\bN, \rm{seq}})-\bu)\|_2^2. 
    \end{split}
\end{equation*}

For the construction, fix a feasible finite-rank $\bA=\sum_{\ell=1}^r a_\ell\bv_\ell\bv_\ell^\top \in \R^{d \times d}$, where the $\bv_\ell$ are orthonormal, $a_\ell>0$, and $\sum_\ell a_\ell\leq R^2$. Let $\bT$ have columns $\sqrt{a_\ell}\bv_\ell$, so $\bT\bT^\top=\bA$ and $\bT^\top\bT=\operatorname{Diag}(a_1,\dots,a_r)$. Consider $\boldsymbol\omega$ independent coordinates with density
\begin{equation*}
q(w)=\cos^2(\pi w/2)\mathbf1\{|w|\leq1\},
\end{equation*} which have the following properties
\begin{equation*}
\int_{-1}^1q'(w)\,dw=0,\qquad \int_{-1}^1\frac{q'(w)^2}{q(w)}\,dw=\pi^2\int_{-1}^1\sin^2(\pi w/2)\,dw=\pi^2.
\end{equation*}

Let $\bu=\bT\boldsymbol\omega$. This prior is supported on the parameter ball, since
\begin{equation*}
\|\bu\|_2^2=\sum_{\ell=1}^r a_\ell\omega_\ell^2\leq\sum_{\ell=1}^r a_\ell=\Tr(\bA)\leq R^2.
\end{equation*}

Next, for such prior distribution $p(\bu),$ we provide a  lower bound on the minimax risk. Let $p(\by_{\bN, \rm{seq}} | \bu)$ be the probability distribution of $\by_{\bN, \rm{seq}}$ given $\bu.$ Since the randomness of $\bu$ only comes from $\boldsymbol\omega,$ we rewrite the joint distribution as $p(\by_{\bN, \rm{seq}}, \boldsymbol\omega)$ and define the score function w.r.t.\ $\boldsymbol\omega$ as \begin{equation*}
    \begin{split}
        \zeta = \nabla_{\boldsymbol\omega} \log p(\by_{\bN, \rm{seq}} , \boldsymbol\omega) = \nabla_{\boldsymbol\omega} \log p(\by_{\bN, \rm{seq}} \big| \boldsymbol\omega) + \nabla_{\boldsymbol\omega} \log q(\boldsymbol\omega).
    \end{split}
\end{equation*}

We then compute two quantities: \begin{equation*}
\begin{split}
\E[\be\zeta^\top]
&=\iint \be\bigl(\nabla_{\boldsymbol\omega}p(\by_{\bN,\rm{seq}},\boldsymbol\omega)\bigr)^\top\,d\mu(\by_{\bN,\rm{seq}})\,d\boldsymbol\omega\\
&=-\iint \frac{\partial\be}{\partial\boldsymbol\omega}p(\by_{\bN,\rm{seq}},\boldsymbol\omega)\,d\mu(\by_{\bN,\rm{seq}})\,d\boldsymbol\omega\\
&=\bQ^{1/2}\bT\iint p(\by_{\bN,\rm{seq}},\boldsymbol\omega)\,d\mu(\by_{\bN,\rm{seq}})\,d\boldsymbol\omega
=\bQ^{1/2}\bT,\\[2mm]
\\
\bJ := \E[\zeta\zeta^\top]
&=\E\!\left[(\nabla_{\boldsymbol\omega}\log p(\by_{\bN,\rm{seq}}\mid\boldsymbol\omega))(\nabla_{\boldsymbol\omega}\log p(\by_{\bN,\rm{seq}}\mid\boldsymbol\omega))^\top\right]\\
&\quad+\E\!\left[(\nabla_{\boldsymbol\omega}\log q(\boldsymbol\omega))(\nabla_{\boldsymbol\omega}\log q(\boldsymbol\omega))^\top\right]\\
&\quad+\underbrace{\E\!\left[(\nabla_{\boldsymbol\omega}\log p(\by_{\bN,\rm{seq}}\mid\boldsymbol\omega))(\nabla_{\boldsymbol\omega}\log q(\boldsymbol\omega))^\top\right]}_{=0}\\
&\quad+\underbrace{\E\!\left[(\nabla_{\boldsymbol\omega}\log q(\boldsymbol\omega))(\nabla_{\boldsymbol\omega}\log p(\by_{\bN,\rm{seq}}\mid\boldsymbol\omega))^\top\right]}_{=0}\\
&=\bT^\top\bN\bT+\pi^2\bI_r,
\end{split}
\end{equation*} where we use the identities \begin{equation*}
    \begin{split}  \E\!\left[\nabla_{\boldsymbol\omega}\log p(\by_{\bN,\rm{seq}}\mid\boldsymbol\omega)\,\middle|\,\boldsymbol\omega\right]
&=\int \nabla_{\boldsymbol\omega}p(\by_{\bN,\rm{seq}}\mid\boldsymbol\omega)\,d\mu(\by_{\bN,\rm{seq}})
=\nabla_{\boldsymbol\omega}1=0, \\
\E\!\left[\frac{q'(\omega_\ell)}{q(\omega_\ell)}\right]
&=\int_{-1}^1q'(w)\,dw=q(1)-q(-1)=0,\\
\E\!\left[\frac{q'(\omega_\ell)}{q(\omega_\ell)}\frac{q'(\omega_m)}{q(\omega_m)}\right]
&=\begin{cases}
\displaystyle\int_{-1}^1\frac{q'(w)^2}{q(w)}\,dw
=\pi^2\int_{-1}^1\sin^2(\pi w/2)\,dw=\pi^2,&\ell=m,\\
\displaystyle\left(\int_{-1}^1q'(w)\,dw\right)^2=0,&\ell\neq m.
\end{cases}\\[2mm]
    \end{split}
\end{equation*}

For an arbitrary estimator $\widehat \bu,$ define $\be=\bQ^{1/2}(\widehat\bu-\bT\boldsymbol\omega),$ and it is sufficient to lower bound $\E[\|\be\|_2^2].$ 

Note that $\E[\| \be - \bQ^{1/2} \bT \bJ^{-1} \zeta\|_2^2] \geq 0.$ 
Expanding a nonnegative square yields
\begin{align*}
0&\leq\E\|\be-\bQ^{1/2}\bT\bJ^{-1}\zeta\|_2^2\\
& = \E[\|\be\|_2^2] - 2 \Tr(\bQ^{1/2} \bT \bJ^{-1}\E[ \zeta \be^\top]) + \Tr[\bJ^{-1} \bT^\top \bQ \bT \bJ^{-1}] \E[\zeta \zeta^\top]\\ 
&=\E\|\be\|_2^2-\Tr[\bQ\bT(\bT^\top\bN\bT+\pi^2\bI_r)^{-1}\bT^\top].
\end{align*}
Recall by definition $\bA = \bT \bT^\top$. Then, the resulting lower bound is
\begin{equation*}
\begin{split}
\E\|\be\|_2^2\geq & \Tr[\bQ\bT(\bT^\top\bN\bT+\pi^2\bI_r)^{-1}\bT^\top]\\
\geq &\pi^{-2}\Tr[\bQ\bP(\bA,\bN)].
\end{split}
\end{equation*}
Taking the supremum over feasible $\bA$ concludes the proof.

\subsection{Proof of Theorem~\ref{thm:minimax-ellipsoid}}
\label{apx:pf-thm:minimax-ellipsoid}

Recall the normalized weights $\bu=\bS^{1/2}\btheta$ and $\widehat\bu=\bS^{1/2}\widehat\btheta$. Denote the full regression data by $\mathcal D=\{(\bX_i,\by_i)\}_{i=1}^K$. The parameter constraint and the test error can be written in the same form as for the Gaussian sequence model, which gives
\begin{equation*}
\mathrm R_*(n_1,\dots,n_K)=\inf_{\widehat\bu}\sup_{\|\bu\|_2\leq R}\E_{\bu}\|\bQ^{1/2}(\widehat\bu(\mathcal D)-\bu)\|_2^2.
\end{equation*}
Here the expectation includes both the designs and the observation noise, and the infimum is over all estimators based on $\mathcal D$. We also recall
\begin{equation*}
\bM=\sum_{i=1}^K\frac{n_i}{\sigma_{\eps_i}^2}\bH_i,\qquad E_0=R^2\max_{1\leq i\leq K}\frac{\|\bH_i\|_{\mathrm{op}}}{\sigma_{\eps_i}^2},\qquad v=1+\kappa E_0.
\end{equation*}

\paragraph{Proof of the upper bound.}
Our goal is to construct a regression estimator whose worst-case risk is bounded by $v\mathcal L(\bM)$, where $v=1+\kappa E_0$ and $\kappa$ is a finite constant depending only on the concentration constants in Assumption~\ref{asm:distr}. Since the minimax risk is the infimum over all estimators, it suffices to construct $\widehat\bu_{\rm{reg}}$ such that
\begin{equation*}
\sup_{\|\bu\|_2\leq R}\E_{\bu}\|\bQ^{1/2}(\widehat\bu_{\rm{reg}}-\bu)\|_2^2\leq v\mathcal L(\bM).
\end{equation*}
We do so by comparing its risk with that of an optimal linear estimator in an auxiliary Gaussian sequence model. All regression expectations below include both the random designs and the observation noise.

Define the statistic
\begin{equation*}
\bz=\bS^{-1/2}\sum_{i=1}^K\frac{1}{\sigma_{\eps_i}^2}\bX_i^\top\by_i.
\end{equation*}
We will show that, uniformly over $\|\bu\|_2\leq R$,
\begin{equation}
\E_{\bu}\bz=\bM\bu,\qquad \operatorname{Cov}_{\bu}(\bz)\preceq v\bM.
\label{eq:ellipsoid-regression-moments}
\end{equation}
We first explain how these moment bounds yield the desired estimator, and then verify them.

Set $\bN=\bM/v$. By \eqref{eq:ellipsoid-regression-moments}, the rescaled statistic $\bz/v$ has mean $\bN\bu$ and covariance at most $\bN$. To see its connection with the Gaussian sequence model, recall that
\begin{equation*}
\by_{\bN,\rm{seq}}=\bN^{1/2}\bu+\bg,\qquad \bg\sim\mathcal N(0,\bI).
\end{equation*}
Multiplying the Gaussian observation by $\bN^{1/2}$ gives
\begin{equation*}
\bN^{1/2}\by_{\bN,\rm{seq}}=\bN\bu+\bN^{1/2}\bg.
\end{equation*}
The moment bounds \eqref{eq:ellipsoid-regression-moments} gives
\begin{align*}
\E_{\bu}[\bz/v]&=\bN\bu=\E_{\bu}[\bN^{1/2}\by_{\bN,\rm{seq}}],\\
\operatorname{Cov}_{\bu}(\bz/v)&\preceq\bN=\operatorname{Cov}_{\bu}(\bN^{1/2}\by_{\bN,\rm{seq}}).
\end{align*}
This motivates us to apply the same linear estimators constructed for the sequence model to the regression model here.
To choose this linear transformation, take
\begin{equation*}
\bA_*\in\argmax_{\substack{\bA\succeq0\\\Tr(\bA)\leq R^2}}\Tr[\bQ\bP(\bA,\bN)],\qquad \bB_*=\bP(\bA_*,\bN)\bN^{1/2}.
\end{equation*}
By \eqref{eq:ellipsoid-linear-identity} and the optimizer identified in \eqref{eq:ellipsoid-complete-square}, the Gaussian estimator
\begin{equation*}
\bB_*\by_{\bN,\rm{seq}}=\bP(\bA_*,\bN)(\bN^{1/2}\by_{\bN,\rm{seq}})
\end{equation*}
has worst-case risk $\mathcal L(\bN)$. The matrices defining this estimator depend only on the model quantities and the parameter constraint, not on the unknown $\bu$. Replacing its input $\bN^{1/2}\by_{\bN,\rm{seq}}$ by $\bz/v$, define
\begin{equation*}
\widehat\bu_{\rm{reg}}=\frac{1}{v}\bP(\bA_*,\bN)\bz,\qquad \widehat\btheta_{\rm{reg}}=\bS^{-1/2}\widehat\bu_{\rm{reg}}.
\end{equation*}

Standard bias-variance decomposition for linear estimators gives, for every $\|\bu\|_2\leq R$,
\begin{equation*}
\begin{split}
&\E_{\bu}\|\bQ^{1/2}(\widehat\bu_{\rm{reg}}-\bu)\|_2^2\\
&\quad=\|\bQ^{1/2}(\bP(\bA_*,\bN)\bN-\bI)\bu\|_2^2\\
&\qquad+v^{-2}\Tr[\bQ\bP(\bA_*,\bN)\operatorname{Cov}_{\bu}(\bz)\bP(\bA_*,\bN)]\\
&\quad\leq\|\bQ^{1/2}(\bP(\bA_*,\bN)\bN-\bI)\bu\|_2^2+\Tr[\bQ\bP(\bA_*,\bN)\bN\bP(\bA_*,\bN)]\\
&\quad=\E_{\bu}\|\bQ^{1/2}(\bB_*\by_{\bN,\rm{seq}}-\bu)\|_2^2.
\end{split}
\end{equation*}
 Taking the supremum over the parameter ball gives
\begin{equation*}
\mathrm R_*(n_1,\dots,n_K)\leq\sup_{\|\bu\|_2\leq R}\E_{\bu}\|\bQ^{1/2}(\widehat\bu_{\rm{reg}}-\bu)\|_2^2\leq\mathcal L(\bN)=\mathcal L(\bM/v).
\end{equation*}

It remains to verify \eqref{eq:ellipsoid-regression-moments} and compare $\mathcal L(\bM/v)$ with $\mathcal L(\bM)$. For the mean, using $\by_i=\bX_i\bS^{-1/2}\bu+\mathbf{\eps}_i$ and the independence and zero mean of the observation noise gives
\begin{equation*}
\begin{split}
\E_{\bu}\bz
&=\bS^{-1/2}\sum_{i=1}^K\frac{1}{\sigma_{\eps_i}^2}\E[\bX_i^\top\bX_i]\bS^{-1/2}\bu\\
&=\sum_{i=1}^K\frac{n_i}{\sigma_{\eps_i}^2}\bH_i\bu=\bM\bu.
\end{split}
\end{equation*}

For the covariance, consider one observation $(\bx,y)$ from domain $i$, where $y=\langle\bx,\bS^{-1/2}\bu\rangle+\eps$. Its contribution in a direction $\bw$ is
\begin{equation*}
\langle\bx,\bS^{-1/2}\bw\rangle y=\langle\bx,\bS^{-1/2}\bw\rangle\langle\bx,\bS^{-1/2}\bu\rangle+\langle\bx,\bS^{-1/2}\bw\rangle\eps.
\end{equation*}
The first term fluctuates because the inputs are random, and controlling its variance requires a fourth-moment bound, which is obtained from Assumption~\ref{asm:distr}. More precisely, for $\bx\sim P_i$ and any $\bw$ with $\bw^\top\bH_i\bw>0$, applying that assumption to $\bS^{-1/2}\bw\bw^\top\bS^{-1/2}$ gives
\begin{equation*}
P_i\!\left[\left|\langle\bx,\bS^{-1/2}\bw\rangle^2-\bw^\top\bH_i\bw\right|>t\bw^\top\bH_i\bw\right]\leq C\exp(-ct^{1/\eta}),\qquad t>0,
\end{equation*}
where $C,c,\eta$ are the concentration constants in the assumption. Integrating this tail bound yields
\begin{equation*}
\begin{split}
\E_{P_i}\langle\bx,\bS^{-1/2}\bw\rangle^4
&=(\bw^\top\bH_i\bw)^2+\E_{P_i}\!\left[\bigl(\langle\bx,\bS^{-1/2}\bw\rangle^2-\bw^\top\bH_i\bw\bigr)^2\right]\\
&\leq\left(1+2C\int_0^\infty t\exp(-ct^{1/\eta})\,dt\right)(\bw^\top\bH_i\bw)^2\\
&=\kappa(\bw^\top\bH_i\bw)^2.
\end{split}
\end{equation*}
This defines a finite constant $\kappa$ depending only on the concentration constants. If $\bw^\top\bH_i\bw=0$, then $\langle\bx,\bS^{-1/2}\bw\rangle=0$ almost surely, so the same bound holds. Applying Cauchy--Schwarz now gives
\begin{equation*}
\begin{split}
\E_{P_i}\!\left[\langle\bx,\bS^{-1/2}\bw\rangle^2\langle\bx,\bS^{-1/2}\bu\rangle^2\right]&\leq\left(\E_{P_i}\langle\bx,\bS^{-1/2}\bw\rangle^4\E_{P_i}\langle\bx,\bS^{-1/2}\bu\rangle^4\right)^{1/2}\\
&\leq\kappa(\bw^\top\bH_i\bw)(\bu^\top\bH_i\bu).
\end{split}
\end{equation*}

Since $\eps$ is independent of $\bx$, with mean zero and variance $\sigma_{\eps_i}^2$, we have
\begin{equation*}
\begin{split}
\operatorname{Var}_{\bu}\!\left(\langle\bx,\bS^{-1/2}\bw\rangle y\right)
&=\E_{P_i}\!\left[\langle\bx,\bS^{-1/2}\bw\rangle^2\langle\bx,\bS^{-1/2}\bu\rangle^2\right]-(\bw^\top\bH_i\bu)^2+\sigma_{\eps_i}^2\bw^\top\bH_i\bw\\
&\leq\bigl(\kappa\bu^\top\bH_i\bu+\sigma_{\eps_i}^2\bigr)\bw^\top\bH_i\bw\\
&\leq v\sigma_{\eps_i}^2\bw^\top\bH_i\bw.
\end{split}
\end{equation*}
The last inequality holds uniformly over the parameter ball because
\begin{equation*}
\frac{\bu^\top\bH_i\bu}{\sigma_{\eps_i}^2}\leq\frac{R^2\|\bH_i\|_{\mathrm{op}}}{\sigma_{\eps_i}^2}\leq E_0,\qquad \|\bu\|_2\leq R,
\end{equation*}
and $v=1+\kappa E_0$. Thus, the additional variance caused by the random inputs is controlled uniformly over all admissible parameters.

Let $\bx_{i,j}^\top$ denote the $j$-th row of $\bX_i$. Independence of the observations implies
\begin{equation*}
\begin{split}
\bw^\top\operatorname{Cov}_{\bu}(\bz)\bw
&=\sum_{i=1}^K\sum_{j=1}^{n_i}\frac{1}{\sigma_{\eps_i}^4}\operatorname{Var}_{\bu}\!\left(\langle\bx_{i,j},\bS^{-1/2}\bw\rangle y_{i,j}\right)\\
&\leq v\sum_{i=1}^K\frac{n_i}{\sigma_{\eps_i}^2}\bw^\top\bH_i\bw\\
&=v\bw^\top\bM\bw.
\end{split}
\end{equation*}
Since this holds for every $\bw$, it proves $\operatorname{Cov}_{\bu}(\bz)\preceq v\bM$ and completes the verification of \eqref{eq:ellipsoid-regression-moments}.

Finally, we compare the Gaussian linear minimax risks at $\bM/v$ and $\bM$. Since $v\geq1$, for every feasible $\bA$,
\begin{equation*}
\bI+v^{-1}\bA^{1/2}\bM\bA^{1/2}\succeq v^{-1}(\bI+\bA^{1/2}\bM\bA^{1/2}).
\end{equation*}
Inversion reverses the PSD order, and congruence by $\bA^{1/2}$ therefore gives
\begin{equation*}
\bP(\bA,\bM/v)\preceq v\bP(\bA,\bM).
\end{equation*}
Taking the trace against $\bQ$ and then the supremum over feasible $\bA$ yields $\mathcal L(\bM/v)\leq v\mathcal L(\bM)$. Combining this comparison with the risk bound for the constructed estimator proves
\begin{equation}
\mathrm R_*(n_1,\dots,n_K)\leq\mathcal L(\bM/v)\leq v\mathcal L(\bM).
\label{eq:ellipsoid-regression-upper}
\end{equation}

If the designs are Gaussian, the same construction admits a sharper constant. In this case, the exact fourth-moment identity gives
\begin{equation*}
\begin{split}
\operatorname{Var}_{\bu}\!\left(\langle\bx,\bS^{-1/2}\bw\rangle y\right)
&=(\bu^\top\bH_i\bu+\sigma_{\eps_i}^2)\bw^\top\bH_i\bw+(\bw^\top\bH_i\bu)^2\\
&\leq(2\bu^\top\bH_i\bu+\sigma_{\eps_i}^2)\bw^\top\bH_i\bw\\
&\leq(1+2E_0)\sigma_{\eps_i}^2\bw^\top\bH_i\bw,
\end{split}
\end{equation*}
where we used $(\bw^\top\bH_i\bu)^2\leq(\bw^\top\bH_i\bw)(\bu^\top\bH_i\bu)$. Consequently, \eqref{eq:ellipsoid-regression-moments} holds with $v=1+2E_0$, and repeating the estimator construction and risk comparison with this choice gives \eqref{eq:ellipsoid-regression-upper} with the sharper factor.

\paragraph{Proof of the lower bound.}
We aim to prove $\mathrm R_*(n_1,\dots,n_K)\geq\pi^{-2}\mathcal L(\bM),$ and follow the same construction proof as for the Gaussian sequence model.

We recall the construction below. Fix a nonzero feasible matrix $\bA$ with spectral decomposition
\begin{equation*}
\bA=\sum_{\ell=1}^r a_\ell\bv_\ell\bv_\ell^\top,\qquad a_\ell>0,\qquad \sum_{\ell=1}^r a_\ell\leq R^2,
\end{equation*}
where the $\bv_\ell$ are orthonormal. As in Appendix~\ref{apx:pf-thm:ellipsoid-gaussian-seq}, let $\bT$ have columns $\sqrt{a_\ell}\bv_\ell$, and set $\bu=\bT\boldsymbol\omega$, where the coordinates of $\boldsymbol\omega$ are independent with density
\begin{equation*}
q(w)=\cos^2(\pi w/2)\mathbf1\{|w|\leq1\}.
\end{equation*}
Then $\bT\bT^\top=\bA$, and this prior is supported on the parameter ball because
\begin{equation*}
\|\bu\|_2^2=\sum_{\ell=1}^r a_\ell\omega_\ell^2\leq\sum_{\ell=1}^r a_\ell=\Tr(\bA)\leq R^2.
\end{equation*}
Consequently, for every estimator $\widehat\bu$,
\begin{equation*}
\sup_{\|\bu\|_2\leq R}\E_{\bu}\|\bQ^{1/2}(\widehat\bu(\mathcal D)-\bu)\|_2^2
\geq\E_{\boldsymbol\omega}\E_{\mathcal D\mid\boldsymbol\omega}\|\bQ^{1/2}(\widehat\bu(\mathcal D)-\bT\boldsymbol\omega)\|_2^2,
\end{equation*} where $\mathcal D$ denotes the whole dataset.

Write $q(\boldsymbol\omega)=\prod_{\ell=1}^r q(\omega_\ell)$ and $p(\mathcal D,\boldsymbol\omega)=p(\mathcal D\mid\boldsymbol\omega)q(\boldsymbol\omega)$. Here the density of the data is taken with respect to the product of the design law and the Lebesgue measure for the responses, denoted by $\mu$. This reference measure is independent of $\boldsymbol\omega$. Define the joint score and estimation error by
\begin{equation*}
\zeta=\nabla_{\boldsymbol\omega}\log p(\mathcal D,\boldsymbol\omega),\qquad \be=\bQ^{1/2}(\widehat\bu(\mathcal D)-\bT\boldsymbol\omega).
\end{equation*}

We show that the following identity holds in the regression case as well, and the rest follows exactly as in Appendix \ref{apx:pf-thm:ellipsoid-gaussian-seq}:
\begin{equation*}
\bJ:=\E[\zeta\zeta^\top]=\bT^\top\bM\bT+\pi^2\bI_r,\qquad \E[\be\zeta^\top]=\bQ^{1/2}\bT.
\end{equation*}

First, we compute $\E[\zeta\zeta^\top]$. The joint score decomposes as
\begin{equation*}
\zeta=\nabla_{\boldsymbol\omega}\log p(\mathcal D\mid\boldsymbol\omega)+\nabla_{\boldsymbol\omega}\log q(\boldsymbol\omega).
\end{equation*}
The design distribution does not depend on $\boldsymbol\omega$, and the conditional response distribution is
\begin{equation*}
\by_i\mid\bX_i,\boldsymbol\omega\sim\mathcal N(\bX_i\bS^{-1/2}\bT\boldsymbol\omega,\sigma_{\eps_i}^2\bI_{n_i}),\qquad i=1,\dots,K.
\end{equation*}
Differentiating the conditional Gaussian likelihood therefore gives
\begin{equation*}
\begin{split}
\nabla_{\boldsymbol\omega}\log p(\mathcal D\mid\boldsymbol\omega)
&=\sum_{i=1}^K\frac{1}{\sigma_{\eps_i}^2}\bT^\top\bS^{-1/2}\bX_i^\top(\by_i-\bX_i\bS^{-1/2}\bT\boldsymbol\omega)\\
&=\sum_{i=1}^K\frac{1}{\sigma_{\eps_i}^2}\bT^\top\bS^{-1/2}\bX_i^\top\mathbf{\eps}_i.
\end{split}
\end{equation*}
This expression has mean zero conditionally on the designs and $\boldsymbol\omega$. The cross terms between domains vanish by independence and the zero mean of the observation noises. Using their covariances and then averaging over the designs yields
\begin{equation*}
\begin{split}
&\E\!\left[(\nabla_{\boldsymbol\omega}\log p(\mathcal D\mid\boldsymbol\omega))(\nabla_{\boldsymbol\omega}\log p(\mathcal D\mid\boldsymbol\omega))^\top\,\middle|\,\boldsymbol\omega\right]\\
&\quad=\sum_{i=1}^K\frac{1}{\sigma_{\eps_i}^4}\bT^\top\bS^{-1/2}\E[\bX_i^\top\mathbf{\eps}_i\mathbf{\eps}_i^\top\bX_i]\bS^{-1/2}\bT\\
&\quad=\sum_{i=1}^K\frac{1}{\sigma_{\eps_i}^2}\bT^\top\bS^{-1/2}\E[\bX_i^\top\bX_i]\bS^{-1/2}\bT\\
&\quad=\sum_{i=1}^K\frac{n_i}{\sigma_{\eps_i}^2}\bT^\top\bH_i\bT=\bT^\top\bM\bT.
\end{split}
\end{equation*}
This is precisely the average Fisher information obtained in the Gaussian sequence model with $\bN=\bM$ under the same parameterization $\bu=\bT\boldsymbol\omega$. The calculation uses only the second moments of the designs; the Gaussian assumption is used for the observation noise.

For the prior contribution, the one-dimensional density satisfies
\begin{equation*}
\int_{-1}^1q'(w)\,dw=0,\qquad \int_{-1}^1\frac{q'(w)^2}{q(w)}\,dw=\pi^2\int_{-1}^1\sin^2(\pi w/2)\,dw=\pi^2.
\end{equation*}
Independence of the prior coordinates therefore gives
\begin{equation*}
\E[\nabla_{\boldsymbol\omega}\log q(\boldsymbol\omega)]=0,\qquad
\E\!\left[(\nabla_{\boldsymbol\omega}\log q(\boldsymbol\omega))(\nabla_{\boldsymbol\omega}\log q(\boldsymbol\omega))^\top\right]=\pi^2\bI_r.
\end{equation*}
Moreover, the likelihood score has mean zero conditionally on $\boldsymbol\omega$, so its cross terms with the prior score vanish. Combining the two contributions proves
\begin{equation*}
\bJ=\E[\zeta\zeta^\top]=\bT^\top\bM\bT+\pi^2\bI_r.
\end{equation*}

Next, we verify $\E[\be\zeta^\top]=\bQ^{1/2}\bT$. With the data held fixed, $\widehat\bu(\mathcal D)$ does not depend on $\boldsymbol\omega$, and hence
\begin{equation*}
\frac{\partial\be}{\partial\boldsymbol\omega}=-\bQ^{1/2}\bT.
\end{equation*}
Since $q(-1)=q(1)=0$, the boundary terms vanish when integrating by parts in each prior coordinate. Using $\zeta\,p(\mathcal D,\boldsymbol\omega)=\nabla_{\boldsymbol\omega}p(\mathcal D,\boldsymbol\omega)$, we obtain
\begin{equation*}
\begin{split}
\E[\be\zeta^\top]
&=\iint\be(\nabla_{\boldsymbol\omega}p(\mathcal D,\boldsymbol\omega))^\top\,d\mu(\mathcal D)\,d\boldsymbol\omega\\
&=-\iint\frac{\partial\be}{\partial\boldsymbol\omega}p(\mathcal D,\boldsymbol\omega)\,d\mu(\mathcal D)\,d\boldsymbol\omega\\
&=\bQ^{1/2}\bT.
\end{split}
\end{equation*}
The calculation first applies when $\bQ^{1/2}\widehat\bu$ is bounded. For an estimator with finite average risk under the prior, truncation and the finite second moment of $\zeta$ justify passage to the $L_2$ limit. An estimator with infinite average risk already satisfies the desired lower bound. Thus the identity, and consequently the preceding risk bound, applies to every estimator with finite average risk.

It remains to express the lower bound in terms of the variational objective. Since $\bT\bT^\top=\bA$, the resolvent identities give
\begin{equation*}
\begin{split}
\bT(\bT^\top\bM\bT+\pi^2\bI_r)^{-1}\bT^\top
&=\pi^{-2}\bT(\bI_r+\pi^{-2}\bT^\top\bM\bT)^{-1}\bT^\top\\
&=\pi^{-2}\bP(\bA,\bM/\pi^2).
\end{split}
\end{equation*}
Furthermore, $\bM/\pi^2\preceq\bM$, so inversion reverses the PSD order in the definition of $\bP$ and yields
\begin{equation*}
\bP(\bA,\bM/\pi^2)\succeq\bP(\bA,\bM).
\end{equation*}
Combining these relations with the comparison between worst-case and average risk, we obtain, for every estimator and every nonzero feasible $\bA$,
\begin{equation*}
\begin{split}
\sup_{\|\bu\|_2\leq R}\E_{\bu}\|\bQ^{1/2}(\widehat\bu(\mathcal D)-\bu)\|_2^2
&\geq\E\|\be\|_2^2\\
&\geq\pi^{-2}\Tr[\bQ\bP(\bA,\bM/\pi^2)]\\
&\geq\pi^{-2}\Tr[\bQ\bP(\bA,\bM)].
\end{split}
\end{equation*}
Taking the infimum over estimators and then the supremum over feasible $\bA$ proves
\begin{equation*}
\mathrm R_*(n_1,\dots,n_K)\geq\pi^{-2}\mathcal L(\bM/\pi^2)\geq\pi^{-2}\mathcal L(\bM).
\end{equation*}
The matrix $\bA=0$ contributes zero and does not change the supremum.

Combining this lower bound with \eqref{eq:ellipsoid-regression-upper} proves Theorem~\ref{thm:minimax-ellipsoid}. Finally, Theorem~\ref{thm:ellipsoid-gaussian-seq} gives
\begin{equation*}
\pi^{-2}\mathcal L(\bM)\leq\mathrm R_{*,\rm{seq}}(\bM)\leq\mathcal L(\bM),
\end{equation*}
and therefore
\begin{equation*}
\pi^{-2}\mathrm R_{*,\rm{seq}}(\bM)\leq\mathrm R_*(n_1,\dots,n_K)\leq\pi^2v\,\mathrm R_{*,\rm{seq}}(\bM).
\end{equation*}
Thus, when the concentration constants and $E_0$ are uniformly bounded, the unrestricted minimax risks of the regression and Gaussian sequence models have the same order.

\subsection{Extension to infinite dimension}
\label{apx:ellipsoid-infinite}

We work on $\mathcal H=(\ker\bS)^\perp$ and set
$D=\operatorname{Dom}(\bS^{-1/2})=\operatorname{Ran}(\bS^{1/2})$.
This is dense in $\mathcal H$, and normalized parameters lie in
$D\cap\{\bu:\|\bu\|_2\le R\}$. All normalized covariance forms are
understood as bounded operators. Let $\mathcal A$ be the positive
trace-class operators of trace at most $R^2$.

For bounded $\bN\succeq0$ and a Hilbert--Schmidt operator $W$, define
\begin{align*}
F(W,\bA)&=\|(W\bN^{1/2}-\bQ^{1/2})\bA^{1/2}\|_{\mathrm{HS}}^2
          +\|W\|_{\mathrm{HS}}^2,\\
J_{\bN}(W)&=\sup_{\bA\in\mathcal A}F(W,\bA)
=R^2\|W\bN^{1/2}-\bQ^{1/2}\|_{\op}^2+\|W\|_{\mathrm{HS}}^2.
\end{align*}
The same completion of squares as in finite dimension gives
\begin{align*}
\min_W F(W,\bA)&=\Tr[\bQ\bP(\bA,\bN)],\\
W_{\bA}&=\bQ^{1/2}\bP(\bA,\bN)\bN^{1/2},\qquad
\|W_{\bA}\|_{\mathrm{HS}}^2\le R^2\|\bQ\|_{\op}.
\end{align*}
Thus all these minimizers lie in the weakly compact Hilbert--Schmidt
ball $\mathcal B$ of radius $R\sqrt{\|\bQ\|_{\op}}$.
The function $F$ is convex and weakly lower semicontinuous in $W$,
and affine and trace-norm continuous in $\bA$.
Sion's minimax theorem~\citep{sion1958minimax} therefore yields
\[
\min_W J_{\bN}(W)
=\min_{W\in\mathcal B}\sup_{\bA\in\mathcal A}F(W,\bA)
=\sup_{\bA\in\mathcal A}\min_{W\in\mathcal B}F(W,\bA)
=\mathcal L(\bN).
\]
The first equality follows from $J_{\bN}(W)\ge\|W\|_{\mathrm{HS}}^2$
and $J_{\bN}(0)=R^2\|\bQ\|_{\op}$. Let $W_*$ be a minimizer.
Projecting its output onto $(\ker\bQ)^\perp$ and its input onto
$(\ker\bN)^\perp$ cannot increase $J_{\bN}$.
Since $\bQ^{1/2}D$ and $\bN^{1/2}D$ are dense in these respective
subspaces, there are finite-rank operators
$L_m=\sum_j a_{mj}\otimes b_{mj}$, with $a_{mj},b_{mj}\in D$,
such that $W_m=\bQ^{1/2}L_m\bN^{1/2}\to W_*$ in Hilbert--Schmidt
norm. Here $(a\otimes b)h=a\langle b,h\rangle$.
Boundedness of $\bN$ implies $J_{\bN}(W_m)\to\mathcal L(\bN)$.

For the regression upper bound, set $v=1+\kappa E_0$ and $\bN=\bM/v$.
For $b\in D$, the scalar statistic
\[
z(b)=\sum_{i=1}^K\sum_{r=1}^{n_i}\sigma_{\eps_i}^{-2}
\langle\bx_i^{(r)},\bS^{-1/2}b\rangle y_i^{(r)}
\]
satisfies $\E_{\bu}z(b)=\langle b,\bM\bu\rangle$ and
$\operatorname{Var}_{\bu}z(b)\le v\langle b,\bM b\rangle$,
by the scalar calculation proving \eqref{eq:ellipsoid-regression-moments}.
Consequently the well-defined finite-rank estimator
$\widehat\btheta_m=\bS^{-1/2}\sum_j a_{mj}z(b_{mj})/v$ satisfies,
with $\bC_{\rm test}=\sum_k\pi_k^*\bC_k$,
\begin{align*}
\sup_{\btheta\in\bTheta}\E_{\btheta}
\|\bC_{\rm test}^{1/2}(\widehat\btheta_m-\btheta)\|_2^2
&\le R^2\|\bQ^{1/2}(L_m\bN-\bI)\|_{\op}^2
       +\|\bQ^{1/2}L_m\bN^{1/2}\|_{\mathrm{HS}}^2\\
&=J_{\bN}(W_m)\longrightarrow\mathcal L(\bM/v).
\end{align*}
Hence $\mathrm R_*\le\mathcal L(\bM/v)\le v\mathcal L(\bM)$.

For the lower bound, choose finite-rank orthogonal projections $\Pi_m$
onto the first $m$ elements of an orthonormal basis contained in $D$.
For any $\bA\in\mathcal A$, the feasible operators
$\bA_m=\Pi_m\bA\Pi_m=T_mT_m^*$ converge to $\bA$ in trace norm;
choose $T_m$ with orthogonal columns in $D$.
The preceding cosine-prior argument applies to
$\btheta=\bS^{-1/2}T_m\boldsymbol\omega$.
For arbitrary estimators, apply its score calculation directly to
$e=\bC_{\rm test}^{1/2}(\widehat\btheta-\btheta)$:
the target derivative is $G_m=\bC_{\rm test}^{1/2}\bS^{-1/2}T_m$,
with $G_m^*G_m=T_m^*\bQ T_m$.
It therefore gives
\[
\mathrm R_*\ge\pi^{-2}\Tr[\bQ\bP(\bA_m,\bM/\pi^2)].
\]
The resolvent identity makes $\bP(\bA,\bN)$ trace-norm continuous
in $\bA$ for bounded $\bN$. Letting $m\to\infty$ and taking the
supremum over $\bA$ yields
$\mathrm R_*\ge\pi^{-2}\mathcal L(\bM/\pi^2)
\ge\pi^{-2}\mathcal L(\bM)$, completing the proof.

\section{Proof of Theorem \ref{thm:deteq-risk}}
\label{apx:pf-thm:deteq-risk}

\subsection{Invertibility of \texorpdfstring{$\bL$}{L}}

Recall that $\bL=\bD-\bK$, where $\bD=\operatorname{diag}(n_1/\mu_1^2,\dots,n_K/\mu_K^2)$ and $\bK_{ij}=\Tr(\bC_i\overline\bG\bC_j\overline\bG)$. Since the covariance matrices are positive semidefinite, $K_{ij} \geq 0$
and $\bK_{ij}=\bK_{ji}$. Using $\overline\bG^{-1}=\sum_{j=1}^K\mu_j\bC_j+\lambda\bI$ and the fixed-point equation $n_i/\mu_i=1+\Tr(\bC_i\overline\bG)$, we obtain
\begin{align*}
\sum_{j=1}^K\mu_j\bK_{ij}
&=\Tr\left(\bC_i\overline\bG\left(\sum_{j=1}^K\mu_j\bC_j\right)\overline\bG\right)\\
&=\Tr(\bC_i\overline\bG)-\lambda\Tr(\bC_i\overline\bG^2)\\
&=\frac{n_i}{\mu_i}-1-\lambda\Tr(\bC_i\overline\bG^2).
\end{align*}
For any $\bv\in\R^K$, the Cauchy-Schwartz inequality $2v_iv_j\leq(\mu_j/\mu_i)v_i^2+(\mu_i/\mu_j)v_j^2$, together with the nonnegativity and symmetry of $\bK$, yields
\begin{align*}
\bv^\top\bK\bv
&\leq\frac12\sum_{i,j=1}^K\bK_{ij}\left(\frac{\mu_j}{\mu_i}v_i^2+\frac{\mu_i}{\mu_j}v_j^2\right)\\
&=\sum_{i=1}^K\frac{v_i^2}{\mu_i}\sum_{j=1}^K\mu_j\bK_{ij}\\
&=\sum_{i=1}^K\left(\frac{n_i}{\mu_i^2}-\frac{1+\lambda\Tr(\bC_i\overline\bG^2)}{\mu_i}\right)v_i^2.
\end{align*}
Consequently, since $\lambda>0$, $\Tr(\bC_i\overline\bG^2)\geq0$, and $\mu_i>0$, every nonzero $\bv\in\R^K$ satisfies
\begin{align*}
\bv^\top\bL\bv = \bv^\top (\bD - \bK) \bv &=\sum_{i=1}^K\frac{n_i}{\mu_i^2}v_i^2-\bv^\top\bK\bv\\
&\geq\sum_{i=1}^K\frac{n_i}{\mu_i^2}v_i^2-\sum_{i=1}^K\left(\frac{n_i}{\mu_i^2}-\frac{1+\lambda\Tr(\bC_i\overline\bG^2)}{\mu_i}\right)v_i^2\\
&=\sum_{i=1}^K\frac{1+\lambda\Tr(\bC_i\overline\bG^2)}{\mu_i}v_i^2 > 0
\end{align*}
Thus $\bL$ is positive definite and hence invertible.

\subsection{Proof of Theorem \ref{thm:deteq-risk}}\label{sec:pfthm2}
\begin{proof}
    Conditionally on the designs,
    $$
        \widehat\btheta - \btheta_* = -\lambda\bG\btheta_* + \bG\sum_{i=1}^K \bX_i^\top\eps_i.
    $$
    Averaging over the independent and centered training noises, we get
    $$
        \sR(\widehat\btheta) = \sum_{k=1}^K \pi_k^*B_k + \sum_{i=1}^K \sigma_{\eps_i}^2\Tr(\bC_{\pi^*}\bG\bX_i^\top\bX_i\bG), \quad \text{where} \quad B_k = \lambda^2\Tr(\bA_*\bG\bC_k\bG).
    $$
    Using the second bound from Theorem \ref{thm:deteq-func-main} with $\bA := \bA_*$, we get for each $k$,
    $$
        |B_k-\lambda^2\be_k^\top\bD\bL^{-1}\tau_{\bA_*}| \leq C_{D,c_0}\lambda^2\eps_2V_{\bA_*} \quad \text{with probability at least } 1-\sum_{i=1}^K n_i^{-D}.
    $$
    Now we compute the deterministic equivalent $\be_k^\top\bD\bL^{-1}\tau_{\bA_*}$. Recall that $\bK := \bD-\bL$. We have
    $$
        \bD\bL^{-1} = \bI + \bK\bL^{-1}, \qquad \bK\be_k = \tau_{\bC_k}.
    $$
    Therefore, 
    \begin{align*}
        \be_k^\top\bD\bL^{-1}\tau_{\bA_*}
        &= \tau_{\bA_*}[k] + \tau_{\bC_k}^\top\bL^{-1}\tau_{\bA_*} \\
        &= \btheta_*^\top\overline\bG\bC_k\overline\bG\btheta_* + \tau_{\bA_*}^\top\bL^{-1}\tau_{\bC_k} \\
        &= \frac{1}{\lambda^2}\overline B_k.
    \end{align*}
    Summing over $k\in[K]$, we have
    $$
        \left|\sum_{k=1}^K \pi_k^*(B_k-\overline B_k)\right| \leq C_{D, c_0}\lambda^2\eps_2V_{\bA_*}.
    $$
    For the variance, using the third bound from Theorem \ref{thm:deteq-func-main} with $\bA = \bC_{\pi^*}$ and target dataset $i$,
    $$
        \left|\frac{1}{n_i}\Tr(\bC_{\pi^*}\bG\bX_i^\top\bX_i\bG)-\frac{1}{n_i}\be_i^\top\bL^{-1}\tau_{\bC_{\pi^*}}\right| \leq C_{D, c_0}\eps_{3,i}n_i^{-1}V_{\bC_{\pi^*}}.
    $$
    Multiplying by $n_i$ and summing over $i\in [K]$ gives
    $$
        \left|\sum_{i=1}^K\sigma_{\eps_i}^2(\Tr(\bC_{\pi^*}\bG\bX_i^\top\bX_i\bG)-\be_i^\top\bL^{-1}\tau_{\bC_{\pi^*}})\right| \leq C_{D, c_0}V_{\bC_{\pi^*}}\sum_{i=1}^K\sigma_{\eps_i}^2\eps_{3,i}.
    $$
    Direct calculation then gives
    \begin{align*}
        \sum_{i=1}^K \sigma_{\eps_i}^2\be_i^\top\bL^{-1}\tau_{\bC_*} 
        &= \sum_{k=1}^K\pi_k^*\sum_{i=1}^K\sigma_{\eps_i}^2\tau_{\bC_k}^\top\bL^{-1}\be_i \\
        &= \sum_{k=1}^K \pi_k^* \overline V_k.
    \end{align*}
    Taking union bounds then implies with probability at least $1-\sum_{i=1}^K n_i^{-D}$,
    $$
        |\sR(\widehat\btheta)-\overline\sR| \leq \eps_n, \qquad \text{where} \qquad \eps_n = C_{D, c_0}\left(\lambda^2\eps_2V_{\btheta_*\btheta_*^\top}+V_{\bC_{\pi^*}}\sum_{i=1}^K \sigma_{\eps_i}^2\eps_{3,i}\right).
    $$
\end{proof}

\section{Proof of Theorem~\ref{thm:minimax}}
\label{apx:pf-thm:minimax}

The source constraints satisfy
\begin{equation*}
\|\bC_i^{1/2-r_i}\btheta\|_2^2\asymp\sum_{j\geq1}j^{\alpha_i(2r_i-1)}\btheta[j]^2,\qquad i\in\{1,2\}.
\end{equation*}
By the definition of $s$,
\begin{equation*}
\max\{\alpha_1(2r_1-1),\alpha_2(2r_2-1)\}=2s-\alpha_1.
\end{equation*}
The larger exponent dominates the smaller one for every $j\geq1$. Consequently, if
\begin{equation*}
\bTheta_0(\rho)=\left\{\btheta:\sum_{j\geq1}j^{2s-\alpha_1}\btheta[j]^2\leq\rho^2\right\},
\end{equation*}
then there exist constants $c,C>0$ such that $\bTheta_0(cR)\subseteq\bTheta\subseteq\bTheta_0(CR)$. We may therefore work with $\bTheta_0(R)$, since fixed changes in the radius only change the constants in the bounds below.

For this ellipsoid, choose the diagonal operator $\bS$ with
$[\bS]_{j,j}=j^{2s-\alpha_1}$, so that the constraint is
$\|\bS^{1/2}\btheta\|_2\leq R$. The normalized covariances satisfy
\begin{equation*}
[\bH_1]_{j,j}
=[\bS^{-1/2}\bC_1\bS^{-1/2}]_{j,j}=j^{-2s},
\qquad
[\bH_2]_{j,j}
=[\bS^{-1/2}\bC_2\bS^{-1/2}]_{j,j}=j^{-(2s-\delta)}.
\end{equation*}
Since $s>0$ and $2s-\delta>0$, both normalized covariances are bounded. The fixed positive noise variances imply $E_0=\max_{i=1,2} \frac{R^2 \|\bH_i\|_{op}}{\sigma^2_{\eps_i}} = O(1)$. Thus, Theorem~\ref{thm:minimax-ellipsoid} applies with constants independent of the number of coordinates.

Define $\bQ=\bH_1$ and $\bM=n_1\sigma_{\eps_1}^{-2}\bH_1+n_2\sigma_{\eps_2}^{-2}\bH_2$. Recall that
\begin{equation*}
\mathcal L(\bM)=\sup_{\substack{\bA\succeq0\\\operatorname{Tr}(\bA)\leq R^2}}F(\bA),\qquad
F(\bA):=\operatorname{Tr}\!\left[\bQ\bA^{1/2}(\bI+\bA^{1/2}\bM\bA^{1/2})^{-1}\bA^{1/2}\right].
\end{equation*}
Since $\bQ, \bM$ are both diagonal, we first show that the optimal $\bA$ is diagonal. In particular, we show that for any feasible $\bA,$ $F(\operatorname{Diag}(\bA)) \geq F(\bA),$ where  $\operatorname{Diag}(\bA)$ denote the diagonal operator whose $j$-th diagonal entry is $[\bA]_{j,j}.$ 

Fix a coordinate $j$, and let $\mathbf e_j$ denote the corresponding standard basis vector. Because $\bM$ is diagonal and positive semidefinite,
\begin{equation*}
\bM\succeq[\bM]_{j,j}\mathbf e_j\mathbf e_j^\top.
\end{equation*}
Congruence by $\bA^{1/2}$ preserves this order, whereas inversion reverses the order of positive definite operators. Consequently,
\begin{align*}
(\bI+\bA^{1/2}\bM\bA^{1/2})^{-1}
&\preceq(\bI+[\bM]_{j,j}\bA^{1/2}\mathbf e_j\mathbf e_j^\top\bA^{1/2})^{-1}\\
&=\bI-\frac{[\bM]_{j,j}\bA^{1/2}\mathbf e_j\mathbf e_j^\top\bA^{1/2}}{1+[\bM]_{j,j}[\bA]_{j,j}},
\end{align*}
where the equality follows from the Sherman--Morrison identity and $\mathbf e_j^\top\bA\mathbf e_j=[\bA]_{j,j}$. Multiplying on both sides by $\bA^{1/2}$ and taking the $j$-th diagonal entry gives
\begin{align*}
\left[\bA^{1/2}(\bI+\bA^{1/2}\bM\bA^{1/2})^{-1}\bA^{1/2}\right]_{j,j}
&\leq[\bA]_{j,j}-\frac{[\bM]_{j,j}[\bA]_{j,j}^2}{1+[\bM]_{j,j}[\bA]_{j,j}}\\
&=\frac{[\bA]_{j,j}}{1+[\bM]_{j,j}[\bA]_{j,j}}.
\end{align*}

Since $\bQ$ is diagonal with nonnegative entries, summing the preceding inequality with weights $[\bQ]_{j,j}$ yields
\begin{align*}
F(\bA)
&=\sum_{j\geq1}[\bQ]_{j,j}\left[\bA^{1/2}(\bI+\bA^{1/2}\bM\bA^{1/2})^{-1}\bA^{1/2}\right]_{j,j}\\
&\leq\sum_{j\geq1}\frac{[\bQ]_{j,j}[\bA]_{j,j}}{1+[\bM]_{j,j}[\bA]_{j,j}}
=F(\operatorname{diag}(\bA)).
\end{align*}
The last equality follows by evaluating $F$ on a diagonal operator. In infinitely many coordinates, the inequality follows by first summing over $j\leq d$ and then letting $d\to\infty$; all summands are nonnegative, so this passage to the limit is justified by monotone convergence.

Moreover, $\bA\succeq0$ implies $[\bA]_{j,j}\geq0$, and taking the diagonal part preserves the trace:
\begin{equation*}
\operatorname{Diag}(\bA)\succeq0,\qquad
\operatorname{Tr}(\operatorname{Diag}(\bA))=\operatorname{Tr}(\bA)\leq R^2.
\end{equation*}
Thus, every feasible $\bA$ can be replaced by a feasible diagonal operator with an objective value at least as large. Since diagonal operators are also included in the original feasible set, restricting the supremum to diagonal $\bA$ leaves its value unchanged. Writing their diagonal entries as $a_j$, we obtain
\begin{equation*}
\mathcal L(\bM)=\sup_{\substack{a_j\geq0\\\sum_{j\geq1}a_j\leq R^2}}\sum_{j\geq1}\frac{[\bQ]_{j,j}a_j}{1+[\bM]_{j,j}a_j}.
\end{equation*}

Set $b_j=[\bQ]_{j,j}a_j$ and define
\begin{equation*}
I_j=\frac{[\bM]_{j,j}}{[\bQ]_{j,j}}=\frac{n_1}{\sigma_{\eps_1}^2}+\frac{n_2[\bC_2]_{j,j}}{\sigma_{\eps_2}^2[\bC_1]_{j,j}}\asymp n_1+n_2j^\delta.
\end{equation*}
Since $[\bQ]_{j,j}\asymp j^{-2s}$, Theorem~\ref{thm:minimax-ellipsoid} and the preceding ellipsoid comparison give
\begin{equation}
\sR^*(n_1,n_2)\asymp\sup_{\substack{b_j\geq0\\\sum_{j\geq1}j^{2s}b_j\leq R^2}}\sum_{j\geq1}\frac{b_j}{1+I_jb_j}.
\label{eq:minimax-power-variational}
\end{equation}

Next, we prove a upper and lower bound on $\sR^*(n_1,n_2).$
For $n_1+n_2>0$, define
\begin{equation*}
T_m=R^2(m+1)^{-2s},\qquad V_m=\sum_{j=1}^m I_j^{-1},\qquad m\in\mathbb N_0,
\end{equation*}
where $V_0=0$. We show that for any $m \in \mathbb N_0,$ \begin{equation*}
    \begin{split}
   \sR^*(n_1,n_2) \gtrsim \min\{V_m,T_m\}, \qquad\sR^*(n_1,n_2)\lesssim (V_m+T_m),
    \end{split}
\end{equation*} and picking the optimal $m$ on both side gives: \begin{equation*}
    \begin{split}
        \sum_{j\geq1}j^{2s}b_j\leq m^{2s}\min\{V_m,T_m\}\leq R^2.
    \end{split}
\end{equation*}

For the upper bound, pick any feasible sequence in \eqref{eq:minimax-power-variational} and any $m\geq0,$ use $\frac{b_j}{1+I_jb_j} \leq I_j^{-1}$ for $j \leq m,$ and $\frac{b_j}{1+I_jb_j} \leq b_j$ for $j \geq m$. Then, this  gives
\begin{equation*}
\sum_{j\geq1}\frac{b_j}{1+I_jb_j}\leq\sum_{j=1}^m I_j^{-1}+\sum_{j>m}b_j\leq V_m+T_m.
\end{equation*}
For the lower bound, we simply need to find a specific feasible $b_j$ that satisfies the lower bound. Fix $m\geq1$ and choose
\begin{equation*}
b_j=\begin{cases}
I_j^{-1}\min\{1,T_m/V_m\},&1\leq j\leq m,\\
0,&j>m.
\end{cases}
\end{equation*}
This choice is feasible because
\begin{equation*}
\sum_{j\geq1}j^{2s}b_j\leq m^{2s}\min\{V_m,T_m\}\leq R^2.
\end{equation*}
Moreover, $I_jb_j\leq1$, so
\begin{equation*}
\sum_{j\geq1}\frac{b_j}{1+I_jb_j}\geq\frac12\sum_{j=1}^m b_j=\frac12\min\{V_m,T_m\}.
\end{equation*}
We have therefore proved
\begin{equation}
\sup_{m\geq1}\min\{V_m,T_m\}\lesssim\sR^*(n_1,n_2)\lesssim\inf_{m\geq0}(V_m+T_m).
\label{eq:minimax-power-two-sided}
\end{equation}
Next, we show that the upper and lower bounds are actually matching. Let $m_*$ be the first positive integer for which $V_{m_*}\geq T_{m_*}$. Such an integer exists since $T_m$ is decreasing to $0$ in $m,$ $V_m$ is increasing in $m,$ and $T_1 = \Theta_{n_1,n_2}(1) \geq V_1 = o_{n_1,n_2}(1)$. The lower bound in \eqref{eq:minimax-power-two-sided} is at least $\min\{V_{m_*} , T_{m_*}\} = T_{m_*},$ as by definition $V_{m_*} \geq T_{m_*}.$ Now we show that in fact $\inf_{m\geq0}(V_m+T_m) \lesssim T_{m_*}.$ We have
\begin{equation*}
\inf_{m\geq0}(V_m+T_m)\leq V_{m_*-1}+T_{m_*-1}\leq2R^2m_*^{-2s}\leq2^{2s+1}T_{m_*},
\end{equation*} where we use the definition of $m_*$ and the fact that $T_{m_*-1}/T_{m_*} \leq 2^{2s} = \Theta(1).$
Consequently,
\begin{equation}
\sR^*(n_1,n_2)\asymp\inf_{m\in\mathbb N_0}\left\{R^2(m+1)^{-2s}+\sum_{j=1}^m\frac1{n_1+n_2j^\delta}\right\}.
\label{eq:minimax-power-cutoff}
\end{equation}

We now evaluate \eqref{eq:minimax-power-cutoff}. All cutoffs below are rounded to integers, which does not affect the rates. We use $n_2\asymp n^{\gamma_2}$ throughout.

\paragraph{Target-only and auxiliary-only rates.}
For target data alone, $V_m\asymp m/n$. Taking $m\asymp n^{1/(1+2s)}$ gives
\begin{equation*}
V_m\asymp T_m\asymp n^{-2s/(1+2s)}=n^{-\Gamma_{\rm tar}}.
\end{equation*}
For auxiliary data alone,
\begin{equation*}
V_m\asymp n_2^{-1}\sum_{j=1}^m j^{-\delta}\asymp\begin{cases}
n_2^{-1}m^{1-\delta},&\delta<1,\\
n_2^{-1}\log(m+1),&\delta=1,\\
n_2^{-1},&\delta>1.
\end{cases}
\end{equation*}
The corresponding balancing cutoffs are
\begin{equation*}
m\asymp\begin{cases}
n_2^{1/(1+2s-\delta)},&\delta<1,\\
(n_2/\log n_2)^{1/(2s)},&\delta=1,\\
n_2^{1/(2s)},&\delta>1.
\end{cases}
\end{equation*}
Substituting these choices gives
\begin{equation*}
\sR^*(0,n_2)\asymp\begin{cases}
n^{-2s\gamma_2/(1+2s-\delta)}=n^{-\Gamma_{\rm aux}},&\delta<1,\\
n^{-\gamma_2}\log n,&\delta=1,\\
n^{-\gamma_2},&\delta>1.
\end{cases}
\end{equation*}

\paragraph{Mixed data with $\gamma_2\leq\gamma_{\rm c}$.}
Take $m\asymp n^{1/(1+2s)}$. Since $\gamma_{\rm c}=1-\delta/(1+2s)$,
\begin{equation*}
n_2m^\delta\asymp n^{\gamma_2+\delta/(1+2s)}\lesssim n.
\end{equation*}
Every summand of $V_m$ is at most a constant times $1/n$. On the block $m/2\leq j\leq m$, we also have $j^\delta\asymp m^\delta$ and hence $I_j\asymp n$. Therefore
\begin{equation*}
V_m\asymp\frac mn\asymp T_m\asymp n^{-\Gamma_{\rm tar}}.
\end{equation*}
This argument holds for every fixed real $\delta$, including $\delta\leq0$, and includes the boundary $\gamma_2=\gamma_{\rm c}$.

\paragraph{Mixed data with $\delta<1$ and $\gamma_2>\gamma_{\rm c}$.}
Take $m\asymp n_2^{1/(1+2s-\delta)}$. Then
\begin{equation*}
\frac{n_2m^\delta}{n}\asymp n^{\gamma_2(1+2s)/(1+2s-\delta)-1}\longrightarrow\infty.
\end{equation*}
Since $I_j\gtrsim n_2j^\delta$, summing over all $j\leq m$ gives $V_m\lesssim n_2^{-1}m^{1-\delta}$. On $m/2\leq j\leq m$, we have $I_j\asymp n_2m^\delta$, which gives the reverse bound. Consequently,
\begin{equation*}
V_m\asymp n_2^{-1}m^{1-\delta}\asymp T_m\asymp n^{-2s\gamma_2/(1+2s-\delta)}=n^{-\Gamma_{\rm aux}}.
\end{equation*}

\paragraph{Mixed data with $\delta=1$ and $\gamma_2>\gamma_{\rm c}$.}
Take $m\asymp(n_2/\log n)^{1/(2s)}$, so $T_m\asymp n_2^{-1}\log n$. If $\gamma_2\geq1$, then $I_j\asymp n_2j$ for every $j\geq1$, and
\begin{equation*}
V_m\asymp n_2^{-1}\log(m+1)\asymp n_2^{-1}\log n\asymp T_m.
\end{equation*}
If $\gamma_{\rm c}<\gamma_2<1$, define $j_{\rm c}=n/n_2$. Then
\begin{equation*}
\frac{m}{j_{\rm c}}\asymp\frac{n^{\gamma_2/(2s)-(1-\gamma_2)}}{(\log n)^{1/(2s)}},\qquad \frac{\gamma_2}{2s}-(1-\gamma_2)=\frac{1+2s}{2s}(\gamma_2-\gamma_{\rm c})>0.
\end{equation*}
Thus $m/j_{\rm c}\to\infty$ and $\log(m/j_{\rm c})\asymp\log n$. Splitting the sum at $j_{\rm c}$ gives
\begin{equation*}
V_m\asymp\frac{j_{\rm c}}n+\frac1{n_2}\sum_{j_{\rm c}<j\leq m}\frac1j\asymp\frac{1+\log(m/j_{\rm c})}{n_2}\asymp n_2^{-1}\log n\asymp T_m.
\end{equation*}
Hence the risk is $\asymp n^{-\gamma_2}\log n$. At $\gamma_2=\gamma_{\rm c}$, the preceding target-data cutoff applies and gives $n^{-\Gamma_{\rm tar}}$ without a logarithmic factor.

\paragraph{Mixed data with $\delta>1$ and $\gamma_{\rm c}<\gamma_2<1$.}
Define $j_{\rm c}=(n/n_2)^{1/\delta}$ and take $m\asymp n^{(\delta-1+\gamma_2)/(2s\delta)}$. The exponent of $m/j_{\rm c}$ is
\begin{equation*}
\frac{\delta-1+\gamma_2}{2s\delta}-\frac{1-\gamma_2}{\delta}=\frac{(1+2s)(\gamma_2-\gamma_{\rm c})}{2s\delta}>0.
\end{equation*}
Thus $m>2j_{\rm c}$ for sufficiently large $n$. The terms with $j\leq j_{\rm c}$ contribute order $j_{\rm c}/n$. Since $\delta>1$, the remaining terms satisfy
\begin{equation*}
\sum_{j_{\rm c}<j\leq m}I_j^{-1}\lesssim n_2^{-1}\sum_{j>j_{\rm c}}j^{-\delta}\lesssim n_2^{-1}j_{\rm c}^{1-\delta}=\frac{j_{\rm c}}n.
\end{equation*}
It follows that
\begin{equation*}
V_m\asymp\frac{j_{\rm c}}n\asymp n^{-(\delta-1+\gamma_2)/\delta}\asymp T_m,
\end{equation*}
which proves the intermediate mixed-data rate.

\paragraph{Mixed data with $\delta>1$ and $\gamma_2\geq1$.}
In this case, $I_j\asymp n_2j^\delta$ for every $j\geq1$. Since $\sum_{j\geq1}j^{-\delta}<\infty$, taking $m\asymp n_2^{1/(2s)}$ gives
\begin{equation*}
V_m\asymp n_2^{-1}\asymp T_m\asymp n^{-\gamma_2}.
\end{equation*}
The preceding cases cover all parameter regimes and establish the claimed rates.

\section{Proof of Theorem~\ref{thm:scaling-optimal-ridge}}
\label{apx:pf-thm:scaling-optimal-ridge}

For the fixed power-law signal $\btheta_*[j]=j^{-\beta}$, recall that
\begin{equation*}
s=\frac{\alpha_1+2\beta-1}{2},\qquad j^{-\alpha_1}\btheta_*[j]^2=j^{-1-2s}.
\end{equation*}
In particular, $2s-\delta=\alpha_2+2\beta-1>0$. Recall $a=s\wedge\alpha_1$ and $b=s\wedge\alpha_2$. Write $c_{ij}=j^{-\alpha_i}$, and
\begin{equation*}
h_j=nc_{1j}+n_2 c_{2j},\qquad d_j(\lambda)=\mu_1c_{1j}+\mu_2c_{2j}+\lambda.
\end{equation*}
The fixed-point equations and deterministic resolvent are
\begin{equation}
\frac{n_i}{\mu_i}=1+\sum_{j\geq1}\frac{c_{ij}}{d_j(\lambda)},\qquad \overline\bG=\operatorname{Diag}\bigl(d_j(\lambda)^{-1}\bigr)_{j\geq1}.
\label{eq:optimal-ridge-fixed-point}
\end{equation}
Let $\boldsymbol\mu=(\mu_1,\mu_2)^\top$, $\bD=\operatorname{Diag}(n_i/\mu_i^2)_{i=1}^2$, and
\begin{equation*}
\bK_{i\ell}=\sum_{j\geq1}\frac{ c_{ij}c_{\ell j}}{d_j(\lambda)^2},\qquad \bL=\bD-\bK.
\end{equation*} Recall that the deterministic equivalence of the risk is given by \begin{equation*}
    \begin{split}
        \overline{\sR}_1 = &\lambda^2\left[\left\langle\btheta_*,\overline\bG\bC_1\overline\bG\btheta_*\right\rangle+\tau_{\btheta_*}^\top\bL^{-1}\tau_{\bC_1}\right]  + \sum_{i=1}^2\sigma_{\eps_i}^2\,\tau_{\bC_1}^\top\bL^{-1}\be_i.
    \end{split}
\end{equation*}

\paragraph{General proof recipe.} Define
\begin{equation}
B_0(\lambda)=\sum_{j\geq1}c_{1j}\btheta_*[j]^2\left(\frac{\lambda}{h_j+\lambda}\right)^2,\qquad V_0(\lambda)=\sum_{j\geq1}\frac{c_{1j}h_j}{(h_j+\lambda)^2}.
\label{eq:optimal-ridge-population-sums}
\end{equation}
Pick a positive sequence $\lambda_n$ such that
\begin{equation*}
\epsilon_n:=\max_{i\in\{1,2\}}\sum_{j\geq1}\frac{c_{ij}}{h_j+\lambda_n}=o(1).
\end{equation*}

We first show that \begin{equation}
\min\{B_0(\lambda_n),V_0(\lambda_n)\}\lesssim\inf_{\lambda>0}\overline\sR_1(\lambda)\lesssim B_0(\lambda_n)+V_0(\lambda_n) \asymp \max\{B_0(\lambda_n),V_0(\lambda_n)\}.
\label{eq:optimal-ridge-oracle-comparison}
\end{equation}

Then, for each regime, we construct an explicit feasible $\lambda_n,$ such that $B_0(\lambda_n) \asymp V_0(\lambda_n)$ and $\eps_n = o(1).$ This gives the tight scaling law of the optimal ridge risk.

\paragraph{Control $\bL^{-1}$ terms.}
We first control the order of terms that depend on $\bL^{-1}$ to mitigate the matrix inversion problem. 

First, we have the following identities from direct computations: \begin{equation*}
    \begin{split}
        &[\bD \bmu]_i = \frac{n_i}{\mu_i}, \quad [\bK \bmu]_i = \sum_{j \geq 1} \frac{c_{ij} \sum_\ell \mu_\ell c_{\ell j}}{d_j(\lambda)^2} = \sum_{j \geq 1} \frac{c_{ij} (d_j(\lambda) - \lambda))}{d_j(\lambda)^2} = \frac{n_i}{\mu_i} -1 - \lambda \tau_{\bI}[i], \\
        &\tau_{\bI}[i]=\sum_{j\geq1}\frac{c_{ij}}{d_j(\lambda)^2}.
    \end{split}
\end{equation*}

Together with  $\bL = \bD -\bK$ we have $\bL\boldsymbol\mu=\mathbf1+\lambda\tau_{\bI}$ 
 and 
\begin{equation*}
    \begin{split}
[\bD^{-1}\bK\boldsymbol\mu]_i = \mu_i - \frac{\mu_i^2}{n_i}(1 + \lambda\tau_\bI[i]) < \mu_i,
    \end{split}
\end{equation*} which means $\bD^{-1}\bK\boldsymbol\mu<\boldsymbol\mu.$ Note that  $\bD^{-1}\bK$ and $\bmu$ are both entrywise strictly positive, implying that the spectral radius of $\bD^{-1} \bK$ is strictly smaller than $1.$ To see this, denote $\bA = \bD^{-1} \bK$ temporarily and let $(\kappa,\bv)$ be an arbitrary eigenvalue-eigenvector pair. Define $t = \max_{i= 1,2} \frac{|v_i|}{\mu_i}.$  Clearly, for $i = 1,2,$  $|v_i| \leq t \mu_i,$ and assume w.l.o.g.\ $|v_1| = t \mu_1.$  Now we have: \begin{equation*}
    \begin{split}
        |\kappa| |v_1| = &|[\bA\bv]_1| \leq \sum_{\ell} \bA_{1 \ell} |\bv_\ell|  < t [\bA \bmu]_1 \leq t \mu_1 = |v_1|,
    \end{split}
\end{equation*} which implies that the absolute value of all the eigenvalues is strictly smaller than $1.$ Thus, the matrix $\bI - \bA$ is invertible, and the inverse can be expanded into the following Taylor series $(\bI - \bA)^{-1} = \sum_{\ell \geq 0} (\bA)^{\ell},$ whose convergence is guaranteed by the contraction of $\bA.$ Now note that by definition $\bL = \bD(\bI - \bD^{-1} \bK)$. Consequently,
\begin{equation*}
\bL^{-1}=\sum_{k\geq0}(\bD^{-1}\bT)^k\bD^{-1},\qquad (\bL^{-1})_{i\ell}\geq(\bD^{-1})_{i\ell}, \quad \forall i,\ell \in \{1, 2\}.
\end{equation*}
This means that $\bL^{-1}$ is entrywise non-negative and further, by the fact that $\bL \bmu = \mathbf1 + \lambda \tau_\bI$ and the fact that $\bL$ is invertible, we have  $ \bL^{-1}\mathbf1\leq\boldsymbol\mu$ entrywise.

 Now recall that the variance $\overline \sV_1 = \sum_{i=1}^2\sigma_{\eps_i}^2\,\tau_{\bC_1}^\top\bL^{-1}\be_i = \tau_{\bC_1}^\top\bL^{-1} \bsigma,$ where we define $\bsigma = [\sigma^2_{\eps_1}, \sigma^2_{\eps_2}]^\top.$ Using that $\bL^{-1} \geq \bD^{-1}, \bL^{-1} \mathbf1 \leq \mu$ entrywise and $\sigma^2_{\eps_1}, \sigma^2_{\eps_2}$ are constants, we have
\begin{equation}
C_{*,-}\sum_{i=1}^2\frac{\mu_i^2}{n_i}\tau_{\bC_1}[i]\leq\overline\sV_1(\lambda)\leq C_{*,+}\sum_{i=1}^2\mu_i\tau_{\bC_1}[i].
\label{eq:optimal-ridge-variance-bounds}
\end{equation}

Next, consider the term $\lambda^2 \tau_{\btheta_*}^\top\bL^{-1}\tau_{\bC_1}$. We have 
\begin{equation*}
0\leq\lambda^2\tau_{\btheta_*\btheta_*^\top}[i]=\sum_{j\geq1}c_{ij}\btheta_*[j]^2\left(\frac{\lambda}{d_j(\lambda)}\right)^2\leq \sum_{j\geq1}c_{ij}\btheta_*[j]^2 = \sum_{j \geq 1} j^{- \alpha_i - 2 \beta} < \infty.
\end{equation*}
By the entrywise nonnegativity of $\bL^{-1}$, this term is bounded above by a constant times $\overline\sV_1$. Thus, 
we have \begin{equation*}
    \begin{split}
        \overline{\sR}_1 \asymp & \lambda^2\left\langle\btheta_*,\overline\bG\bC_1\overline\bG\btheta_*\right\rangle  + \overline{\sV}_1
    \end{split}
\end{equation*}

uniformly over $\lambda>0$.

\paragraph{Reduction of the optimal ridge risk to two spectral sums.}
Next, we study the risk under the optimal ridge regularization, and we aim to show \eqref{eq:optimal-ridge-oracle-comparison}.

Denote $\tilde \sB(\lambda) = \lambda^2\left\langle\btheta_*,\overline\bG\bC_1\overline\bG\btheta_*\right\rangle$  and we show that under the above choice of $\lambda_n,$ $$\overline{\sV}_1(\lambda_n) \asymp V_0(\lambda_n), \quad \tilde \sB(\lambda_n) \asymp B_0(\lambda_n) .$$ To see this, it is sufficient to show $\mu_i(\lambda_n)/n_i = \Theta(1), i=1,2.$ Let $\eta=\min_i\mu_i(\lambda_n)/n_i\in(0,1]$. By the self-consistent equation and recalling $d_j(\lambda_n) = \mu_1 c_{1j} + \mu_2 c_{2j}+\lambda_n, h_j = n_1\mu_1  + n_2\mu_2 c_{2j},$ we have $d_j(\lambda_n)\geq\eta(h_j+\lambda_n)$, and \eqref{eq:optimal-ridge-fixed-point} gives
\begin{equation*}
\frac{\mu_i(\lambda_n)}{n_i}\geq\frac1{1+\epsilon_n/\eta_n}, 
\end{equation*} which is equivalent to $\eta_n\geq\frac{\eta_n}{\eta_n+\epsilon_n}$ and
hence $1-\epsilon_n\leq\eta_n\leq1$. This always implies  $d_j(\lambda_n)\asymp h_j+\lambda_n$  uniformly in $j$ as we have the upper bound $d_j(\lambda_n) \leq h_j + \lambda$ due to $\mu_i \leq n_i$ by definition. This gives that $\overline \sV_1 \asymp V_0(\lambda_n),$ and we obtain the upper bound \begin{equation*}
    \begin{split}
\inf_{\lambda>0}\overline\sR_1(\lambda)\leq \sR_1(\lambda_n) \asymp B_0(\lambda_n)+V_0(\lambda_n).
    \end{split}
\end{equation*}

Next we prove the lower bound on $\inf_{\lambda>0}\overline\sR_1(\lambda).$ For simplicity denote  $\tilde\sB(\lambda) := \lambda^2 \< \btheta_*, \overline{\bG} \bC_1 \overline{\bG} \btheta_*\>.$ Fix any feasible $\lambda_n,$ we aim to show that for all $\lambda \geq \lambda_n$, $ \overline \sR_1(\lambda) \geq \tilde \sB(\lambda)  \geq B_0(\lambda_n);$ and for all $0< \lambda\leq \lambda_n$, $\sR_1(\lambda) \geq \overline{\sV}_1(\lambda)  \geq V_0(\lambda_n),$ which implies the lower bound.

When $\lambda > \lambda_n,$ it is sufficient to show that $\tilde \sB(\lambda) > \tilde \sB(\lambda_n).$ We prove the identity via the monotonicity of $\tilde \sB(\lambda)$ w.r.t $\lambda.$ In particular, define $q_j(\lambda) = \sum_{i=1}^2 \frac{c_{ij}\mu_i(\lambda)}{\lambda},$ and we have: \begin{equation*}
    \begin{split}
        \tilde \sB(\lambda) = \sum_{j} c_{1j} \btheta_*[j]^2 \frac{1}{(1+q_j(\lambda))^2}.
    \end{split}
\end{equation*}
It is sufficient to show that $q_j(\lambda)$ is monotone decreasing in $\lambda.$ Hence, we compute the derivative of $\frac{\bmu(\lambda)}{\lambda},$ and have: \begin{equation*}
    \begin{split}
        \frac{\dd}{\dd \lambda} \frac{\bmu(\lambda)}{\lambda} = &\frac{\lambda \bmu' - \bmu}{\lambda^2} = \frac{\lambda \bL^{-1} \tau_{\bI} - \bL^{-1}(1+\lambda \tau_{\bI})}{\lambda^2} = - \frac{1}{\lambda^2} \bL^{-1} \mathbf 1 <0,
    \end{split} 
\end{equation*} where here we use the derivative computation: \begin{equation*}
    \begin{split}
        \bL \bmu' = \tau_{\bI},\quad \bL \bmu = \mathbf{1} + \lambda \pi_\bI.
    \end{split}
\end{equation*} To obtain the identity $ \bL \bmu' = \tau_{\bI},$ note that taking the derivative of the self-consistency equation gives \begin{equation*}
    \begin{split}
        - \frac{n_i}{\mu_i^2} \mu_i' =& - \sum_{j} \frac{c_{ij}}{d_j^2(\lambda)} d_j'(\lambda)  =  - \sum_{j} \frac{c_{ij}}{d_j^2(\lambda)} (1 + \sum_{\ell=1}^2\mu_\ell' c_{\ell j})= -\tau_\bI[i] - [\bK \bmu']_i.
    \end{split}
\end{equation*} Noting that $\frac{n_i}{\mu_i^2} \mu_i' = [\bD \bmu']_i$ and $\bL = \bD -\bK$ finishes the proof.

When $0<\lambda<\lambda_n$, we use the monotonicity of a lower bound on $\overline{\sV}_1(\lambda)$. By Cauchy--Schwarz, for every $j\geq1$,
\begin{equation*}
    \begin{split}
        \left(\sum_{i=1}^2\mu_i c_{ij}\right)^2
        &=\left(\sum_{i=1}^2\sqrt{n_i c_{ij}}\,\mu_i\sqrt{\frac{c_{ij}}{n_i}}\right)^2\\
        &\leq\left(\sum_{i=1}^2n_i c_{ij}\right)\left(\sum_{i=1}^2\frac{\mu_i^2}{n_i}c_{ij}\right)
        =h_j\sum_{i=1}^2\frac{\mu_i^2}{n_i}c_{ij}.
    \end{split}
\end{equation*}
Using \eqref{eq:optimal-ridge-variance-bounds}, we obtain
\begin{equation*}
    \begin{split}
        \overline{\sV}_1(\lambda)
        &\geq C_{*,-}\sum_{i=1}^2\frac{\mu_i^2}{n_i}\tau_{\bC_1}[i]\\
        &=C_{*,-}\sum_{j\geq1}\frac{c_{1j}}{d_j(\lambda)^2}\sum_{i=1}^2\frac{\mu_i^2}{n_i}c_{ij}\\
        &\geq C_{*,-}\sum_{j\geq1}\frac{c_{1j}}{h_j}
        \left(\frac{\sum_{i=1}^2\mu_i c_{ij}}{d_j(\lambda)}\right)^2\\
        &=C_{*,-}\sum_{j\geq1}\frac{c_{1j}}{h_j}
        \left(\frac{q_j(\lambda)}{1+q_j(\lambda)}\right)^2.
    \end{split}
\end{equation*}
Since $q_j(\lambda)$ is decreasing in $\lambda$ and $q\mapsto q/(1+q)$ is increasing for $q\geq0$, we have
\begin{equation*}
    \frac{q_j(\lambda)}{1+q_j(\lambda)}
    \geq\frac{q_j(\lambda_n)}{1+q_j(\lambda_n)},
    \qquad 0<\lambda\leq\lambda_n.
\end{equation*}
Moreover, the previously established comparisons $\mu_i(\lambda_n)\asymp n_i$ and $d_j(\lambda_n)\asymp h_j+\lambda_n$ imply
\begin{equation*}
    \frac{q_j(\lambda_n)}{1+q_j(\lambda_n)}
    =\frac{\sum_{i=1}^2\mu_i(\lambda_n)c_{ij}}{d_j(\lambda_n)}
    \asymp\frac{h_j}{h_j+\lambda_n}
\end{equation*}
uniformly in $j$. Consequently, for all $0<\lambda\leq\lambda_n$,
\begin{equation*}
    \begin{split}
        \overline{\sR}_1(\lambda)
        &\geq\overline{\sV}_1(\lambda)\\
        &\geq C_{*,-}\sum_{j\geq1}\frac{c_{1j}}{h_j}
        \left(\frac{q_j(\lambda_n)}{1+q_j(\lambda_n)}\right)^2\\
        &\gtrsim\sum_{j\geq1}\frac{c_{1j}h_j}{(h_j+\lambda_n)^2}
        =V_0(\lambda_n).
    \end{split}
\end{equation*}
Combining this bound with the lower bound for $\lambda\geq\lambda_n$ gives
\begin{equation*}
    \inf_{\lambda>0}\overline{\sR}_1(\lambda)
    \gtrsim\min\{B_0(\lambda_n),V_0(\lambda_n)\}.
\end{equation*}
Together with the upper bound proved above, this establishes \eqref{eq:optimal-ridge-oracle-comparison}.


\paragraph{Evaluation of the spectral sums.}
We next evaluate $B_0$, $V_0$, and $\epsilon_n$ under the power-law model. Recall that: \begin{equation*}
    \begin{split}
        B_0(\lambda)=\sum_{j\geq1}c_{1j}\btheta_*[j]^2\left(\frac{\lambda}{h_j+\lambda}\right)^2,\qquad V_0(\lambda)=\sum_{j\geq1}\frac{c_{1j}h_j}{(h_j+\lambda)^2}.
    \end{split}
\end{equation*} For the fixed power-law signal $\btheta_*[j]=j^{-\beta}$, recall that
\begin{equation*}
s=\frac{\alpha_1+2\beta-1}{2},\qquad j^{-\alpha_1}\btheta_*[j]^2=j^{-1-2s}.
\end{equation*}
In particular, $2s-\delta=\alpha_2+2\beta-1>0$. Recall $a=s\wedge\alpha_1$ and $b=s\wedge\alpha_2$. Define $c_{ij}=j^{-\alpha_i}$, and
\begin{equation*}
h_j=nc_{1j}+n_2 c_{2j},\qquad d_j(\lambda)=\mu_1c_{1j}+\mu_2c_{2j}+\lambda.
\end{equation*}

For $m\geq 1$, define
\begin{equation*}
    h_m=nm^{-\alpha_1}+n_2m^{-\alpha_2},
    \qquad \lambda(m)=h_m.
\end{equation*}
Thus, choosing a cutoff $m$ specifies the regularization
$\lambda_n=\lambda(m)$. All comparison constants below are independent of $n$ and $m$, with the other model parameters held fixed.

Since $h_j$ is decreasing in $j$, splitting the bias at $m\geq1$ gives
\begin{equation}
    \begin{split}
        B_0(h_m)
        &\asymp
        h_m^2\sum_{j\leq m}\frac{j^{-1-2s}}{h_j^2}
        +\sum_{j>m}j^{-1-2s}\\
        &\asymp
        h_m^2\sum_{j\leq m}\frac{j^{-1-2s}}{h_j^2}
        +m^{-2s}.
    \end{split}
    \label{eq:optimal-ridge-cutoff-bias}
\end{equation}
Similarly,
\begin{equation*}
    \begin{split}
        V_0(h_m)
        &\asymp
        \sum_{j\leq m}\frac{c_{1j}}{h_j}
        +h_m^{-2}\sum_{j>m}c_{1j}h_j,\\
        h_m^{-2}\sum_{j>m}c_{1j}h_j
        &\asymp
        \frac{nm^{1-2\alpha_1}
        +n_2m^{1-\alpha_1-\alpha_2}}{h_m^2}
        =\frac{m}{n+n_2m^\delta}.
    \end{split}
\end{equation*}
Note that $\sum_{j = m/2}^m\frac{c_{1j}}{h_j} = \sum_{j = m/2}^m\frac{j^{-\alpha_1}}{n j^{-\alpha_1} + n_2 j^{-\alpha_2} } \asymp \frac{m}{n + n_2 m^{\delta}}.$

Consequently,
\begin{equation}
    V_0(h_m)\asymp
    \sum_{j\leq m}\frac{1}{n+n_2j^\delta}.
    \label{eq:optimal-ridge-cutoff-variance}
\end{equation}

For the feasibility condition, define
\begin{equation*}
    \epsilon(m)
    :=\max_{i\in\{1,2\}}
    \sum_{j\geq1}\frac{c_{ij}}{h_j+h_m}.
\end{equation*}
Let $\alpha_{\min}=\min\{\alpha_1,\alpha_2\}$ and
$\delta_+=\max\{\delta,0\}$. Since
$\max_i c_{ij}=j^{-\alpha_{\min}}$ for every $j$, we have
\begin{equation}
    \begin{split}
        \epsilon(m)
        &=\sum_{j\geq1}\frac{j^{-\alpha_{\min}}}{h_j+h_m}\\
        &\asymp
        \sum_{j\leq m}\frac{j^{\delta_+}}{n+n_2j^\delta}
        +\frac{m^{1-\alpha_{\min}}}{h_m}\\
        &\asymp
        \frac{m^{1+\delta_+}}{n+n_2m^\delta}.
    \end{split}
    \label{eq:optimal-ridge-cutoff-trace}
\end{equation}
For the last comparison, note that $\frac{m^{1-\alpha_{\min}}}{h_m} \asymp \frac{m^{1+\delta_+}}{n+n_2m^\delta},$ and $\frac{j^{\delta_+}}{n+n_2j^\delta}$ is non-decreasing in $j$ which implies $ \sum_{j\leq m}\frac{j^{\delta_+}}{n+n_2j^\delta} \leq \frac{m^{1+\delta_+}}{n+n_2m^\delta}.$

We will retain the logarithmic factors arising at the critical source exponents. Write
\begin{equation*}
    \ell(t)=1+\log t,\qquad
    \ell_i(t)=1+\mathbf1_{\{s=\alpha_i\}}\log t,
    \qquad t\geq1.
\end{equation*}

Next, we discuss how different choices of $m$ give different scalings in each regime.

\paragraph{Spectral sums when $\delta>0$.}
In this case $\alpha_1>\alpha_2$ and $a\geq b$. Define
\begin{equation*}
    J=\max\left\{1,\left(\frac{n}{n_2}\right)^{1/\delta}\right\}.
\end{equation*}
Recall that
\begin{equation*}
    h_j=nj^{-\alpha_1}+n_2j^{-\alpha_2},
    \qquad
    \ell(t)=1+\log t,
    \qquad
    \ell_i(t)=1+\mathbf1_{\{s=\alpha_i\}}\log t.
\end{equation*}
We will repeatedly use the following consequence of
Lemma~\ref{lem:power-law-sums}: for $i\in\{1,2\}$ and $x\geq1$,
\begin{equation*}
    \sum_{j\leq x}j^{2\alpha_i-1-2s}
    \asymp
    \begin{cases}
        x^{2\alpha_i-2s},&s<\alpha_i,\\
        \ell(x),&s=\alpha_i,\\
        1,&s>\alpha_i.
    \end{cases}
\end{equation*}

When $n_2\leq n$, the target dataset dominates $h_j$ for $j\leq J$,
whereas the auxiliary dataset dominates for $j\geq J$.
When $n_2>n$, we have $J=1$ and
$h_j\asymp n_2j^{-\alpha_2}$ for every $j\geq1$.

For $1\leq m\leq J$ with $n_2\leq n$,
$h_m\asymp nm^{-\alpha_1}$, and
\eqref{eq:optimal-ridge-cutoff-bias} gives
\begin{equation*}
    B_0(h_m)
    \asymp
    m^{-2s}
    +m^{-2\alpha_1}\sum_{j\leq m}j^{2\alpha_1-1-2s}.
\end{equation*}
The second term has order
\begin{equation*}
    m^{-2\alpha_1}\sum_{j\leq m}j^{2\alpha_1-1-2s}
    \asymp
    \begin{cases}
        m^{-2s},&s<\alpha_1,\\
        m^{-2\alpha_1}\ell(m),&s=\alpha_1,\\
        m^{-2\alpha_1},&s>\alpha_1.
    \end{cases}
\end{equation*}
In each case, $m^{-2s}$ is bounded above by a constant times
this contribution. Consequently,
\begin{equation*}
    B_0(h_m)\asymp m^{-2a}\ell_1(m).
\end{equation*}

For $m\geq J$, splitting the first sum in
\eqref{eq:optimal-ridge-cutoff-bias} at $J$ gives
\begin{equation*}
    \begin{split}
        B_0(h_m)
        \asymp m^{-2s}
        +m^{-2\alpha_2}\left[
        \underbrace{
            J^{-2\delta}\sum_{j\leq J}j^{2\alpha_1-1-2s}
        }_{=:S_+(J)}
        +
        \underbrace{
            \sum_{J<j\leq m}j^{2\alpha_2-1-2s}
        }_{=:T_+(m,J)}
        \right].
    \end{split}
\end{equation*}
When $n_2\leq n$, the factor $J^{-2\delta}$ follows from
$(n_2/n)^2=J^{-2\delta}$.
When $n_2>n$, we have $J=1$, and the same display follows by
separating the term $j=1$ and using
$h_j\asymp n_2j^{-\alpha_2}$ throughout.

Applying Lemma~\ref{lem:power-law-sums} to the two sums separately,
we obtain
\begin{equation*}
    S_+(J)\asymp
    \begin{cases}
        J^{2\alpha_2-2s},&s<\alpha_1,\\
        J^{-2\delta}\ell(J),&s=\alpha_1,\\
        J^{-2\delta},&s>\alpha_1,
    \end{cases}
\end{equation*}
and
\begin{equation*}
    T_+(m,J)\asymp
    \begin{cases}
        m^{2\alpha_2-2s}-J^{2\alpha_2-2s},
        &s<\alpha_2,\\
        \log(m/J),
        &s=\alpha_2,\\
        J^{2\alpha_2-2s}-m^{2\alpha_2-2s},
        &s>\alpha_2.
    \end{cases}
\end{equation*}

We now compare these contributions.
If $s<\alpha_2$, then
\begin{equation*}
    S_+(J)+T_+(m,J)\lesssim m^{2\alpha_2-2s},
\end{equation*}
so the contribution multiplied by $m^{-2\alpha_2}$ is bounded
above by a constant times $m^{-2s}$.
Together with the tail term, this gives
$B_0(h_m)\asymp m^{-2s}$.

If $s=\alpha_2$, then $S_+(J)\asymp1$, and hence
\begin{equation*}
    B_0(h_m)
    \asymp
    m^{-2\alpha_2}\bigl[1+T_+(m,J)\bigr]
    \asymp
    m^{-2\alpha_2}\ell(m/J).
\end{equation*}
Here the constant contribution is retained even when $m=J$.

If $s>\alpha_2$, then
\begin{equation*}
    S_+(J)
    \asymp J^{-2(a-\alpha_2)}\ell_1(J).
\end{equation*}
Moreover,
\begin{equation*}
    \begin{split}
        T_+(m,J)
        &\lesssim J^{2\alpha_2-2s}
        \lesssim S_+(J),\\
        m^{-2s}
        &\leq m^{-2\alpha_2}J^{2\alpha_2-2s}
        \lesssim m^{-2\alpha_2}S_+(J).
    \end{split}
\end{equation*}
Thus the contribution from $S_+(J)$ controls the bias:
\begin{equation*}
    B_0(h_m)
    \asymp
    m^{-2\alpha_2}J^{-2(a-\alpha_2)}\ell_1(J).
\end{equation*}

Define
\begin{equation*}
    \ell_+(m,J)
    =1+\mathbf1_{\{s=\alpha_2\}}\log(m/J)
      +\mathbf1_{\{s=\alpha_1\}}\log J.
\end{equation*}
Combining the preceding cases gives
\begin{equation}
    B_0(h_m)\asymp
    \begin{cases}
        m^{-2a}\ell_1(m),
        &1\leq m\leq J,\ n_2\leq n,\\
        m^{-2b}J^{-2(a-b)}\ell_+(m,J),
        &m\geq J.
    \end{cases}
    \label{eq:optimal-ridge-positive-bias}
\end{equation}

For the variance, recall from
\eqref{eq:optimal-ridge-cutoff-variance} that
\begin{equation*}
    V_0(h_m)\asymp
    \sum_{j\leq m}\frac{1}{n+n_2j^\delta}.
\end{equation*}
If $1\leq m\leq J$ and $n_2\leq n$, then
$n_2j^\delta\leq n$ for every $j\leq m$, so
\begin{equation*}
    V_0(h_m)
    \asymp \frac1n\sum_{j\leq m}1
    \asymp \frac mn.
\end{equation*}
If $m\geq J$, we split the sum at $J$.
The first contribution satisfies
\begin{equation*}
    \sum_{j\leq J}\frac1{n+n_2j^\delta}
    \asymp n_2^{-1}J^{1-\delta}.
\end{equation*}
Indeed, when $n_2\leq n$, it has order
$J/n=n_2^{-1}J^{1-\delta}$.
When $n_2>n$, we have $J=1$, and this contribution equals
$(n+n_2)^{-1}\asymp n_2^{-1}$.

For the second contribution,
$n_2j^\delta\geq n$ whenever $j>J$, and therefore
\begin{equation*}
    \sum_{J<j\leq m}\frac1{n+n_2j^\delta}
    \asymp n_2^{-1}\sum_{J<j\leq m}j^{-\delta}.
\end{equation*}
Lemma~\ref{lem:power-law-sums} gives
\begin{equation*}
    \sum_{J<j\leq m}j^{-\delta}
    \asymp
    \begin{cases}
        m^{1-\delta}-J^{1-\delta},
        &0<\delta<1,\\
        \log(m/J),
        &\delta=1,\\
        J^{1-\delta}-m^{1-\delta},
        &\delta>1.
    \end{cases}
\end{equation*}
Consequently,
\begin{equation*}
    V_0(h_m)
    \asymp n_2^{-1}\left[
        J^{1-\delta}+\sum_{J<j\leq m}j^{-\delta}
    \right].
\end{equation*}
For $0<\delta<1$, adding the first contribution yields
\begin{equation*}
    J^{1-\delta}+\sum_{J<j\leq m}j^{-\delta}
    \asymp
    J^{1-\delta}
    +\bigl(m^{1-\delta}-J^{1-\delta}\bigr)
    \asymp m^{1-\delta}.
\end{equation*}
For $\delta=1$, the two contributions together have order
\begin{equation*}
    1+\sum_{J<j\leq m}j^{-1}
    \asymp 1+\log(m/J)=\ell(m/J).
\end{equation*}
For $\delta>1$, the second contribution satisfies
\begin{equation*}
    0\leq\sum_{J<j\leq m}j^{-\delta}
    \lesssim J^{1-\delta},
\end{equation*}
so it is bounded above by a constant times the first contribution.
We conclude that
\begin{equation}
    V_0(h_m)\asymp
    \begin{cases}
        m/n,
        &1\leq m\leq J,\ n_2\leq n,\\
        n_2^{-1}m^{1-\delta},
        &m\geq J,\ 0<\delta<1,\\
        n_2^{-1}\ell(m/J),
        &m\geq J,\ \delta=1,\\
        n_2^{-1}J^{1-\delta},
        &m\geq J,\ \delta>1.
    \end{cases}
    \label{eq:optimal-ridge-positive-variance}
\end{equation}
When $n_2\leq n$, the applicable expressions agree at $m=J$,
since $J/n=n_2^{-1}J^{1-\delta}$ and $\ell(1)=1$.

Finally, \eqref{eq:optimal-ridge-cutoff-trace} gives
\begin{equation}
    \epsilon(m)\asymp
    \frac{m^{1+\delta}}{n+n_2m^\delta}
    \asymp
    \begin{cases}
        m^{1+\delta}/n,
        &1\leq m\leq J,\ n_2\leq n,\\
        m/n_2,
        &m\geq J.
    \end{cases}
    \label{eq:optimal-ridge-positive-trace}
\end{equation}

\paragraph{Spectral sums when $\delta\leq0$.}
If $\delta=0$, then $\alpha_1=\alpha_2$, $a=b$, and
$h_j=(n+n_2)j^{-\alpha_1}$.
The cutoff formulas give
\begin{equation*}
    \begin{split}
        B_0(h_m)
        &\asymp m^{-2s}
        +m^{-2\alpha_1}\sum_{j\leq m}j^{2\alpha_1-1-2s},\\
        V_0(h_m)
        &\asymp \frac1{n+n_2}\sum_{j\leq m}1.
    \end{split}
\end{equation*}
The prefix-sum estimates above show that the second contribution
to the bias has order $m^{-2a}\ell_1(m)$ and controls $m^{-2s}$.
The variance sum has order $m/(n+n_2)$.
Together with \eqref{eq:optimal-ridge-cutoff-trace}, this gives
\begin{equation}
    B_0(h_m)\asymp m^{-2a}\ell_1(m),
    \qquad
    V_0(h_m)\asymp\epsilon(m)\asymp\frac{m}{n+n_2}.
    \label{eq:optimal-ridge-equal-spectra}
\end{equation}

Suppose next that $\delta<0$.
Here $\alpha_1<\alpha_2$ and $a\leq b$. Define
\begin{equation*}
    J=\max\left\{1,\left(\frac{n_2}{n}\right)^{1/(-\delta)}\right\}.
\end{equation*}
Again write $m=\lfloor m\rfloor$ and $J=\lfloor J\rfloor$
when evaluating interval sums.
When $n_2\geq n$, the auxiliary dataset dominates below $J$ and
the target dataset dominates above $J$.
When $n_2<n$, we have $J=1$ and
$h_j\asymp nj^{-\alpha_1}$ for every $j\geq1$.

For $1\leq m\leq J$ with $n_2\geq n$,
$h_m\asymp n_2m^{-\alpha_2}$, and
\eqref{eq:optimal-ridge-cutoff-bias} gives
\begin{equation*}
    B_0(h_m)
    \asymp
    m^{-2s}
    +m^{-2\alpha_2}\sum_{j\leq m}j^{2\alpha_2-1-2s}.
\end{equation*}
The second contribution satisfies
\begin{equation*}
    m^{-2\alpha_2}\sum_{j\leq m}j^{2\alpha_2-1-2s}
    \asymp
    \begin{cases}
        m^{-2s},&s<\alpha_2,\\
        m^{-2\alpha_2}\ell(m),&s=\alpha_2,\\
        m^{-2\alpha_2},&s>\alpha_2.
    \end{cases}
\end{equation*}
It controls the tail term $m^{-2s}$ in every case, so
\begin{equation*}
    B_0(h_m)\asymp m^{-2b}\ell_2(m).
\end{equation*}

For $m\geq J$, splitting at $J$ gives
\begin{equation*}
    \begin{split}
        B_0(h_m)
        \asymp m^{-2s}
        +m^{-2\alpha_1}\left[
        \underbrace{
            J^{2\delta}\sum_{j\leq J}j^{2\alpha_2-1-2s}
        }_{=:S_-(J)}
        +
        \underbrace{
            \sum_{J<j\leq m}j^{2\alpha_1-1-2s}
        }_{=:T_-(m,J)}
        \right].
    \end{split}
\end{equation*}
When $n_2\geq n$, the factor $J^{2\delta}$ follows from
$(n/n_2)^2=J^{2\delta}$.
When $n_2<n$, the same display follows by taking $J=1$,
separating the term $j=1$, and using
$h_j\asymp nj^{-\alpha_1}$ throughout.

Applying Lemma~\ref{lem:power-law-sums} to each sum gives
\begin{equation*}
    S_-(J)\asymp
    \begin{cases}
        J^{2\alpha_1-2s},&s<\alpha_2,\\
        J^{2\delta}\ell(J),&s=\alpha_2,\\
        J^{2\delta},&s>\alpha_2,
    \end{cases}
\end{equation*}
and
\begin{equation*}
    T_-(m,J)\asymp
    \begin{cases}
        m^{2\alpha_1-2s}-J^{2\alpha_1-2s},
        &s<\alpha_1,\\
        \log(m/J),
        &s=\alpha_1,\\
        J^{2\alpha_1-2s}-m^{2\alpha_1-2s},
        &s>\alpha_1.
    \end{cases}
\end{equation*}

If $s<\alpha_1$, then
\begin{equation*}
    S_-(J)+T_-(m,J)\lesssim m^{2\alpha_1-2s}.
\end{equation*}
Thus both contributions to the bias are bounded above by a
constant times $m^{-2s}$, and the tail term gives
$B_0(h_m)\asymp m^{-2s}$.

If $s=\alpha_1$, then $S_-(J)\asymp1$, and
\begin{equation*}
    B_0(h_m)
    \asymp
    m^{-2\alpha_1}\bigl[1+T_-(m,J)\bigr]
    \asymp
    m^{-2\alpha_1}\ell(m/J).
\end{equation*}

If $s>\alpha_1$, then
\begin{equation*}
    S_-(J)
    \asymp J^{-2(b-\alpha_1)}\ell_2(J).
\end{equation*}
Furthermore,
\begin{equation*}
    \begin{split}
        T_-(m,J)
        &\lesssim J^{2\alpha_1-2s}
        \lesssim S_-(J),\\
        m^{-2s}
        &\leq m^{-2\alpha_1}J^{2\alpha_1-2s}
        \lesssim m^{-2\alpha_1}S_-(J).
    \end{split}
\end{equation*}
Consequently,
\begin{equation*}
    B_0(h_m)
    \asymp
    m^{-2\alpha_1}J^{-2(b-\alpha_1)}\ell_2(J).
\end{equation*}

Define
\begin{equation*}
    \ell_-(m,J)
    =1+\mathbf1_{\{s=\alpha_1\}}\log(m/J)
      +\mathbf1_{\{s=\alpha_2\}}\log J.
\end{equation*}
Combining these estimates yields
\begin{equation}
    B_0(h_m)\asymp
    \begin{cases}
        m^{-2b}\ell_2(m),
        &1\leq m\leq J,\ n_2\geq n,\\
        m^{-2a}J^{-2(b-a)}\ell_-(m,J),
        &m\geq J.
    \end{cases}
    \label{eq:optimal-ridge-negative-bias}
\end{equation}

For the variance, first suppose that $1\leq m\leq J$ and
$n_2\geq n$. Since $n_2j^\delta\geq n$ for $j\leq m$,
\begin{equation*}
    V_0(h_m)
    \asymp
    n_2^{-1}\sum_{j\leq m}j^{-\delta}.
\end{equation*}
Here $1-\delta>0$, so Lemma~\ref{lem:power-law-sums} gives
\begin{equation*}
    \sum_{j\leq m}j^{-\delta}\asymp m^{1-\delta},
    \qquad
    V_0(h_m)\asymp n_2^{-1}m^{1-\delta}.
\end{equation*}

For $m\geq J$, split the variance sum as
\begin{equation*}
    V_0(h_m)\asymp
    \sum_{j\leq J}\frac1{n+n_2j^\delta}
    +\sum_{J<j\leq m}\frac1{n+n_2j^\delta}.
\end{equation*}
If $n_2\geq n$, the first contribution satisfies
\begin{equation*}
    \sum_{j\leq J}\frac1{n+n_2j^\delta}
    \asymp
    n_2^{-1}\sum_{j\leq J}j^{-\delta}
    \asymp
    n_2^{-1}J^{1-\delta}
    =\frac Jn,
\end{equation*}
where we used $n_2=nJ^{-\delta}$.
If $n_2<n$, then $J=1$, and the same contribution equals
$(n+n_2)^{-1}\asymp1/n=J/n$.

For the second contribution, $n_2j^\delta\leq n$ whenever $j>J$.
Therefore,
\begin{equation*}
    \sum_{J<j\leq m}\frac1{n+n_2j^\delta}
    \asymp
    \frac1n\sum_{J<j\leq m}1
    =\frac{m-J}{n}.
\end{equation*}
Adding the two contributions gives
\begin{equation*}
    V_0(h_m)
    \asymp
    \frac{J+m-J}{n}
    \asymp \frac mn.
\end{equation*}
This estimate also holds at $m=J$, when the second sum is empty.

Finally, since $\delta_+=0$,
\eqref{eq:optimal-ridge-cutoff-trace} gives
\begin{equation*}
    \epsilon(m)\asymp\frac{m}{n+n_2m^\delta},
\end{equation*}
which has the same two orders as the variance. Hence
\begin{equation}
    V_0(h_m)\asymp\epsilon(m)\asymp
    \begin{cases}
        n_2^{-1}m^{1-\delta},
        &1\leq m\leq J,\ n_2\geq n,\\
        m/n,
        &m\geq J.
    \end{cases}
    \label{eq:optimal-ridge-negative-variance}
\end{equation}

\paragraph{Feasibility of a balanced cutoff.}
Before choosing $m$, we show that any sequence satisfying
\begin{equation*}
    m\longrightarrow\infty,
    \qquad
    B_0(h_{m})\asymp V_0(h_{m})
\end{equation*}
also satisfies $\epsilon(m)=o(1)$. This verifies the feasibility condition for the constructions below.

First note that
\begin{equation*}
    2a-\delta
    =\min\{2s-\delta,\alpha_1+\alpha_2\}>0.
\end{equation*}
We also have $2b>\delta$ when $0<\delta\leq1$, and $2b>1$ when $\delta>1$.

If $\delta\leq0$, the preceding estimates give, with $m=m$,
\begin{equation*}
    \epsilon(m)\asymp V_0(h_m)
    \asymp B_0(h_m)
    \lesssim m^{-2a}\ell(m)=o(1).
\end{equation*}
Here we used $b\geq a$ and $J\geq1$ when $\delta<0$.

Suppose $\delta>0$. If $m\leq J$, then
\begin{equation*}
    \epsilon(m)\asymp m^\delta V_0(h_m)
    \asymp m^\delta B_0(h_m)
    \lesssim m^{\delta-2a}\ell(m)=o(1).
\end{equation*}
If $m\geq J$ and $0<\delta\leq1$, then
\begin{equation*}
    \begin{split}
        \epsilon(m)
        &\asymp m/n_2
        \lesssim m^\delta V_0(h_m)
        \asymp m^\delta B_0(h_m)\\
        &\lesssim m^{\delta-2b}\ell(m)=o(1).
    \end{split}
\end{equation*}
Finally, if $m\geq J$ and $\delta>1$, then
\begin{equation*}
    \begin{split}
        \epsilon(m)
        &\asymp mJ^{\delta-1}V_0(h_m)
        \asymp mJ^{\delta-1}B_0(h_m)\\
        &\lesssim
        m^{1-2b}J^{\delta-1-2(a-b)}\ell(m)\\
        &\leq
        m^{-\min\{2b-1,\,2a-\delta\}}\ell(m)
        =o(1).
    \end{split}
\end{equation*}
The last inequality follows from $1\leq J\leq m$.

Consequently, for every balanced sequence constructed below,
$\lambda_n=h_{m}$ is feasible, and
\eqref{eq:optimal-ridge-oracle-comparison} gives
\begin{equation}
    \overline\sR_1^*
    :=\inf_{\lambda>0}\overline\sR_1(\lambda)
    \asymp B_0(h_{m})
    \asymp V_0(h_{m}).
    \label{eq:optimal-ridge-balanced-cutoff}
\end{equation}

\paragraph{Optimal scaling when $\delta>0$.}
Define the threshold
\begin{equation*}
    \gamma_{\rm c}^{+}
    =1-\frac{\delta}{1+2a}.
\end{equation*}
Since $2a>\delta$, we have $0<\gamma_{\rm c}^{+}<1$.

First suppose $\gamma_2<\gamma_{\rm c}^{+}$. Let
\begin{equation*}
    u_n=n^{1/(1+2a)},
    \qquad
    m=u_n\ell_1(u_n)^{1/(1+2a)}.
\end{equation*}
Since  $\gamma_2<\gamma_{\rm c}^{+} < 1,$ we have $J = n^{\frac{1-\gamma_2}{\delta}},$ which implies
$u_n/J = n^{\frac{\gamma_2 -1}{\delta} - \frac{1}{1+2 a}} = o(1).$  Thus $m\leq J$ for all sufficiently large $n$, and
$\ell_1(m)\asymp\ell_1(u_n)$. Therefore,
\begin{equation*}
    \begin{split}
        B_0(h_{m})
        &\asymp m^{-2a}\ell_1(m)\\
        &\asymp
        n^{-2a/(1+2a)}
        \ell_1(u_n)^{1/(1+2a)}
        \asymp \frac{m}{n}
        \asymp V_0(h_{m}).
    \end{split}
\end{equation*}
Applying \eqref{eq:optimal-ridge-balanced-cutoff} gives
\begin{equation*}
    \overline\sR_1^*
    \asymp
    n^{-2a/(1+2a)}
    \ell_1(u_n)^{1/(1+2a)}.
\end{equation*}
In particular, the polynomial exponent is
\begin{equation*}
    \Gamma=\frac{2a}{1+2a}.
\end{equation*}

Next suppose $\gamma_2=\gamma_{\rm c}^{+}$. Here
\begin{equation*}
    J\asymp n^{1/(1+2a)},
    \qquad
    \frac{J}{n}\asymp J^{-2a}.
\end{equation*}
If $s\ne\alpha_1$, choosing $m=J$ immediately gives
\begin{equation*}
    B_0(h_{m})\asymp V_0(h_{m})
    \asymp \frac{J}{n}
    \asymp n^{-2a/(1+2a)}.
\end{equation*}

If $s=\alpha_1$, then $a=\alpha_1$, $b=\alpha_2$, and
$a-b=\delta$. Put $L_n=\ell(J)$ and write $m=Jt$ with $t\geq1$.
Equations \eqref{eq:optimal-ridge-positive-bias} and
\eqref{eq:optimal-ridge-positive-variance} give
\begin{equation*}
    B_0(h_{Jt})\asymp\frac{J}{n}t^{-2b}L_n,
    \qquad
    V_0(h_{Jt})\asymp\frac{J}{n}
    \begin{cases}
        t^{1-\delta},&0<\delta<1,\\
        \ell(t),&\delta=1,\\
        1,&\delta>1.
    \end{cases}
\end{equation*}
We therefore choose $m=Jt_n$, where
\begin{equation*}
    t_n=
    \begin{cases}
        L_n^{1/(1+2b-\delta)},&0<\delta<1,\\
        \bigl(L_n/\ell(L_n)\bigr)^{1/(2b)},&\delta=1,\\
        L_n^{1/(2b)},&\delta>1.
    \end{cases}
\end{equation*}
For $\delta=1$, we used
$\ell(t_n)\asymp\ell(L_n)$. These choices yield
\begin{equation*}
    \overline\sR_1^*
    \asymp B_0(h_{m})
    \asymp V_0(h_{m})
    \asymp \frac{J}{n}
    \begin{cases}
        L_n^{(1-\delta)/(1+2b-\delta)},&0<\delta<1,\\
        \ell(L_n),&\delta=1,\\
        1,&\delta>1.
    \end{cases}
\end{equation*}
Thus the boundary $\gamma_2=\gamma_{\rm c}^{+}$ has the same polynomial exponent
$\Gamma=2a/(1+2a)$. In particular, the fixed-signal critical case
$s=\alpha_1$, $\delta=1$, and $\gamma_2=\gamma_{\rm c}^{+}$
has an additional factor of order $\log\log n$.

It remains to consider $\gamma_2>\gamma_{\rm c}^{+}$. We treat the three variance regimes separately.

If $0<\delta<1$, define
\begin{equation*}
    u_n=
    \left(n_2J^{-2(a-b)}\right)^{1/(1+2b-\delta)},
    \qquad
    m=u_n\ell_+(u_n,J)^{1/(1+2b-\delta)}.
\end{equation*}
The denominator is positive because $2b>\delta$. Moreover,
\begin{equation*}
    \frac{u_n}{J}
    =
    \left(n_2J^{-(1+2a-\delta)}\right)^{1/(1+2b-\delta)}
    \longrightarrow\infty.
\end{equation*}
 Hence $m\geq J$ and
$\ell_+(m,J)\asymp\ell_+(u_n,J)$. By construction,
\begin{equation*}
    u_n^{-2b}J^{-2(a-b)}
    =n_2^{-1}u_n^{1-\delta}.
\end{equation*}
Consequently,
\begin{equation*}
    \begin{split}
        B_0(h_{m})
        &\asymp
        m^{-2b}J^{-2(a-b)}\ell_+(m,J)\\
        &\asymp
        n_2^{-1}u_n^{1-\delta}
        \ell_+(u_n,J)^{(1-\delta)/(1+2b-\delta)}\\
        &\asymp n_2^{-1}m^{1-\delta}
        \asymp V_0(h_{m}).
    \end{split}
\end{equation*}
Since
\begin{equation*}
    n_2^{-1}u_n^{1-\delta}
    =
    n_2^{-2b/(1+2b-\delta)}
    J^{-2(a-b)(1-\delta)/(1+2b-\delta)}
\end{equation*}
and
$J\asymp n^{(1-\gamma_2)_+/\delta}$, the polynomial exponent is
\begin{equation*}
    \Gamma=
    \frac{
        2b\gamma_2+
        \frac{2(1-\delta)}{\delta}(a-b)(1-\gamma_2)_+
    }{1+2b-\delta}.
\end{equation*}

If $\delta=1$, define
\begin{equation*}
    u_n=\left(n_2J^{-2(a-b)}\right)^{1/(2b)},
    \qquad
    m=u_n
    \left(\frac{\ell_+(u_n,J)}{\ell(u_n/J)}\right)^{1/(2b)}.
\end{equation*}
Again, $u_n/J$ grows as a positive power of $n$. The logarithmic correction preserves $m/J\to\infty$, and
\begin{equation*}
    \ell_+(m,J)\asymp\ell_+(u_n,J),
    \qquad
    \ell(m/J)\asymp\ell(u_n/J)\asymp\log n.
\end{equation*}
It follows that
\begin{equation*}
    \begin{split}
        B_0(h_{m})
        &\asymp
        m^{-2b}J^{-2(a-b)}\ell_+(m,J)\\
        &\asymp n_2^{-1}\ell(u_n/J)
        \asymp V_0(h_{m}).
    \end{split}
\end{equation*}
Therefore,
\begin{equation*}
    \overline\sR_1^*\asymp n^{-\gamma_2}\log n,
    \qquad \Gamma=\gamma_2.
\end{equation*}

If $\delta>1$, define
\begin{equation*}
    u_n=
    \left(n_2J^{\delta-1-2(a-b)}\right)^{1/(2b)},
    \qquad
    m=u_n\ell_+(u_n,J)^{1/(2b)}.
\end{equation*}
Here
\begin{equation*}
    \frac{u_n}{J}
    =
    \left(n_2J^{-(1+2a-\delta)}\right)^{1/(2b)}
    \longrightarrow\infty
\end{equation*}
at a polynomial rate. Thus $m\geq J$ and
$\ell_+(m,J)\asymp\ell_+(u_n,J)$. We obtain
\begin{equation*}
    \begin{split}
        B_0(h_{m})
        &\asymp
        m^{-2b}J^{-2(a-b)}\ell_+(m,J)\\
        &\asymp
        u_n^{-2b}J^{-2(a-b)}
        =n_2^{-1}J^{1-\delta}
        \asymp V_0(h_{m}).
    \end{split}
\end{equation*}
Hence
\begin{equation*}
    \overline\sR_1^*\asymp n_2^{-1}J^{1-\delta}
    \asymp
    \begin{cases}
        n^{-(\delta-1+\gamma_2)/\delta},
        &\gamma_{\rm c}^{+}<\gamma_2<1,\\
        n^{-\gamma_2},
        &\gamma_2\geq1.
    \end{cases}
\end{equation*}

\paragraph{Optimal scaling when $\delta=0$.}
In this case the two datasets have the same spectral decay. Let
\begin{equation*}
    u_n=(n+n_2)^{1/(1+2a)},
    \qquad
    m=u_n\ell_1(u_n)^{1/(1+2a)}.
\end{equation*}
Using \eqref{eq:optimal-ridge-equal-spectra}, we obtain
\begin{equation*}
    \begin{split}
        B_0(h_{m})
        &\asymp m^{-2a}\ell_1(m)\\
        &\asymp
        (n+n_2)^{-2a/(1+2a)}
        \ell_1(u_n)^{1/(1+2a)}\\
        &\asymp \frac{m}{n+n_2}
        \asymp V_0(h_{m}).
    \end{split}
\end{equation*}
Therefore the polynomial exponent is
\begin{equation*}
    \Gamma=\frac{2a\max\{1,\gamma_2\}}{1+2a}.
\end{equation*}

\paragraph{Optimal scaling when $\delta<0$.}
Define
\begin{equation*}
    \gamma_{\rm c}^{-}
    =1-\frac{\delta}{1+2b}>1.
\end{equation*}
First suppose $\gamma_2\leq\gamma_{\rm c}^{-}$. Let
\begin{equation*}
    u_n=
    \left(nJ^{-2(b-a)}\right)^{1/(1+2a)},
    \qquad
    m=u_n\ell_-(u_n,J)^{1/(1+2a)}.
\end{equation*}
We have $u_n\geq J$ for all sufficiently large $n$. Indeed, when $\gamma_2\leq1$, $J=1$, whereas for
$1<\gamma_2\leq\gamma_{\rm c}^{-}$,
\begin{equation*}
    \frac{u_n}{J}
    =
    \left(nJ^{-(1+2b)}\right)^{1/(1+2a)}
    \geq1.
\end{equation*}
The last inequality follows from
$J\leq n^{(\gamma_2-1)/(-\delta)}$ and the definition of
$\gamma_{\rm c}^{-}$. Moreover, $u_n\to\infty$ at a polynomial rate, including at $\gamma_2=\gamma_{\rm c}^{-}$.

Since $m\geq u_n\geq J$, we can use the second cases of
\eqref{eq:optimal-ridge-negative-bias} and
\eqref{eq:optimal-ridge-negative-variance}. The definition of
$\ell_-$ gives
$\ell_-(m,J)\asymp\ell_-(u_n,J)$, including when $u_n\asymp J$. Thus
\begin{equation*}
    \begin{split}
        B_0(h_{m})
        &\asymp
        m^{-2a}J^{-2(b-a)}\ell_-(m,J)\\
        &\asymp
        \frac{u_n}{n}\ell_-(u_n,J)^{1/(1+2a)}\\
        &=\frac{m}{n}
        \asymp V_0(h_{m}).
    \end{split}
\end{equation*}
Using
\begin{equation*}
    \frac{u_n}{n}
    =
    n^{-2a/(1+2a)}
    J^{-2(b-a)/(1+2a)},
    \qquad
    J\asymp n^{(\gamma_2-1)_+/(-\delta)},
\end{equation*}
we obtain
\begin{equation*}
    \Gamma=
    \frac{
        2a+\frac{2(b-a)}{-\delta}(\gamma_2-1)_+
    }{1+2a},
    \qquad \gamma_2\leq\gamma_{\rm c}^{-}.
\end{equation*}

Finally, suppose $\gamma_2>\gamma_{\rm c}^{-}$. Define
\begin{equation*}
    u_n=n_2^{1/(1+2b-\delta)},
    \qquad
    m=u_n\ell_2(u_n)^{1/(1+2b-\delta)}.
\end{equation*}
The strict inequality $\gamma_2>\gamma_{\rm c}^{-}$ implies that
$u_n/J\to0$ at a polynomial rate. Therefore $m\leq J$ for all sufficiently large $n$ and
$\ell_2(m)\asymp\ell_2(u_n)$. Since
$u_n^{1+2b-\delta}=n_2$, we have
\begin{equation*}
    \begin{split}
        B_0(h_{m})
        &\asymp m^{-2b}\ell_2(m)\\
        &\asymp
        u_n^{-2b}
        \ell_2(u_n)^{(1-\delta)/(1+2b-\delta)}\\
        &\asymp n_2^{-1}m^{1-\delta}
        \asymp V_0(h_{m}).
    \end{split}
\end{equation*}
Consequently,
\begin{equation*}
    \Gamma=\frac{2b\gamma_2}{1+2b-\delta},
    \qquad \gamma_2>\gamma_{\rm c}^{-}.
\end{equation*}

Every cutoff constructed above satisfies $m\to\infty$ and
$B_0(h_{m})\asymp V_0(h_{m})$. The feasibility argument therefore gives
$\epsilon_n=\epsilon(m)=o(1)$ in every regime, including the threshold cases. Applying
\eqref{eq:optimal-ridge-oracle-comparison} with
$\lambda_n=h_{m}$ proves the stated polynomial exponents. More precisely, all the displayed logarithmic factors lie between a positive constant and a constant times $1+\log n$, so
\begin{equation*}
    n^{-\Gamma}
    \lesssim \overline\sR_1^*
    \lesssim n^{-\Gamma}(1+\log n).
\end{equation*}
This completes the proof.

\subsection{A technical lemma}
\begin{lemma}[Power-law sums]
\label{lem:power-law-sums}
For every fixed $u\in\mathbb R$ and all $m\geq1$,
\begin{equation*}
    \sum_{j\leq m}j^{u-1}\asymp
    \begin{cases}
        m^u,&u>0,\\
        1+\log m,&u=0,\\
        1,&u<0.
    \end{cases}
\end{equation*}
Moreover, for $u<0$,
\begin{equation*}
    \sum_{j\geq m}j^{u-1}\asymp m^u,
\end{equation*}
whereas the tail sum diverges for $u\geq0$.

For integers $1\leq m_-<m_+$,
\begin{equation*}
    \sum_{m_-<j\leq m_+}j^{u-1}\asymp
    \begin{cases}
        m_+^u-m_-^u,&u>0,\\
        \log(m_+/m_-),&u=0,\\
        m_-^u-m_+^u,&u<0.
    \end{cases}
\end{equation*}
For real endpoints $1\leq m_-<m_+$, the same interval estimate holds with $m_\pm$ on the right-hand side replaced by $\lfloor m_\pm\rfloor$. The sum is zero if these two integers coincide.

All comparison constants may depend on $u$ but are independent of the summation endpoints.
\end{lemma}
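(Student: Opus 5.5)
The plan is to compare each sum with the integral of the monotone function $f(x)=x^{u-1}$ on $[1,\infty)$. This function is nonincreasing for $u\le 1$ and nondecreasing for $u\ge 1$. For integers $p<q$, monotonicity gives the sandwich
\[
\min\{f(p+1),f(q)\}\,(q-p)\;\le\;\sum_{p<j\le q} f(j)\;\le\;\max\{f(p+1),f(q)\}\,(q-p),
\]
together with the standard integral comparisons
\[
\int_{p+1}^{q+1} f\;\lesssim\;\sum_{p<j\le q}f(j)\;\lesssim\;\int_{p}^{q} f+ f(p+1)+f(q),
\]
which hold in either monotone direction. All constants will depend only on $u$.

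\emph{Prefix sums.} Take $p=0$, $q=m$. For $u>0$, the integral $\int_1^m x^{u-1}\,dx=(m^u-1)/u$ plus the boundary terms $f(1)+f(m)\lesssim m^u$ gives the upper bound. For the lower bound I use $\int_1^{m+1}f\gtrsim m^u$ when $m\ge2$; the case $m=1$ is trivial because the sum equals $1$. For $u=0$ this is the harmonic sum, which is $\asymp 1+\log m$. For $u<0$, the sum lies between its first term $1$ and $\sum_{j\ge1}j^{u-1}<\infty$. The tail statement for $u<0$ follows the same way: $\sum_{j\ge m}j^{u-1}$ lies between $\int_m^\infty x^{u-1}\,dx=m^u/|u|$ and $m^{u-1}+m^u/|u|\lesssim m^u$. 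For $u\ge0$ the tail dominates $\int_m^\infty x^{-1}\,dx=\infty$, so it diverges.

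\emph{Interval sums.} This is the step needing care, because the comparison must be uniform in the endpoints even when $m_+$ is close to $m_-$. There the integral-plus-boundary bound is too crude: for $u<0$ and $m_+=m_-+1$, the boundary terms are of order $m_-^{u-1}$, which is the same order as the target. I therefore split into two cases.

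\begin{itemize}
\item \emph{Case $m_+\ge 2m_-$.} The right-hand side is comparable to a prefix or tail quantity: $m_+^u$ for $u>0$, $\log(m_+/m_-)\asymp 1+\log(m_+/m_-)$ for $u=0$, and $m_-^u$ for $u<0$. The sum is bounded above by the corresponding prefix or tail sum already handled. It is bounded below by the sub-sum over $j\in(m_+/2,m_+]$ when $u\ge0$, or over $j\in(m_-,2m_-]$ when $u<0$, which contains about $m_+/2$ (respectively $m_-$) terms.
\item \emph{Case $m_-<m_+<2m_-$.} Every index satisfies $j\in(m_-,2m_-]$, so $f(j)\asymp m_-^{u-1}$ uniformly, and the sum is $\asymp (m_+-m_-)\,m_-^{u-1}$. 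By the mean value theorem, $|m_+^u-m_-^u|=|u|\,\xi^{u-1}(m_+-m_-)$ for some $\xi\in(m_-,m_+)$, hence $\asymp (m_+-m_-)m_-^{u-1}$. Likewise $\log(m_+/m_-)\asymp (m_+-m_-)/m_-$. The two sides therefore match.
\end{itemize}

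\emph{Real endpoints.} The sum over $m_-<j\le m_+$ equals the sum over $\lfloor m_-\rfloor<j\le\lfloor m_+\rfloor$, so the integer statement applies directly. If the two floors coincide, the sum is empty and equals zero.
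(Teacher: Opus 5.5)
Your near case $m_-<m_+<2m_-$ (via the mean value theorem) is correct, and so are your far cases $m_+\ge 2m_-$ for $u>0$ and $u<0$. The argument you give for $u=0$ with $m_+\ge 2m_-$, however, fails in both directions.

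For the upper bound, the ``corresponding prefix sum'' is $\sum_{j\le m_+}j^{-1}\asymp 1+\log m_+$, and this is not $\lesssim\log(m_+/m_-)$ uniformly. With $m_+=2m_-$ and $m_-\to\infty$ the target is $\log 2$, while the prefix sum grows like $\log m_-$. The tail sum diverges, so it is no help either.

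For the lower bound, the block $(m_+/2,m_+]$ contributes only $\sum_{m_+/2<j\le m_+}j^{-1}\asymp 1$. By contrast, $\log(m_+/m_-)$ is unbounded, for example with $m_-=1$, $m_+\to\infty$.

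For $u=0$ the sum is spread evenly over all dyadic blocks between $m_-$ and $m_+$, so neither a single prefix bound nor a single top block can capture it. This is exactly the regime the paper later relies on: $T_+(m,J)\asymp\log(m/J)$ at $s=\alpha_2$, and the variance sum at $\delta=1$.

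The repair uses tools you already have. Since $1/x$ is decreasing, $\int_j^{j+1}x^{-1}dx\le j^{-1}\le\int_{j-1}^{j}x^{-1}dx$. Hence $\log\frac{m_++1}{m_-+1}\le\sum_{m_-<j\le m_+}j^{-1}\le\log\frac{m_+}{m_-}$. When $m_+\ge 2m_-$, the left side is at least $\max\{\log\frac32,\log\frac{m_+}{m_-}-\log 2\}\gtrsim\log\frac{m_+}{m_-}$.

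The paper avoids your case split altogether. For $j\ge2$ and $x\in[j-1,j]$ one has $j/2\le x\le j$, so $j^{u-1}$ and $\int_{j-1}^{j}x^{u-1}dx$ agree up to the factor $2^{|u-1|}$. Every index with $j>m_-\ge1$ satisfies $j\ge 2$, so summing gives $\sum_{m_-<j\le m_+}j^{u-1}\asymp\int_{m_-}^{m_+}x^{u-1}dx$, uniformly in the endpoints and for all $u$ at once. Because this comparison is termwise and multiplicative, no additive boundary terms appear. The difficulty you flagged for $m_+$ close to $m_-$ therefore never arises.

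A minor omission: you treat real endpoints only for interval sums. For prefix and tail sums with real $m$, one also needs $\lfloor m\rfloor\asymp m\asymp\lceil m\rceil$, as the paper notes.
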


\begin{proof}
We first consider integer endpoints. For every $j\geq2$ and
$x\in[j-1,j]$, we have $j/2\leq x\leq j$. Consequently,
\begin{equation*}
    2^{-|u-1|}j^{u-1}
    \leq \int_{j-1}^{j}x^{u-1}\,\dd x
    \leq 2^{|u-1|}j^{u-1}.
\end{equation*}
Summing these inequalities gives
\begin{equation*}
    \sum_{j\leq m}j^{u-1}
    \asymp 1+\int_1^m x^{u-1}\,\dd x.
\end{equation*}
If $u\ne0$, the integral equals $(m^u-1)/u$, whereas for $u=0$ it equals $\log m$. Hence
\begin{equation*}
    1+\int_1^m x^{u-1}\,\dd x
    \asymp
    \begin{cases}
        m^u,&u>0,\\
        1+\log m,&u=0,\\
        1,&u<0,
    \end{cases}
\end{equation*}
which proves the prefix-sum estimate.

Similarly, since $j\leq x\leq2j$ for $x\in[j,j+1]$ and $j\geq1$,
\begin{equation*}
    j^{u-1}\asymp\int_j^{j+1}x^{u-1}\,\dd x.
\end{equation*}
For $u<0$, summing over $j\geq m$ yields
\begin{equation*}
    \sum_{j\geq m}j^{u-1}
    \asymp\int_m^\infty x^{u-1}\,\dd x
    =\frac{m^u}{-u}
    \asymp m^u.
\end{equation*}
For $u\geq0$, the integral diverges, and the same comparison proves that the tail sum diverges.

For the interval sum, we obtain
\begin{equation*}
    \sum_{m_-<j\leq m_+}j^{u-1}
    \asymp\int_{m_-}^{m_+}x^{u-1}\,\dd x
    =
    \begin{cases}
        \dfrac{m_+^u-m_-^u}{u},&u\ne0,\\[6pt]
        \log(m_+/m_-),&u=0.
    \end{cases}
\end{equation*}
Separating the cases $u>0$ and $u<0$, and absorbing the fixed factor $1/|u|$ into the comparison constants, proves the stated estimate.

Finally, for real endpoints,
\begin{equation*}
    \sum_{j\leq m}j^{u-1}
    =\sum_{j\leq\lfloor m\rfloor}j^{u-1},
    \qquad
    \sum_{j\geq m}j^{u-1}
    =\sum_{j\geq\lceil m\rceil}j^{u-1}.
\end{equation*}
Since $\lfloor m\rfloor\asymp m\asymp\lceil m\rceil$ for $m\geq1$, the prefix and tail estimates remain valid. For the interval sum, the exact identity
\begin{equation*}
    \sum_{m_-<j\leq m_+}j^{u-1}
    =
    \sum_{\lfloor m_-\rfloor<j\leq\lfloor m_+\rfloor}j^{u-1}
\end{equation*}
gives the asserted extension.
\end{proof}

{

\section{Proof of Lemma \ref{lem:vanishing-approx}}
\label{apx:pf-lem:vanishing-approx}

Our goal is to prove that the approximation error is small relative to $\overline\sR_1(\lambda_n)$. We compare both $\sR(\widehat\btheta)$ and $\overline\sR_1(\lambda_n)$ with the same population ridge risk. Specifically, define
\begin{equation*}
\bG_0=(n\bC_1+n_2\bC_2+\lambda_n\bI)^{-1},\qquad B_0=\lambda_n^2\btheta_*^\top\bG_0\bC_1\bG_0\btheta_*,\qquad V_\sigma=\sum_{i=1}^2\sigma_{\eps_i}^2n_i\Tr(\bC_1\bG_0\bC_i\bG_0).
\end{equation*}
Here $B_0$ is defined in \eqref{eq:optimal-ridge-population-sums}, and $V_\sigma\asymp V_0$ in \eqref{eq:optimal-ridge-population-sums} because the noise variances are bounded above and below. Recall that the proof of Theorem~\ref{thm:scaling-optimal-ridge} constructs a sequence $\lambda_n=h_{m}$ such that 
\begin{equation*}
\epsilon_n\lesssim {-c_-}, \quad m \gtrsim n^{1/(1+2s)}.
\end{equation*}
for some fixed $c_->0$. Throughout the proof, we suppress the subscript $n$ when no confusion can arise.

To bound the approximation error, the triangle inequality gives
\begin{equation*}
|\sR(\widehat\btheta)-\overline\sR_1(\lambda_n)|\leq|\sR(\widehat\btheta)-(B_0+V_\sigma)|+|\overline\sR_1(\lambda_n)-(B_0+V_\sigma)|.
\end{equation*}
Accordingly, we will prove the following two estimates:
\begin{align}
|\sR(\widehat\btheta)-(B_0+V_\sigma)|
&\leq C_D(\log N)^c\left(\sqrt{\epsilon}+\sqrt{\lambda_n\epsilon}+\lambda_n\epsilon\right)(B_0+V_\sigma),
\label{eq:powerlaw-empirical-comparison-goal}\\
|\overline\sR_1(\lambda_n)-(B_0+V_\sigma)|
&\leq C(\epsilon+\lambda_n\epsilon)(B_0+V_\sigma).
\label{eq:powerlaw-deterministic-comparison-goal}
\end{align}
The first estimate holds with probability at least $1-N^{-D}$, whereas the second is deterministic.

We first explain why these two estimates imply the desired conclusion. 

We note that $\lambda_n\epsilon\asymp m^{1-\alpha_{\min}}$. Since the same domain maximizes $j^{-\alpha_i}$ for every $j$,
\begin{equation*}
\epsilon=\sum_{j\geq1}\frac{j^{-\alpha_{\min}}}{h_j+\lambda_n}.
\end{equation*}
For $j\leq m$, the inequality $h_j\geq\lambda_n(m/j)^{\alpha_{\min}}$ gives
\begin{equation*}
\sum_{j\leq m}\frac{j^{-\alpha_{\min}}}{h_j+\lambda_n}\leq\frac{m^{1-\alpha_{\min}}}{\lambda_n}.
\end{equation*}
For $j>m$, we have $\lambda_n\leq h_j+\lambda_n\leq2\lambda_n$, so
\begin{equation*}
\sum_{j>m}\frac{j^{-\alpha_{\min}}}{h_j+\lambda_n}\asymp\frac1{\lambda_n}\sum_{j>m}j^{-\alpha_{\min}}\asymp\frac{m^{1-\alpha_{\min}}}{\lambda_n}.
\end{equation*}

Dividing the triangle inequality by $\overline\sR_1(\lambda_n)$ and applying \eqref{eq:powerlaw-empirical-comparison-goal}--\eqref{eq:powerlaw-deterministic-comparison-goal}, we obtain
\begin{align*}
\frac{|\sR(\widehat\btheta)-\overline\sR_1(\lambda_n)|}{\overline\sR_1(\lambda_n)}
&\leq C_D(\log N)^c\left(\sqrt{\epsilon}+\sqrt{\lambda_n\epsilon}+\lambda_n\epsilon\right)+C(\epsilon+\lambda_n\epsilon)\\
&\leq C_D(\log N)^c\left(\sqrt{\epsilon}+m^{-(\alpha_{\min}-1)/2}\right),
\end{align*}
where in the last step, we use $\eps \leq \sqrt{\eps}, \lambda_n \eps \leq \sqrt{\lambda_n \eps,}$ since $\alpha_{\min}>1$, $m\geq n^{c_-}$, and $\epsilon\leq n^{-c_-}$, so  $\epsilon, \lambda_n\epsilon = o(1)$. And this finishes the proof.

It remains to establish the two comparison estimates. For \eqref{eq:powerlaw-empirical-comparison-goal}, we control the normalized empirical covariance and then bound the resulting perturbations of the population bias and variance. For \eqref{eq:powerlaw-deterministic-comparison-goal}, we use the fixed-point equations to compare $\mu_i$ with $n_i$, and then compare the deterministic bias and variance with $B_0$ and $V_\sigma$.

\paragraph{Proof of \eqref{eq:powerlaw-empirical-comparison-goal}.}
We start from
\begin{equation*}
|\sR(\widehat\btheta)-(B_0+V_\sigma)|\leq|\sB_1-B_0|+|\sV_1-V_\sigma|.
\end{equation*}
Define the normalized covariance errors
\begin{align*}
\bE&=\bG_0^{1/2}\left(\sum_{i=1}^2(\bX_i^\top\bX_i-n_i\bC_i)\right)\bG_0^{1/2},\\
\bE_\sigma&=\bG_0^{1/2}\left(\sum_{i=1}^2\sigma_{\eps_i}^2(\bX_i^\top\bX_i-n_i\bC_i)\right)\bG_0^{1/2}.
\end{align*}
We first derive deterministic bounds for the bias and variance differences under $\|\bE\|_{\op}\leq1/2$. We then establish, independently of this condition, a high-probability bound on $\|\bE\|_{\op}$ and $\|\bE_\sigma\|_{\op}$.

\emph{Bounding $\sB_1 - B_0$.}
The definition of $\bE$ gives
\begin{equation*}
\bG^{-1}=\bG_0^{-1/2}(\bI+\bE)\bG_0^{-1/2},\qquad \bG=\bG_0^{1/2}(\bI+\bE)^{-1}\bG_0^{1/2}.
\end{equation*}
When $\|\bE\|_{\op}\leq1/2$, we have $\bI+\bE\succeq\bI/2$, and hence
\begin{equation*}
\|(\bI+\bE)^{-1}\|_{\op}\leq2,\qquad \|(\bI+\bE)^{-1}-\bI\|_{\op}=\|(\bI+\bE)^{-1}\bE\|_{\op}\leq2\|\bE\|_{\op}.
\end{equation*}
Here we used $(\bI+\bE)^{-1}-\bI=-(\bI+\bE)^{-1}\bE$.

Since
\begin{equation*}
\sB_1=\|\lambda_n\bC_1^{1/2}\bG\btheta_*\|^2,\qquad B_0=\|\lambda_n\bC_1^{1/2}\bG_0\btheta_*\|^2,
\end{equation*}
the  triangle inequality implies
\begin{align*}
|\sqrt{\sB_1}-\sqrt{B_0}|
&\leq\lambda_n\|\bC_1^{1/2}(\bG-\bG_0)\btheta_*\|\\
&=\lambda_n\|\bC_1^{1/2}\bG_0^{1/2}((\bI+\bE)^{-1}-\bI)\bG_0^{1/2}\btheta_*\|\\
&\leq2\lambda_n\|\bE\|_{\op}\|\bC_1^{1/2}\bG_0^{1/2}\|_{\op}\|\bG_0^{1/2}\btheta_*\|.
\end{align*}
The remaining factors satisfy
\begin{equation*}
\|\bC_1^{1/2}\bG_0^{1/2}\|_{\op}^2\leq\Tr(\bC_1\bG_0),\qquad \|\bG_0^{1/2}\btheta_*\|^2\leq\lambda_n^{-1}\|\btheta_*\|^2.
\end{equation*}
Since $\beta>1/2$, $\|\btheta_*\|^2=\sum_{j\geq1}j^{-2\beta}<\infty$. Therefore,
\begin{equation*}
|\sqrt{\sB_1}-\sqrt{B_0}|^2\leq C\lambda_n\|\bE\|_{\op}^2\Tr(\bC_1\bG_0).
\end{equation*}
Expanding the difference of squares yields
\begin{align*}
|\sB_1-B_0|
&\leq2\sqrt{B_0}\,|\sqrt{\sB_1}-\sqrt{B_0}|+|\sqrt{\sB_1}-\sqrt{B_0}|^2\\
&\leq C\|\bE\|_{\op}\sqrt{\lambda_nB_0\Tr(\bC_1\bG_0)}
+C\lambda_n\|\bE\|_{\op}^2\Tr(\bC_1\bG_0).
\end{align*}

\emph{Bounding $\sV_1 - V_\sigma$.}
By the definitions of $\sV_1$ and $V_\sigma$, 
\begin{align*}
\sV_1-V_\sigma
&=\Tr\left[\bC_1\bG\left(\sum_{i=1}^2\sigma_{\eps_i}^2(\bX_i^\top\bX_i-n_i\bC_i)\right)\bG\right]\\
&\quad+\sum_{i=1}^2\sigma_{\eps_i}^2n_i\Tr\left[\bC_1(\bG-\bG_0)\bC_i\bG\right]\\
&\quad+\sum_{i=1}^2\sigma_{\eps_i}^2n_i\Tr\left[\bC_1\bG_0\bC_i(\bG-\bG_0)\right].
\end{align*}
For the first term, by the definition of $\bE_\sigma,$
\begin{align*}
&\Tr\left[\bC_1\bG\left(\sum_{i=1}^2\sigma_{\eps_i}^2(\bX_i^\top\bX_i-n_i\bC_i)\right)\bG\right]\\
&\qquad=\Tr\Bigl[\bC_1\bigl(\bG_0^{1/2}(\bI+\bE)^{-1}\bG_0^{1/2}\bigr)
\bigl(\bG_0^{-1/2}\bE_\sigma\bG_0^{-1/2}\bigr) \bigl(\bG_0^{1/2}(\bI+\bE)^{-1}\bG_0^{1/2}\bigr)\Bigr]\\
&\qquad=\Tr\left[\bC_1\bG_0(\bI+\bE)^{-1}\bE_\sigma(\bI+\bE)^{-1}\right]\\
&\qquad\leq  4\|\bE_\sigma\|_{\op}\Tr(\bC_1\bG_0).
\end{align*}

For the other two terms, observe that
\begin{equation*}
0\preceq\bG_0^{1/2}\left(\sum_{i=1}^2\sigma_{\eps_i}^2n_i\bC_i\right)\bG_0^{1/2}
\preceq\left(\max_i\sigma_{\eps_i}^2\right)(\bI-\lambda_n\bG_0)
\preceq\left(\max_i\sigma_{\eps_i}^2\right)\bI.
\end{equation*}
Substituting
\begin{equation*}
\bG-\bG_0=\bG_0^{1/2}\left((\bI+\bE)^{-1}-\bI\right)\bG_0^{1/2}
\end{equation*}
into the second term, and using cyclicity of the trace, gives
\begin{align*}
&\left|\sum_{i=1}^2\sigma_{\eps_i}^2n_i\Tr\left[\bC_1(\bG-\bG_0)\bC_i\bG\right]\right|\\
&\quad\leq\Tr(\bC_1\bG_0)\|(\bI+\bE)^{-1}-\bI\|_{\op}
\left\|\bG_0^{1/2}\left(\sum_{i=1}^2\sigma_{\eps_i}^2n_i\bC_i\right)\bG_0^{1/2}\right\|_{\op}
\|(\bI+\bE)^{-1}\|_{\op}\\
&\quad\leq4\left(\max_i\sigma_{\eps_i}^2\right)\|\bE\|_{\op}\Tr(\bC_1\bG_0).
\end{align*}
The third term is bounded in the same way, with no final inverse factor:
\begin{equation*}
\left|\sum_{i=1}^2\sigma_{\eps_i}^2n_i\Tr\left[\bC_1\bG_0\bC_i(\bG-\bG_0)\right]\right|
\leq2\left(\max_i\sigma_{\eps_i}^2\right)\|\bE\|_{\op}\Tr(\bC_1\bG_0).
\end{equation*}
Combining the three terms and using the bounded noise variances, we obtain
\begin{equation*}
|\sV_1-V_\sigma|\leq C\left(\|\bE\|_{\op}+\|\bE_\sigma\|_{\op}\right)\Tr(\bC_1\bG_0).
\end{equation*}

\emph{Bounding the common trace factor.}
We next show that $\Tr(\bC_1\bG_0)\lesssim V_0$. Recall
\begin{equation*}
\Tr(\bC_1\bG_0)=\sum_{j\geq1}\frac{j^{-\alpha_1}}{h_j+\lambda_n},\qquad V_0=\sum_{j\geq1}\frac{j^{-\alpha_1}h_j}{(h_j+\lambda_n)^2}.
\end{equation*}
For $j\leq m$, we have $h_j\geq\lambda_n$, so
\begin{equation*}
\sum_{j\leq m}\frac{j^{-\alpha_1}}{h_j+\lambda_n}\leq2\sum_{j\leq m}\frac{j^{-\alpha_1}h_j}{(h_j+\lambda_n)^2}\leq2V_0.
\end{equation*}
For the tail,
\begin{equation*}
\sum_{j>m}\frac{j^{-\alpha_1}}{h_j+\lambda_n}\leq\frac1{\lambda_n}\sum_{j>m}j^{-\alpha_1}\lesssim\frac{m^{1-\alpha_1}}{\lambda_n}.
\end{equation*}
On the block $m/2\leq j\leq m$, we have $h_j\asymp\lambda_n$ and $j^{-\alpha_1}\asymp m^{-\alpha_1}$. Hence
\begin{equation*}
V_0\geq\sum_{m/2\leq j\leq m}\frac{j^{-\alpha_1}h_j}{(h_j+\lambda_n)^2}\gtrsim\frac{m^{1-\alpha_1}}{\lambda_n}.
\end{equation*}
Combining the head and tail estimates proves $\Tr(\bC_1\bG_0)\lesssim V_0$.

Substituting this comparison into the preceding bias and variance bounds gives
\begin{align*}
|\sB_1-B_0|
&\leq C\|\bE\|_{\op}\sqrt{\lambda_nB_0V_0}
+C\lambda_n\|\bE\|_{\op}^2V_0\\
&\leq C\left(\sqrt{\lambda_n}\|\bE\|_{\op}+\lambda_n\|\bE\|_{\op}^2\right)(B_0+V_0),\\
|\sV_1-V_\sigma|
&\leq C\left(\|\bE\|_{\op}+\|\bE_\sigma\|_{\op}\right)V_0.
\end{align*}
Here we used $2\sqrt{B_0V_0}\leq B_0+V_0$. Since $V_\sigma\asymp V_0$, we conclude that, whenever $\|\bE\|_{\op}\leq1/2$,
\begin{equation*}
\frac{|\sR(\widehat\btheta)-(B_0+V_\sigma)|}{B_0+V_\sigma}
\leq C\left(\|\bE\|_{\op}+\|\bE_\sigma\|_{\op}+\sqrt{\lambda_n}\|\bE\|_{\op}+\lambda_n\|\bE\|_{\op}^2\right).
\end{equation*}

\emph{Bounding the normalized covariance errors.}
It remains to prove, directly from Assumption~\ref{asm:distr}, that
\begin{equation*}
\max\{\|\bE\|_{\op},\|\bE_\sigma\|_{\op}\}\leq C_D(\log N)^c\sqrt{\epsilon}
\end{equation*}
with probability at least $1-N^{-D}$. We first  define
\begin{equation*}
\bM_{i,r}=\bG_0^{1/2}\bx_{i,r}\bx_{i,r}^\top\bG_0^{1/2},\qquad
\bZ_{i,r}=\bM_{i,r}\mathbf1_{\{\bx_{i,r}^\top\bG_0\bx_{i,r}\leq C_A\epsilon(\log N)^{c_1}\}}.
\end{equation*} and  decompose
\begin{equation*}
\bE=\sum_{i,r}(\bZ_{i,r}-\E\bZ_{i,r})
+\sum_{i,r}(\bM_{i,r}-\bZ_{i,r})
-\sum_{i,r}\E[\bM_{i,r}-\bZ_{i,r}].
\end{equation*}

We control the first term via matrix Bernstein inequality. By the definition of $\epsilon$, $\Tr(\bC_i\bG_0)=\sum_{j\geq1}\frac{j^{-\alpha_i}}{h_j+\lambda_n}\leq\epsilon.$ Since each $\bZ_{i,r}$ depends only on $\bx_{i,r}$, they are still independent with each other. 
By Assumption~\ref{asm:distr}, using
$\|\bC_i^{1/2}\bG_0\bC_i^{1/2}\|_F\leq\Tr(\bC_i\bG_0)\leq\epsilon$ gives 
\begin{equation*}
\P\left(\bx_{i,r}^\top\bG_0\bx_{i,r}>C_A \epsilon(\log N)^{c_1}\right)\leq N^{-A},
\end{equation*}
for any $A > 0$ with some constants $C_A.$
Thus, we have$\|\bM_{i,r}\|_{\op}=\bx_{i,r}^\top\bG_0\bx_{i,r},
$ meaning that with probability $1 - N^{-A},$
\begin{equation*}
\|\bZ_{i,r}-\E\bZ_{i,r}\|_{\op}\leq C_A\epsilon(\log N)^{c_1}.
\end{equation*}

For the second moment, note that 
\begin{equation*}
\bM_{i,r}^2=(\bx_{i,r}^\top\bG_0\bx_{i,r})\bM_{i,r}.
\end{equation*}
Consequently,
\begin{align*}
\bZ_{i,r}^2
&=(\bx_{i,r}^\top\bG_0\bx_{i,r})\bM_{i,r}
\mathbf1_{\{\bx_{i,r}^\top\bG_0\bx_{i,r}\leq C_A\epsilon(\log N)^{c_1}\}}\\
&\preceq C_A\epsilon(\log N)^{c_1}\bM_{i,r}.
\end{align*}
Using $\E\bM_{i,r}=\bG_0^{1/2}\bC_i\bG_0^{1/2}$, we obtain
\begin{align*}
\sum_{i,r}\E[(\bZ_{i,r}-\E\bZ_{i,r})^2] &\preceq\sum_{i,r}\E[\bZ_{i,r}^2] \preceq C_A\epsilon(\log N)^{c_1}
\bG_0^{1/2}\left(\sum_{i=1}^2n_i\bC_i\right)\bG_0^{1/2}.
\end{align*}
We show that
\begin{equation*}
\left\|\bG_0^{1/2}\left(\sum_i n_i\bC_i\right)\bG_0^{1/2}\right\|_{\op}\in[1/2,1],
\qquad
\Tr\left[\bG_0^{1/2}\left(\sum_i n_i\bC_i\right)\bG_0^{1/2}\right]\asymp m.
\end{equation*}

To see this, by definition,
\begin{equation*}
\bG_0^{1/2}\left(\sum_{i=1}^2n_i\bC_i\right)\bG_0^{1/2}
=\bI-\lambda_n\bG_0
=\operatorname{Diag}\left(\frac{h_j}{h_j+\lambda_n}\right)_{j\geq1}.
\end{equation*}
The operator norm of the above lies in $[1/2,1]$ as each of  its diagonal entries is at most one, and the first one is at least $1/2$ because $h_1\geq h_m=\lambda_n$, and thus its trace is of order $m$. Indeed, the first $\lfloor m\rfloor$ entries are at least $1/2$, while
\begin{equation*}
\sum_{j>m}\frac{h_j}{h_j+\lambda_n}\leq\sum_{j>m}(m/j)^{\alpha_{\min}}\lesssim m,
\end{equation*}
where we used $h_j\leq\lambda_n(m/j)^{\alpha_{\min}}$ for $j>m$ and $\alpha_{\min}>1$.

Thus, $\bG_0^{1/2}\left(\sum_i n_i\bC_i\right)\bG_0^{1/2}$ has intrinsic dimension $O(m)$, and by applying the intrinsic-dimension matrix Bernstein inequality \citep[Theorem~7.3.1]{tropp2015introduction} and recalling $\log m=O(\log N)$,
\begin{equation*}
\left\|\sum_{i,r}(\bZ_{i,r}-\E\bZ_{i,r})\right\|_{\op}
\leq C_D\left(\sqrt{\epsilon}(\log N)^{(c_1+1)/2}+\epsilon(\log N)^{c_1+1}\right),
\end{equation*}
with probability at least $1-N^{-(D+2)}$.

Next,  we control the remaining two terms. A union bound gives
\begin{equation*}
\P\left(\bM_{i,r}\neq\bZ_{i,r}\text{ for some }i,r\right)\leq N^{1-A}.
\end{equation*}
On the complementary event, the decomposition reduces to
\begin{equation*}
\bE=\sum_{i,r}(\bZ_{i,r}-\E\bZ_{i,r})
-\sum_{i,r}\E[\bM_{i,r}-\bZ_{i,r}].
\end{equation*}

To bound the remaining deterministic term, fix a deterministic vector $\bv$ and write $q=\bv^\top\bG_0^{1/2}\bC_i\bG_0^{1/2}\bv$. Assumption~\ref{asm:distr} gives
\begin{equation*}
\P\left(\left|(\bv^\top\bG_0^{1/2}\bx_{i,r})^2-q\right|\geq tq\right)\leq C\exp(-ct^{1/\beta}),\qquad t>0,
\end{equation*}
where $\beta>0$ is the fixed tail exponent supplied by the assumption. Since $\E[(\bv^\top\bG_0^{1/2}\bx_{i,r})^2]=q$, integrating this tail bound yields
\begin{align*}
\E[(\bv^\top\bG_0^{1/2}\bx_{i,r})^4]
&=q^2+\E\left[\left((\bv^\top\bG_0^{1/2}\bx_{i,r})^2-q\right)^2\right]\\
&\leq q^2+2Cq^2\int_0^\infty t\exp(-ct^\eta)\,dt\\
&\leq C'(\bv^\top\bG_0^{1/2}\bC_i\bG_0^{1/2}\bv)^2.
\end{align*}
 Cauchy--Schwarz inequality gives
\begin{align*}
\bv^\top\E[\bM_{i,r}-\bZ_{i,r}]\bv
&=\E\left[(\bv^\top\bG_0^{1/2}\bx_{i,r})^2
\mathbf1_{\{\bx_{i,r}^\top\bG_0\bx_{i,r}>C_A\epsilon(\log N)^{c_1}\}}\right]\\
&\leq\left(\E[(\bv^\top\bG_0^{1/2}\bx_{i,r})^4]\right)^{1/2}
\P\left(\bx_{i,r}^\top\bG_0\bx_{i,r}>C_A\epsilon(\log N)^{c_1}\right)^{1/2}\\
&\leq CN^{-A/2}\bv^\top\bG_0^{1/2}\bC_i\bG_0^{1/2}\bv.
\end{align*}
Since this holds for every $\bv$, summing over the samples yields
\begin{equation*}
\mathbf0\preceq\sum_{i,r}\E[\bM_{i,r}-\bZ_{i,r}]
\preceq CN^{-A/2}\bG_0^{1/2}\left(\sum_i n_i\bC_i\right)\bG_0^{1/2}
\preceq CN^{-A/2}\bI.
\end{equation*}
Combining this estimate with the Bernstein bound, we conclude that
\begin{equation*}
\|\bE\|_{\op}
\leq C_D\left(\sqrt{\epsilon}(\log N)^{(c_1+1)/2}+\epsilon(\log N)^{c_1+1}\right)+CN^{-A/2},
\end{equation*}
with probability at least $1-N^{-(D+2)}-N^{1-A}$.

Finally, since
\begin{equation*}
N\epsilon\geq\sum_i n_i\Tr(\bC_i\bG_0)
=\Tr\left[\bG_0^{1/2}\left(\sum_i n_i\bC_i\right)\bG_0^{1/2}\right]\geq\frac12,
\end{equation*}
we have $N^{-A/2}=o(\sqrt{\epsilon})$. Also, $\epsilon\leq n^{-c_-}$ implies $\epsilon\leq1$ for sufficiently large $n$, so increasing the fixed logarithmic exponent gives
\begin{equation*}
\|\bE\|_{\op}\leq C_D(\log N)^c\sqrt{\epsilon}.
\end{equation*}
The same argument applies to
\begin{equation*}
\bE_\sigma=\sum_{i,r}\sigma_{\eps_i}^2(\bM_{i,r}-\E\bM_{i,r}),
\end{equation*}
using the truncated matrices $\sigma_{\eps_i}^2\bZ_{i,r}$. Since the noise variances are uniformly bounded, the single-matrix norm bound, the matrix variance bound, and the discarded-mean bound change only by constant factors. Taking a union bound and using $A>D+4$, we obtain
\begin{equation*}
\max\{\|\bE\|_{\op},\|\bE_\sigma\|_{\op}\}
\leq C_D(\log N)^c\sqrt{\epsilon},
\end{equation*}
with probability at least $1-N^{-D}$ for sufficiently large $n$.

\paragraph{Proof of \eqref{eq:powerlaw-deterministic-comparison-goal}.}
For $\bA_*=\btheta_*\btheta_*^\top$, the definition of the deterministic equivalent gives
\begin{align*}
\overline\sR_1(\lambda_n)-(B_0+V_\sigma)
&=\left(\lambda_n^2\tau_{\bA_*}[1]-B_0\right)+\left(\overline\sV_1-V_\sigma\right)\\
&\quad+\lambda_n^2\tau_{\bA_*}^\top\bL^{-1}\tau_{\bC_1}.
\end{align*}
We will show that the first two terms are bounded by $C\epsilon B_0$ and $C\epsilon V_\sigma$, respectively, while the last term is bounded by $C\lambda_n\epsilon V_\sigma$.

For the first term, recall that
\begin{align*}
B_0&=\lambda_n^2\btheta_*^\top\bG_0\bC_1\bG_0\btheta_*, \quad 
\tau_{\bA_*}[1]=\Tr(\bA_*\overline\bG\bC_1\overline\bG)
=\btheta_*^\top\overline\bG\bC_1\overline\bG\btheta_*.
\end{align*}

We first compare $\bG_0$ and $\bG.$  Let $\eta=\min_{i=1,2}\frac{\mu_i}{n_i}\in(0,1].$
Since
\begin{equation*}
\sum_i\mu_i\bC_i+\lambda_n\bI\succeq\eta(n\bC_1+n_2\bC_2+\lambda_n\bI),
\end{equation*}
we have $\overline\bG\preceq\eta^{-1}\bG_0$. Consequently, the fixed-point equations imply
\begin{equation*}
\frac{\mu_i}{n_i}=\frac1{1+\Tr(\bC_i\overline\bG)}\geq\frac1{1+\epsilon/\eta}.
\end{equation*}
Taking the minimum over $i$ gives $\eta\geq\eta/(\eta+\epsilon)$. Since $\eta>0$, this implies $\eta+\epsilon\geq1$, and therefore
\begin{equation*}
1-\epsilon\leq\frac{\mu_i}{n_i}\leq1,
\end{equation*}
which implies
\begin{equation*}
\bG_0\preceq\overline\bG\preceq(1-\epsilon)^{-1}\bG_0,\qquad \Tr(\bC_i\overline\bG)\leq\frac{\epsilon}{1-\epsilon}.
\end{equation*}

Since $\bC_1$, $\bG_0$, and $\overline\bG$ are diagonal in the same basis, the comparison $\bG_0\preceq\overline\bG\preceq(1-\epsilon)^{-1}\bG_0$ gives
\begin{equation*}
B_0\leq\lambda_n^2\tau_{\bA_*}[1]\leq(1-\epsilon)^{-2}B_0.
\end{equation*}
For sufficiently large $n$, $\epsilon\leq1/2$, so
\begin{equation*}
0\leq\lambda_n^2\tau_{\bA_*}[1]-B_0
\leq\left((1-\epsilon)^{-2}-1\right)B_0
=\frac{2\epsilon-\epsilon^2}{(1-\epsilon)^2}B_0
\leq8\epsilon B_0.
\end{equation*}

To control the variance term, recall that $\bL=\bD-\bK$, where
\begin{equation*}
\bD_{ii}=\frac{n_i}{\mu_i^2},\qquad \bK_{ij}=\Tr(\bC_i\overline\bG\bC_j\overline\bG).
\end{equation*}
Define
\begin{equation*}
\bsigma=(\sigma_{\eps_1}^2,\sigma_{\eps_2}^2)^\top,\qquad w_i=\frac{(\bL^{-1}\bsigma)_i}{\mu_i}.
\end{equation*}
The equation $\bL(\bL^{-1}\bsigma)=\bsigma$ becomes
\begin{equation*}
w_i=\frac{\mu_i}{n_i}\sigma_{\eps_i}^2+\sum_{j=1}^2\frac{\mu_i\mu_j}{n_i}\bK_{ij}w_j.
\end{equation*}
The coefficients in the sum are nonnegative, and their row sums satisfy
\begin{align*}
\sum_{j=1}^2\frac{\mu_i\mu_j}{n_i}\bK_{ij}
&=\frac{\mu_i}{n_i}\Tr\left[\bC_i\overline\bG\left(\sum_{j=1}^2\mu_j\bC_j\right)\overline\bG\right]\\
&\leq\Tr(\bC_i\overline\bG)\\
&\leq C\epsilon.
\end{align*}
Here we used $\sum_j\mu_j\bC_j\preceq\overline\bG^{-1}$ and $\mu_i/n_i\leq1$. It follows that
\begin{equation*}
\max_i|w_i|\leq\max_i\sigma_{\eps_i}^2+C\epsilon\max_i|w_i|,
\end{equation*}
and hence $\max_i|w_i|=O(1)$. Substituting this bound back into the equation for $w_i$ gives
\begin{equation*}
w_i=\frac{\mu_i}{n_i}\sigma_{\eps_i}^2+O(\epsilon).
\end{equation*}
Multiplying by $\mu_i$, and using $\mu_i/n_i=1+O(\epsilon)$, yields
\begin{equation*}
(\bL^{-1}\bsigma)_i=\frac{\mu_i^2}{n_i}\sigma_{\eps_i}^2+O(\mu_i\epsilon)=n_i\sigma_{\eps_i}^2(1+O(\epsilon)).
\end{equation*}
The last equality uses the positive lower bound on the noise variances.

Since $\bG_0, \bG$ are both diagonal, we have
\begin{equation*}
\tau_{\bC_1}[i]=(1+O(\epsilon))\Tr(\bC_1\bG_0\bC_i\bG_0).
\end{equation*}
Consequently,
\begin{align*}
\overline\sV_1
&=\tau_{\bC_1}^\top\bL^{-1}\bsigma\\
&=(1+O(\epsilon))\sum_{i=1}^2\sigma_{\eps_i}^2n_i\Tr(\bC_1\bG_0\bC_i\bG_0)\\
&=(1+O(\epsilon))V_\sigma.
\end{align*}
Thus $|\overline\sV_1-V_\sigma|\leq C\epsilon V_\sigma$.

It remains to bound the correction $\lambda_n^2\tau_{\bA_*}^\top\bL^{-1}\tau_{\bC_1}$. The rescaled linear system for $w_i$ has nonnegative coefficients and row sums smaller than one. Its Neumann series therefore has nonnegative entries. Applying the same argument to any nonnegative right-hand side and undoing the positive diagonal rescaling shows that $\bL^{-1}$ is entrywise nonnegative.

For each $i$, we have
\begin{align*}
\lambda_n^2\tau_{\bA_*}[i]
&=\lambda_n^2\btheta_*^\top\overline\bG\bC_i\overline\bG\btheta_*\\
&\leq\lambda_n^2\|\overline\bG\|_{\op}\Tr(\bC_i\overline\bG)\|\btheta_*\|^2\\
&\leq C\lambda_n\epsilon,
\end{align*}
where we used $\|\overline\bG\|_{\op}\leq\lambda_n^{-1}$ and $\|\btheta_*\|^2<\infty$. Since the noise variances are bounded below, this means that $\lambda_n^2\tau_{\bA_*}\leq C\lambda_n\epsilon\,\bsigma$ entrywise. By symmetry and entrywise nonnegativity of $\bL^{-1}$,
\begin{equation*}
0\leq\lambda_n^2\tau_{\bA_*}^\top\bL^{-1}\tau_{\bC_1}\leq C\lambda_n\epsilon\,\bsigma^\top\bL^{-1}\tau_{\bC_1}=C\lambda_n\epsilon\,\overline\sV_1\leq C\lambda_n\epsilon V_\sigma.
\end{equation*}
Combining the three terms in the deterministic risk decomposition proves
\begin{equation*}
|\overline\sR_1(\lambda_n)-(B_0+V_\sigma)|\leq C\epsilon B_0+C\epsilon V_\sigma+C\lambda_n\epsilon V_\sigma\leq C(\epsilon+\lambda_n\epsilon)(B_0+V_\sigma).
\end{equation*}
This establishes \eqref{eq:powerlaw-deterministic-comparison-goal} and concludes the proof.

}

\section{Proofs of deterministic equivalences}
\label{app:proof-deteq}

\subsection{Heuristic derivation of deterministic equivalences}

First of all, it is easy to heuristicly derive the deterministic equivalence for the first order functional $\Tr(\bA\bG) \approx \Tr(\bA \overline \bG).$ Next we heuristicly derive the deterministic equivalence for the second order functional of resolvent.

\paragraph{Deterministic equivalence of $\Tr(\bA\bG\bB\bG)$.}
Let
\begin{equation*}
\bH=\sum_{i=1}^K \bX_i^\top\bX_i+\lambda\bI.
\end{equation*}
For a deterministic PSD matrix $\bB$,
\begin{equation*}
\Tr(\bA\bG\bB\bG)=-\left.\frac{\dd}{\dd t}\Tr\left(\bA(\bH+t\bB)^{-1}\right)\right|_{t=0}.
\end{equation*} 
The deterministic equivalence of $\Tr(\bA \bG)$ gives
\begin{equation*}
\Tr\left(\bA(\bH+t\bB)^{-1}\right)\approx\Tr(\bA\overline\bG(t)),
\end{equation*}
where
\begin{equation*}
\overline\bG(t)=\left(\sum_{i=1}^K \mu_i(t)\bC_i+\lambda\bI+t\bB\right)^{-1},\qquad \mu_i(t)=\frac{n_i}{1+\Tr(\bC_i\overline\bG(t))}.
\end{equation*}
Differentiating the resolvent gives
\begin{equation*}
\frac{\dd}{\dd t}\overline\bG(t)=-\overline\bG(t)\left(\sum_{j=1}^K \frac{\dd\mu_j(t)}{\dd t}\bC_j+\bB\right)\overline\bG(t).
\end{equation*}
Differentiating the fixed-point equations and evaluating at $t=0$ yields
\begin{equation*}
\frac{n_i}{\mu_i^2}\dot\mu_i-\sum_{j=1}^K\Tr(\bC_i\overline\bG \bC_j\overline\bG)\dot\mu_j=\Tr(\bC_i\overline\bG\bB\overline\bG).
\end{equation*}
Therefore,
\begin{equation*}
\bL\dot{\bmu}=\tau_{\bB}.
\end{equation*}
A final differentiation gives
\begin{align*}
-\left.\frac{\dd}{\dd t}\Tr(\bA\overline\bG(t))\right|_{t=0}
&=\Tr(\bA\overline\bG\bB\overline\bG)+\tau_{\bA}^\top\dot{\bmu}\\
&=\Tr(\bA\overline\bG\bB\overline\bG)+\tau_{\bA}^\top\bL^{-1}\tau_{\bB}.
\end{align*}
This proves the first second-order formula in Theorem~\ref{thm:deteq-risk}.

\paragraph{Deterministic equivalence of $\Tr(\bA\bG\bX_k^\top\bX_k\bG)$.}

Set $\bS_k=\bX_k^\top\bX_k$. Then
\begin{equation*}
\Tr(\bA\bG\bS_k\bG)=-\left.\frac{\dd}{\dd t}\Tr\left(\bA(\bH+t\bS_k)^{-1}\right)\right|_{t=0}.
\end{equation*}
Let $\beta_i(t)=1+t\mathbf 1[i=k]$ and $r_i(t)=\beta_i(t)\mu_i(t)$. The first-order deterministic equivalent is
\begin{equation*}
\overline\bG_k(t)=\left(\sum_{i=1}^K  r_i(t)\bC_i+\lambda\bI\right)^{-1},\qquad \mu_i(t)=\frac{n_i}{1+\beta_i(t)\Tr(\bC_i\overline\bG_k(t))}.
\end{equation*}
Equivalently,
\begin{equation*}
\frac{n_i}{r_i(t)}=\frac{1}{\beta_i(t)}+\Tr(\bC_i\overline\bG_k(t)).
\end{equation*}
Differentiating at $t=0$ gives
\begin{equation*}
\frac{n_i}{\mu_i^2}\dot r_i-\sum_{j=1}^K\Tr(\bC_i\overline\bG \bC_j\overline\bG)\dot r_j=\mathbf 1[i=k],
\end{equation*}
and hence
\begin{equation*}
\bL\dot\br=\be_k.
\end{equation*}
Since
\begin{equation*}
-\left.\frac{\dd}{\dd t}\Tr(\bA\overline\bG_k(t))\right|_{t=0}=\tau_{\bA}^\top\dot\br,
\end{equation*}
we obtain
\begin{equation*}
\Tr(\bA\bG\bS_k\bG)\approx\tau_{\bA}^\top\bL^{-1}\be_k.
\end{equation*}

\paragraph{Bias and variance.}

We apply the first second-order formula with the rank-one matrix $\bA=\btheta_*\btheta_*^\top$ and $\bB=\bC_k$. Multiplication by $\lambda^2$ gives
\begin{equation*}
\overline\sB_k=\lambda^2\left[\left\langle\btheta_*,\overline\bG\bC_k\overline\bG\btheta_*\right\rangle+\tau_{\btheta_*}^\top\bL^{-1}\tau_{\bC_k}\right].
\end{equation*}
Applying the second formula with $\bA=\bC_k$ and summing the independent noise contributions gives
\begin{equation*}
\overline\sV_k=\sum_{i=1}^K\sigma_{\eps_i}^2\,\tau_{\bC_k}^\top\bL^{-1}\be_i.
\end{equation*}
Weighting the domain-specific risks by $\pi_k^*$ gives the main results in  Theorem~\ref{thm:deteq-risk}.

\subsection{Main theorem and proof sketch}

We first present the main theorem and the proof sketch, and then specify the proof of each deterministic equivalence in the following subsections.

Before stating the main results, we first recall some notation below. 

Let $\bX_i = [\bx_{i,1}^\top, \dots, \bx_{i, n_i}^\top]^\top \in \R^{n_i \times d}$ ($i \in [K]$) be $K$ datasets,  where $\E[\bx_{i,r} \bx_{i,r}^\top] = \bC_i.$ Define $\bG = (\sum_{i=1}^K  \bX_i^\top \bX_i + \lambda)^{-1},$ and $\overline \bG = (\sum_{i=1}^K  \mu_i \bC_i + \lambda)^{-1}.$ Here, $\mu_i$ is the fixed point of the self-consistence equation \begin{equation*}
    \begin{split}
        \frac{n_i}{\mu_i} = 1+ \Tr( \bC_i \overline{\bG}), \quad i \in [K]. 
    \end{split}
\end{equation*} 

For later convenience, we define the following matrices that relate to the deterministic equivalence and the stability of the fixed point. Define a $K \times K$ diagonal matrix $\bD$ with $\bD_{ii} = \frac{n_i}{ \mu_i^2},$ a $K \times K$ diagonal matrix $\bK$ with $\bK_{ij} = \Tr(\bC_i \overline{\bG} \bC_j \overline{\bG})$, and $\bL = \bD - \bK$. Now, we state the main theorem that provides the deterministic equivalence of the three functionals appearing in Theorem \ref{thm:deteq-risk}.

\begin{theorem} 
   \label{thm:deteq-func-main} 
   Assume that, for $i \in [K]$, 
   $\bC_i$ satisfies Assumption \ref{asm:distr} and these matrices commute.
   For each $\bC_i$, following \cite[Appendix A.1]{misiakiewicz2024non}, we define  
   \begin{equation}\label{eq:defnu}
       \begin{split}
           r_{\bC_i}(n_i) &= \max\left\{n_i, \max_{0\leq m<\min(n_i,p_i)}\frac{\sum_{j>m}\xi_{i,j}}{\xi_{i,m+1}}\right\}, \\
           \nu^i_\lambda(n_i) &:= 1 + \frac{\xi_{\lfloor\eta n_i\rfloor} r_{\bC_i}(n_i) \sqrt{\log r_{\bC_i}(n_i)}}{\lambda}, \qquad \nu := \max_{i\in[K]}\nu_{\lambda}^i(n_i).
       \end{split}
   \end{equation} 
   We further assume $\nu_\lambda^i(n_i)$ satisfies Assumption \ref{asm:nu} for all $i\in[K]$. For any PSD matrix $\bA,$ define $\mathbf{\tau}_A \in \R^{K}$ with $\mathbf{\tau}_A[k] = \Tr(\bA \overline{\bG} \bC_k \overline{\bG}).$ Then, we have with probability at least $1-\sum_{i=1}^K n_i^{-D}$, 
   \begin{equation*}
       \begin{split}
            |\Tr(\bA \bG) - \Tr(\bA \overline{\bG})| &\leq  C_D \eps_1 \Tr(\bA \overline{\bG}), \\
            |\Tr(\bA \bG \bC_k \bG) - \be_k^\top \bD \bL^{-1} \tau_\bA| &\leq C_D \eps_2 V_\bA, \\
            |\Tr(\bA \bG \bX_k^\top \bX_k \bG) - \be_k^\top  \bL^{-1} \tau_\bA| &\leq C_D \eps_3 V_\bA,
       \end{split}
   \end{equation*}
   where 
   \begin{equation*}
        \begin{aligned}
            \eps_1
            &= e_K\left(\nu^6+\nu^2\log^{\beta+1/2}(N)\right), \\
            \eps_2
            &= e_K\left(\frac{\nu^{14}}{n_k}
            +\nu^8e_K^2\log^{4\beta+3/2} (N)\right), \\
            \eps_3
            &= e_K\left(\nu^{14}+\nu^5\log^{3\beta+3/2}(N)\right), \\
            V_{\bA}
            &= \sum_{i=1}^K n_i \Tr(\bA\overline{\bG}\bC_i\overline{\bG}), \qquad e_K = \left(\sum_{i=1}^K n_i^{-1}\right)^{1/2}.
            \end{aligned}
    \end{equation*}
\end{theorem}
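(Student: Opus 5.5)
We prove the three estimates in order, each building on the previous one. The overall scheme follows the single-dataset argument of \cite{misiakiewicz2024non}, extended to the heterogeneous mixture.

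\textbf{First-order estimate.} We start from the first-order functional $\Tr(\bA\bG)$. The plan is a leave-one-out decomposition over all $N$ samples. Write $\bG^{-1}-\overline\bG^{-1}=\sum_{i}\sum_{r}(\bx_{i,r}\bx_{i,r}^\top-\mu_i\bC_i/n_i)$. Apply the Sherman--Morrison formula to each rank-one term, replacing $\bG$ by $\bG_{-(i,r)}$, so that
\[
\bG\bx_{i,r}=\frac{\bG_{-(i,r)}\bx_{i,r}}{1+\bx_{i,r}^\top\bG_{-(i,r)}\bx_{i,r}}.
\]
Assumption~\ref{asm:distr} (quadratic-form concentration, applied conditionally on the other samples) gives $\bx_{i,r}^\top\bG_{-(i,r)}\bx_{i,r}\approx\Tr(\bC_i\bG_{-(i,r)})\approx\Tr(\bC_i\bG)$. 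Averaging then shows that $\Tr(\bA\bG)$ is an approximate solution of the same fixed-point system defining $(\mu_i)$, with ``empirical'' $\tilde\mu_i=n_i/(1+\Tr(\bC_i\bG))$. To close the argument we must show that the $K$-dimensional map is stable. This step uses commutativity of the $\bC_i$: in the common eigenbasis everything is diagonal and the Jacobian of the fixed-point map is exactly $\bD^{-1}\bK$. We showed in the invertibility proof that $\bL=\bD-\bK\succ0$, and that $\bD^{-1}\bK$ has spectral radius $<1$. Perturbation errors therefore propagate only by factors of $\nu$.

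Concentration of $\Tr(\bA\bG)$ around its mean follows from a martingale (Efron--Stein/Azuma) bound over the $N$ samples. Each sample changes the trace by at most $O(\nu\,\Tr(\bA\overline\bG)/n_i)$, which yields the $e_K=(\sum_i n_i^{-1})^{1/2}$ factor. Assumption~\ref{asm:nu} lets us union-bound over polynomially many events and replace $\nu$-dependent truncations by their worst case.

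\textbf{Second-order estimates.} For these we make the derivative heuristic rigorous. Take $\Tr(\bA\bG\bC_k\bG)=-\partial_t\Tr(\bA(\bH+t\bC_k)^{-1})|_{t=0}$. Apply the first-order estimate uniformly for $t$ in a small complex/real neighborhood, with the extra deterministic term $t\bC_k$ absorbed into $\lambda$ (it commutes with everything). Then pass from closeness of functions to closeness of derivatives by a Cauchy-integral (or finite-difference with error balancing) argument. The derivative of the deterministic side is computed exactly as in the heuristic section, which gives $\be_k^\top\bD\bL^{-1}\tau_\bA$ and the normalization $V_\bA$.

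The third functional is handled similarly by perturbing $\bX_k^\top\bX_k$. Here the perturbation $t\bX_k^\top\bX_k$ is random and reweights dataset $k$, so we need the first-order equivalent for the reweighted model $\sum_i\beta_i(t)\bX_i^\top\bX_i$. This is covered by the first step with $\bC_k\to\beta_k\bC_k$. Alternatively, we expand $\bX_k^\top\bX_k\bG=\sum_r \bx_{k,r}\bx_{k,r}^\top\bG$ by leave-one-out directly, which reduces it to the first two functionals. The asymmetric $e_K$ additive error is expected to come from here.

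\textbf{Main obstacle.} We expect the hardest part to be the coupled stability: controlling how errors in the $K$ empirical traces $\Tr(\bC_i\bG)$ feed back through $\bL^{-1}$ with constants polynomial in $\nu$ and not depending on ratios $n_i/n_j$. We would try to bound $\bL^{-1}$ entrywise via the Neumann series and the identity $\bL\bmu=\mathbf 1+\lambda\tau_\bI$, which gives $\bL^{-1}\mathbf 1\le\bmu$, and to track all errors relative to $V_\bA$.
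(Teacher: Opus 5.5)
Your first-order step matches the paper's: leave-one-out, Sherman--Morrison, conditional quadratic-form concentration, then a truncated martingale bound. One caveat: ``spectral radius of $\bD^{-1}\bK$ below one'' is only qualitative, and the fixed-point map is nonlinear. The paper instead uses a global weighted max-norm contraction. With $t_i=\Tr(\bC_i\overline\bG)$, $\kappa_i$ the expected leave-one-out trace and $\varepsilon=\max_i|\kappa_i-t_i|/(1+t_i)$, one gets $\varepsilon(1+t_i)\le\mathrm{err}+\varepsilon t_i$, i.e.\ $\varepsilon\le\mathrm{err}$.

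The genuine gap is the second-order step. Set $f(t)=\Tr(\bA(\bH+t\bC_k)^{-1})$ with $\bH=\sum_i\bX_i^\top\bX_i+\lambda\bI$, and let $g(t)$ be its deterministic equivalent. Your first step only gives $|f(t)-g(t)|\lesssim\eps_1\Tr(\bA\overline\bG(t))\asymp\eps_1\Tr(\bA\overline\bG)$. A Cauchy estimate on a disc of radius $r$ therefore yields at best $|f'(0)-g'(0)|\lesssim r^{-1}\eps_1\Tr(\bA\overline\bG)$, where $r$ is limited by invertibility of $\bH+t\bC_k$ and does not depend on $\bA$. However,
\[
\Tr(\bA\overline\bG)=\sum_{i=1}^K\mu_i\tau_{\bA}[i]+\lambda\Tr(\bA\overline\bG^2),
\]
and the last term is not controlled by $V_{\bA}$. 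Take $\bA=\be_j\be_j^\top$ in a direction with $\sum_in_i[\bC_i]_{jj}\ll\lambda$. Then $\Tr(\bA\overline\bG)\approx\lambda^{-1}$ while $V_{\bA}\approx\lambda^{-2}\sum_in_i[\bC_i]_{jj}$, so no choice of $r$ gives $C\eps_2V_{\bA}$ uniformly over PSD $\bA$.

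Two further problems compound this:
\begin{itemize}
\item The Cauchy route needs first-order equivalents for complex, non-Hermitian regularizers $\lambda\bI+t\bC_k$. Your first step, which rests on real PSD monotonicity and positivity of traces, does not provide them. ``Absorbing $t\bC_k$ into $\lambda$'' is only legitimate for real $t$ with $\lambda\bI+t\bC_k\succ0$, after renormalizing the covariances by $(\lambda\bI+t\bC_k)^{-1/2}$, which changes $\nu$.
\item The real finite-difference (convexity) version must balance $\eps_1\Tr(\bA\overline\bG)/t$ against a curvature term of order $t\nu|f'(0)|/n_k$. This loses a square root ($\sqrt{e_K}$ instead of $e_K$) even when $\Tr(\bA\overline\bG)\asymp V_{\bA}$.
\end{itemize}
The same objections apply to the third functional via $\bX_k\to\sqrt{1+t}\,\bX_k$.

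What is missing is a direct analysis of the second-order functional, with all errors measured through $\tau_{\bA}$ and hence through $V_{\bA}$. The paper expands $T_{\bA,k}=\E\Tr(\bA\bG\bC_k\bG)$ by leave-one-out and shows that its leading term closes into the linear system
\[
T_{\bA,k}=\tau_{\bA}[k]+\sum_{i=1}^K\frac{\mu_i^2}{n_i}\tau_{\bA}[i]\,T_{\bC_i,k}+r_{\bA,k}.
\]
For $\bA=\bC_j$ this reads $\bh=\bK\be_k+\bK\bD^{-1}\bh+\br$, which is solved with $\bL^{-1}$. The remainders are bounded using Lemma~\ref{lem:new2} (moment bounds on $\Tr(\bA\bG_-\bB\bG_-)$ in terms of $\tau_{\bA}$ and $W_{\bA}=\sum_in_it_i\tau_{\bA}[i]$), together with $\bL^{-1}\ge0$ and $\bL^{-1}\mathbf 1\le\bmu$, as you anticipated.

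The third functional is then reduced to the second along your ``alternative'' route. Using $\bG\bx_{k,r}=\bG_{-(k,r)}\bx_{k,r}/(1+\bx_{k,r}^\top\bG_{-(k,r)}\bx_{k,r})$, one gets $\E\Tr(\bA\bG\bX_k^\top\bX_k\bG)\approx\frac{n_k}{(1+\kappa_k)^2}\E\Tr(\bA\bG\bC_k\bG)$.

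To rescue the derivative approach instead, you would need a sharper first-order estimate. Its error would have to be relative to $\sum_i\mu_i\tau_{\bA}[i]$ rather than $\Tr(\bA\overline\bG)$, and it would have to hold uniformly for complex $t$.
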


\paragraph{Connections to \cite{misiakiewicz2024non}.} The three functionals studied by \cite[Appendix A]{misiakiewicz2024non} are
\[
\Tr(\bA\bM),\qquad \Tr(\bA\bM^2),\qquad n_k^{-1}\Tr(\bA\bM\bX_k^\top\bX_k\bM).
\]
They enjoy a multiplicative error bound whose coefficient is $O(n^{-1/2})$. In our case, under fixed $K$ and comparable dataset sizes $n_i \asymp n$, all three $\eps_1$, $\eps_2$, and $\eps_3$ also scale as $O(n^{-1/2})$.

To prove Theorem \ref{thm:deteq-func-main}, we decompose the error into the deterministic bias part and the martingale part. For example, for $\Tr(\bA \bG),$ we have: \begin{equation*}
    \begin{split}
         |\Tr(\bA \bG) - \Tr(\bA \overline{\bG})| \leq  \underbrace{|\Tr(\bA \bG) - \E[\Tr(\bA \bG)] |}_{\text{deterministic part}} + \underbrace{|\E[\Tr(\bA \bG)] - \Tr(\bA \overline{\bG})|}_{\text{martingale part}}.
    \end{split}
\end{equation*}

\subsection{Deterministic equivalence of \texorpdfstring{$\Tr(\bA \bG)$}{Tr(A G)}}

\subsubsection{Deterministic part of \texorpdfstring{$\Tr(\bA\bG)$}{Tr(AG)}}
Write
$$
    e_K = \left(\sum_{i=1}^K \frac{1}{n_i}\right)^{1/2}, \qquad \nu = \max_{i} \nu_\lambda^i(n).
$$
Without loss of generality, we may assume $e_K \leq 1$. By the deterministic bounds in Lemma \ref{lem:new1}, we have
$$
    t_i := \Tr(\bC_i\overline\bG) \leq C\nu.
$$
As in \cite[Appendix A.3.1]{misiakiewicz2024non}, we may assume $\bA = \btheta\btheta^\top$. For dataset $\bX_i$, define 
$$
    \bH_i = \bG_{-(i,1)}, \qquad \kappa_i = \E[\Tr(\bC_i\bH_i)], \qquad \overline\bG_- = \left(\sum_{\ell=1}^K \frac{n_\ell}{1+\kappa_\ell}\bC_\ell + \lambda\right)^{-1},
$$
where $\bG_{-(i,1)}$ is obtained from $\bG$ by leave-one-out on the $i$-th data point.
We will bound $|\E[\Tr(\bA\bG)-\Tr(\bA\overline\bG)|$ by splitting it into
\begin{equation}\label{eq:fo1}
    |\E[\Tr(\bA\bG)-\Tr(\bA\overline\bG)| \leq |\Tr(\bA(\E[\bG]-\overline\bG_-))| + |\Tr(\bA(\overline\bG_--\overline\bG))|.
\end{equation}
For the first term in \eqref{eq:fo1}, 
$$
    \E[\bG]-\overline\bG_- = \sum_{i=1}^Kn_i\E\left[\bG\left(\frac{\bC_i}{1+\kappa_i}-\bx\bx^\top\right)\overline\bG_-\right],
$$
with $\bx = \bx_{i,1}$.
Setting $z_i = \bx^\top\bH_i\bx$ and following the approach in \cite[Appendix A.3.1]{misiakiewicz2024non}, we decompose
$$
    \bG\left(\frac{\bC_i}{1+\kappa_i}-\bx\bx^\top\right)\overline\bG_- = \Delta_{i,1}+\Delta_{i,2}+\Delta_{i,3},
$$
with
\begin{equation*}
    \begin{split}
        \Delta_{i,1} &= \frac{\bH_i(\bC_i-\bx\bx^\top)\overline\bG_-}{1+\kappa_i}, \\
        \Delta_{i,2} &= \frac{\bH_i\bx\bx^\top\overline\bG_-(z_i-\kappa_i)}{(1+\kappa_i)(1+z_i)}, \\
        \Delta_{i,3} &= -\frac{\bH_i\bx\bx^\top\bH_i\bC_i\overline\bG_-}{1+z_i}.
    \end{split}
\end{equation*}
The independence between $\bx$ and $\bH_i$ implies $\E[\Delta_{i,1}] = 0$. Write $z_i - \kappa_i$ as
$$
    z_i-\kappa_i = (z_i-\Tr(\bC_i\bH_i)) + (\Tr(\bC_i\bH_i)-\E[\Tr(\bC_i\bH_i)]).
$$
We control the first term by applying Lemma \ref{lem:new-MS24-lem2} conditionally on $\bH_i$. For the second term, we use the first-order martingale estimation established in the next section. Altogether, we get
$$
    \|z_i-\kappa_i\|_{L^q} \leq C_q\left(\frac{\nu}{\sqrt{n_i}}+\nu^2e_Kt_i\right) \leq C_q \nu^3e_K.
$$
Using 
$$
    \overline\bG_- \preceq C\nu\overline\bG, \qquad \overline\bG_-\bC_i\overline\bG_- \preceq C\frac{\nu^2}{n_i}\overline\bG,
$$
Lemma \ref{lem:new-MS24-lem2} to the squared linear forms, the basic deleted-resolvent moment bounds in Lemma \ref{lem:new1}, and Hölder's inequality, we get
\begin{equation*}
    \begin{split}
        \|\btheta^\top\bH_i\bx\|_{L^q} &\leq C_q \frac{\nu}{\sqrt{n_i}}\sqrt{\Tr(\bA\overline\bG)}, \\
        \|\bx^\top\overline\bG_-\btheta\|_{L^q} &\leq C_q \frac{\nu}{\sqrt{n_i}}\sqrt{\Tr(\bA\overline\bG)}, \\
        \|\bx^\top\bH_i\bC_i\overline\bG_-\btheta\|_{L^q} &\leq C_q \frac{\nu^2}{n_i^{3/2}}\sqrt{\Tr(\bA\overline\bG)}.
    \end{split}
\end{equation*}
By Hölder, 
\begin{equation*}
    \begin{split}
        n_i|\E[\Tr(\bA\Delta_{i,2})]| \leq Cn_i \frac{\nu}{\sqrt{n_i}}\frac{\nu}{\sqrt{n_i}} (\nu^3e_K)\Tr(\bA\overline\bG) = C\nu^5e_K\Tr(\bA\overline\bG), \\
        n_i|\E[\Tr(\bA\Delta_{i,3})]| \leq Cn_i \frac{\nu}{\sqrt{n_i}}\frac{\nu^2}{n_i^{3/2}}\Tr(\bA\overline\bG) = C\frac{\nu^3}{n_i}\Tr(\bA\overline\bG).
    \end{split}
\end{equation*}
Combining the previous results, we get, for the first term in \eqref{eq:fo1},
$$
    |\E[\Tr(\bA(\E[\bG]-\overline\bG_-))] \leq C(\nu^5e_K+\nu^3e_K^2)\Tr(\bA\overline\bG) \leq C\nu^5e_K\Tr(\bA\overline\bG).
$$
Now we need to bound the second term in \eqref{eq:fo1}. Defining
$$
    \varepsilon = \max_i \frac{|\kappa_i-t_i|}{1+t_i},
$$
we have
\begin{align*}
    |\Tr(\bA(\overline\bG_--\overline\bG))|
    &\leq \varepsilon\sum_{i=1}^K\frac{n_i}{1+\kappa_i}\Tr(\bA\overline\bG_-\bC_i\overline\bG) \\
    &= \varepsilon\Tr(\bA(\bI-\lambda\overline{\bG}_-)\overline\bG) \\
    &\leq \varepsilon\Tr(\bA\overline\bG).
\end{align*}
Now we need to upper bound $\varepsilon$. Using the one-observation restoration estimate established in Lemma \ref{lem:new1}, together with the preceding bounds, we obtain
\begin{align*}
    |\kappa_i-t_i|
    &\leq |\E[\Tr(\bC_i(\bH_i-\bG))]+ |\Tr(\bC_i(\E[\bG]-\overline\bG_-))| + |\Tr(\bC_i(\overline\bG_--\overline\bG))| \\
    &\leq C\frac{\nu^2}{n_i}t_i + C\nu^5 e_Kt_i + \varepsilon t_i.
\end{align*}
By definition, there is $i \in [K]$ such that $|\kappa_i-t_i| = \varepsilon(1+t_i)$. Substituting it to the previous inequality yields
$$
    \varepsilon \leq C\left(\frac{\nu^2}{n_i}+\nu^5e_K\right)t_i \leq C\nu^6e_K.
$$
Therefore, for the second term on \eqref{eq:fo1},
$$
    |\Tr(\bA(\overline\bG_--\overline\bG))| \leq C\nu^6e_K\Tr(\bA\overline\bG).
$$
Combining these two terms, 
$$
    |\E[\Tr(\bA\bG)]-\Tr(\bA\overline\bG)| \leq C\nu^6e_K\Tr(\bA\overline\bG).
$$

\subsubsection{Martingale part of \texorpdfstring{$\Tr(\bA\bG)$}{Tr(AG)}} 
Here we enumerate all $N = \sum_{i=1}^Kn_i$ observations. Define $\mathcal{F}_\ell$ be the first $\ell$ observations, and write $\E_\ell[\cdot] = \E[\cdot | \mathcal{F}_\ell]$. Assume the $\ell$-th feature is $\bX_{i,r}$. Defining $\bH_\ell = \bG_{-(i,r)}$, we have
$$
    \Tr(\bA\bG)-\E[\Tr(\bA\bG)] = \sum_{\ell=1}^{N}(\E_\ell-\E_{\ell-1})(\Tr(\bA\bG)-\Tr(\bA\bH_\ell)) =: \sum_{\ell=1}^{N}(\E_\ell-\E_{\ell-1})\xi_\ell.
$$
Sherman-Morrison gives
$$
    \xi_\ell = \Tr(\bA(\bG-\bH_\ell)) = -\frac{\bx_\ell^\top\bH_\ell\bA\bH_\ell\bx_\ell}{1+\bx_\ell^\top\bH_\ell\bx_\ell}.
$$
Applying Lemma \ref{lem:new-MS24-lem2} conditionally on $\bH_i$, together with the basic deleted-resolvent moment bounds in Lemma \ref{lem:new1}, gives
$$
    \|\xi_\ell\|_{L^q} \leq C_q \frac{\nu^2}{n_i}\Tr(\bA\overline\bG).
$$
Therefore,
$$
    \|\Tr(\bA\bG)-\E[\Tr(\bA\bG)]\|_{L^q} \leq C_q \left(\sum_{j=1}^K n_j\frac{\nu^4}{n_j^2}\right)^{1/2}\Tr(\bA\overline\bG) = C_q\nu^2e_K\Tr(\bA\overline\bG).
$$
The same argument applies when one observation is already removed, since the resulting row increments involve resolvents with at most two observations removed. For the probability bound, we use the same approach as in \cite[Appendix A.3.2]{misiakiewicz2024non}. 
Set $n_{\min}=\min_i n_i$. The conditional concentration bound in Lemma \ref{lem:new-MS24-lem2} and the probabilistic bound of the basic deleted-resolvent give
\begin{align*}
\bx_\ell^\top\bH_\ell\bA\bH_\ell\bx_\ell
&\le C_M\log^\eta(N)\Tr(\bA\bH_\ell\bC_i\bH_\ell)\\
&\le C_M\frac{\nu^2}{n_i}\log^\eta(N)
\Tr(\bA\overline\bG),
\end{align*}
except with probability at most $C_Mn_{\min}^{-M}$.
Here we used
\[
\Tr(\bA\bH_\ell\bC_i\bH_\ell)
\le
\|\bC_i^{1/2}\bH_\ell\bC_i^{1/2}\|_{\op}
\|\overline\bG^{-1/2}\bH_\ell\overline\bG^{-1/2}\|_{\op}
\Tr(\bA\overline\bG).
\]
Consequently,
\[
\P(|\xi_\ell|>R_i)\le C_Mn_{\min}^{-M},
\qquad
R_i:=C_M\frac{\nu^2}{n_i}\log^\beta(N)
\Tr(\bA\overline\bG).
\]
Define
\[
\widetilde\xi_\ell
=\xi_\ell\mathbf1_{\{|\xi_\ell|\le R_i\}},
\qquad
\widetilde d_\ell
=(\E_\ell-\E_{\ell-1})\widetilde\xi_\ell,
\qquad
d_\ell=(\E_\ell-\E_{\ell-1})\xi_\ell.
\]
Then $|\widetilde d_\ell|\le2R_i$ almost surely, and
Cauchy--Schwarz gives
\[
\E[|\xi_\ell|\mathbf1_{\{|\xi_\ell|>R_i\}}]
\le C_M\frac{\nu^2}{n_i}n_{\min}^{-M/2}
\Tr(\bA\overline\bG).
\]
Summing and applying Markov's inequality, using
$e_K\ge n_{\min}^{-1/2}$, for sufficiently large $M$ and
$n_{\min}$, with probability at least
$1-\frac12\sum_i n_i^{-D}$,
\[
\left|\sum_{\ell=1}^N(d_\ell-\widetilde d_\ell)\right|
\le C\nu^2e_K\Tr(\bA\overline\bG).
\]
By Azuma--Hoeffding, with probability at least
$1-\frac12\sum_i n_i^{-D}$,
\[
\left|\sum_{\ell=1}^N\widetilde d_\ell\right|
\le C_D\sqrt{\log(N)\sum_i n_iR_i^2}
\le C_D\nu^2e_K\log^{\eta+1/2}(N)
\Tr(\bA\overline\bG).
\]
Combining with the deterministic bound, with probability at least
$1-\sum_i n_i^{-D}$,
\[
|\Tr(\bA\bG)-\Tr(\bA\overline\bG)|
\le C_De_K\left(\nu^6+\nu^2\log^{\eta+1/2}(N)\right)
\Tr(\bA\overline\bG).
\]

\subsection{Deterministic equivalence of \texorpdfstring{$\Tr(\bA \bG \bC_k \bG)$}{Tr(A G Ck G)}}

We follow the same idea as in \cite[Appendix A.5]{misiakiewicz2024non}. First, we split the term again in deterministic part and martingale part. 

\subsubsection{Deterministic part of \texorpdfstring{$\Tr(\bA\bG\bC_k\bG)$}{Tr(AGCkG)}}

Denote $\kappa_i=\E[\Tr(\bC_i\bG_{-(i,1)})]$. For the row $\bx=\bx_{i,r}$ under consideration, set
\[
\bH=\bG_{-(i,r)},\qquad
q_i=\Tr(\bA\overline\bG_-\bC_i\overline\bG_-),
\qquad s_i=1+\kappa_i,
\]
and
\[
c_k=\bx^\top\bH\bC_k\bH\bC_i\overline\bG_-\btheta.
\]
By the first-order deterministic approximation and the one-observation restoration estimate established in the previous subsections, we have
\[
\delta:=\max_i\frac{|\kappa_i-t_i|}{1+t_i}
\le C\nu^6e_K.
\]
By the resolvent identity and commutativity,
\[
-\delta\overline\bG
\preceq\overline\bG_--\overline\bG
\preceq\delta\overline\bG.
\]
Moreover, since $t_i\le C\nu$ and $\kappa_i\ge0$,
\[
\overline\bG_-\preceq C\nu\overline\bG.
\]
Consequently,
\[
q_i\le C\nu^2\tau_{\bA}[i],
\qquad
|q_i-\tau_{\bA}[i]|
\le C\nu\delta\,\tau_{\bA}[i].
\]
Setting
$$
    T_{\bA,k} = \E[\Tr(\bA\bG\bC_k\bG)],
$$
we can decompose $T_{\bA,k}$ into
$$
    T_{\bA,k} = \Tr(\bA\overline\bG_-\bC_k\overline\bG_-) + \btheta^\top(\E[\bG]-\overline\bG_-)\bC_k\overline\bG_-\btheta + \sum_{i=1}^{K}n_i\E\left[\btheta^\top\overline\bG_-\left(\frac{\bC_i}{1+\kappa_i}-\bx\bx^\top\right)\bG\bC_k\bG\btheta\right],
$$
where $\bx$ has the same distribution as the features in $\bX_i$. Defining
$$
    \bH = \bG_{-(i,r)}, \qquad z = \bx^\top\bH\bx, \qquad f = 1+z,
$$
we write
$$
    \left(\frac{\bC_i}{1+\kappa_i}-\bx\bx^\top\right)\bG = \frac{(\bC_i-\bx\bx^\top)\bH}{1+\kappa_i}+\frac{\bx\bx^\top\bH(z-\kappa_i)}{(1+\kappa_i)f}-\frac{\bC_i\bH\bx\bx^\top\bH}{(1+\kappa_i)f},
$$
where we imitate the idea of decomposing the deterministic part into $\Delta_1, \Delta_2, \Delta_3$ as in \citep{misiakiewicz2024non}.
Furthermore, defining
$$
    a = \btheta^\top\bH\bx, \quad b = \bx^\top\overline\bG_-\btheta, \quad w=\bx^\top\bH\bC_k\bH\bx, \quad c=\bx^\top\bH\bC_i\overline\bG_-\btheta, 
$$
$$
    \quad a_k = \bx^\top\bH\bC_k\bH\btheta, \quad c_k = \bx^\top\bH\bC_k\bH\bC_i\overline\bG_-\btheta,
$$
we have
\begin{align*}
    \E\left[\btheta^\top\overline\bG_-\left(\frac{\bC_i}{1+\kappa_i}-\bx\bx^\top\right)\bG\bC_k\bG\btheta\right] = \underbrace{\E\left[\frac{abw}{f^2}\right]}_{\text{leading term}} + \underbrace{\E\left[\frac{(z-\kappa_i)ba_k}{(1+\kappa_i)f} - \frac{ca_k+ac_k}{(1+\kappa_i)f}+\frac{acw}{s_if^2}\right]}_{\text{remainder term}}.
\end{align*}
Therefore, next we want to give a deterministic approximation of the leading term, while proving the remainder terms are all small.

\paragraph{Bounding the leading term.} Setting $h_i^- = \E[\Tr(\bC_i\bH\bC_k\bH)]$, we claim
$$
    \E\left[\frac{abw}{f^2}\right] \approx \frac{q_ih_i^-}{(1+\kappa_i)^2}.
$$
To establish this result, we first show that we can replace the denominator from $f^2$ to $(1+\kappa_i)^2$. Recall that
$$
    \|z-\kappa_i\|_{L^p} \leq C_p\delta_i, \qquad \delta_i = C\left(\nu\sqrt{\frac{t_i}{n_i}}+\nu^2e_Kt_i\right) \leq C\nu^3e_K.
$$
Since $f, 1+\kappa_i \geq 1$, we can bound $f^{-2}-(1+\kappa_i)^{-2}$ as
$$
    |f^{-2}-(1+\kappa_i)^{-2}| \leq 2|z-\kappa_i|.
$$
Applying Lemma \ref{lem:new-MS24-lem2} conditionally on $\bH$, followed by Lemma \ref{lem:new2}, gives
$$
    \|a\|_{L^p}\leq C_p\sqrt{\mathcal{Q}_\bA[i]}, \qquad \|b\|_{L^p}\le C_p\sqrt{q_i}\le C_p\nu\sqrt{\tau_{\bA}[i]},
$$
where
$$
    \mathcal{Q}_\bA[i] = \tau_\bA[i]+\nu^3m_iW_\bA, \qquad W_\bA = \sum_{j=1}^K n_jt_j\tau_\bA[j].
$$
The same conditional moment estimate and the basic deleted-resolvent bounds in Lemma \ref{lem:new1} also give
\begin{align*}
\|w\|_{L^p}
&\le C_p\|\Tr(\bC_i\bH\bC_k\bH)\|_{L^p}\\
&\le C_p
\left\|\|\bC_k^{1/2}\bH\bC_k^{1/2}\|_{\op}\right\|_{L^{2p}}
\|\Tr(\bC_i\bH)\|_{L^{2p}}
\le C_p\frac{\nu^2}{n_k}.
\end{align*}
In particular, $h_i^-\le C\nu^2/n_k$.
H\"older then implies
\[
\left|\E[abw(f^{-2}-(1+\kappa_i)^{-2})]\right|
\le C\sqrt{\tau_{\bA}[i]\mathcal Q_{\bA}[i]}
\,\delta_i\frac{\nu^3}{n_k},
\]
Then, we show that we can replace $w$ by $h_i^-$. Decompose
$$
    w-h_i^- = (w-\Tr(\bC_i\bH\bC_k\bH)) + (\Tr(\bC_i\bH\bC_k\bH)-h_i^-).
$$

For the first term, set
$\bB=\bC_i^{1/2}\bH\bC_k\bH\bC_i^{1/2}$.
Since
\[
\|\bB\|_{\op}\le a_i(\bH)a_k(\bH),
\qquad
\Tr(\bB)\le a_k(\bH)\Tr(\bC_i\bH),
\]
Lemma \ref{lem:new-MS24-lem2} and $\|\bB\|_F^2\le\|\bB\|_{\op}\Tr(\bB)$ give
\[
\|w-\Tr(\bC_i\bH\bC_k\bH)\|_{L^p}
\le C_p\frac{\nu^2}{n_k\sqrt{n_i}}.
\]
For the second term, apply the row-deletion martingale argument
to $\Tr(\bC_i\bH\bC_k\bH)$.
For an additional row from dataset $j$, the conditional increment
is bounded by
\[
C_p a_j(\bR)a_k(\bR)\Tr(\bC_i\bR)
\bigl(1+\Tr(\bC_j\bR)\bigr),
\]
where $\bR$ has both rows removed.
Lemma \ref{lem:new1} therefore gives an unconditional increment
bound $C_p\nu^4/(n_jn_k)$, and hence
\[
\|\Tr(\bC_i\bH\bC_k\bH)-h_i^-\|_{L^p}
\le C_p\frac{\nu^4e_K}{n_k}.
\]
Consequently,
\[
\|w-h_i^-\|_{L^p}\le C_p\frac{\nu^4e_K}{n_k}, \qquad |\E[ab(w-h_i^-)]|
\le C\sqrt{\tau_{\bA}[i]\mathcal Q_{\bA}[i]}
\frac{\nu^5e_K}{n_k}.
\]
Finally, we show that we can approximate $\E[ab]$ by $q_i$. A direct calculation gives
$$
    \E[ab]-q_i = \btheta^\top(\E[\bG]-\overline\bG_-)\bC_i\overline\bG_-\btheta + \btheta^\top\E[\bH-\bG]\bC_i\overline\bG_-\btheta.
$$
For the second term, Sherman-Morrison gives
$$
    \btheta^\top(\bH-\bG)\bC_i\overline\bG_-\btheta = \frac{ac}{f}.
$$
Therefore, we have
$$
     \left|\btheta^\top(\E[\bH-\bG])\bC_i\overline\bG_-\btheta\right| \leq C\frac{\nu^2}{n_i}
     \sqrt{\tau_{\bA}[i]\mathcal Q_{\bA}[i]}.
$$
For the first term, 
$$
    \left|\btheta^\top(\E[\bG]-\overline\bG_-)\bC_i\overline\bG_-\btheta\right| \leq C\nu^2m_iJ_\bA,
$$
where
$$
    m_i = \|\overline\bG^{1/2}\bC_i\overline\bG^{1/2}\|_\op, \qquad J_\bA = \sum_{j=1}^K n_j\sqrt{\tau_\bA[j]\mathcal{Q}_\bA[j]}\left(\delta_j+\frac{\nu}{n_j}\right).
$$
Therefore,
$$
    |\E[ab]-q_i| \leq C\left(\nu^2m_iJ_\bA+\frac{\nu^2}{n_i}\sqrt{\tau_\bA[i]\mathcal{Q}_\bA[i]}\right).
$$
Combining the three approximations,
\begin{align*}
    \left|\E\left[\frac{abw}{f^2}\right]-\frac{q_ih_i^-}{(1+\kappa_i)^2}\right|
    &\leq \left|\E[abw(f^{-2}-(1+\kappa_i)^{-2})]\right| + \left|\frac{1}{(1+\kappa_i)^2}\E[ab(w-h_i^-)]\right| + \left|\frac{h_i^-}{(1+\kappa_i)^2}(\E[ab]-q_i)\right| \\
    &\leq \frac{C\sqrt{\tau_{\bA}[i]\mathcal Q_{\bA}[i]}}{n_k}
    \left(\nu^3\delta_i+\nu^5e_K+\frac{\nu^4}{n_i}\right)
    +\frac{C\nu^4m_i}{n_k}J_{\bA}.
\end{align*}
\paragraph{Bounding the remainder terms.} For the remainder terms, applying the same conditional moment estimates and deleted-resolvent bounds as above, we obtain
$$
    \|a_k\|_{L^p} \leq C_p\nu\sqrt{\frac{m_k\mathcal{Q}_\bA[k]}{n_i}}, \qquad \|c_k\|_{L^p}\leq C_p\frac{\nu^3m_k}{n_i}\sqrt{\tau_\bA[i]}.
$$
Therefore, we can control the terms appearing in the remainder terms by
\begin{equation*}
    \begin{split}
        n_i\E[|(z-\kappa_i)ba_k|] &\leq C\nu^2\delta_i\sqrt{n_im_k\tau_\bA[i]\mathcal{Q}_\bA[k]}, \\
        n_i\E[|ca_k|] &\leq C\nu^3\sqrt{m_k\tau_\bA[i]\mathcal{Q}_\bA[k]/n_i}, \\
        n_i\E[|ac_k|] &\leq C\nu^3m_k\sqrt{\tau_\bA[i]\mathcal{Q}_\bA[i]}, \\
        n_i\E[|acw|] &\leq C\frac{\nu^4}{n_k}\sqrt{\tau_{\bA}[i]\mathcal Q_{\bA}[i]}.
    \end{split}
\end{equation*}
Thus, we can write
$$
    T_{\bA, k} = q_k +\sum_{i=1}^K \frac{n_i}{(1+\kappa_i)^2}q_ih_i^- + \widetilde r_{\bA, k},
$$
where $\widetilde r_{\bA, k}$ is the controlled error. By the same covariance-weighted row-increment estimate used above,
\[
|h_i^--T_{\bC_i,k}|
\le C\frac{\nu^4}{n_in_k},
\qquad
h_i^-+T_{\bC_i,k}\le C\frac{\nu^2}{n_k}.
\]
Also, since $(1+t_i)/(1+\kappa_i)\le C\nu$,
\[
\left|
\frac{n_i}{(1+\kappa_i)^2}-\frac{\mu_i^2}{n_i}
\right|
\le C\nu^2\delta\,\frac{\mu_i^2}{n_i}.
\]
Together with the bounds on $q_i$, this gives
\[
\left|
\frac{n_iq_i}{(1+\kappa_i)^2}
-\frac{\mu_i^2}{n_i}\tau_{\bA}[i]
\right|
\le Cn_i\nu^4\delta\,\tau_{\bA}[i].
\]
Therefore, we can write
\begin{equation}\label{eq:coupled-original}
    T_{\bA, k} = \tau_\bA[k] + \sum_{i=1}^K \frac{\mu_i^2}{n_i}\tau_\bA[i] T_{\bC_i, k} + r_{\bA, k}.
\end{equation}
Now, we want to bound $\widetilde r_{\bA,k}$ and $r_{\bA,k}$.
Recall that
\[
\mathcal Q_{\bA}[i]\le C\frac{\nu^5}{n_i}V_{\bA},
\qquad
\sqrt{\tau_{\bA}[i]\mathcal Q_{\bA}[i]}
\le C\frac{\nu^3}{n_i}V_{\bA},
\qquad
J_{\bA}\le C\nu^6e_KV_{\bA}.
\]
Substituting these bounds into the preceding leading-term
and remainder estimates gives
\[
|\widetilde r_{\bA,k}|
\le C\frac{\nu^{11}e_K}{n_k}V_{\bA}.
\]
The replacement of $q_k$, the coefficients, and $h_i^-$
contributes at most
\[
C\nu\delta\,\tau_{\bA}[k]
+C\frac{\nu^6\delta}{n_k}V_{\bA}
+C\frac{\nu^6}{n_k}\sum_i\tau_{\bA}[i].
\]
Using $\delta\le C\nu^6e_K$, $\nu\ge1$, and
$e_K^2\le\sqrt K\,e_K$, we get
\[
\begin{aligned}
|r_{\bA,k}|
&\le C\frac{V_{\bA}}{n_k}
\left(
\nu^{11}e_K+\nu\delta+\nu^6\delta+\nu^6e_K^2
\right)\\
&\le C\frac{\nu^{12}e_K}{n_k}V_{\bA}.
\end{aligned}
\]
Having established a bound on $T_{\bA, k}$, now we consider the coupled system. Define $h_j = T_{\bC_j,k} = \E[\Tr(\bC_j\bG\bC_k\bG)]$, and write
$$
    \bh = 
    \begin{pmatrix}
        h_1 \\
        h_2 \\
        \vdots \\
        h_K
    \end{pmatrix}.
$$
Setting $\bA = \bC_j$, we have for all $j\in [K]$,
\begin{equation}\label{eq:coupled}
    h_j = \tau_{\bC_j}[k] + \sum_{i=1}^{K} \frac{\mu_i^2}{n_i}\tau_{\bC_j}[i]h_i + r_{\bC_j, k}.
\end{equation}
Recall that
$$
    \bK_{j,i} = \Tr(\bC_j\overline\bG\bC_i\overline\bG), \qquad \bD = \mathrm{diag}\left(\frac{n_1}{\mu_1^2}, \cdots , \frac{n_K}{\mu_K^2}\right), \qquad \bL = \bD-\bK.
$$
Then, we can write \eqref{eq:coupled} as
$$
    \bh = \bK\be_k + \bK\bD^{-1}\bh + \br.
$$
Substituting it back to \eqref{eq:coupled-original}, we get
$$
    T_{\bA, k} = \underbrace{\tau_\bA[k]+\tau_\bA^\top\bL^{-1}\tau_{\bC_k}}_{\text{deterministic equivalent}} + \underbrace{r_{\bA,k}+\sum_{1\leq i,j\leq K}\tau_\bA[i](\bL^{-1})_{i,j}\br_{\bC_j,k}}_{\text{remaining error}}.
$$
We finally have
\[
\begin{aligned}
\left|\E[\Tr(\bA\bG\bC_k\bG)]
-\be_k^\top\bD\bL^{-1}\tau_{\bA}\right|
&\le C\frac{\nu^{12}e_K}{n_k}
\left(
V_{\bA}
+\sum_{i,j}\tau_{\bA}[i](\bL^{-1})_{ij}V_{\bC_j}
\right).
\end{aligned}
\]
Now we use established upper bounds to control the terms appearing in the above expression.
Since
\[
    \sum_i n_i\bC_i
    =\sum_i\mu_i(1+t_i)\bC_i
    \preceq C\nu\sum_i\mu_i\bC_i
    \preceq C\nu\overline\bG^{-1},
\]
we have
\[
    V_{\bC_j}
    =\Tr\left(\bC_j\overline\bG
    \left(\sum_i n_i\bC_i\right)\overline\bG\right)
    \le C\nu t_j\le C\nu^2.
\]
Furthermore, because
\[
    \bL^{-1}\ge0\ \text{entrywise},
    \qquad
    \bL\boldsymbol\mu=\mathbf1+\lambda\tau_{\bI},
    \qquad
    \bL^{-1}\mathbf1\le\boldsymbol\mu,
\]
we have
\[
    \begin{aligned}
    \sum_{i,j}\tau_{\bA}[i](\bL^{-1})_{ij}V_{\bC_j}
    &\le C\nu^2\tau_{\bA}^{\top}\bL^{-1}\mathbf1\\
    &\le C\nu^2\sum_i\mu_i\tau_{\bA}[i]
    \le C\nu^2V_{\bA}.
    \end{aligned}
\]
Combining these bounds, we obtain
\[
\left|\E[\Tr(\bA\bG\bC_k\bG)]
-\be_k^\top\bD\bL^{-1}\tau_{\bA}\right|
\le C\frac{\nu^{14}e_K}{n_k}V_{\bA}.
\]

\subsubsection{Martingale part of \texorpdfstring{$\Tr(\bA\bG\bC_k\bG)$}{Tr(AGCkG)}}
For $i \in [K]$ and $r \in [n_i]$, define
$$
    \bH = \bG_{-i,r}, \qquad s = \bx_{i,r}^\top\bH\bx_{i,r}.
$$
We use Sherman-Morrison to get
\begin{equation}\label{eq:temp1}
    \Tr(\bA\bG\bC_k\bG) - \Tr(\bA\bH\bC_k\bH)
    = \underbrace{-\frac{2\bx^\top\bH\bA\bH\bC_k\bH\bx}{1+s}}_{\text{cross term}}+\underbrace{\frac{(\bx^\top\bH\bA\bH\bx)(\bx^\top\bH\bC_k\bH\bx)}{(1+s)^2}}_{\text{product term}}.
\end{equation}
Define
$$
    t_j = \Tr(\bC_j\overline\bG), \qquad W_\bA = \sum_{j=1}^{K}n_jt_j\tau_\bA[j], \qquad Q_j = \tau_\bA[j] + \nu^3\|\overline\bG^{1/2}\bC_j\overline\bG^{1/2}\|_\op W_\bA.
$$
By Lemma \ref{lem:new2}, we have
$$
    \|\Tr(\bA\bH\bC_j\bH)\|_{L^p} \leq C_pQ_j.
$$
Now we need to bound the cross term and the product term respectively in the decomposition of \eqref{eq:temp1}.

For the cross term, as in \cite[Appendix A.3.1]{misiakiewicz2024non}, it suffices to assume $\bA = \btheta\btheta^\top$. Setting $a_j(\bH) = \|\bC_j^{1/2}\bH\bC_j^{1/2}\|_\op$,  we can bound $\|a_j(\bH)\|_{L^p}\leq C_p\nu/n_j$ with Lemma \ref{lem:new1}. Using Lemma \ref{lem:new-MS24-lem2} and Hölder's inequality, we have
\begin{align*}
    \|\bx^\top\bH\bA\bH\bC_k\bH\bx\|_{L^p(\bx|\bH)} 
    &\leq C_p\sqrt{a_i(\bH)a_k(\bH)\Tr(\bA\bH\bC_i\bH)\Tr(\bA\bH\bC_k\bH)}.
\end{align*}
Taking the remaining expectation and applying Hölder, we get
$$
    \|\bx^\top\bH\bA\bH\bC_k\bH\bx\|_{L^p} \leq C_p\nu\sqrt{\frac{Q_iQ_k}{n_in_k}}.
$$
For the product term, similarly we have
$$
    \|\bx^\top\bH\bA\bH\bx\bx^\top\bH\bC_k\bH\bx\|_{L^p(\bx|\bH)} \leq C_p\Tr(\bA\bH\bC_i\bH)\Tr(\bC_i\bH\bC_k\bH).
$$
We have already obtained a bound on $\Tr(\bA\bH\bC_i\bH)$. For $\Tr(\bC_i\bH\bC_k\bH)$,
$$
    \Tr(\bC_i\bH\bC_k\bH) \leq a_i(\bH)\Tr(C_k\bH) \wedge a_k(\bH)\Tr(\bC_i\bH).
$$
Taking moments and applying Hölder's inequality, together with the same basic deleted-resolvent bounds, gives
$$
    \|\Tr(\bC_i\bH\bC_k\bH)\|_{L^p} \leq C_p\nu^2 \left(\frac{t_k}{n_i} \wedge \frac{t_i}{n_k}\right).
$$
Combining the previous results and setting
$$
    b_i = \nu\sqrt{\frac{Q_iQ_k}{n_in_k}} + \nu^2Q_i\left(\frac{t_i}{n_k}\wedge\frac{t_k}{n_i}\right),
$$
we have
$$
    \|\Tr(\bA\bG\bC_k\bG)-\Tr(\bA\bH\bC_k\bH)\|_{L^p} \leq C_pb_i.
$$
As before, we construct a martingale difference sequence and write
\begin{align*}
    \Tr(\bA\bG\bC_k\bG) - \E[\Tr(\bA\bG\bC_k\bG)]
    &= \sum_{\ell=1}^{N} (\E_\ell-\E_{\ell-1})\Tr(\bA\bG\bC_k\bG) \\
    &= \sum_{\ell=1}^{N} (\E_\ell-\E_{\ell-1})(\Tr(\bA\bG\bC_k\bG)-\Tr(\bA\bG_{-\ell}\bC_k\bG_{-\ell})),
\end{align*}
where $N = \sum_{k=1}^{K}n_k$ and the $\ell$-th feature we removed belongs to $\bX_{i,r}$. For convenience, we denote
$$
    \Delta_\ell = \Tr(\bA\bG\bC_k\bG) - \Tr(\bA\bG_{-\ell}\bC_k\bG_{-\ell}), \qquad d_\ell = (\E_\ell-\E_{\ell-1})\Delta_\ell.
$$
By the previous bound on $\Tr(\bA\bG\bC_k\bG)-\Tr(\bA\bG_-\bC_k\bG_-)$, we have
$$
    \|(\E_\ell-\E_{\ell-1})(\Tr(\bA\bG\bC_k\bG)-\Tr(\bA\bG_{-\ell}\bC_k\bG_{-\ell}))\|_{L^p} \leq 2\|\Delta_\ell\|_{L^p} \leq C_pb_i.
$$

By the martingale moment inequality, we have
$$
    \|\Tr(\bA\bG\bC_k\bG)-\E[\Tr(\bA\bG\bC_k\bG)]\|_{L^p} \leq C_p\left(\sum_{i=1}^K n_ib_i^2\right)^{1/2} \leq C_p\nu^8e_K^3V_\bA.
$$
For the probability bound, set $n_{\min}=\min_i n_i$.
Applying Lemma \ref{lem:new-MS24-lem2} conditionally on $\bH$ and the probabilistic results in Lemma \ref{lem:new1} and \ref{lem:new2} gives
\[
\P(|\Delta_\ell|>R_i)\le C_Mn_{\min}^{-M},
\qquad
R_i:=C_M\log^{4\beta+1}(N)b_i,
\]
where observation $\ell$ belongs to dataset $i$. Define
\[
\widetilde\Delta_\ell
=\Delta_\ell\mathbf1_{\{|\Delta_\ell|\le R_i\}},
\qquad
\widetilde d_\ell
=(\E_\ell-\E_{\ell-1})\widetilde\Delta_\ell.
\]
Then $|\widetilde d_\ell|\le2R_i$ almost surely.
By Cauchy--Schwarz,
\[
\E[|\Delta_\ell|\mathbf1_{\{|\Delta_\ell|>R_i\}}]
\le C_Mb_i n_{\min}^{-M/2}.
\]
The previously established bounds imply
\[
b_i\le CV_{\bA}\left(
\frac{\nu^6}{n_in_k}+\frac{\nu^8}{n_i^2}
\right),
\qquad
\sum_i n_ib_i\le C_K\nu^8e_K^2V_{\bA}.
\]
Thus, summing and applying Markov's inequality, using
$e_K\ge n_{\min}^{-1/2}$, gives
\[
\P\left(
\left|\sum_{\ell=1}^N(d_\ell-\widetilde d_\ell)\right|
>\nu^8e_K^3V_{\bA}
\right)
\le C_{M,K}n_{\min}^{-(M-1)/2}
\le\frac12\sum_i n_i^{-D}
\]
for sufficiently large $M$ and $n_{\min}$.

By Azuma--Hoeffding, with probability at least
$1-\frac12\sum_i n_i^{-D}$,
\begin{align*}
\left|\sum_{\ell=1}^N\widetilde d_\ell\right|
&\le C_D\sqrt{\log(N)\sum_i n_iR_i^2}\\
&\le C_D\log^{4\eta+3/2}(N)
\left(\sum_i n_ib_i^2\right)^{1/2}\\
&\le C_D\nu^8e_K^3\log^{4\eta+3/2}(N)V_{\bA}.
\end{align*}
Therefore, with probability at least $1-\sum_i n_i^{-D}$,
\[
\left|\Tr(\bA\bG\bC_k\bG)
-\E[\Tr(\bA\bG\bC_k\bG)]\right|
\le C_D\nu^8e_K^3\log^{4\beta+3/2}(N)V_{\bA}.
\]

Combined with the deterministic part of $\Tr(\bA\bG\bC_k\bG)$, we get
\[
    \left|\Tr(\bA\bG\bC_k\bG)
    -\be_k^\top\bD\bL^{-1}\tau_\bA\right| 
    \leq 
    C_De_K
    \left(\frac{\nu^{14}}{n_k}+\nu^8e_K^2\log^{4\beta+3/2}(N)\right)V_\bA.
\]

\subsection{Deterministic equivalence of \texorpdfstring{$\Tr(\bA \bG \bX_k^\top \bX_k \bG)$}{Tr(A G XkT Xk G)}}

\subsubsection{Deterministic part of \texorpdfstring{$\Tr(\bA \bG \bX_k^\top \bX_k \bG)$}{Tr(A G XkT Xk G)}}
Let $\bx = \bx_{k,1}$ and $\bH$ be the resolvent with this feature removed. Set
$$
    z = \bx^\top\bH\bx, \qquad \kappa_k = \E[\Tr(\bC_k\bH)], \qquad y = \bx^\top\bH\bA\bH\bx.
$$
Then, we have
$$
    \E[\Tr(\bA\bG\bX_k^\top\bX_k\bG)] = n_k \E\left[\frac{y}{(1+z)^2}\right].
$$
First, we show we can replace the denominator by $(1+\kappa_k)^2$ up to a small error. We have
$$
    \left|\frac{1}{(1+z)^2}-\frac{1}{(1+\kappa_k)^2}\right| \leq 2|z-\kappa_k|.
$$
By the same conditional quadratic-form and first-order martingale estimates used in the deterministic part of $\Tr(\bA\bG)$, we have
$$
    \|z-\kappa_k\|_{L^q} \leq C_q\left(\frac{\nu}{\sqrt{n_k}} + \nu^2e_Kt_k\right) \leq C_q\nu^3e_K.
$$
Moreover, applying Lemma \ref{lem:new-MS24-lem2} conditionally on $\bH$, followed by Lemma \ref{lem:new2}, gives
$$
    \|y\|_{L^q} \leq C_q \|\Tr(\bA\bH\bC_k\bH)\|_{L^q} \leq C_q\frac{\nu^5}{n_k}\sum_{i=1}^Kn_i\tau_\bA[i].
$$
Therefore,
$$
    \left|\E[\Tr(\bA\bG\bX_k^\top\bX_k\bG)] - \frac{n_k}{(1+\kappa_k)^2}\E[\Tr(\bA\bH\bC_k\bH)]\right| \leq C\nu^8e_K\sum_{i=1}^K n_i\tau_\bA[i].
$$
Now, we show we can approximate $\Tr(\bA\bH\bC_k\bH)$ by $\Tr(\bA\bG\bC_k\bG)$, by noticing that
$$
    \Tr(\bA\bH\bC_k\bH)-\Tr(\bA\bG\bC_k\bG) = 2\frac{\bx^\top\bH\bA\bH\bC_k\bH\bx}{1+z}-\frac{(\bx^\top\bH\bA\bH\bx)(\bx^\top\bH\bC_k\bH\bx)}{(1+z)^2}.
$$
Applying the conditional moment estimates of Lemma \ref{lem:new-MS24-lem2}, we have
\begin{equation*}
    \begin{split}
        \E_\bx[|\bx^\top\bH\bA\bH\bC_k\bH\bx|] &\leq C\|\bC_k^{1/2}\bH\bC_k^{1/2}\|_\op \Tr(\bA\bH\bC_k\bH), \\
        \E_\bx[(\bx^\top\bH\bA\bH\bx)(\bx^\top\bH\bC_k\bH\bx)] &\leq C\Tr(\bA\bH\bC_k\bH)\Tr(\bC_k\bH\bC_k\bH).
    \end{split}
\end{equation*}
Again, by the basic deleted-resolvent moment bounds in Lemma \ref{lem:new1}, together with
$$
    \Tr(\bC_k\bH\bC_k\bH) \leq \|\bC_k^{1/2}\bH\bC_k^{1/2}\|_\op\Tr(\bC_k\bH), \qquad \|\|\bC_k^{1/2}\bH\bC_k^{1/2}\|_\op\|_{L^q} \leq C_q\frac{\nu}{n_k}, $$
    $$
 \|\Tr(\bC_k\bH)\|_{L^q} \leq C_q\nu,
$$
we have
$$
    \frac{n_k}{(1+\kappa_k)^2}\E[|\Tr(\bA\bH\bC_k\bH)-\Tr(\bA\bG\bC_k\bG)|] \leq C\frac{\nu^7}{n_k^2}\sum_{i=1}^Kn_i\tau_\bA[i] \leq C\nu^8e_K\sum_{i=1}^Kn_i\tau_\bA[i].
$$
Therefore,
$$
    \left|\E[\Tr(\bA\bG\bX_k^\top\bX_k\bG)] - \frac{n_k}{(1+\kappa_k)^2}\E[\Tr(\bA\bG\bC_k\bG)]\right| \leq C\nu^8e_K\sum_{i=1}^Kn_i\tau_\bA[i].
$$
Finally, using the deterministic equivalent of $\Tr(\bA\bG\bC_k\bG)$,
$$
    |\E[\Tr(\bA\bG\bC_k\bG)]-\be_k^\top\bD\bL^{-1}\tau_\bA| \leq C\frac{\nu^{14}e_K}{n_k}V_\bA,
$$
we get
\begin{align*}
    &\quad \; |\E[\Tr(\bA\bG\bX_k^\top\bX_k\bG)]-\be_k^\top\bL^{-1}\tau_\bA| \\
    &\leq \underbrace{\left|\E[\Tr(\bA\bG\bX_k^\top\bX_k\bG)] - \frac{n_k}{(1+\kappa_k)^2}\E[\Tr(\bA\bG\bC_k\bG)]\right|}_{(I)} + \underbrace{\left|\frac{n_k}{(1+\kappa_k)^2}\E[\Tr(\bA\bG\bC_k\bG)] - \be_k^\top\bL^{-1}\tau_\bA\right|}_{(II)}.
\end{align*}
For $(I)$, we have
$$
    (I) \leq C\nu^8e_KV_\bA.
$$
For $(II)$, similar to the deterministic part of $\Tr(\bA\bG\bC_k\bG)$, and the coefficient comparison established in the preceding subsection, we have
\begin{align*}
    (II)
    &\leq \frac{n_k}{(1+\kappa_k)^2}\left|\E[\Tr(\bA\bG\bC_k\bG)] - \be_k^\top\bD\bL^{-1}\tau_\bA\right| + \left|\frac{n_k}{(1+\kappa_k)^2}-\frac{\mu_k^2}{n_k}\right|\be_k^\top\bD\bL^{-1}\tau_\bA \\
    &\leq Cn_k\frac{\nu^{14}e_K}{n_k}V_\bA + C\nu^8e_K\be_k^\top\bL^{-1}\tau_\bA, \\
    &\leq C\nu^{14}e_KV_\bA,
\end{align*}
where we use $\bD_{k,k} = n_k/\mu_k^2$, $\be_k^\top\bL^{-1}\tau_\bA \leq V_\bA$, and $\nu\geq 1$. Combining the previous bounds, we obtain
$$
    \left|\E[\Tr(\bA\bG\bX_k^\top\bX_k\bG)]
    -\be_k^\top\bL^{-1}\tau_\bA\right|
    \leq C\nu^{14}e_KV_\bA.
$$

\subsubsection{Martingale part of \texorpdfstring{$\Tr(\bA \bG \bX_k^\top \bX_k \bG)$}{Tr(A G XkT Xk G)}}
For convenience, we denote $\bS_k = \bX_k^\top\bX_k$. For $\bx = \bx_{i,r}$, we set
$$
    \bS_k = \bS_{k,-} + {\bf1}_{\{i=k\}}\bx\bx^\top.
$$
Note that $\Tr(\bA\bH\bS_{k,-}\bH)$ is independent of $\bx$. Setting $z = \bx^\top\bH\bx$, we get
\begin{equation*}
    \begin{split}
        \xi
        &:= \Tr(\bA\bG\bS_k\bG)-\Tr(\bA\bH\bS_{k,-}\bH) \\
        &= -2\frac{\bx^\top\bH\bA\bH\bS_{k,-}\bH\bx}{1+z}+\frac{(\bx^\top\bH\bA\bH\bx)(\bx^\top\bH\bS_{k,-}\bH\bx)}{(1+z)^2} + {\bf1}_{\{i=k\}}\frac{\bx^\top\bH\bA\bH\bx}{(1+z)^2}.
    \end{split}
\end{equation*}
Since $0 \preceq \bS_{k,-} \preceq \bH^{-1}$, we have $\bx^\top\bH\bS_{k,-}\bH\bx \leq z$. This shows that the sum of the last two terms is bounded by $2\bx^\top\bH\bA\bH\bx$, whose moments, by Lemma \ref{lem:new-MS24-lem2} conditionally on $\bH$, followed by Lemma \ref{lem:new2}, satisfy
$$
    \|\bx^\top\bH\bA\bH\bx\|_{L^q} \leq C_q \frac{\nu^5}{n_i}\sum_{j=1}^{K}n_j\tau_\bA[j].
$$
For the cross term, conditioning on the remaining observations and applying Lemma \ref{lem:new-MS24-lem2} and Cauchy--Schwarz give
\begin{align*}
    \|\bx^\top\bH\bA\bH\bS_{k,-}\bH\bx\|_{L^q(\bx|\cdot)}
    &\leq \|\bx^\top\bH\bA\bH\bx\|_{L^q(\bx|\cdot)}^{1/2} \|\bx^\top\bH\bS_{k,-}\bH\bA\bH\bS_{k,-}\bH\bx\|_{L^q(\bx|\cdot)}^{1/2} \\
    &\leq C_q\sqrt{\Tr(\bA\bH\bC_i\bH)\Tr(\bA\bH\bS_{k,-}\bH\bC_i\bH\bS_{k,-}\bH)} \\
    &\leq C_q\sqrt{\|\bC_i^{1/2}\bH\bC_i^{1/2}\|_\op\Tr(\bA\bH\bC_i\bH)\Tr(\bA\bH\bS_{k,-}\bH)}.
\end{align*}
With the basic deleted-resolvent moment bounds, Lemma \ref{lem:new2}, and the empirical-covariance estimate
\begin{equation*}
    \begin{split}
        \|\bC_i^{1/2}\bH\bC_i^{1/2}\|_\op \leq C_i\frac{\nu}{n_i}, \qquad &\|\Tr(\bA\bH\bC_i\bH)\|_{L^q} \leq C_q\frac{\nu^5}{n_i}\sum_{j=1}^K n_j\tau_\bA[j], \\
        \|\Tr(\bA\bH\bS_{k,-}\bH)\|_{L^q} &\leq C_q\nu^3\sum_{j=1}^K n_j\tau_\bA[j],
    \end{split}
\end{equation*}
we have
$$
    \|\bx^\top\bH\bA\bH\bS_{k,-}\bH\bx\|_{L^q} \leq C_q\frac{\nu^5}{n_i}\sum_{j=1}^Kn_j\tau_\bA[j].
$$
Here, the moment bound on $\Tr(\bA\bH\bS_{k,-}\bH)$ holds because
\begin{align*}
    \Tr(\bA\bH\bS_{k,-}\bH)
    &\leq 2\Tr(\bA\overline\bG\bS_{k,-}\overline\bG)
    +2\|\overline\bG^{-1/2}\bH\overline\bG^{-1/2}\|_{\op}\|\bY\|_F^2 \\
    &\leq C_q(n_k\tau_\bA[k]+\nu^2W_\bA) \\
    &\leq C_q\nu^3V_\bA.
\end{align*}

Now we obtain the bound
$$
    \|\xi\|_{L^q} \leq C_q\frac{\nu^5}{n_i}\sum_{j=1}^Kn_j\tau_\bA[j].
$$
The rest of the proof follows similarly. We again define the martingale difference sequence, and get
\begin{align*}
    \|\Tr(\bA\bG\bS_k\bG)-\E[\Tr(\bA\bG\bS_k\bG)]\|_{L^q}
    &=\left\|\sum_{\ell=1}^K(\E_\ell-\E_{\ell-1})\Tr(\bA\bG\bS_k\bG)\right\|_{L^q} \\
    &\leq C_q\left(\sum_{\ell=1}^K\|(\E_\ell-\E_{\ell-1})\Tr(\bA\bG\bS_k\bG)\|_{L^q}^2\right)^{1/2} \\
    &\leq C_q\nu^5\sum_{j=1}^Kn_k\tau_\bA[j]\left(\sum_{j=1}^K\frac{1}{n_j}\right)^{1/2} \\
    &= C_q\nu^5e_K\sum_{j=1}^Kn_j\tau_\bA[j].
\end{align*}
For a probabilistic bound, set $n_{\min}=\min_i n_i$. Similar to the martingale part of $\Tr(\bA\bG\bC_k\bG)$, we have
\[
    \P(|\xi_\ell|>R_i)\le C_Mn_{\min}^{-M},
    \qquad
    R_i:=C_M\frac{\nu^5}{n_i}\log^{3\eta+1}(N)V_{\bA}.
\]
Define
\[
    \widetilde\xi_\ell
    =\xi_\ell\mathbf1_{\{|\xi_\ell|\le R_i\}},
    \qquad
    \widetilde d_\ell
    =(\E_\ell-\E_{\ell-1})\widetilde\xi_\ell.
\]
Then $|\widetilde d_\ell|\le2R_i$ almost surely, and
Cauchy--Schwarz gives
\[
    \E[|\xi_\ell|\mathbf1_{\{|\xi_\ell|>R_i\}}]
    \le C_M\frac{\nu^5}{n_i}n_{\min}^{-M/2}V_{\bA}.
\]
Summing and applying Markov's inequality, for sufficiently
large $M$ and $n_{\min}$, with probability at least
$1-\frac12\sum_i n_i^{-D}$,
\[
    \left|\sum_{\ell=1}^N(d_\ell-\widetilde d_\ell)\right|
    \le C\nu^5e_KV_{\bA}.
\]
By Azuma--Hoeffding, with probability at least
$1-\frac12\sum_i n_i^{-D}$,
\[
    \left|\sum_{\ell=1}^N\widetilde d_\ell\right|
    \le C_D\sqrt{\log(N)\sum_{i=1}^K n_iR_i^2}
    \le C_D\nu^5e_K\log^{3\eta+3/2}(N)V_{\bA}.
\]
Therefore, with probability at least $1-\sum_i n_i^{-D}$,
\[
    |\Tr(\bA\bG\bX_k^\top\bX_k\bG)
    -\E[\Tr(\bA\bG\bX_k^\top\bX_k\bG)]|
    \le C_D\nu^5e_K\log^{3\eta+3/2}(N)V_{\bA}.
\]
Combining the previously established moment and deterministic
bounds, we obtain
\[
    \|\Tr(\bA\bG\bX_k^\top\bX_k\bG)
    -\be_k^\top\bL^{-1}\tau_{\bA}\|_{L^q}
    \le C_q\nu^{14}e_KV_{\bA},
\]
and, with probability at least $1-\sum_i n_i^{-D}$,
\[
    |\Tr(\bA\bG\bX_k^\top\bX_k\bG)
    -\be_k^\top\bL^{-1}\tau_{\bA}|
    \le C_De_K\left(
    \nu^{14}+
    \nu^5\log^{3\eta+3/2}(N)
    \right)V_{\bA}.
\]

\subsection{Technical lemmas}

\begin{lemma}
    \label{lem:new-MS24-lem1} Under  Assumption \ref{asm:distr}, define $\bM_i = \bC_i^{1/2} \bG \bC_i^{1/2}.$ For any $i \in [K],$ we have, with probability at least $1-n_i^{-D}$,
    \begin{equation*}
        \begin{split}
            \|\bM_i\|_{op} \leq C_D \frac{\nu_{\lambda,i}(n_i)}{ n_i} , \quad \|\bM_i\|_{F} \leq C_D \frac{\nu_{\lambda,i}(n_i)}{\sqrt{n_i}} , \quad \Tr(\bM_i) \leq C_D \nu_{\lambda,i}(n_i). 
        \end{split}
    \end{equation*}
\end{lemma}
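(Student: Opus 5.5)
The plan is to reduce each of the three bounds in Lemma~\ref{lem:new-MS24-lem1} to the single-dataset estimate of \cite[Appendix A]{misiakiewicz2024non}. The key fact is monotonicity: since every $\bX_j^\top\bX_j\succeq0$,
\[
\bG=\Big(\sum_{j=1}^K\bX_j^\top\bX_j+\lambda\bI\Big)^{-1}\preceq\bG_i:=(\bX_i^\top\bX_i+\lambda\bI)^{-1}.
\]
Conjugation by $\bC_i^{1/2}$ preserves the PSD order, so $0\preceq\bM_i\preceq\bC_i^{1/2}\bG_i\bC_i^{1/2}=:\bM_i^{\rm single}$. Each of $\|\cdot\|_{\op}$, $\|\cdot\|_F$ and $\Tr(\cdot)$ is monotone on PSD operators. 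For $\|\cdot\|_F$ this follows because $0\preceq\bA\preceq\bB$ implies $\Tr(\bA^2)\le\Tr(\bA^{1/2}\bB\bA^{1/2})=\Tr(\bB^{1/2}\bA\bB^{1/2})\le\Tr(\bB^2)$. Hence it suffices to prove all three bounds for $\bM_i^{\rm single}$, which depends only on the $n_i$ i.i.d.\ rows of dataset $i$ satisfying Assumption~\ref{asm:distr}.

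For the single-dataset resolvent, I would invoke the high-probability bounds of \cite[Appendix A.1]{misiakiewicz2024non}; the quantity $\nu_\lambda^i(n_i)$ in \eqref{eq:defnu} is defined exactly as there. Their argument gives, with probability at least $1-n_i^{-D}$,
\[
\bC_i^{1/2}\bG_i\bC_i^{1/2}\preceq C_D\,\nu_\lambda^i(n_i)\,\bC_i^{1/2}\big(n_i\bC_i+\lambda\bI\big)^{-1}\bC_i^{1/2}.
\]
This comes from a lower bound on the empirical covariance in the top-$\lfloor\eta n_i\rfloor$ eigenspace, obtained via a matrix concentration and the tail-rank quantity $r_{\bC_i}(n_i)$, plus the regularization $\lambda$ on the complement. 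I will take this comparison as the intermediate target. Assumption~\ref{asm:nu} only guarantees polynomial control, so that logarithmic and union-bound factors stay absorbed in $C_D$.

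From the comparison, the three bounds follow directly. The operator $\bC_i^{1/2}(n_i\bC_i+\lambda)^{-1}\bC_i^{1/2}$ has eigenvalues $\xi_{i,j}/(n_i\xi_{i,j}+\lambda)\le1/n_i$, which gives the operator-norm bound $C_D\nu/n_i$. For the trace I would use $\Tr\big(\bC_i(n_i\bC_i+\lambda)^{-1}\big)$ and the definition of $r_{\bC_i}$. Split the sum at $m\approx\eta n_i$: the head contributes at most $m/n_i\le1$, and the tail contributes $\sum_{j>m}\xi_{i,j}/\lambda\le\xi_{i,m+1}r_{\bC_i}(n_i)/\lambda\le\nu_\lambda^i(n_i)$. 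This yields $\Tr(\bM_i)\le C_D\nu$. The Frobenius bound then follows from $\|\bM\|_F^2\le\|\bM\|_{\op}\Tr(\bM)\le C_D\nu^2/n_i$.

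The main obstacle is the comparison step, i.e.\ justifying that the single-dataset lower bound on $\bX_i^\top\bX_i$ restricted to the head eigenspace holds under the general concentration of Assumption~\ref{asm:distr}. My first attempt would be to quote the statement of \cite{misiakiewicz2024non} directly, since their Assumption is identical. Failing that, I would redo it with intrinsic-dimension matrix Bernstein on the whitened head block, truncating rows using the tail bound with $t\asymp\log^{\eta}n_i$, exactly as in the covariance-error argument in the proof of Lemma~\ref{lem:vanishing-approx}.
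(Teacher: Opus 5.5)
Your proposal is correct and follows the paper's proof. The paper also reduces to the single-dataset resolvent via $\bM_i\preceq\bC_i^{1/2}(\bX_i^\top\bX_i+\lambda)^{-1}\bC_i^{1/2}$ and then cites the proof of Lemma~1 in \cite{misiakiewicz2024non}, and your derivation of the three norms from the comparison with $\bC_i^{1/2}(n_i\bC_i+\lambda)^{-1}\bC_i^{1/2}$ is consistent with that citation; only your fallback is incomplete, since matrix Bernstein on the whitened head block does not by itself control the head--tail cross terms of $\bX_i^\top\bX_i$, for which you also need a high-probability bound on the tail Gram matrix in terms of $\xi_{i,m+1}r_{\bC_i}(n_i)$ (this is where $\nu_\lambda^i(n_i)$ enters).
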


\begin{proof}[Proof of Lemma \ref{lem:new-MS24-lem1}]

The result directly follows from \citep[Proof of Lemma 1]{misiakiewicz2024non}, by noting that \begin{equation*}
    \begin{split}
        \bM_i = \bC_i^{1/2} \left(\sum_{i=1}^K \bX_i^\top \bX_i +  \lambda\right)^{-1} \bC_i^{1/2} \preceq \bC_i^{1/2} \left(\bX_i^\top \bX_i + \lambda\right)^{-1} \bC_i^{1/2}.
    \end{split} 
\end{equation*} 
\end{proof}

Next, we state \cite[Lemma 2]{misiakiewicz2024non}. 

\begin{lemma}\label{lem:new-MS24-lem2}
    Assume the features $\bx_1, \bx_2$ are independent and satisfy Assumption \ref{asm:distr}. Then for all PSD matrices $\bB$ independent of $\bx_1, \bx_2$, with probability at least $1-n^{-D}$ over the randomness of $\bx_1, \bx_2$, we have
    \begin{equation*}
        \begin{split}
            |\bx_1^\top \bB \bx_1 - \Tr(\bC\bB) | 
            &\leq C_{x, D} \cdot 
            \log^\beta(n)\|\bC^{1/2}\bB\bC^{1/2}\|_F, \\
            |\bx_1^\top\bB\bx_2| 
            &\leq C_{x, D} \cdot \log^\beta(n)\left(\|\bC^{1/2}\bB\bC^{1/2}\|_F + 
            \|\bC^{1/2}\bB\bC\bB\bC^{1/2}\|_F^{1/2}\right).
        \end{split}
    \end{equation*}
    Moreover, for all $q\in\mathbb{N}$, we have
    \begin{equation*}
        \begin{split}
            \E_{\bx_1}\left[|\bx_1^\top\bB\bx_1 - \Tr(\bC\bB)|^q\right]^{1/q} 
            &\leq C_{x, q} \cdot 
            \|\bC^{1/2}\bB\bC^{1/2}\|_F, \\
            \E_{\bx_1, \bx_2}\left[|\bx_1^\top\bB\bx_2|^q\right]^{1/q} 
            &\leq C_{x, q} \left(\|\bC^{1/2}\bB\bC^{1/2}\|_F + 
            \|\bC^{1/2}\bB\bC\bB\bC^{1/2}\|_F^{1/2}\right).
        \end{split}
    \end{equation*}
\end{lemma}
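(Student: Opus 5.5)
The plan is to derive both parts directly from Assumption~\ref{asm:distr}, exactly as in \cite[Lemma 2]{misiakiewicz2024non}. The quadratic-form bound is a one-line application of the assumption. The bilinear bound follows by conditioning on one vector and applying the assumption twice, first to a rank-one matrix and then to a positive semidefinite matrix built from $\bB$. Throughout, the exponent written $\beta$ in the statement is the tail exponent $\eta$ of Assumption~\ref{asm:distr}.

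\emph{Quadratic form.} Since $\bB$ is PSD and independent of $\bx_1$, we apply Assumption~\ref{asm:distr} conditionally on $\bB$ with $\bA=\bB$ and $t=(D\log n/c)^{\eta}$. Then $Ce^{-ct^{1/\eta}}=Cn^{-D}$, which gives the high-probability bound with factor $\log^\eta(n)\,\|\bC^{1/2}\bB\bC^{1/2}\|_F$. For the $q$-th moment, I integrate the tail:
\begin{equation*}
\E|\bx_1^\top\bB\bx_1-\Tr(\bC\bB)|^q = q\int_0^\infty s^{q-1}\,\P(|\cdot|>s)\,ds \le C\,q\,\|\bC^{1/2}\bB\bC^{1/2}\|_F^q\int_0^\infty t^{q-1}e^{-ct^{1/\eta}}\,dt .
\end{equation*}
The last integral is a finite constant depending only on $q$, $c$ and $\eta$.

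\emph{Bilinear form.} Conditionally on $\bx_2$ (and on $\bB$), set $\bv=\bB\bx_2$ and apply the assumption to $\bx_1$ with the rank-one PSD matrix $\bA=\bv\bv^\top$. For this choice, $\Tr(\bC\bA)=\bv^\top\bC\bv$ and $\|\bC^{1/2}\bv\bv^\top\bC^{1/2}\|_F=\bv^\top\bC\bv$. Hence $(\bx_1^\top\bv)^2\le(1+t)\,\bv^\top\bC\bv$, so with probability $1-Cn^{-D}$,
\begin{equation*}
|\bx_1^\top\bB\bx_2|\lesssim \log^{\eta/2}(n)\,\big(\bx_2^\top\bB\bC\bB\bx_2\big)^{1/2}.
\end{equation*}
Next I apply the quadratic-form bound to $\bx_2$ with the PSD matrix $\bB\bC\bB$. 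Its mean is
\begin{equation*}
\Tr(\bC\bB\bC\bB)=\|\bC^{1/2}\bB\bC^{1/2}\|_F^2,
\end{equation*}
and its fluctuation is at most $\log^\eta(n)\,\|\bC^{1/2}\bB\bC\bB\bC^{1/2}\|_F$. Taking square roots with $\sqrt{a+b}\le\sqrt a+\sqrt b$ and a union bound over the two events gives the stated bound. This produces a total logarithmic power $\eta/2+\eta/2\le\eta$, which is absorbed into $\log^\eta(n)$.

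\emph{Bilinear moments.} The moment version follows the same route. Conditionally on $\bx_2$, the rank-one tail gives
\begin{equation*}
\E_{\bx_1}|\bx_1^\top\bv|^{q}\le C_q\,(\bv^\top\bC\bv)^{q/2}.
\end{equation*}
Taking $\E_{\bx_2}$ leaves $\E(\bx_2^\top\bB\bC\bB\bx_2)^{q/2}$, which the quadratic-form moment bound controls by $C_q\big(\|\bC^{1/2}\bB\bC^{1/2}\|_F^2+\|\bC^{1/2}\bB\bC\bB\bC^{1/2}\|_F\big)^{q/2}$.

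The only slightly delicate point is the bilinear step: one must use the conditioning correctly, so that $\bB\bx_2$ is frozen when the assumption is applied to $\bx_1$, and exploit that a rank-one $\bA$ has Frobenius norm equal to its trace. This is what makes the second term come out as the square root of $\|\bC^{1/2}\bB\bC\bB\bC^{1/2}\|_F$. Everything else is routine tail integration.
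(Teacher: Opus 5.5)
Your proposal is correct and takes essentially the same route as the paper, which simply defers to \cite[Lemma 2]{misiakiewicz2024non}: the quadratic form follows from Assumption~\ref{asm:distr} with $t\asymp\log^{\eta}(n)$ plus tail integration, and the bilinear form follows by conditioning on $\bx_2$, applying the assumption to the rank-one matrix $\bB\bx_2\bx_2^\top\bB$, and then applying it again to $\bB\bC\bB$. As a side remark, $\bC^{1/2}\bB\bC\bB\bC^{1/2}=(\bC^{1/2}\bB\bC^{1/2})^2$ gives $\|\bC^{1/2}\bB\bC\bB\bC^{1/2}\|_F^{1/2}\le\|\bC^{1/2}\bB\bC^{1/2}\|_F$, so the second term in the bilinear bounds is actually dominated by the first.
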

\begin{proof}
    The proof is the same as in \cite[Lemma 2]{misiakiewicz2024non}.
\end{proof}

\begin{lemma}\label{lem:new1}
    Under Assumption \ref{asm:distr} and \ref{asm:nu}, let $\bG_-$ be the resolvent after at most 2 features respectively from $\bX_\ell$ and $\bX_j$ have been removed (here $\ell$ and j could be the same). Then, we have
    $$
        \Tr(\bC_i\overline\bG) \leq C\nu, \qquad \|\overline\bG^{1/2}\bC_i\overline\bG^{1/2}\|_\op \leq \frac{C\nu}{n_i},
    $$
    $$
        \|\Tr(\bC_i\bG_-)-\Tr(\bC_i\overline\bG)\|_{L^q} \leq C_q\left(\nu^2\left(\frac{1}{n_\ell}+\frac{1}{n_j}\right)+\nu^6\log^{C+1/2}(eN)\sum_{k=1}^{K}n_k^{-1/2}\right) \Tr(\bC_i\overline\bG).
    $$
    Moreover, for every fixed $M > 0$, with probability at least $1-\sum_{j=1}^K n_j^{-M}$,
    $$
        \|\overline\bG^{-1/2}\bG_-\overline\bG^{-1/2}\|_\op \leq C_M\nu, \qquad \|\bC_i^{1/2}\bG_-\bC_i^{1/2}\|_\op \leq C_M\frac{\nu}{n_i}.
    $$
    Therefore, for every deterministic PSD matrix $\bA$, 
    $$
        \Tr(\bA\bG_-) \leq C_M\nu\Tr(\bA\overline\bG) \qquad \text{with probability at least} \; 1-\sum_{j=1}^{K} n_j^{-M}.
    $$
\end{lemma}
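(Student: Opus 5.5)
The statement to prove is Lemma~\ref{lem:new1}: deterministic bounds on $\overline\bG$, an $L^q$ comparison of $\Tr(\bC_i\bG_-)$ with $\Tr(\bC_i\overline\bG)$, and high-probability operator bounds for a resolvent with at most two rows removed. The plan is to reduce every claim to single-dataset bounds from \cite{misiakiewicz2024non}. The reduction rests on the operator monotonicity already used in Lemma~\ref{lem:new-MS24-lem1}: dropping the other datasets only increases a resolvent.

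\textbf{Deterministic bounds.} These come from the fixed point. Since $\mu_i\bC_i\preceq\overline\bG^{-1}$, we get $\overline\bG^{1/2}\bC_i\overline\bG^{1/2}\preceq\mu_i^{-1}\bI$, and likewise $\Tr(\bC_i\overline\bG)\le \Tr(\bC_i(\mu_i\bC_i+\lambda)^{-1})$. So it suffices to show $\mu_i\gtrsim n_i/\nu$, i.e.\ $t_i=\Tr(\bC_i\overline\bG)\lesssim\nu$. For this I would compare with the single-dataset fixed point $\mu_i^{(0)}$ solving $n_i/\mu=1+\Tr(\bC_i(\mu\bC_i+\lambda)^{-1})$. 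For that equation, \cite[Appendix A.1]{misiakiewicz2024non} shows $\Tr(\bC_i(\mu_i^{(0)}\bC_i+\lambda)^{-1})\le C\nu^i_\lambda(n_i)$. A monotonicity argument on the map $\mu_i\mapsto n_i/(1+\Tr(\bC_i\overline\bG))$ then gives $\mu_i\ge\mu_i^{(0)}$, because the other terms $\mu_j\bC_j$ only shrink $\overline\bG$. This yields both $t_i\le C\nu$ and $\|\overline\bG^{1/2}\bC_i\overline\bG^{1/2}\|_\op\le C\nu/n_i$, the latter via $\mu_i=n_i/(1+t_i)$.

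\textbf{High-probability bounds.} For $\|\bC_i^{1/2}\bG_-\bC_i^{1/2}\|_\op$, I would use $\bG_-\preceq(\bX_{i,-}^\top\bX_{i,-}+\lambda)^{-1}$, where $\bX_{i,-}$ has at most two rows deleted, and apply Lemma~\ref{lem:new-MS24-lem1}; removing $\le2$ rows changes $n_i$ only by constants. The bound $\|\overline\bG^{-1/2}\bG_-\overline\bG^{-1/2}\|_\op\le C\nu$ is the delicate one. Since all covariances commute and $\overline\bG^{-1}=\sum_j\mu_j\bC_j+\lambda$, it is enough to show
\[
\bG_-^{-1}=\sum_j\bX_{j,-}^\top\bX_{j,-}+\lambda\succeq c\,\nu^{-1}\Big(\sum_j\mu_j\bC_j+\lambda\Big).
\]
I would try to prove this blockwise: show $\bX_{j,-}^\top\bX_{j,-}+\lambda/K\succeq c\nu^{-1}(\mu_j\bC_j+\lambda/K)$ for each $j$, which is exactly the single-dataset statement $\|\overline\bG_j^{-1/2}(\bX_j^\top\bX_j+\lambda)^{-1}\overline\bG_j^{-1/2}\|_\op\le C\nu$ from \cite[Lemma 1]{misiakiewicz2024non}, and then sum over $j$. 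Rescaling $\lambda$ by $K$ costs only constants in $\nu$, and a union bound gives the probability $1-\sum_j n_j^{-M}$. The trace consequence $\Tr(\bA\bG_-)\le\|\overline\bG^{-1/2}\bG_-\overline\bG^{-1/2}\|_\op\Tr(\bA\overline\bG)$ then follows immediately.

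\textbf{$L^q$ comparison of the trace.} The main obstacle is the bound on $\|\Tr(\bC_i\bG_-)-\Tr(\bC_i\overline\bG)\|_{L^q}$, because it is circular with the first-order deterministic equivalent proved in the main text. I would first handle restoration: Sherman--Morrison gives $|\Tr(\bC_i(\bG_--\bG))|\le\sum_{\text{removed rows}}\bx^\top\bG_-\bC_i\bG_-\bx$. Its conditional moments are controlled by Lemma~\ref{lem:new-MS24-lem2}, which bounds them by $\Tr(\bC_i\bG_-\bC_\ell\bG_-)\le\|\bC_\ell^{1/2}\bG_-\bC_\ell^{1/2}\|_\op\Tr(\bC_i\bG_-)$. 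This produces the $\nu^2(1/n_\ell+1/n_j)\,t_i$ term.

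The remaining piece, $\|\Tr(\bC_i\bG)-\Tr(\bC_i\overline\bG)\|_{L^q}$, is the first-order result of the appendix with $\bA=\bC_i$. That argument itself uses this lemma only through the restoration estimate and the crude bounds above, neither of which depends on the trace comparison. So I would order the proof as: crude bounds first, then the first-order deterministic equivalent, and the trace comparison last. The obstacle there is ensuring the leave-one-out bootstrap does not invoke the statement being proved, and tracking the heavy-tail integration against the high-probability operator bounds (the failure events having probability $n^{-M}$ handled via the deterministic bound $\|\bG_-\|\le\lambda^{-1}$ and Assumption~\ref{asm:nu}), which produces the logarithmic factor in the stated estimate.
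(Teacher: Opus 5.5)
Your proposal is correct and matches the paper's proof in all essentials: crude bounds on $\overline\bG$ from the fixed point, per-dataset operator bounds through $\bG_-\preceq(\bX_{i,-}^\top\bX_{i,-}+\lambda)^{-1}$, the Sherman--Morrison restoration estimate, and then the first-order deterministic equivalent, in the same non-circular order you describe. For the fixed-point bounds, the paper gets $\mu_i\ge(1-\eta)n_i/\nu$ directly from $n_i=\mu_i+\Tr(\mu_i\bC_i\overline\bG)\le\nu\mu_i+\eta n_i$ by a spectral split; your comparison with the single-dataset fixed point is also valid.

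One claim is imprecise. The inequality $\bX_{j,-}^\top\bX_{j,-}+\lambda/K\succeq c\nu^{-1}(\mu_j\bC_j+\lambda/K)$ is not literally the single-dataset statement, because the multi-dataset $\mu_j$ is larger than the single-dataset fixed point. It follows instead from $\mu_j\le n_j$ together with $\|\bC_j^{1/2}\bG_-\bC_j^{1/2}\|_{\op}\le C\nu/n_j$. The paper uses exactly these two facts, without splitting $\lambda$, via $\bG_-^{1/2}\overline\bG^{-1}\bG_-^{1/2}\preceq\bI+\sum_i\mu_i\bG_-^{1/2}\bC_i\bG_-^{1/2}$.
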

\begin{proof}
    Write
        \[
        \nu=\nu_{\lambda,*},\qquad
        e_K=\left(\sum_{k=1}^K n_k^{-1}\right)^{1/2},
        \qquad t_i=\Tr(\bC_i\overline\bG).
        \]
    First, setting $h_i=d\wedge\lfloor\eta n_i\rfloor$, we have
    \begin{align*}
        n_i
        &=\mu_i+\Tr(\mu_i\bC_i\overline\bG)\\
        &\le \mu_i+
        \sum_{\ell=1}^d
        \frac{\mu_i\xi_\ell(\bC_i)}
        {\mu_i\xi_\ell(\bC_i)+\lambda}\\
        &\le \mu_i+h_i+
        \frac{\mu_i}{\lambda}\sum_{\ell>h_i}\xi_\ell(\bC_i)
        \le \nu\mu_i+\eta n_i.
    \end{align*}
    Consequently,
    \[
        \mu_i\ge \frac{1-\eta}{\nu}n_i,\qquad
        t_i\le \frac{\nu}{1-\eta}-1\le C\nu,\qquad
        \|\overline\bG^{1/2}\bC_i\overline\bG^{1/2}\|_{\op}
        \le \mu_i^{-1}\le \frac{C\nu}{n_i}.
    \]
    We next prove the random resolvent bounds.
    Fix $|\mathcal S|\le2$ and write $\bG_{-\mathcal S}:= \bG_-$ to be explicit on the features removed from the resolvent.
    Since
    \[
        \bG_-\preceq
        \left(\lambda\bI+
        \sum_{(i,r)\notin\mathcal S}
        \bx_{i,r}\bx_{i,r}^{\top}\right)^{-1},
    \]
    the single-dataset moment estimates in
    \cite[Lemma 4(b)]{misiakiewicz2024non}, applied to the
    remaining observations, give
    \[
        \left\|
        \|\bC_i^{1/2}\bG_-\bC_i^{1/2}\|_{\op}
        \right\|_{L^q}
        \le C_q\frac{\nu}{n_i},
        \qquad
        \|\Tr(\bC_i\bG_-)\|_{L^q}\le C_q\nu.
    \]
    Here the same estimates apply after at most two deletions
    under our standing assumptions.

    Moreover, $\lambda\bG_-\preceq\bI$ and $\mu_i\le n_i$.
    Thus,
    \begin{align*}
        \|\overline\bG^{-1/2}\bG_-\overline\bG^{-1/2}\|_{\op}
        &=\|\bG_-^{1/2}\overline\bG^{-1}\bG_-^{1/2}\|_{\op}\\
        &\le 1+\sum_{i=1}^K
        \mu_i\|\bG_-^{1/2}\bC_i\bG_-^{1/2}\|_{\op}\\
        &=1+\sum_{i=1}^K
        \mu_i\|\bC_i^{1/2}\bG_-\bC_i^{1/2}\|_{\op}.
    \end{align*}
    Taking $L^q$ norms proves
    \[
        \left\|
        \|\overline\bG^{-1/2}\bG_-\overline\bG^{-1/2}\|_{\op}
        \right\|_{L^q}
        \le C_q\nu,
    \]
    where the constant may depend on $K$.

    Now add back one removed observation
    $\bu=\bx_{j,r}$, and set
    $\bG_+^{-1}=\bG_-^{-1}+\bu\bu^\top$.
    By Sherman--Morrison,
    \[
        0\le\Tr(\bC_i(\bG_--\bG_+))
        =\frac{\bu^\top\bG_-\bC_i\bG_-\bu}
       {1+\bu^\top\bG_-\bu}
        \le \bu^\top\bG_-\bC_i\bG_-\bu.
    \]
    Conditional on $\bG_-$, the quadratic-form moment bound gives
    \[
        \|\Tr(\bC_i(\bG_--\bG_+))\|_{L^q(\bu\mid\bG_-)}
        \le C_q\Tr(\bC_j\bG_-\bC_i\bG_-).
    \]
    Also,
    \[
        \Tr(\bC_j\bG_-\bC_i\bG_-)
        \le
        \|\bC_j^{1/2}\bG_-\bC_j^{1/2}\|_{\op}
        \|\overline\bG^{-1/2}\bG_-\overline\bG^{-1/2}\|_{\op}
    t_i.
    \]
    Taking $L^q$ norms, applying H\"older and the preceding
    bounds at exponent $2q$, we obtain
    \[
        \|\Tr(\bC_i(\bG_--\bG_+))\|_{L^q}
        \le C_q\frac{\nu^2}{n_j}t_i.
    \]
    Repeating this for each removed observation yields
    \[
        \|\Tr(\bC_i(\bG_{-\mathcal S}-\bG))\|_{L^q}
        \le C_q\nu^2
        \left(\sum_{(j,r)\in\mathcal S}\frac1{n_j}\right)t_i.
    \]
    Finally, the previously established first-order moment
    bound gives
    \[
        \|\Tr(\bC_i(\bG-\overline\bG))\|_{L^q}
        \le C_q\nu^6e_Kt_i.
    \]
    The triangle inequality therefore implies
    \[
        \|\Tr(\bC_i\bG_-)-\Tr(\bA\overline\bG)\|_{L^q}
        \le C_q
        \left(
        \nu^2\sum_{(j,r)\in\mathcal S}\frac1{n_j}
        +\nu^6e_K
        \right)\Tr(\bA\overline\bG).
    \]
    For the probabilistic bound, applying the single-dataset resolvent estimates in Lemma \ref{lem:new-MS24-lem1} to the retained observations and taking the union bounds over the datasets, we get 
    $$
        \|\bC_i^{1/2}\bG_-\bC_i^{1/2}\|_\op \leq C_M\frac{\nu}{n_i}.
    $$
    Since $\lambda\bG_-\preceq\bI$ and $\mu_i \leq n_i$, we get
    \begin{align*}
        \|\overline\bG^{-1/2}\bG_-\overline\bG^{-1/2}\|_{\op}
        &=\|\bG_-^{1/2}\overline\bG^{-1}\bG_-^{1/2}\|_{\op}\\
        &\leq 1+\sum_i\mu_i
        \|\bC_i^{1/2}\bG_-\bC_i^{1/2}\|_{\op}
        \leq C_D\nu.
    \end{align*}
    The trace bound then follows immediately.
\end{proof}

\begin{lemma}\label{lem:new2}
    Under the assumptions of Lemmas \ref{lem:new-MS24-lem2}
    and \ref{lem:new1}, let $\bG_-$ denote the resolvent after a
    fixed set of at most two observations has been removed.
    For deterministic PSD matrices $\bA,\bB$ and every fixed $q\ge1$,
    \[
        \|\Tr(\bA\bG_-\bB\bG_-)\|_{L^q}
        \le C_q\left(
        \Tr(\bA\overline\bG\bB\overline\bG)
        +\nu^3
        \|\overline\bG^{1/2}\bB\overline\bG^{1/2}\|_{\op}
        W_{\bA}\right),
    \]
    where
    \[
        t_i=\Tr(\bC_i\overline\bG),\qquad
        \tau_{\bA}[i]=\Tr(\bA\overline\bG\bC_i\overline\bG),
        \qquad
        W_{\bA}=\sum_{i=1}^K n_it_i\tau_{\bA}[i].
    \]
    Moreover, for every fixed $D > 0$, with probability at least $1-\sum_{i=1}^K n_i^{-D}$,
    \[
        \Tr(\bA\bG_-\bB\bG_-)
        \leq C_D\log^{2\beta+1}(N)
        \left(
        \Tr(\bA\overline\bG\bB\overline\bG)
        +\nu^3
        \|\overline\bG^{1/2}\bB\overline\bG^{1/2}\|_{\op}
        W_{\bA}
        \right).
    \]
    \end{lemma}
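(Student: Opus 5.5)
The plan is to compare $\bG_-$ with $\overline\bG$ through a resolvent identity and to put all the randomness into one centered matrix whose second moment we can compute. Write $\bY:=\bG_-^{-1}-\overline\bG^{-1}=\sum_{(i,r)\notin\mathcal S}\bx_{i,r}\bx_{i,r}^\top-\sum_i\mu_i\bC_i$. Then $\bG_--\overline\bG=-\bG_-\bY\overline\bG$, so $\bG_-=\overline\bG-\bG_-\bY\overline\bG$. Using $(a+b)^{\top}B(a+b)\preceq 2a^\top Ba+2b^\top Bb$, we get
\[
\Tr(\bA\bG_-\bB\bG_-)\le 2\Tr(\bA\overline\bG\bB\overline\bG)+2\Tr\big(\bA\overline\bG\bY\bG_-\bB\bG_-\bY\overline\bG\big).
\]
For the second term we apply two operator inequalities, with $\rho:=\|\overline\bG^{-1/2}\bG_-\overline\bG^{-1/2}\|_{\op}$:
\[
\bG_-\bB\bG_-\preceq\|\overline\bG^{1/2}\bB\overline\bG^{1/2}\|_{\op}\,\bG_-\overline\bG^{-1}\bG_-\preceq\|\overline\bG^{1/2}\bB\overline\bG^{1/2}\|_{\op}\,\rho^2\,\overline\bG .
\]
This reduces the second term to at most $2\rho^2\|\overline\bG^{1/2}\bB\overline\bG^{1/2}\|_{\op}\,\Phi$, where $\Phi:=\Tr(\bA\overline\bG\bY\overline\bG\bY\overline\bG)=\|\overline\bG^{1/2}\bY\overline\bG\bA^{1/2}\|_F^2$. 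The random factor $\rho$ is separated from $\Phi$ by H\"older. Lemma~\ref{lem:new1} gives $\|\rho\|_{L^{2q}}\le C_q\nu$, which supplies $\nu^2$. It therefore remains to show $\|\Phi\|_{L^{q}}\le C_q\,\nu\,W_{\bA}$.

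To bound $\Phi$, split $\bY=\bZ+\bE$, where $\bZ=\sum_{(i,r)\notin\mathcal S}(\bx_{i,r}\bx_{i,r}^\top-\bC_i)$ is centered and $\bE=\sum_i(n_i-\mu_i)\bC_i-\sum_{(i,r)\in\mathcal S}\bC_i$ is deterministic.

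The deterministic part uses the fixed-point equation $n_i-\mu_i=\mu_it_i$. Since $t_i\le C\nu$ and $\sum_j\mu_j\bC_j\preceq\overline\bG^{-1}$, we have $\bE\preceq C\nu\overline\bG^{-1}$, which gives
\[
\Tr(\bA\overline\bG\bE\overline\bG\bE\overline\bG)\le C\nu\sum_i\mu_it_i\tau_{\bA}[i]\le C\nu W_{\bA}.
\]
The negative correction from the at most two deleted rows is of lower order and is absorbed.

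For the centered part, expand $\Tr(\bA\overline\bG\bZ\overline\bG\bZ\overline\bG)$ as a sum over pairs of samples.
\begin{itemize}
\item Off-diagonal pairs have zero mean.
\item Each diagonal term has expectation bounded by $\E[(\bx^\top\overline\bG\bx)(\bx^\top\overline\bG\bA\overline\bG\bx)]\lesssim t_i\tau_{\bA}[i]$. This follows from Cauchy--Schwarz and the moment bounds of Lemma~\ref{lem:new-MS24-lem2}, since both quadratic forms concentrate around their traces with Frobenius-size fluctuations no larger than the traces.
\end{itemize}
Summing the diagonal terms gives $\E\Phi\lesssim\sum_in_it_i\tau_{\bA}[i]+\nu W_{\bA}\lesssim\nu W_{\bA}$.

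The main obstacle is upgrading this from expectation to $L^q$ norms, because $\Phi$ is a degree-two polynomial in heavy-tailed (only stretched-exponentially concentrated) vectors. We would treat $\Phi$ as a diagonal sum plus a decoupled U-statistic.
\begin{itemize}
\item The diagonal sum is a sum of independent nonnegative terms, each with $L^q$ norm $\lesssim t_i\tau_{\bA}[i]$ by Lemma~\ref{lem:new-MS24-lem2}. Rosenthal's inequality bounds it by $C_qW_{\bA}$ plus a lower-order term.
\item The off-diagonal part is written as a martingale in the sample index. Its increments are bilinear forms $\bx_\ell^\top\bB_\ell\bx_{\ell'}$, whose $L^q$ norms are controlled by the bilinear estimate of Lemma~\ref{lem:new-MS24-lem2}. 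Burkholder's inequality then gives the same order $W_{\bA}$.
\end{itemize}
Combining these bounds proves the moment statement.

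For the high-probability statement we repeat the argument with three substitutions.
\begin{itemize}
\item Replace the moment bound on $\rho$ by the high-probability bound $\rho\le C_M\nu$ from Lemma~\ref{lem:new1}.
\item Replace the quadratic-form moment bounds by the high-probability versions in Lemma~\ref{lem:new-MS24-lem2}, which cost $\log^{\beta}(N)$ per quadratic form; there are two forms per diagonal term.
\item Truncate the martingale increments at these levels and apply Azuma--Hoeffding, which costs an extra $\sqrt{\log N}$ factor. The truncation bias is negligible by the polynomially small tail probabilities, exactly as in the martingale parts above.
\end{itemize}
A union bound over the $N$ samples and the $K$ datasets collects everything into the factor $C_D\log^{2\beta+1}(N)$, with probability at least $1-\sum_in_i^{-D}$.
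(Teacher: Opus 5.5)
Your reduction matches the paper's proof. The paper uses the same identity $\bG_-=\overline\bG-\bG_-\bY\overline\bG$. It bounds the remainder by $\rho^2\|\overline\bG^{1/2}\bB\overline\bG^{1/2}\|_{\op}\,\|\overline\bG^{1/2}\bY\overline\bG\bA^{1/2}\|_F^2$, with H\"older applied to $\rho$. It then splits $\overline\bG^{1/2}\bY\overline\bG\bA^{1/2}$ into its mean, controlled via $n_i-\mu_i=\mu_it_i$ and $t_i\le C\nu$ to give $C\nu W_{\bA}$, and the centered sum $\sum_\ell\bZ_\ell$ with $\bZ_\ell=\overline\bG^{1/2}(\bx_\ell\bx_\ell^\top-\bC_{i(\ell)})\overline\bG\bA^{1/2}$. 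You diverge only in the $L^q$ bound on $X:=\|\sum_\ell\bZ_\ell\|_F^2$, and there your sketch does not work as written.

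\emph{The cross terms.} They are not bilinear forms $\bx_\ell^\top\bB_\ell\bx_{\ell'}$. The term $\langle\bZ_\ell,\bZ_{\ell'}\rangle_F$ contains $(\bx_\ell^\top\overline\bG\bx_{\ell'})(\bx_{\ell'}^\top\overline\bG\bA\overline\bG\bx_\ell)$, which is quadratic in each vector. The martingale increment $d_\ell=2\langle\bZ_\ell,\bS_{\ell-1}\rangle_F$, with $\bS_{\ell-1}:=\sum_{\ell'<\ell}\bZ_{\ell'}$, is a centered quadratic form in $\bx_\ell$ whose indefinite matrix depends on all earlier samples. So the bilinear estimate of Lemma~\ref{lem:new-MS24-lem2} does not control it, and bounding the $\ell'$-sum term by term would overcount.

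\emph{What you actually have.} The available bound is $|d_\ell|\le2\|\bZ_\ell\|_F\|\bS_{\ell-1}\|_F$. Burkholder's square-function inequality plus Doob's maximal inequality for the submartingale $\|\bS_k\|_F$ then give only the implicit bound $\|X\|_{L^q}\le C_qW_{\bA}+C_q\sqrt{W_{\bA}\|X\|_{L^q}}$. This closes only after a self-bounding step that you do not mention.

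\emph{How the paper avoids this.} It never expands the square. It views $\sum_\ell\bZ_\ell$ as a sum of independent centered vectors in the Hilbert space of Hilbert--Schmidt matrices. Symmetrization plus Khintchine--Kahane then give directly $\|\,\|\sum_\ell\bZ_\ell\|_F\|_{L^p}\le C_p(\sum_\ell\|\,\|\bZ_\ell\|_F\|_{L^p}^2)^{1/2}\le C_p\sqrt{W_{\bA}}$, using only your own diagonal estimate $\|\,\|\bZ_\ell\|_F\|_{L^p}\le C_p\sqrt{t_i\tau_{\bA}[i]}$.

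\emph{High-probability part.} The same issue arises. Truncating U-statistic increments is awkward because their natural scale $\|\bS_{\ell-1}\|_F$ is itself random. The paper instead truncates each $\bZ_\ell$ at $C\log^{\beta}(N)\sqrt{t_i\tau_{\bA}[i]}$ and applies bounded-differences concentration to $\|\sum_\ell\widetilde\bZ_\ell\|_F$. This gives $\log^{\beta+1/2}(N)\sqrt{W_{\bA}}$ and, after squaring, the stated $\log^{2\beta+1}(N)$.
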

\begin{proof}
    We adopt the same notation as in the proof of Lemma \ref{lem:new1}. Defining
    \[
        \bS_-:=\bG_-^{-1}-\lambda\bI
        =\sum_{(i,r)\notin\mathcal S}\bx_{i,r}\bx_{i,r}^{\top},
    \]
    we can write
    $$
        \Tr(\bA\bG_-\bB\bG_-) = \|\bB^{1/2}(\overline\bG\bA^{1/2}+\bG_-\overline\bG^{-1/2}\bY)\|_F^2,
    $$
    where
    $$
        \bY = \overline\bG^{1/2}\left(\sum_{k=1}^{K}\mu_k\bC_k-\bS_-\right)\overline\bG\bA^{1/2}.
    $$
    Next, we want to bound $\bY$. From \cite[Lemma 2]{misiakiewicz2024non}, we have for all $(i,r)$,
    \begin{align*}
        \left\|\|\overline\bG^{1/2}\bx_{i,r}\bx_{i,r}^\top\overline\bG\bA^{1/2}\|_F\right\|_{L^p} 
        &\leq \left\|\bx_{i,r}^\top\overline\bG\bx_{i,r}\right\|_{L^{2p}}^{1/2}\cdot \left\|\bx_{i,r}\overline\bG\bA\overline\bG\bu\right\|_{L^{2p}}^{1/2} \\
        &\leq C_p\sqrt{t_i\tau_\bA[i]}.
    \end{align*}
    Therefore, we also have
    $$
        \left\|\|\overline\bG^{1/2}(\bx_{i,r}\bx_{i,r}^\top-\bC_i)\overline\bG\bA^{1/2}\|_F\right\|_{L^p} \leq C_p\sqrt{t_i\tau_\bA[i]}.
    $$
    Since we want to obtain a bound on $\bY$, we first consider controlling 
    $$
        \bY - \E[\bY] = -\sum_{(i,r)} \overline\bG^{1/2}(\bx_{i,r}\bx_{i,r}^\top - \bC_i)\overline\bG\bA^{1/2} =: -\sum_{(i,r)}\bZ_{i,r.}
    $$
    By symmetrization and the Hilbert-space Khintchine--Kahane
inequality \citep[Chapter~4 and Section~6.1]{ledoux1991probability},
applied to the independent centered matrices $\bZ_{i,r}$
with the Frobenius norm, , for $p \geq 2$,
    \begin{align*}
        \left\|\left\|\sum_{(i,r)}\bZ_{i,r}\right\|_F\right\|_{L^p}
        &\leq C\sqrt{p}\left(\sum_{(i,r)}\left\|\|\bZ_{i,r}\|_F\right\|_{L^p}^2\right)^{1/2} \\
        &\leq C_p\left(\sum_{i=1}^{K}n_it_i\tau_\bA[i]\right)^{1/2} \\
        &= C_p\sqrt{W_\bA}.
    \end{align*}
    Next, we need to bound $\E[\bY]$. Define $d_i$ as the number of observations deleted from $\bX_i$. By assumption, we have $d_i \in \{0, 1, 2\}$ for all $i \in [K]$. We can write $\E[\bY]$ as
    $$
        \E[\bY] = -\overline\bG^{1/2}\sum_{i=1}^{K}(n_i-\mu_i)\bC_i\overline\bG\bA^{1/2} + \sum_{i=1}^{K}d_i\overline\bG^{1/2}\bC_i\overline\bG\bA^{1/2}.
    $$
    Define
    $$
        \bH_i = \overline\bG^{1/2}\bC_i\overline\bG^{1/2}, \qquad \bR = \overline\bG^{1/2}\bA^{1/2}, \qquad \bP = \sum_{i=1}^{K}(n_i-\mu_i)\bH_i.
    $$
    Then, we can write
    $$
        -\overline\bG^{1/2}\sum_{i=1}^{K}(n_i-\mu_i)\bC_i\overline\bG\bA^{1/2} = -\bP\bR.
    $$
    Since $n_i-\mu_i=\mu_it_i$ and
    \[
        \sum_{i=1}^K\mu_i\bH_i
        =\bI-\lambda\overline\bG\preceq\bI,
    \]
    we have
    \[
        0\preceq\bP
        =\sum_{i=1}^K\mu_it_i\bH_i
        \preceq \max_i t_i\,\bI
        \preceq C\nu\bI.
    \]
    Consequently,
    \begin{align*}
        \|\bP\bR\|_F^2
        &\le C\nu\Tr(\bR^\top\bP\bR)\\
        &=C\nu\sum_{i=1}^K\mu_it_i\tau_{\bA}[i]\\
        &\le C\nu W_{\bA}.
    \end{align*}
    Let $d_i$ be the number of observations removed from dataset $i$. Then
    \[
        0\le d_i\le n_i,\qquad \sum_{i=1}^K d_i\le2,
    \]
    and we can write
    $$
        \sum_{i=1}^{K}d_i\overline\bG^{1/2}\bC_i\overline\bG\bA^{1/2} = \sum_{i=1}^K d_i\bH_i\bR.
    $$
    For each $i$,
    $$
        \|\bH_i\bR\|_F^2 = \Tr(\bR^\top\bH_i^2\bR) \leq t_i\Tr(\bR^\top\bH_i\bR) = t_i\tau_\bA[i].
    $$
    Since there are at most two observations being deleted, using the inequality
    $$
        \|\bH_i\bR+\bH_j\bR\|_F^2 \leq 2\|\bH_i\bR\|_F^2 + 2\|\bH_j\bR\|_F^2,
    $$
    we have
    $$
        \left\|\sum_{i=1}^Kd_i\bH_i\bR\right\|_F^2 \leq 2\sum_{i=1}^Kd_it_i\tau_\bA[i] \leq 2W_\bA.
    $$
    Combined with previous results, we get
    $$
        \|\E[\bY]\|_F^2 \leq C\nu W_\bA.
    $$
    Therefore, we bound the $L^p$ norm of $\|\bY\|_F$ as
    $$
        \|\|\bY\|_F\|_{L^p}^2 \leq \left(\|\|\bY-\E[\bY]\|_{F}\|_{L^p} + \|\|\E[\bY]\|_F\|_{L^p}\right)^2 \leq C_p\nu W_\bA.
    $$
    With this bound on $\bY$, we can control $\Tr(\bA\bG_-\bB\bG_-)$. A direct computation gives
    \begin{align*}
        \Tr(\bA\bG_-\bB\bG_-)
        &= \|\bB^{1/2}\bG_-\bA^{1/2}\|_F^2 \\
        &\leq 2\|\bB^{1/2}\overline\bG\bA^{1/2}\|_F^2 + 2\|\bB^{1/2}\bG_-\overline\bG^{-1/2}\|_\op^2\|\bY\|_F^2 \\
        &\leq 2\Tr(\bA\overline\bG\bB\overline\bG) + 2\|\overline\bG^{1/2}\bB\overline\bG^{1/2}\|_\op\|\overline\bG^{-1/2}\bG_-\overline\bG^{-1/2}\|_\op^2\|\bY\|_F^2.
    \end{align*}
    Finally, we apply Hölder's inequality on the last term and get
    $$
        \|\|\overline\bG^{-1/2}\bG_-\overline\bG^{-1/2}\|_\op^2\|\bY\|_F^2\|_{L^q} \leq \|\|\overline\bG^{-1/2}\bG_-\overline\bG^{-1/2}\|_\op\|_{L^{4q}}^{2}\|\|\bY\|_F\|_{L^{4q}}^2 \leq C_q\nu^3W_\bA.
    $$
    For the probabilistic result the idea is similar to the martingale parts of the functionals. First, we have
    $$
        \P(\|\bZ_{i,r}\|_F \geq R_i) \leq C_MN^{-M}, \qquad R_i = C_M\log^\beta(N)\sqrt{t_i\tau_\bA[i]}.
    $$
    Set $\widetilde \bZ_{i,r} = \bZ_{i,r}{\bf 1}_{\{\|Z_{i,r}\|_F\leq L_i\}}$. Since $\E[\bZ_{i,r}] = 0$, 
    $$
        \left\|\sum_{i,r}\E[\widetilde \bZ_{i,r}]\right\|_F \leq C_MN^{-M/2}\sum_{i=1}^K n_i\sqrt{t_i\tau_\bA[i]} \leq C_MN^{(1-M)/2}\sqrt{W_\bA} \leq C_M\sqrt{W_A}
    $$
    for $M$ sufficiently large. Bounded-differences concentration then gives with probability at least $1-N^{-M}$,
    $$
        \left\|\sum_{i,r}(\widetilde \bZ_{i,r}-\E[\widetilde \bZ_{i,r}])\right\|_F \leq C_M\sqrt{\log(N)\sum_{i=1}^Kn_iR_i^2} \leq C_M\log^{\beta+1/2}(N)\sqrt{W_\bA}.
    $$
    Applying union bounds and choosing $M$ sufficiently large, we have with probability at least $1-\frac{1}{2}\sum_{i=1}^Kn_i^{_D}$,
    $$
        \|\bY-\E[\bY]\|_F \leq C_D\log^{\beta+1/2}(N)\sqrt{W_\bA}.
    $$
    Along with $\|\E[\bY]\|_F^2 \leq C\nu W_\bA$, we get
    $$
        \|\bY\|_F^2 \leq C_D\nu\log^{2\beta+1}(N)W_\bA.
    $$
    Therefore, we have with probability at least $1-\sum_{i=1}^K n_i^{-D}$,
    \[
        \Tr(\bA\bG_-\bB\bG_-)
        \leq C_D\log^{2\beta+1}(N)
        \left(
        \Tr(\bA\overline\bG\bB\overline\bG)
        +\nu^3
        \|\overline\bG^{1/2}\bB\overline\bG^{1/2}\|_{\op}
        W_{\bA}
        \right).
    \]
\end{proof}

\section{Omitted parts in Section \ref{sec:num}}
\label{apx:experiments}

\subsection{Details of numerical experiments}
\label{apx:detail-experiments}
\paragraph{Datasets and data preprocessing.}
We use SlimPajama \citep{cerebras2023slimpajama} as the
pretraining corpus. The target dataset consists of documents from the
\texttt{StackExchange} domain. We consider the following constructions
of the auxiliary dataset:
\begin{itemize}
    \item In the case where the mixture improves the scaling law compared to using each datasets individually,
    we exclude \texttt{StackExchange} and combine
    \texttt{GitHub}, \texttt{C4}, \texttt{arXiv},
    \texttt{Common Crawl}, \texttt{Books}, and
    \texttt{Wikipedia} with weights
    $0.637$, $0.099$, $0.095$, $0.076$, $0.048$, and $0.045$,
    respectively, following Fan et al.~\citep{fan2023doge}.

    \item In the case where the auxiliary distribution is close to the target distribution, 
we combine 90\% independent \texttt{StackExchange} data with 10\% of
the six-domain OOD mixture defined above. 
\end{itemize}

Before tokenization, we normalize each document using Unicode NFC,
canonicalize newline characters, and remove leading and trailing
whitespace. We then assign documents to the training, validation, and
test sets using normalized-content hashes, with a
$98\%/1\%/1\%$ split. This document-level splitting ensures that
documents with the same normalized content cannot occur in different
splits. We tokenize the documents using the GPT-2 byte-level BPE
tokenizer with a vocabulary size of $50{,}257$ and append an
end-of-text token to every document. The resulting token streams are
packed into non-overlapping blocks of $512$ tokens. For data preprocessing, we discard incomplete trailing blocks
and low-diversity blocks for which fewer than $10\%$ of the tokens are
unique. All dataset sizes reported in our experiments refer to
prediction tokens remaining after this preprocessing.

\paragraph{Model and optimization.}
All reported results use an 81.5M-parameter decoder-only transformer
with 6 layers, hidden width 768, 12 attention heads,
feed-forward width 3072, context length 512, and a 50,257-token GPT-2
vocabulary. 
We use AdamW with peak learning rate
$6\times10^{-4}$, $\beta=(0.9,0.95)$, $\epsilon=10^{-8}$, weight decay $0.1$,
gradient clipping at $1$, and a global batch of 32,768 prediction tokens.
Following \citep{hoffmann2022training}, we match the cosine learning-rate
schedule to the total training-token budget $H$ of each run:
\[
  \eta(t;H)=
  \begin{cases}
    6\times10^{-4}\,\dfrac{t}{W},
    & 0\leq t<W, \\[6pt]
    6\times10^{-5}
    + 2.7\times10^{-4}
      \left[1+\cos\!\left(\pi\dfrac{t-W}{H-W}\right)\right],
    & W\leq t\leq H,
  \end{cases}
\]
where $t$ is the number of processed prediction tokens before the update
and $W=262144$ is the fixed warmup budget.
The total budget is $H=n$, $H=n_2$, and $H=n+n_2$ for
Target-only, Aux-only, and Mixture, respectively.
Training uses PyTorch automatic mixed precision with BF16
for eligible operations. Model parameters and AdamW optimizer
states are maintained in FP32, and TF32 is enabled for the
remaining FP32 matrix-multiplication operations.

\subsection{Experiments on datasets with close distributions}
\label{apx:non-improve-close}

Figure \ref{fig:lang-scalinglaw-close} shows that when the token frequency of the two datasets is close, for $\gamma_2=1.25$, the data mixture does not improve the scaling law compared to using one of the individual datasets. Table \ref{tab:result4-near-aux-90-10-slopes} shows that this phenomenon occurs across different scales $\gamma_2.$ This qualitatively supports our theoretical finding that data mixtures improve scaling only when spectral decay rates are sufficiently different.

\begin{figure}[t]
\centering
\begin{subfigure}[t]{0.48\linewidth}
    \centering
    \includegraphics[width=\linewidth]{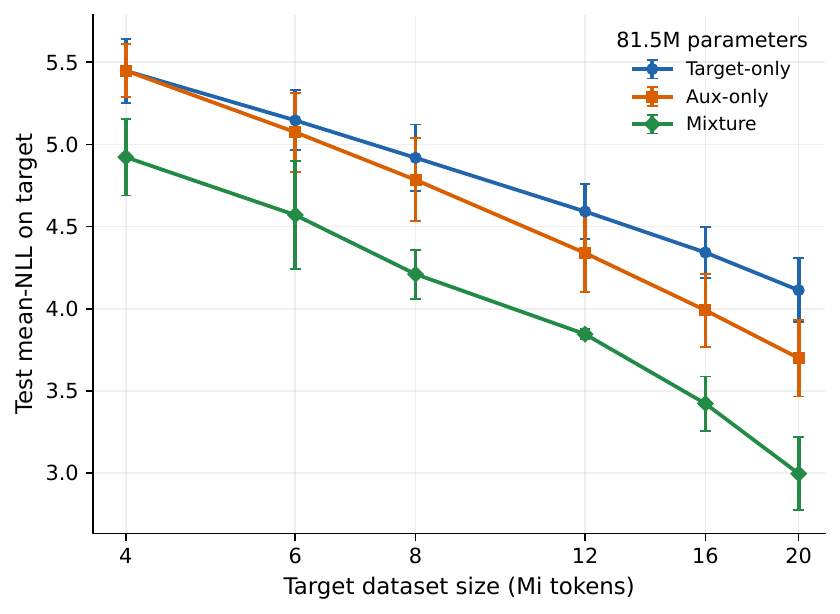}
    \caption{$\gamma_2=1.25$}
    \label{fig:close-b125}
\end{subfigure}\hfill \medskip\begin{subfigure}[t]{0.48\linewidth}
    \centering    
    \includegraphics[width=\linewidth]{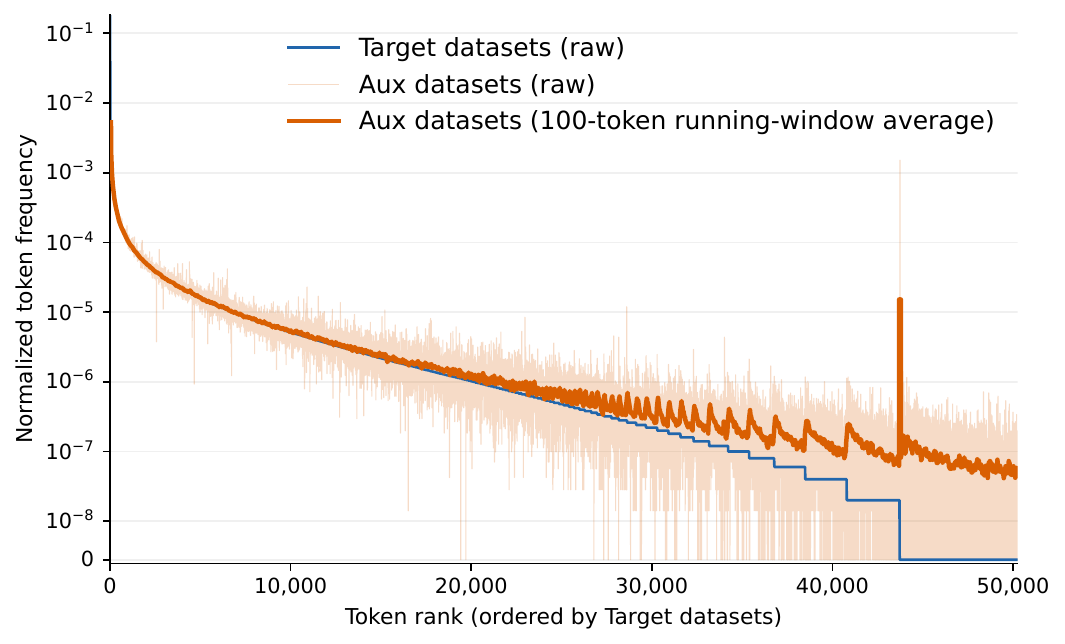}
    \caption{Token-frequency distributions}
    \label{fig:close-token-freq}
\end{subfigure}

\caption{
Target-domain test negative log-likelihood (NLL) scaling of an 81.5M-parameter GPT-2-style language model
trained from scratch on SlimPajama. Panel
(a) reports mean NLL on a fixed target-domain test set as a function
of the reference target budget $n$. We use
$n\in\{4,6,8,12,16,20\}\,\mathrm{Mi}$, except for the
$\gamma_2=1.75$ experiment that uses $n\leq16\,\mathrm{Mi}$.  Panel (b) compares the normalized
target and auxiliary token frequencies. Token types are
ordered by decreasing target-domain frequency. The light auxiliary
curve shows the raw frequencies, and the darker, thicker curve shows
their 100-token running-window average.}
\label{fig:lang-scalinglaw-close}
\end{figure}

\begin{table}[tbp]
  \centering
   \caption{Linear OLS scaling slopes on the target test set for the 81.5M-parameter model over 4--20 M target datasets tokens. More-negative values indicate a faster decrease in target NLL, i.e. better scaling law. Each slope is obtained by fitting $\text{NLL}=\alpha+s\ln(n)$ over the six Target dataset sizes $n\in\{4,6,8,12,16,20\}$ M within each seed and then averaging over three seeds. Improvement is $100\,(|s_{\mathrm{Mixture}}|-|s_X|)/|s_X|$, reported against target-only and auxiliary-only, respectively.}
  \label{tab:result4-near-aux-90-10-slopes}
  \small
  \setlength{\tabcolsep}{3pt}
  \begin{tabular}{crrrr}
    \toprule
    $\gamma_2$ & Target-only & Auxiliary-only & Mixture & \shortstack{Mixture improvement (\%)\\Target / Auxiliary} \\
    \midrule
$0.75$ & -0.8319 & -0.6085 & -0.8742 & +5.1\% / +43.7\% \\
\midrule
$1$ & -0.8319 & -0.8389 & -1.0272 & +23.5\% / +22.4\% \\
\midrule
$1.25$ & -0.8319 & -1.1155 & -1.1225 & +34.9\% / +0.6\% \\
\midrule
$1.5$ & -0.8319 & -1.4015 & -1.2513 & +50.4\% / -10.7\% \\
\midrule
$1.75$ & -0.8319 & -1.5971 & -1.3208 & +58.8\% / -17.3\% \\
\midrule
$2$ & -0.8319 & -1.7332 & -1.4361 & +72.6\% / -17.1\% \\
    \bottomrule
  \end{tabular}

\end{table}

\end{document}